%% file: main.tex
\documentclass[11pt]{article}

\usepackage[table,xcdraw,dvipsnames]{xcolor}

\usepackage{newpxmath}
\usepackage{newpxtext}
\input{includes}
\usepackage{todonotes}
\usepackage{fullpage}
\allowdisplaybreaks
\input{commands}
\usepackage{tikz}

\usepackage[margin=1in]{geometry}

\allowdisplaybreaks

\usepackage[
backref=true,
natbib=true,
style=trad-alpha,
maxalphanames=8,
sorting=nyt
]{biblatex}
\numberwithin{equation}{section}

\usepackage{subcaption}

\title{Distributionally Robust Linear Regression With Block Lewis Weights}
\author{Naren Sarayu Manoj\thanks{TTIC. Email: \url{nsm@ttic.edu}. Supported by NSF Award ECCS-2216899} \and Kumar Kshitij Patel\thanks{Institute for Foundations of Data Science, Yale University. Email: \url{kumarkshitij.patel@yale.edu}. This work was partly done while the author was a Fellow at the Simons Institute for the Theory of Computing.}}

\begin{document}
\input{body}
\newpage
\printbibliography
\end{document}

%% file: includes.tex
\usepackage{parskip}
\usepackage{anyfontsize}

\usepackage{hyperref}
\usepackage{amsmath}
\usepackage{amsthm}
\usepackage{thm-restate}
\usepackage{mathtools}

\usepackage{booktabs}   
\usepackage{array}      
\usepackage{colortbl}   
\usepackage{makecell}   
\usepackage{xcolor}     

\usepackage{amsfonts}
\usepackage{bm}
\usepackage[inline]{enumitem}
\usepackage{color}
\usepackage{comment}
\usepackage[capitalize]{cleveref}
\usepackage{float}
\usepackage[T1]{fontenc}
\usepackage[textsize=tiny, textwidth=2.5cm, disable]{todonotes}
\makeatletter 
\@mparswitchfalse%
\makeatother
\normalmarginpar 

\usepackage{asymptote}
\usepackage{mdframed}
\usepackage[most]{tcolorbox}
\usepackage{hyperref}
\usepackage{enumitem}
\usepackage{framed}
\usepackage{mdframed}
\usepackage{bbm}

\usepackage{wrapfig}
\usepackage{multicol}
\usepackage{cutwin}

\usepackage[capitalize]{cleveref}

\usepackage{mathrsfs}

\usepackage[T1]{fontenc}

\definecolor{asparagus}{rgb}{0.53, 0.66, 0.42}
\definecolor{sacramentostategreen}{rgb}{0.0, 0.3, 0.15}
\definecolor{teal}{rgb}{0.0, 0.5, 0.5}
\definecolor{forestgreen}{rgb}{0.13, 0.6, 0.13}

\usepackage{hyperref}
\hypersetup{
    colorlinks=true,
    linkcolor=sacramentostategreen,
    filecolor=sacramentostategreen,
    citecolor=teal,
    urlcolor=teal,
}

\newtheorem{theorem}{Theorem}[section]
\newtheorem{mainthm}{Theorem}
\newtheorem{definition}[theorem]{Definition}

\newtheorem{lemma}[theorem]{Lemma}

\newtheorem{remark}[theorem]{Remark}

\newtheorem*{problem*}{Problem}

\theoremstyle{remark}

\Crefname{theorem}{Theorem}{Theorems}
\Crefname{claim}{Claim}{Claims}
\Crefname{lemma}{Lemma}{Lemmas}
\Crefname{proposition}{Proposition}{Propositions}\usepackage[symbol]{footmisc}
\Crefname{corollary}{Corollary}{Corollaries}
\Crefname{definition}{Definition}{Definitions}

\newcommand{\E}{\mathbb{E}}

\newcommand{\R}{\mathbb{R}}

\newcommand{\cD}{\mathcal{D}}
\newcommand{\cE}{\mathcal{E}}

\newcommand{\cG}{\mathcal{G}}

\newcommand{\cO}{\mathcal{O}}

\newcommand{\cS}{\mathcal{S}}

\newcommand{\mA}{\mathbf{A}}
\newcommand{\mB}{\mathbf{B}}

\newcommand{\mD}{\mathbf{D}}

\newcommand{\mI}{\mathbf{I}}

\newcommand{\mM}{\mathbf{M}}

\newcommand{\mW}{\mathbf{W}}

\newcommand{\va}{\bm{a}}
\newcommand{\vb}{\bm{b}}

\newcommand{\vd}{\bm{d}}
\newcommand{\ve}{\bm{e}}

\newcommand{\vg}{\bm{g}}

\newcommand{\vq}{\bm{q}}

\providecommand{\vv}{}
\renewcommand{\vv}{\bm{v}}
\newcommand{\vw}{\bm{w}}
\newcommand{\vx}{\bm{x}}
\newcommand{\vy}{\bm{y}}
\newcommand{\vz}{\bm{z}}

\def\to{\rightarrow}
\def\eps{\varepsilon}

\def\eps{\varepsilon}

\renewcommand{\bar}{\overline}

\usepackage{algorithm}
\usepackage{algpseudocodex}

\newcommand{\argmin}[1]{\underset{#1}{\mathrm{argmin}}\ }

\newcommand{\suchthat}{{\;\; : \;\;}}

\newcommand{\inparen}[1]{\left(#1\right)}             
\newcommand{\inbraces}[1]{\left\{#1\right\}}           
\newcommand{\insquare}[1]{\left[#1\right]}             

\newcommand{\abs}[1]{\ensuremath{\left\lvert #1 \right\rvert}}

\newcommand{\norm}[1]{\ensuremath{\left\lVert #1 \right\rVert}}
\newcommand{\opnorm}[1]{\ensuremath{\left\lVert #1 \right\rVert_{\mathrm{op}}}}
\newcommand{\maxnorm}[1]{\ensuremath{\left\lVert #1 \right\rVert_{\infty}}}

\newcommand{\ip}[1]{\left\langle #1 \right\rangle}

\newcommand{\myfunc}[3]{#1\colon#2\to#3}

\usepackage{nicefrac}

\newcommand{\exv}[1]{\E\insquare{#1}}

\newcommand{\expv}[1]{\mathrm{exp}\inparen{#1}}

\newcommand{\var}[1]{\mathsf{Var}\insquare{#1}}
\newcommand{\cov}[1]{\mathsf{Cov}\insquare{#1}}
\newcommand{\logv}[1]{\log\inparen{#1}}

\makeatletter
\newcommand{\customlabel}[2]{%
   \protected@write \@auxout {}{\string \newlabel {#1}{{#2}{\thepage}{#2}{#1}{}} }%
   \hypertarget{#1}{#2}
}
\makeatother

\newcounter{casenum}

\newcounter{subcasenum}

\newcounter{casenump}

\newcommand{\casep}[2]{
    \ifthenelse{\equal{\value{casenump}}{0}}{
    \vskip.5\baselineskip\par\noindent
    }{}
    {\it Case \arabic{casenump}:} {\it #1}
    \vskip0.1\baselineskip
    \begin{addmargin}[1.5em]{1em}
    #2
    \end{addmargin}
    \addtocounter{casenump}{1}
}

\newcounter{subcasenump}

\newcommand{\lecture}[4]{
  \pagestyle{myheadings}
  \thispagestyle{plain}
  \newpage
  \setcounter{page}{1}
  \noindent
  \begin{center}
    \framebox{
      \vbox{\vspace{2mm}
      \hbox to 6.28in {{\bf Hypergraph Sparsification
                          \hfill Note #4}}
      \vspace{6mm}
      \hbox to 6.28in {{\Large \hfill #1  \hfill }}
      \vspace{5mm}
      \hbox to 6.28in {{#2 \hfill #3}}
      \vspace{2mm}}
    }
  \end{center}
  \markboth{#1}{#1}
  \vspace*{4mm}
}

\newcommand{\rank}[1]{\mathsf{rank}\inparen{#1}}

\usepackage{xfrac}

%% file: commands.tex
\usepackage{cleveref}

\usepackage{ulem}

\newcommand{\xhat}{\widehat{\vx}}
\newcommand{\xstar}{\vx^{\star}}

\newcommand{\vlambda}{\bm{\lambda}}
\newcommand{\mLambda}{\mathbf{\Lambda}}

\newcommand{\Fstar}{F^{\star}}

\newcommand{\gnorm}[2]{\norm{#1}_{\cG_{#2}}}

\newcommand{\lse}{\mathsf{lse}}
\newcommand{\opt}{\mathsf{OPT}}

\newcommand{\lsetext}{\textsc{LogSumExp}}
\newcommand{\ftilde}{\widetilde{f}}
\newcommand{\xtilde}{\widetilde{\vx}}
\newcommand{\fhat}{\widehat{f}}
\newcommand{\oprox}{\cO_{\mathsf{prox}}}

\newcommand{\oms}{\cO_{\mathsf{MS}}}

\definecolor{ForestGreen}{RGB}{34, 139, 34}

\definecolor{forest}{RGB}{0,155,85}

\newcommand{\fixout}{\bgroup\markoverwith{\textcolor{forest}{\rule[0.5ex]{2pt}{0.4pt}}}\ULon}

\renewcommand{\phi}{\varphi}

\newcommand{\smallpar}[1]{\textbf{#1\quad}}

\newif\ifnotes
\makeatletter
\newcommand{\nnote}[1]{\@bsphack\ifnotes{$\ll$\textsf{\color{blue} Naren: { #1}}$\gg$}\fi\@esphack}
\newcommand{\knote}[1]{\@bsphack\ifnotes{$\ll$\textsf{\color{magenta} Kshitij: { #1}}$\gg$}\fi\@esphack}
\makeatother
\notesfalse

%% file: body.tex
\allowdisplaybreaks

\maketitle

\begin{abstract}%
We present an algorithm for the group distributionally robust (GDR) least squares problem. Given $m$ groups, a parameter vector in $\mathbb{R}^d$, and stacked design matrices and responses $\mathbf{A}$ and $\bm{b}$, our algorithm obtains a $(1+\varepsilon)$-multiplicative optimal solution using $\widetilde{O}(\min\{\mathsf{rank}(\mathbf{A}),m\}^{1/3}\varepsilon^{-2/3})$ linear-system-solves of matrices of the form $\mathbf{A}^{\top}\mathbf{B}\mathbf{A}$ for block-diagonal $\mathbf{B}$. Our technical methods follow from a recent geometric construction, block Lewis weights, that relates the empirical GDR problem to a carefully chosen least squares problem and an application of accelerated proximal methods. Our algorithm improves over known interior point methods for moderate accuracy regimes and matches the state-of-the-art guarantees for the special case of $\ell_{\infty}$ regression. We also give algorithms that smoothly interpolate between minimizing the average least squares loss and the distributionally robust loss.
\end{abstract}

\newpage
\tableofcontents
\newpage

\section{Introduction}\label{sec:gp_reg_intro}
\input{intro}



\section{Technical Overview}\label{sec:gp_reg_overview}
In this section, we sketch our proofs for \Cref{mainthm:fair_regression_iteration_complexity} and \Cref{mainthm:gp_regression_iteration_complexity}. 

\paragraph{Notation.} Here and in the rest of the paper, we ignore the dataset size normalization factors $1/\sqrt{n_i}$ as we can fold this into $\mA_{S_i}$ and $\vb_{S_i}$. Additionally, let $f(\vx) \coloneqq \sum_{i=1}^m \norm{\mA_{S_i}\vx-\vb_{S_i}}_2^p$ if $2 \le p < \infty$ and let $f(\vx) \coloneqq \max_{1 \le i \le m} \norm{\mA_{S_i}\vx-\vb_{S_i}}_2$ if $p=\infty$. Note that in the $2 \le p < \infty$ case, we let $f(\vx)$ be the $p$th power of the objective written in \Cref{mainthm:gp_regression_iteration_complexity}; this is to make future calculations easier and makes a difference of only polynomial factors in $p$ in the iteration complexity. Without loss of generality (by rescaling), let $\opt = 1$, where $\opt \coloneqq f(\xstar)$. So, it is enough to get an $\eps$-additive optimal solution $\xhat$. Also without loss of generality, let $\mA$ be such that $\rank{\mA}=d$. For a positive semidefinite $\mM\in\R^{d\times d}$, denote $\norm{\vx}_{\mM} \coloneqq \sqrt{\vx^{\top}\mM\vx}$. As shorthand, for $\vy\in\R^n$, we will often refer to the norm $\gnorm{\vy}{p} \coloneqq \inparen{\sum_{i=1}^m \norm{\vy_{S_i}}_2^p}^{1/p}$ for $p \ge 1$, where with a slight abuse of notation $\vy_{S_i}$ denotes the coordinates of $\vy$ indexed by $S_i$. Finally, in an abuse of notation, for symmetric matrices $\mM$, let $\mM^{-1}$ denote the pseudoinverse of $\mM$.

Recall that many iterative methods for convex optimization can be seen as decomposing a complex problem into a series of simpler subproblems \citep{nw06}. Our algorithms for distributionally robust linear regression follow this pattern, where the simple subproblem resembles
\begin{align}
    \cO(\vq) \coloneqq \min_{\norm{\vx-\vq}_{\mM} \le r_{\vq}}\quad f(\vx)\enspace,\label{eq:gp_prox_intro}
\end{align}
for some positive semidefinite $\mM$ and for some ball radius $r_{\vq}$ which may depend on the query $\vq$. Sub-routines like \eqref{eq:gp_prox_intro} are central to many trust-region methods~\citep{conn2000trust, nw06}, and, importantly when $f$ is the sum of a linear function and a self-concordant barrier, interior point methods derived from the self-concordant barrier framework \footnote{In this case, the matrix $\mM$ is given by the Hessian of the barrier function evaluated at the subproblem's solution.} \citep{nn94}.

With such a subproblem structure in hand, three questions arise. \begin{enumerate*}[label={(\arabic*)}] \item How do we solve the subproblems efficiently? \item How do we combine our subproblem solutions to arrive at our final answer?  \item How do we choose the ``local geometry'' $\mM$ to optimize the iteration complexity we get from the previous two parts? \end{enumerate*} We address these concerns in order in the following discussion.

\subsection{Solving Proximal Subproblems}

For this discussion, let $\mM$ be any positive semidefinite matrix, as the arguments apply for any geometry $\mM$. It will be helpful to assume that $\norm{\cdot}_{\mM}$ is a good approximation to our objective function in the sense that for some \textit{distortion} $\triangle$ that is as close to $1$ as possible, we have
\begin{align*}
    \text{for all } \vx \in \R^d:\quad\quad \norm{\vx - \vb}_{\mM} \le \inparen{\sum_{i=1}^m \norm{\mA_{S_i}\vx-\vb_{S_i}}_2^p}^{\frac{1}{p}} \le \triangle\norm{\vx -\vb}_{\mM}\enspace.
\end{align*}
Here, we discuss how to solve problems of the form \eqref{eq:gp_prox_intro} for a fixed query $\vq$. Our strategy follows two general steps. First, we establish some form of local stability for $\nabla^2 f(\vx)$ within the ball we are solving in, i.e., we want $\nabla^2 f(\vx)$ to not change too much inside the ball $\inbraces{\vx\in\R^d \suchthat \norm{\vx-\vq}_{\mM} \le r_{\vq}}$. Second, we use this to demonstrate that an appropriate second-order algorithm exhibits a favorable convergence rate to an approximate solution for our subproblem. We handle the $p=\infty$ and $2 \le p < \infty$ cases separately below.

\subsubsection{The Robust Case \texorpdfstring{($p=\infty$)}{(p=infinity)}.}
\label{sec:gp_reg_overview_robust_hessian}

Unfortunately, since $f$ is not even differentiable (it is the pointwise maximum of Euclidean norms, each of which is also not differentiable), we cannot directly argue about the stability of $\nabla^2 f(\vx)$. We therefore first need to find some surrogate objective $\ftilde$ so that:
\begin{enumerate}
    \item The approximation error $\maxnorm{\ftilde -f}$ is small;
    \item The surrogate objective $\ftilde$ is smooth in $\norm{\cdot}_{\mM}$ in such a way that we can solve the proximal subproblems fast.
\end{enumerate}
To smoothen $f(\vx)$, we use the family of objectives parameterized by $\beta,\delta$
\begin{align}
    \ftilde_{\beta,\delta}(\vx) \coloneqq \beta\logv{\sum_{i=1}^m \expv{\frac{\sqrt{\delta^2 + \norm{\mA_{S_i}\vx-\vb_{S_i}}_2^2}-\delta}{\beta}}}\enspace.\label{eq:intro_smooth_fair_objective}
\end{align}
This can be seen as composing the softmax function with temperature $\beta$ with ``inner functions'' $\sqrt{\delta^2 + \norm{\mA_{S_i}\vx-\vb_{S_i}}_2^2} - \delta$. It is straightforward to show that for all $\vx\in\R^d$, $\abs{\ftilde_{\beta,\delta}(\vx) - f(\vx)} \le \beta\log m + \delta$. So, setting $\beta = \eps/4\log m$ and $\delta = \eps/4$, it is sufficient to optimize $\ftilde_{\beta,\delta}$ up to $\eps/2$ additive error to get an $\eps$-additive suboptimal solution to our original objective. Furthermore, we prove that $\ftilde_{\beta,\delta}$ is $O(1/\beta+1/\delta)$-smooth in the norm $\gnorm{\mA\vx}{\infty} \coloneqq \max_{1 \le i \le m} \norm{\mA\vx}_2$. Thus, if $\norm{\cdot}_{\mM}$ is a good approximation to $\gnorm{\mA\vx}{\infty}$, we will get that $\ftilde_{\beta,\delta}$ is also smooth in the norm $\norm{\vx}_{\mM}$.

Next, \citet{msbacon} show that if $\ftilde_{\beta,\delta}$ satisfies a higher-order smoothness condition called \textit{quasi-self-concordance} with respect to the norm $\norm{\cdot}_{\mM}$, then we can get the required Hessian stability for a \textit{fixed} $r_{\vq} = \Theta(1/\eps)$ (in particular, $r_{\vq}$ does not depend on $\vq$ here). To clarify, we define quasi-self-concordance as follows.
\begin{definition}[Quasi-self-concordance, adapted from {\cite[Appendix A]{ksj18}}]
\label{defn:qsc}
Let $\myfunc{f}{\R^k}{\R}$. We say that $f$ is $\nu$-quasi-self-concordant in the norm $\norm{\cdot}$ if for all vectors $\vy\in\R^k$, directions $\vd\in\R^k$, and $t\in\R$, we have
\begin{align*}
    \abs{\inparen{\frac{d}{dt}}^3 f(\vy+t\vd)} \le \nu\norm{\vd}\inparen{\frac{d}{dt}}^2 f(\vy+t\vd)\enspace.
\end{align*}
\end{definition}
Then, \cite{msbacon} shows how to leverage this Hessian stability to implement \eqref{eq:gp_prox_intro} with low linear-system-solve iteration complexity. However, previously, it was only shown that the composition of the softmax function with linear functions is quasi-self-concordant. So, it was unknown whether composing softmax with other functions could also be quasi-self-concordant. 

To resolve this, we prove a much more general composition result, which to the best of our knowledge was not known prior to this work and may be of independent interest. It essentially states that if we compose the softmax function with any combination of ``inner'' functions that are quasi-self-concordant, the resulting function is also quasi-self-concordant. For a more formal statement, see \Cref{lemma:composed_qsc}. 

\begin{restatable*}[Composing softmax with quasi-self-concordant functions]{lemma}{composedqsc}
\label{lemma:composed_qsc}
Let $\norm{\cdot}$ be an arbitrary norm and $h_1,\dots,h_m$ be such that $\myfunc{h_i}{\R^d}{\R}$. Let $h$ be the vector formed by concatenating the results of $h_1, \dots, h_m$. Additionally, let $h_1, \dots, h_m$ be such that for all $1 \le i \le m$ and for all $\vy,\vd\in\R^m$ and $t\in\R$,
\begin{align*}
    &\inparen{\frac{d}{dt}} h_i(\vy + t\vd) \le \norm{\vd} & \text{(Lipschitzness)}\\
    &\abs{\inparen{\frac{d}{dt}}^3 h_i(\vy + t\vd)} \le \nu\norm{\vd}\inparen{\frac{d}{dt}}^2 h_i(\vy + t\vd) & \text{(quasi-self-concordance)}.
\end{align*}
Then, for all $\vy,\vd\in\R^m$ and all $t\in\R$, we have
\begin{align*}
    \abs{\inparen{\frac{d}{dt}}^3 \beta\logv{\sum_{i=1}^m \expv{\frac{h_i(\vy+t\vd)}{\beta}}}} \le \inparen{\frac{16}{\beta}+\nu}\norm{\vd}\inparen{\frac{d}{dt}}^2 \lse_{\beta}(h(\vy + t\vd)).
\end{align*}
\end{restatable*}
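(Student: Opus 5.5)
Throughout, we fix $\vy,\vd$ and work along the line $t\mapsto \vy+t\vd$. We write $g_i(t)\coloneqq h_i(\vy+t\vd)$, so that $g_i'$, $g_i''$, $g_i'''$ are the one-dimensional derivatives appearing in the hypotheses. Let $L(t)\coloneqq \beta\logv{\sum_i \expv{g_i(t)/\beta}}$, and let $p_i(t)\coloneqq \expv{g_i/\beta}/\sum_j\expv{g_j/\beta}$ be the softmax weights; these are nonnegative and sum to one. The plan is a direct chain-rule computation of $L''$ and $L'''$, written in terms of moments of the random variable $G'$ that takes the value $g_i'$ with probability $p_i$, followed by term-by-term bounds.

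\textbf{Step 1: the derivatives.} Using $p_i'=p_i(g_i'-\bar g)/\beta$, where $\bar g\coloneqq\sum_j p_jg_j'$, we get $L'=\sum_i p_ig_i'$ and
\begin{align*}
L''=\sum_i p_ig_i''+\frac{1}{\beta}\sum_i p_i(g_i'-\bar g)^2 .
\end{align*}
Both terms are nonnegative: the first because quasi-self-concordance of $h_i$ forces $g_i''\ge0$ (the $|g'''|\le \nu\|\vd\|g''$ bound only makes sense for convex $h_i$, as in the intended application), and the second because it is a variance. Differentiating once more gives
\begin{align*}
L'''=\sum_i p_ig_i'''+\frac{1}{\beta}\sum_i p_i(g_i'-\bar g)g_i''+\frac{2}{\beta}\sum_i p_i(g_i'-\bar g)(g_i''-\bar{g''})+\frac{1}{\beta^2}\sum_i p_i(g_i'-\bar g)^3 .
\end{align*}
Here $\bar{g''}\coloneqq\sum_j p_jg_j''$. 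The exact combinatorial constants are not important, only the structure: a first-order piece, covariance-type cross terms, and a third central moment. Any reasonable bookkeeping yields a constant of at most $16$.

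\textbf{Step 2: bounding each term against $L''$.} We use the Lipschitz hypothesis in the two-sided form $|g_i'|\le\|\vd\|$; the one-sided statement is presumably meant as an absolute value, and applying it to $-\vd$ gives this. Consequently $|g_i'-\bar g|\le 2\|\vd\|$.
\begin{enumerate*}[label=(\roman*)]
\item The first term satisfies $|\sum p_ig_i'''|\le\nu\|\vd\|\sum p_ig_i''\le \nu\|\vd\|L''$.
\item Each cross term is at most $\frac{1}{\beta}\cdot 2\|\vd\|$ times a sum of the form $\sum p_ig_i''$ or $\bar{g''}$, hence at most $O(\|\vd\|/\beta)\sum_ip_ig_i''\le O(\|\vd\|/\beta)L''$.
\item The cubic term satisfies $\frac{1}{\beta^2}|\sum p_i(g_i'-\bar g)^3|\le \frac{2\|\vd\|}{\beta}\cdot\frac{1}{\beta}\sum p_i(g_i'-\bar g)^2\le\frac{2\|\vd\|}{\beta}L''$.
\end{enumerate*}
Summing these gives $|L'''|\le(c/\beta+\nu)\|\vd\|L''$ with $c\le 16$.

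\textbf{Main obstacle.} The delicate point is Step 1: getting the third derivative right and making sure every term is controlled by $L''$ itself, not by some quantity that could be larger. This works only because $L''$ splits into two nonnegative pieces, $\sum p_ig_i''$ and $\frac1\beta\mathrm{Var}_p(G')$, which together dominate everything in $L'''$. To keep the algebra clean, the first thing I would try is to express $L$ as $\lse_\beta$ composed with $g$. One can then use the known derivatives of $\lse_\beta$ in $g$-space: the Hessian $\frac1\beta(\mathrm{diag}(p)-pp^\top)$ and a third-derivative tensor with entries bounded via $p$. The composed derivatives then follow from the Faà di Bruno formula, $L'''=D^3\lse[g',g',g']+3D^2\lse[g',g'']+D\lse[g''']$.

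The prior quasi-self-concordance of softmax with linear inner functions handles the first term. The second term is bounded by $\frac{3}{\beta}\cdot 2\|\vd\|\sum p_ig_i''$ by Cauchy–Schwarz in the $p$-weighted inner product, with $|g_i'-\bar g|\le 2\|\vd\|$ used only pointwise. The third term is item (i) above.
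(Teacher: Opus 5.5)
Your proposal is correct and follows essentially the same route as the paper: both write $L''=\exv{g''}+\frac{1}{\beta}\var{g'}$ under the softmax weights $p_i$, express $L'''=\exv{g'''}+\frac{3}{\beta}\cov{g',g''}+\frac{1}{\beta^2}\exv{(g'-\bar g)^3}$, and bound each piece against $L''$ using $\abs{g_i'}\le\norm{\vd}$ and $g_i''\ge 0$. The only difference is in the cubic term: you bound the third central moment directly by $2\norm{\vd}\var{g'}$, whereas the paper splits it as $\cov{g',(g')^2}-2\exv{g'}\var{g'}$ and invokes its variance-of-product lemma; your version is shorter and gives a constant smaller than $16$.
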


Hence, to show the requisite Hessian stability, we use the following steps. We show that the ``inner'' functions for \eqref{eq:intro_smooth_fair_objective}, $\sqrt{\delta^2+\norm{\mA_{S_i}\vx-\vb_{S_i}}_2^2}-\delta$, are each $O(1/\delta)$-quasi-self-concordant in the norm $\norm{\mA_{S_i}\vx}_2$. So, we can apply our composition result \Cref{lemma:composed_qsc} to prove that $\ftilde_{\beta,\delta}$ is $O(1/\beta + 1/\delta)$-quasi-self-concordant in the norm $\max_{i\in[m]}\norm{\mA_{S_i}\vx}_2$. Again, assuming that $\norm{\cdot}_{\mM}$ is a good approximation to $\gnorm{\cdot}{\infty}$, we will get that $\ftilde_{\beta,\delta}$ is quasi-self-concordant in $\norm{\vx}_{\mM}$ as well. 

With these analytic inequalities in hand, we can finally apply the recipe given in \cite{msbacon} and get our subproblem solver for the $p=\infty$ case.

\subsubsection{The Interpolating Case (\texorpdfstring{$2 \le p < \infty$}{2 <= p < infinity}).}

Instead of explicitly constraining $r_{\vq}$ like in the $p=\infty$ case, we regularize our movement from $\vq$ in the norm $\norm{\cdot}_{\mM}$. Specifically, the subproblem we solve for any query $\vq$ is
\begin{align}
    \argmin{\vx\in\R^d} f(\vx) + ep^p\norm{\vx-\vq}_{\mM}^p\enspace.\label{eq:gp_prox_intro_reg}
\end{align}
This is the natural generalization of the proximal problem that \cite{jls21} use to get their results for $\ell_p$ regression, and the outline of our solver for these subproblems is similar to what \cite{jls21} use for this special case (see their Section 4).

However, we go a step further and show how to obtain approximate stationary points to \eqref{eq:gp_prox_intro_reg} instead of just getting a small objective value. This is because the acceleration scheme we use to iterate subproblem solutions to get our final answer $\xhat$ requires us to obtain an approximate stationary point for \eqref{eq:gp_prox_intro_reg}. The main new technical tool we develop for this purpose is a form of strong convexity for functions of the form $\norm{\vy}_2^p$ for $\vy\in\R^k$ for any $k\ge 1$. See \Cref{lemma:strong_convexity_component}.

\begin{restatable*}[Strong convexity of $\norm{\vy}_2^p$]{lemma}{lemmastrongconvexitycomponent}
\label{lemma:strong_convexity_component}
Let $\vv \in \R^k$ for $k\ge 1$. For any $\triangle \in \R^k$, we have
\begin{align*}
    \norm{\vv+\triangle}_2^p \ge \norm{\vv}_2^p + p\norm{\vv}_2^{p-2}\ip{\vv,\triangle}+\frac{4}{2^p}\norm{\triangle}_2^p\enspace.
\end{align*}
\end{restatable*}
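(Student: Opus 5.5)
The plan is to use a second-order Taylor expansion with integral remainder, reduce the remainder to a one-dimensional integral, and then bound that integral. Write $g(\vx) \coloneqq \norm{\vx}_2^p$ for $p \ge 2$; this is the regime the paper uses, and the claim fails for $p<2$. The function $g$ is $C^2$ with $\nabla g(\vx) = p\norm{\vx}_2^{p-2}\vx$ and
\begin{align*}
    \nabla^2 g(\vx) = p\norm{\vx}_2^{p-2}\mI + p(p-2)\norm{\vx}_2^{p-4}\vx\vx^{\top} \succeq p\norm{\vx}_2^{p-2}\mI .
\end{align*}
At $\vx=0$ with $2<p<4$ the second term is read as $0$, which is its continuous extension. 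Taylor's theorem along the segment from $\vv$ to $\vv+\triangle$ then gives
\begin{align*}
    \norm{\vv+\triangle}_2^p - \norm{\vv}_2^p - p\norm{\vv}_2^{p-2}\ip{\vv,\triangle} = \int_0^1 (1-t)\,\triangle^{\top}\nabla^2 g(\vv+t\triangle)\triangle\,dt \ge p\norm{\triangle}_2^2\int_0^1 (1-t)\norm{\vv+t\triangle}_2^{p-2}\,dt .
\end{align*}
So it suffices to show $p\int_0^1(1-t)\norm{\vv+t\triangle}_2^{p-2}\,dt \ge \frac{4}{2^p}\norm{\triangle}_2^{p-2}$. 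The case $\triangle = 0$ is trivial, so assume $\triangle \neq 0$.

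To remove the dependence on $\vv$, I project onto the direction of $\triangle$. By Cauchy--Schwarz, $\norm{\vv+t\triangle}_2 \ge \abs{\ip{\vv+t\triangle,\triangle}}/\norm{\triangle}_2 = \norm{\triangle}_2\abs{t-t_0}$, where $t_0 \coloneqq -\ip{\vv,\triangle}/\norm{\triangle}_2^2$. This bound is tight when $\vv \parallel \triangle$, so the problem reduces exactly to the scalar inequality
\begin{align*}
    \Phi(t_0) \coloneqq p\int_0^1 (1-t)\abs{t-t_0}^{p-2}\,dt \ge \frac{4}{2^p}\quad\text{for all } t_0\in\R .
\end{align*}
This is the one-dimensional case of the lemma. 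As a sanity check, at $p=2$ we get $\Phi \equiv 1 = 4/2^2$, so the constant is tight there.

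For $t_0 \notin [0,1]$, moving $t_0$ toward the interval decreases $\abs{t-t_0}$ pointwise. Hence it suffices to take $t_0\in[0,1]$, and there I would compute $\Phi$ in closed form. Splitting at $t_0$ and substituting $s = t_0 - t$ on the left piece and $s = t - t_0$ on the right piece gives
\begin{align*}
    \int_0^{t_0}(1-t_0+s)s^{p-2}\,ds = \frac{(1-t_0)t_0^{p-1}}{p-1}+\frac{t_0^{p}}{p},\qquad \int_{0}^{1-t_0}(1-t_0-s)s^{p-2}\,ds = \frac{(1-t_0)^p}{p(p-1)} .
\end{align*}
Therefore
\begin{align*}
    \Phi(t_0) = t_0^p + \frac{p}{p-1}(1-t_0)t_0^{p-1} + \frac{1}{p-1}(1-t_0)^p .
\end{align*}
Since $p/(p-1) \ge 1$ and $1/(p-1) \le 1$, this does not immediately reduce to the convexity bound $t_0^p+(1-t_0)^p \ge 2^{1-p}$. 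The next step is therefore to minimize this explicit expression over $t_0\in[0,1]$.

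I expect the main obstacle to be this last minimization, namely getting the clean constant $4/2^p$ uniformly in $p \ge 2$. Crude bounds lose the constant. Bounding $1-t \ge t_0-t$ on $[0,t_0]$ only yields $2^{-p}$ when $t_0 \ge 1/2$. Dropping the left piece when $t_0 \le 1/2$ loses a factor $p-1$. The right-hand piece is small unless $t_0$ is near $0$, so the minimizer sits at some interior $t_0$ that is bounded away from $0$.

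My first attempt would use $t_0^{p-1} \ge t_0^p$ on the middle term to get
\begin{align*}
    \Phi(t_0) \ge t_0^p + \frac{p(1-t_0)}{p-1}t_0^{p} + \frac{(1-t_0)^p}{p-1}.
\end{align*}
I would then check separately the regime $t_0 \ge 1/2$, where the first two terms alone give at least $t_0^p(1 + (1-t_0)) \gtrsim 2^{-p}\cdot$const, and the regime $t_0 \le 1/2$, where the last term dominates after comparing $(1-t_0)^p/(p-1)$ with $2^{2-p}$.

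If the constant $4$ does not fall out of this split, I would instead differentiate $\Phi$ in $t_0$ and solve for the critical point exactly. The derivative factors nicely as $p\,t_0^{p-2}(1-t_0)\cdot(\ldots) - \frac{p}{p-1}(1-t_0)^{p-1}$. From there I would bound the critical value directly, which should give $\Phi \ge 4\cdot 2^{-p}$, with equality structure visible at $p=2$.
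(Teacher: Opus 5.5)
There is a genuine gap, and it occurs at your first inequality, before the projection step. The bound $\nabla^2 g(\vx)\succeq p\norm{\vx}_2^{p-2}\mI$ discards the rank-one term $p(p-2)\norm{\vx}_2^{p-4}\vx\vx^{\top}$. That term carries a factor $p-1$ in the radial direction: in one dimension $g''(x)=p(p-1)\abs{x}^{p-2}$, not $p\abs{x}^{p-2}$. So $\Phi(t_0)\ge 4/2^p$ is \emph{not} the one-dimensional case of the lemma; that case reads $(p-1)\Phi(t_0)\ge 4/2^p$. Your check at $p=2$ passes only because the discarded term vanishes there. In fact the inequality you reduced to is false for every $p>2$. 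Your own closed form gives
\[
\Phi(1/2)=2^{-p}\inparen{1+\frac{p}{p-1}+\frac{1}{p-1}}=\frac{2p}{p-1}\,2^{-p}<4\cdot 2^{-p},
\]
e.g.\ $\Phi(1/2)=3/8<1/2$ for $p=3$. Take $\vv=-\triangle/2$, where your projection is exact. Then your displayed ``it suffices to show'' condition already fails, even though the lemma itself holds there because the true remainder is $(p-1)$ times larger. So no amount of care in the final minimization over $t_0$ can close the argument.

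To repair the route, you must keep the full form $\triangle^{\top}\nabla^2 g(\vx)\triangle=p\norm{\vx}_2^{p-4}\inparen{\norm{\vx}_2^2\norm{\triangle}_2^2+(p-2)\ip{\vx,\triangle}^2}$. In the collinear case the target becomes $(p-1)\Phi(t_0)=pt_0^{p-1}-t_0^p+(1-t_0)^p\ge 2^{2-p}$. Via $\alpha=-1/t_0$, this is the paper's scalar inequality $\abs{1+\alpha}^p\ge 1+p\alpha+\frac{4}{2^p}\abs{\alpha}^p$.

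However, projecting away the part of $\vv$ orthogonal to $\triangle$ is then no longer harmless. Let $\rho\ge 0$ be the norm of that orthogonal component divided by $\norm{\triangle}_2$, so that $\norm{\vv+t\triangle}_2^2=\norm{\triangle}_2^2((t-t_0)^2+\rho^2)$. The integrand is then proportional to $((t-t_0)^2+\rho^2)^{(p-4)/2}(\rho^2+(p-1)(t-t_0)^2)$. Its derivative in $\rho^2$ at $\rho=0$ has the sign of $(p-2)(p-3)$. Hence for $2<p<3$ it is not pointwise minimized in the collinear configuration, and your reduction to that case is unjustified.

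The orthogonal component needs its own argument, which is what the paper provides. It normalizes $\norm{\vv}_2=1$, writes $\triangle=\alpha\vv+\beta\vw$ with $\vw\perp\vv$ a unit vector, and proves $((1+\alpha)^2+\beta^2)^{p/2}\ge 1+p\alpha+\frac{4}{2^p}(\alpha^2+\beta^2)^{p/2}$. The proof combines three pieces: the $\beta=0$ case; a sign analysis of the $\beta$-derivative, which is positive unless $3\alpha^2+8\alpha+4+3\beta^2\le 0$; and a direct check at the interior critical point $\beta_0(\alpha)$ for $\alpha\in(-2,-2/3)$.
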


With \Cref{lemma:strong_convexity_component}, we can argue about the strong convexity of $\norm{\vx-\vq}_{\mM}^p$, which means that we can convert an approximately optimal solution to \eqref{eq:gp_prox_intro_reg} in function value to one that is approximately optimal in parameter space as well. We combine this with a local gradient Lipschitzness property of the objective \eqref{eq:gp_prox_intro_reg} to get our approximate stationary point, which is enough for our purposes. The local gradient Lipschitzness property itself follows from a form of Hessian stability that we show for the objective \eqref{eq:gp_prox_intro_reg}. See \Cref{lemma:gp_hessian_stable}.

Finally, to obtain an approximately optimal solution to \eqref{eq:gp_prox_intro_reg} in function value, we again apply the Hessian stability property to conclude that \eqref{eq:gp_prox_intro_reg} is relatively smooth and relatively strongly convex in a simpler reference function. We show how to solve optimization problems in this reference function up to an approximate optimality that is sufficient for the rest of our applications -- this requires a mild modification of the standard mirror descent analysis, and we do this in \Cref{sec:mirror_descent}. Combining all of these building blocks gives us our subproblem solver for the $2 \le p < \infty$ case.

\subsection{Iterating Proximal Calls}\label{sec:iter_prox_calls}

We now discuss the second item. Recall that we think of $\cO(\vq)$ as answering a proximal problem for the query $\vq$. It is not hard to show that under reasonable conditions on $f$ and on the structure of the subproblems, we can iterate calls to $\cO(\vq)$ to optimize $f$ (see, e.g., \cite[Appendix A]{msbacon}). This conceptually simple approach will already give us guarantees of the form ${\norm{\vx_0-\xstar}_{\mM}}/{\eps}$ for the problems we study.

But we can do better. An acceleration framework originally due to \citet{ms13} and generalized/refined in subsequent works \citep{bjlls19,msbacon,chjjs22} gives a recipe to iterate calls of $\cO(\vq)$ to optimize the original function $f$. From these, the iteration complexity we need for an $\eps$-additive solution with an initialization $\vx_0$ and optimum $\xstar$ is roughly $\inparen{{\norm{\vx_0-\xstar}_{\mM}}/{\eps}}^{2/3}$ (see \Cref{thm:optimal_ms_acceleration} for a more formal statement). This cosmetically resembles the rate we get in \Cref{mainthm:fair_regression_iteration_complexity}. To get something that looks like our rate for \Cref{mainthm:gp_regression_iteration_complexity}, we use our new strong convexity lemma  (\Cref{lemma:strong_convexity_component}). With this, we can demonstrate that after a sufficient number of iterations, we have $\norm{\vx_t-\xstar}_{\mM} \le 0.5\norm{\vx_0-\xstar}_{\mM}$. Therefore, repeating this argument yields a high-accuracy solution, as required. 

Interestingly, our algorithm for the $2 \le p < \infty$ case employs a form of the accelerated scheme developed in \cite{chjjs22}, which does not require solving an implicit equation for the query point, thereby improving upon the results from \cite{jls21} for $\ell_p$ regression. It would be practically relevant to obtain this for the $p=\infty$ case (in \Cref{sec:optimal_ms_acceleration}, we discuss a technical challenge in obtaining this).

\subsection{The Geometry of the Proximal Subproblems and Block Lewis Weights}
\label{sec:gp_reg_overview_blw}

At this point, we have the tools we need to get rates of the form $O\inparen{\inparen{{\norm{\vx_0-\xstar}_{\mM}}/{\eps}}^{2/3}}$ for the robust objective (\Cref{mainthm:fair_regression_iteration_complexity}) and of the form $O\inparen{{\norm{\vx_0-\xstar}_{\mM}}^{(p-2)/(3p-2)}}$ for the interpolating objective (\Cref{mainthm:gp_regression_iteration_complexity}). From this, we see that the rates depend on the geometry $\mM$ that we impose on our problem. Our goal in this section is to choose this geometry $\mM$.

Observe that when we solve \eqref{eq:gp_prox_intro}, we are solving an optimization problem over the sublevel sets $\inbraces{\vx\suchthat \norm{\vx}_{\mM}\le r_{\vq}}$ -- these are ellipsoids. Now, consider choosing the $\ell_2$ geometry that best approximates our loss function. Specifically, recall that earlier in the section, we stated that for some distortion $\triangle \ge 1$ that is as close to $1$ as possible, we want
\begin{align*}
    \text{for all } \vx \in \R^d:\quad\quad \norm{\vx - \vb}_{\mM} \le \inparen{\sum_{i=1}^m \norm{\mA_{S_i}\vx-\vb_{S_i}}_2^p}^{\frac{1}{p}} \le \triangle\norm{\vx -\vb}_{\mM}\enspace.
\end{align*}
To see what kinds of distortion guarantees we can hope for, let us see what happens when we choose the most ``obvious'' geometry. By relating $\ell_2^m$ to $\ell_p^m$, we get
\begin{align*}
    \inparen{\sum_{i=1}^m \norm{\mA_{S_i}\vx-\vb_{S_i}}_2^2}^{\frac{1}{2}} \le \inparen{\sum_{i=1}^m \norm{\mA_{S_i}\vx-\vb_{S_i}}_2^p}^{\frac{1}{p}} \le m^{\frac{1}{2}-\frac{1}{p}}\inparen{\sum_{i=1}^m \norm{\mA_{S_i}\vx-\vb_{S_i}}_2^2}^{\frac{1}{2}},
\end{align*}
and notice that $\inparen{\sum_{i=1}^m \norm{\mA_{S_i}\vx-\vb_{S_i}}_2^2}^2 = \norm{\mA\vx-\vb}_2$. Thus, setting $\mM = \mA^{\top}\mA$ (which is what we call the na\"ive geometry in \Cref{table:compare}) gives us our basic rate of $m^{1/3}\eps^{-2/3}$ in the setting of \Cref{mainthm:fair_regression_iteration_complexity} and $m^{(p-2)/(3p-2)}$ in the setting of \Cref{mainthm:gp_regression_iteration_complexity}.

But, there exists an improvement over above na\"ive geometry. Note our loss function is a norm on $\R^d$ -- in particular, we can check that for $\vy\in\R^n$, the functions $\gnorm{\vy}{p}=\inparen{\sum_{i=1}^m \norm{\vy_{S_i}}_2^p}^{1/p}$ for $1 \le p \le \infty$ are norms. Now, recall John's theorem, a fundamental result in high-dimensional convex geometry.
\begin{theorem}[John's theorem, \cite{john1948}]
For any symmetric convex body $K \subset \R^d$, let $\cE(K)$ denote the ellipsoid of maximum volume contained within $K$. Then, we have
\begin{align*}
    \cE(K) \subseteq K \subseteq \sqrt{d}\cdot \cE(K)\enspace.
\end{align*}
Moreover, the $\sqrt{d}$ is worst-case optimal (e.g. let $K$ be the unit $\ell_{\infty}$ ball).
\end{theorem}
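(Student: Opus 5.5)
The plan is the classical argument for John's theorem: characterize the maximum-volume inscribed ellipsoid through John's decomposition of the identity, then use that decomposition to show $K$ lies in the $\sqrt{d}$-dilate. First, existence and uniqueness. The set of ellipsoids $\cE = \inbraces{\mM\vx : \norm{\vx}_2 \le 1}$ with $\mM \succeq 0$ and $\cE \subseteq K$ is compact, since $K$ is bounded. Volume is proportional to $\det \mM$, so a maximizer exists. I do not need uniqueness, because any maximizer suffices for the statement. After applying the linear map $\mM^{-1}$, I may assume that the maximizer is the Euclidean unit ball $B$. Volume ratios are invariant under linear maps, so this is without loss of generality.

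Second, the key step, which I expect to be the main obstacle. I will show that if $B$ is the maximum-volume ellipsoid in $K$, then there exist contact points $\vz_1,\dots,\vz_k \in \partial K \cap \partial B$ and weights $c_j > 0$ with
\begin{align*}
\sum_j c_j \vz_j\vz_j^{\top} = \mI \quad\text{and}\quad \sum_j c_j = d.
\end{align*}
Symmetry of $K$ lets me drop the usual centering condition $\sum_j c_j \vz_j = 0$. I will argue by contradiction using separation in the space of symmetric matrices. Suppose $\mI/d$ is not in the convex hull of $\inbraces{\vz\vz^{\top} : \vz \text{ a contact point}}$. Then there is a symmetric $\mathbf{H}$ with $\ip{\mathbf{H}, \vz\vz^{\top}} < \ip{\mathbf{H}, \mI/d}$ for all contact points. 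Shifting $\mathbf{H}$ by a multiple of $\mI$, I can take $\mathrm{tr}(\mathbf{H}) = 0$ and $\vz^{\top}\mathbf{H}\vz < 0$ on the contact set. Then the ellipsoid $(\mI + t\mathbf{H})B$ has volume $\det(\mI+t\mathbf{H}) = 1 - O(t^2)$.

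This alone does not increase volume, so I plan the standard fix. I shrink slightly and perturb: consider $(1-s)(\mI+t\mathbf{H})B$ with $\mathbf{H}$ traceless, or equivalently use the first-order growth of $\log\det$ along $\mI + t(\mathbf{H} + \eta\mI)$. The ellipsoid is pushed strictly inward near the contact points, and by compactness it stays strictly inside $K$ away from them. It therefore remains inside $K$ for small $t$ and has strictly larger volume, which is a contradiction. The delicate part is the compactness/uniform margin argument away from the contact set. There I use that the contact set is closed, and that the support function of $K$ exceeds $1$ by a uniform amount on directions bounded away from the contacts.

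Third, the containment $K \subseteq \sqrt{d}B$. Take $\vx \in K$. Each contact point $\vz_j$ has supporting hyperplane $\inbraces{\vy : \ip{\vy,\vz_j} = 1}$ for both $B$ and $K$, since $B \subseteq K$ and the two touch at $\vz_j$. Hence $\abs{\ip{\vx,\vz_j}} \le 1$, using symmetry for the absolute value. Then
\begin{align*}
\norm{\vx}_2^2 = \sum_j c_j \ip{\vx,\vz_j}^2 \le \sum_j c_j = d,
\end{align*}
which gives $K \subseteq \sqrt{d}\,\cE(K)$.

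For optimality, take $K = [-1,1]^d$. By symmetry under coordinate permutations and sign flips, together with uniqueness or averaging via log-concavity of $\det$, the John ellipsoid is $B$. The corner $\mathbf{1}$ has norm exactly $\sqrt{d}$, so no smaller factor works.
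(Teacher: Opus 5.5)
The paper does not prove this statement. It quotes John's theorem from \cite{john1948} only as motivation, and in the algorithm its role is played by the block Lewis weight ellipsoid of \Cref{thm:blw_to_l2} and \Cref{thm:gp_regression_blw_alg}. Judged on its own, your proposal is the standard proof: maximality forces a decomposition $\sum_j c_j\vz_j\vz_j^{\top}=\mI$ over contact points, via separation and a volume-increasing perturbation, and then $\norm{\vx}_2^2=\sum_j c_j\ip{\vx,\vz_j}^2\le d$. It is correct in substance, but two points need repair.

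\textbf{The perturbation must enlarge, not shrink.} The variant $(1-s)(\mI+t\mathbf{H})B$ goes the wrong way, since its volume is $(1-s)^d\det(\mI+t\mathbf{H})<1$. Use instead $(1+\eta t)(\mI+t\mathbf{H})B$, whose volume is $1+d\eta t+O(t^2)$. Up to $O(t^2)$ this is your other formulation with $\mathbf{G}\coloneqq\mathbf{H}+\eta\mI$, where you take $0<\eta<\delta$ and $\delta$ is a uniform margin with $\vz^{\top}\mathbf{H}\vz\le-\delta$ on the contact set. That margin is automatic, because you separate $\mI/d$ from a compact convex set.

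The inclusion $(\mI+t\mathbf{G})B\subseteq K$ can then be checked on the unit sphere without support functions:
\begin{itemize}
\item For unit $\vv$ near the contact set, $\norm{(\mI+t\mathbf{G})\vv}_2^2=1+2t\,\vv^{\top}\mathbf{G}\vv+t^2\norm{\mathbf{G}\vv}_2^2<1$ for small $t$. So these image points lie in $B\subseteq K$.
\item On the rest of the sphere, compactness gives $\vv\in(1-\epsilon)K$ for a uniform $\epsilon>0$. Hence $(\mI+t\mathbf{G})\vv\in(1-\epsilon)K+t\opnorm{\mathbf{G}}B\subseteq(1-\epsilon+t\opnorm{\mathbf{G}})K$.
\end{itemize}

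\textbf{Every maximizer must be centered.} You optimize only over origin-centered ellipsoids. But $\cE(K)$ maximizes over all ellipsoids, and $\sqrt{d}\cdot\cE(K)$ dilates about the origin, so you need every maximizer to be centered. You are right that uniqueness is unnecessary. Suppose $\vy+\mM B\subseteq K$ is a maximizer. Then $\mM B=\tfrac12(\vy+\mM B)+\tfrac12(-\vy+\mM B)\subseteq K$ is a centered maximizer, by symmetry and convexity of $K$. Normalize $\mM=\mI$ and apply $\abs{\ip{\vx,\vz_j}}\le 1$ on $K$ (your third step) at $\vx=\vy\pm\vz_j$. This gives $\ip{\vy,\vz_j}=0$ for all $j$, hence $\norm{\vy}_2^2=\sum_j c_j\ip{\vy,\vz_j}^2=0$.

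\textbf{Comparison with the paper's substitute.} John's argument applies to every symmetric convex body and gives the sharp $\sqrt{d}$, but it characterizes the optimal ellipsoid rather than computing it. The paper's \Cref{thm:blw_to_l2} is special to the norms $\vx\mapsto\gnorm{\mA\vx}{p}$ and follows from a short H\"older/leverage-score computation rather than a volume argument. Combined with \Cref{thm:gp_regression_blw_alg}, it produces an explicit ellipsoid in $O(\log m)$ linear-system solves. The price is a constant factor: for $p=\infty$ the distortion is $\sqrt{\norm{\vw}_1}\le\sqrt{2\rank{\mA}}$ instead of $\sqrt{\rank{\mA}}$.
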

It is easy to see that sublevel sets of norms, i.e., sets of the form $\inbraces{\vx\in\R^d\suchthat \norm{\vx}\le 1}$, are symmetric convex bodies. Hence, using John's theorem, we see that for our normed losses, there exists $\mM$ that achieves distortion $\triangle\le\sqrt{d}$. From this, it is easy to see that there exists $\mM$ for which we can guarantee $\norm{\vx_0-\xstar}_{\mM} \lesssim \sqrt{d}$. Plugging this into the guarantees from the previous subsections, we get that if we choose the $\mM$ from John's theorem, and then switch based on whether $m \le d$, we get exactly the rates quoted in \Cref{mainthm:fair_regression_iteration_complexity} and  \Cref{mainthm:gp_regression_iteration_complexity}.

However, as written, this is only an existence result. To make this useful for us and actually find $\mM$, we need an algorithm to calculate John's ellipsoid for the level sets of our losses (or some other ellipsoid that gets an even better approximation factor). To this end, a result of \citet{mo23} gives us an efficient algorithm to find this $\ell_2$ geometry for the loss families we consider.

\begin{theorem}[Combining Lemmas 5.6, 5.8, Equation (1.8) from {\cite{mo23}}]
\label{thm:gp_regression_blw_intro}
Let $p\ge 2$. There exists an algorithm that finds a positive diagonal matrix $\mW \in \R^{n \times n}$ such that for all $\vx\in\R^d$ and all $c\in\R$, we have
\begin{align*}
    \frac{\norm{\mW^{\frac{1}{2}-\frac{1}{p}}\inparen{\mA\vx-c\vb}}_2}{(2(\rank{\mA}+1))^{\frac{1}{2}-\frac{1}{p}}} \le \inparen{\sum_{i=1}^m \norm{\mA_{S_i}\vx-c\vb_{S_i}}_2^p}^{\frac{1}{p}} \le \norm{\mW^{\frac{1}{2}-\frac{1}{p}}\inparen{\mA\vx-c\vb}}_2\enspace.
\end{align*}
The algorithm runs in $O(\log m)$ linear-system-solves in matrices of the form $\mA^{\top}\mD\mA$ for positive diagonal matrices $\mD$.
\end{theorem}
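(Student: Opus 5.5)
The plan is to build $\mW$ from \emph{block Lewis weights} of the augmented matrix $\mB \coloneqq [\mA, \vb]\in\R^{n\times(d+1)}$. Working with $\mB$ handles the free scalar $c$: $\mA\vx-c\vb = \mB\vz$ with $\vz=(\vx,-c)$, and $\rank{\mB}\le\rank{\mA}+1$. The target weights are $w_1,\dots,w_m>0$, one per group, with $\mW$ equal to $w_i$ on the coordinates in $S_i$. They should satisfy the fixed-point condition
\begin{align*}
w_i = \tau_i\inparen{\mW^{\frac12-\frac1p}\mB}, \qquad \text{where}\quad \tau_i(\mathbf{C}) \coloneqq \mathrm{tr}\inparen{\mathbf{C}_{S_i}(\mathbf{C}^{\top}\mathbf{C})^{-1}\mathbf{C}_{S_i}^{\top}}
\end{align*}
is the block leverage score of group $i$. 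I will first prove both inequalities for exact weights, then show that weights satisfying this condition up to a factor $2$ suffice, and finally argue that such weights can be computed in $O(\log m)$ solves.

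\textbf{Two-sided bound for exact weights.} Let $\mM\coloneqq\mB^{\top}\mW^{1-\frac2p}\mB$, so that $\norm{\mW^{\frac12-\frac1p}\mB\vz}_2=\norm{\vz}_{\mM}$. Put $\vy=\mB\vz$.
\begin{itemize}
\item \emph{Per-block bound.} By Cauchy--Schwarz, $\norm{\vy_{S_i}}_2^2\le\opnorm{\mB_{S_i}\mM^{-1}\mB_{S_i}^{\top}}\norm{\vz}_{\mM}^2$. The operator norm is at most the trace $w_i^{-(1-\frac2p)}\tau_i = w_i^{\frac2p}$. Hence $\norm{\vy_{S_i}}_2\le w_i^{1/p}\norm{\vz}_{\mM}$.
\item \emph{Upper bound.} Write $\sum_i\norm{\vy_{S_i}}_2^p=\sum_i\norm{\vy_{S_i}}_2^{p-2}\norm{\vy_{S_i}}_2^2$. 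Applying the per-block bound to the factor $\norm{\vy_{S_i}}_2^{p-2}$ gives at most $\norm{\vz}_{\mM}^{p-2}\sum_i w_i^{1-\frac2p}\norm{\vy_{S_i}}_2^2=\norm{\vz}_{\mM}^p$. This is the right-hand inequality.
\item \emph{Lower bound.} By Hölder with exponents $\frac{p}{p-2}$ and $\frac p2$,
\begin{align*}
\norm{\vz}_{\mM}^2=\sum_i w_i^{1-\frac2p}\norm{\vy_{S_i}}_2^2\le\inparen{\sum_i w_i}^{1-\frac2p}\inparen{\sum_i\norm{\vy_{S_i}}_2^p}^{\frac2p}.
\end{align*}
The leverage scores sum to $\sum_i w_i=\rank{\mB}\le\rank{\mA}+1$, which gives the left-hand inequality.
\end{itemize}

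\textbf{Approximate weights.} Suppose instead the weights satisfy $w_i\ge\tau_i(\mW^{\frac12-\frac1p}\mB)$ and $\sum_i w_i\le 2(\rank{\mA}+1)$. In the per-block bound we then get trace $\le w_i^{-(1-\frac2p)}w_i$, so the upper bound survives unchanged. In the lower bound the sum $\sum_i w_i$ picks up the factor $2$, which is exactly the $(2(\rank{\mA}+1))^{\frac12-\frac1p}$ in the statement.

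\textbf{Computing the weights.} This is the main obstacle.
\begin{itemize}
\item For $p<4$ the map $w\mapsto\tau(\mW^{\frac12-\frac1p}\mB)$ is a contraction in log-space, so $O(\log m)$ fixed-point iterations suffice.
\item For general $p\ge 2$ the map is not a contraction. Here I would invoke the construction of \cite{mo23}: the weights are characterized by a convex (log-det type) program over block-constant $\mW$. Following their Lemmas 5.6 and 5.8, a damped fixed-point or averaged iteration, started from uniform weights, reaches the one-sided approximate fixed point after $O(\log m)$ rounds.
\item Each round needs the block leverage scores. Exact traces would need many solves, so I would estimate each $\tau_i$ to constant accuracy with a JL sketch and absorb the slack into the factor $2$ by rescaling. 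Each estimate uses one factorization of $\mB^{\top}\mD\mB$, which is a linear system in $\mA^{\top}\mD\mA$ bordered by one extra row and column. This matches the claimed cost.
\end{itemize}
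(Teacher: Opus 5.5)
Your proposal is correct and follows the paper's own route. The two-sided inequality is exactly the paper's proof of \Cref{thm:blw_to_l2}: the per-block trace bound $\norm{\mA_{S_i}\vx}_2\le w_i^{1/p}\norm{\mW^{\frac12-\frac1p}\mA\vx}_2$ gives the upper inequality, and H\"older (equivalently, monotonicity of $L_p$ norms under the probability measure $\vw/\norm{\vw}_1$) gives the lower one. You apply this to the augmented matrix $[\mA\,|\,\vb]$ as in \Cref{lemma:gp_regression_initialization}, and you take the $O(\log m)$-solve computation of overestimates with $\norm{\vw}_1\le 2(\rank{\mA}+1)$ from \cite{mo23}, exactly as the paper does. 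Your side remarks on a $p<4$ log-space contraction and JL-sketched leverage scores are unnecessary, since the general-$p$ citation already covers the algorithmic claim; the contraction argument alone would also not give an iteration count that is uniform as $p\to 4$ or independent of the initial log-ratio.
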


The diagonal entries of matrix $\mW$ are called \textit{block Lewis weights}. This is a generalization of Lewis weights, and both objects have been used previously for various matrix approximation problems \citep{blm89,mmwy21,jls22,jlls23,mo23}. Furthermore, Lewis weights are central to improvements in the iteration complexities for linear programming and vanilla $\ell_p$ regression \citep{ls19,jls21}. We go into greater detail about block Lewis weights in \Cref{sec:blw_to_l2}. 

Additionally, notice that the distortion of $O(\rank{\mA}^{1/2-1/p})$ guaranteed by \Cref{thm:gp_regression_blw_intro} is optimal. To see this, let $\mA \in \R^{n \times d}$ be such that for $i\in[d]$, row $\va_i=\ve_i$, where $\ve_i$ is the $i$th standard basis vector. Then, for all $d+1 \le i \le n$, let $\va_i=0$. In words, $\mA$ is the $d$-dimensional identity matrix atop a large matrix of all $0$s. It is easy to see that for any $p$, we have $\norm{\mA\vx}_p = \norm{\vx}_p$, and the best distortion we can get for relating $\norm{\vx}_p$ to any $d$-dimensional $\ell_2$ norm is $d^{\abs{1/2-1/p}}$.

With \Cref{thm:gp_regression_blw_intro} and its near optimality in hand, we choose $\mM = \mA^{\top}\mW^{1-\frac{2}{p}}\mA$ if $\rank{\mA} \le m$ and $\mM = \mA^{\top}\mA$ if $\rank{\mA} \ge m$ (recall that in the latter case, we get a $\sqrt{m}$ distortion for free from relating $\ell_2^m$ to $\ell_{\infty}^m$). Combining this with the results from the previous two subsections gives us \Cref{mainthm:fair_regression_iteration_complexity} and \Cref{mainthm:gp_regression_iteration_complexity}.

\subsection{Algorithm for Distributionally Robust Regression}

In \Cref{alg:fair_regression}, we present pseudocode for the algorithm that yields the guarantee in \Cref{mainthm:fair_regression_iteration_complexity}. We compare the empirical performance of this algorithm against other baselines mentioned in \Cref{sec:gp_reg_intro} and examine the effect of Lewis weights in \Cref{app:exps}.

\begin{algorithm}[H]
\caption{\textsf{MinMaxRegression}: optimizes \eqref{eq:main_objective} to $(1+\eps)$-multiplicative error}
\label{alg:fair_regression}
\begin{algorithmic}[1]
\Require Regression problems $(\mA_{S_1},\vb_{S_1}),\dots,(\mA_{S_m},\vb_{S_m})$, accuracy $\eps > 0$
\State Using \cite[Algorithm 2]{mo23} with input $\insquare{\mA \vert \vb}$, find nonnegative diagonal $\mW$ and weights $w_1,\dots,w_m$ such that for all $j \in S_i$, $\mW[j][j] = w_i$ and for all $\vx\in\R^{d}$ and $c \in \R$,
\begin{align*}
    \gnorm{\mA\vx-c\vb}{\infty} \le \norm{\mW^{1/2}\mA\vx-c\mW^{1/2}\vb}_2 \le \sqrt{2(\rank{\mA    }+1)}\gnorm{\mA\vx-c\vb}{\infty}.
\end{align*}\label{line:blw}
\If{$\sum_{i=1}^m w_i \ge m$} \Comment{$\rank{\mA}+1\le\sum_{i=1}^m w_i \le 2(\rank{\mA}+1)$}
    \State Reset $\mW = \mI_n$.
\EndIf
\State Let $\vx_0 = \inparen{\mA^{\top}\mW\mA}^{-1}\mA^{\top}\mW\vb$. \Comment{$\vx_0 \coloneqq \argmin{\vx\in\R^d} \norm{\mW^{1/2}\mA\vx-\mW^{1/2}\vb}_2$.}
\State Let
\begin{align*}
    \ftilde_{\beta,\delta}(\vx) \coloneqq \beta\logv{\sum_{i=1}^m \expv{\frac{\sqrt{\delta^2 + \norm{\mA_{S_i}\vx-\vb_{S_i}}_2^2}-\delta}{\beta}}}
\end{align*}
where $\beta = \frac{\eps}{4\log m}$ and $\delta = \frac{\eps}{4}$. \Comment{A family of smoothenings of the objective.}
\State Let $\fhat(\vx) \coloneqq \ftilde_{\eps/4\log m, \eps/4}(\vx) + \frac{\eps}{1000\min\inbraces{\rank{\mA},m}}\norm{\mW^{1/2}\mA(\vx-\vx_0)}_2^2$.
\State Using \cite[Algorithm 3]{msbacon}, implement a $\inparen{\frac{C}{\min\inbraces{\rank{\mA},m}},\frac{C}{\eps}}$-ball optimization oracle for $\fhat$, where $C$ is a universal constant. \Comment{Iteration complexity guaranteed by \Cref{lemma:obj_smooth_qsc}}
\State Using \cite[Algorithm 2]{msbacon}, implement a $\frac{1}{2}$-MS oracle for $\fhat$.
\State Run \cite[Algorithm 1]{msbacon} for $\widetilde{O}\inparen{\frac{\min\inbraces{\rank{\mA},m}^{1/3}\logv{\frac{d}{\eps}}}{\eps^{2/3}}}$ iterations using the MS oracle from the previous line and with initial point $\vx_0$ and final point $\xhat$.
\State \Return $\xhat$
\end{algorithmic}
\end{algorithm}




\input{other_proofs}

\input{mirror_descent}

\input{improved_ms}

\section{Minimizing the Distributionally Robust Loss}\label{sec:gp_robust_proof}

The goal of this section is to prove \Cref{mainthm:fair_regression_iteration_complexity}. We break up the proof into parts as described in \Cref{sec:gp_reg_overview}. We structure the section as follows. In the remainder of this subsection, we present \Cref{alg:fair_regression}, our algorithm for minimizing the distributionally robust loss. In \Cref{sec:gp_robust_approx}, we introduce our smooth approximation for the objective \eqref{eq:main_objective} and show that it is a good additive approximation (this is a standard argument, but we include it as it provides crucial intuition). 

As the main difficulty of the proof in \Cref{mainthm:fair_regression_iteration_complexity} is to establish a Hessian stability for our surrogate loss, we devote the bulk of this section to proving this. Recall that in \Cref{sec:gp_reg_overview_robust_hessian}, we claimed that a higher-order smoothness condition called \textit{quasi-self-concordance} gives us the needed Hessian stability -- in fact, this follows from \cite[Lemma 11]{msbacon}. In light of this, it suffices to demonstrate that our surrogate loss is quasi-self-concordant. 

In \Cref{sec:lse_calculus}, we work out some calculus facts related to the softmax function. In particular, it is in \Cref{sec:lse_calculus} that we prove the general composition result \Cref{lemma:composed_qsc} that states that if we take the softmax of several quasi-self-concordant functions, then the resulting function is also quasi-self-concordant. In \Cref{sec:lse_calculus_special}, we apply this composition fact to prove that our surrogate objective is quasi-self-concordant. Finally, in \Cref{sec:gp_robust_analysis_combining}, we combine these building blocks with the acceleration framework in \cite{msbacon} and complete the proof of \Cref{mainthm:fair_regression_iteration_complexity}.

\subsection{Smoothly Approximating the Objective}
\label{sec:gp_robust_approx}

Recall that for $\vy\in\R^n$, let $\gnorm{\vy}{\infty} \coloneqq \max_{1 \le i \le m} \norm{\vy_{S_i}}_2$, where for $\vy\in\R^n$ we let $\vy_{S_i}$ refer to the vector in $\R^{n_i}$ indexed by the indices in $S_i$. Also, for $\vy\in\R^m$, let $\lse_{\beta}(\vy)$ refer to the function
\begin{align*}
    \lse_{\beta}(\vy) \coloneqq \beta\logv{\sum_{i=1}^m \expv{\frac{y_i}{\beta}}}.
\end{align*}
At a high level, our algorithm will minimize the function
\begin{align*}
    \ftilde_{\beta,\delta}(\vx) \coloneqq \beta\logv{\sum_{i=1}^m \expv{\frac{\sqrt{\delta^2 + \norm{\mA_{S_i}\vx-\vb_{S_i}}_2^2}-\delta}{\beta}}}
\end{align*}
for appropriate choices of the parameters $\beta$ and $\delta$. This choice of smoothening is natural because of the following approximation statement -- see \Cref{lemma:smooth_approx}.
\begin{lemma}
\label{lemma:smooth_approx}
For all $\vx \in\R^d$, we have
\begin{align*}
    \abs{\ftilde_{\beta,\delta}(\vx)-\gnorm{\mA\vx-\vb}{\infty}} \le \beta\log m + \delta.
\end{align*}
\end{lemma}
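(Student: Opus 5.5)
The plan is to split the error into two pieces via the triangle inequality, passing through the intermediate quantity $\max_{1\le i\le m} g_i(\vx)$, where
\begin{align*}
    g_i(\vx) \coloneqq \sqrt{\delta^2+\norm{\mA_{S_i}\vx-\vb_{S_i}}_2^2}-\delta .
\end{align*}
By construction, $\ftilde_{\beta,\delta}(\vx)=\lse_{\beta}(g(\vx))$, where $g(\vx)$ is the vector $(g_1(\vx),\dots,g_m(\vx))$. We then bound $\abs{\lse_{\beta}(g(\vx))-\max_i g_i(\vx)}$ by $\beta\log m$ and $\abs{\max_i g_i(\vx)-\gnorm{\mA\vx-\vb}{\infty}}$ by $\delta$.

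\textbf{First piece (softmax versus max).} For any $\vy\in\R^m$, write $y^\star=\max_i y_i$. The sum $\sum_{i=1}^m \expv{y_i/\beta}$ is at least $\expv{y^\star/\beta}$, since it contains that term and every term is positive. It is at most $m\,\expv{y^\star/\beta}$, since every term is bounded by the largest one. Taking $\beta\log$ of both bounds gives $y^\star\le \lse_{\beta}(\vy)\le y^\star+\beta\log m$. We apply this with $\vy=g(\vx)$.

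\textbf{Second piece (smoothed norm versus norm).} Fix $r\ge 0$. The scalar function $\phi(r)=\sqrt{\delta^2+r^2}-\delta$ satisfies $\phi(r)\le r$, because $\sqrt{\delta^2+r^2}\le \delta+r$. It also satisfies $\phi(r)\ge r-\delta$, because $\sqrt{\delta^2+r^2}\ge r$. Hence $0\le r-\phi(r)\le\delta$. Since $\phi$ is nondecreasing on $[0,\infty)$, it commutes with the maximum, so
\begin{align*}
    \max_i g_i(\vx)=\phi\inparen{\max_i \norm{\mA_{S_i}\vx-\vb_{S_i}}_2}=\phi\inparen{\gnorm{\mA\vx-\vb}{\infty}} .
\end{align*}
Therefore $\max_i g_i(\vx)$ lies in the interval $[\gnorm{\mA\vx-\vb}{\infty}-\delta,\ \gnorm{\mA\vx-\vb}{\infty}]$.

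\textbf{Combining.} Chaining the two bounds gives
\begin{align*}
    \gnorm{\mA\vx-\vb}{\infty}-\delta \;\le\; \ftilde_{\beta,\delta}(\vx) \;\le\; \gnorm{\mA\vx-\vb}{\infty}+\beta\log m .
\end{align*}
This implies the claimed bound $\abs{\ftilde_{\beta,\delta}(\vx)-\gnorm{\mA\vx-\vb}{\infty}}\le \beta\log m+\delta$; in fact it gives the sharper bound $\max\{\beta\log m,\delta\}$. The argument is elementary, and the only point that needs care is the monotonicity of $\phi$, which is what lets it pass through the maximum.
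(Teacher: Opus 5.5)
Your proof is correct and follows essentially the same route as the paper: sandwich $\lse_{\beta}$ between the maximum and the maximum plus $\beta\log m$, sandwich $\sqrt{\delta^2+r^2}-\delta$ between $r-\delta$ and $r$, and chain the two. The only cosmetic difference is that you pass the maximum through the monotone map $\phi$, whereas the paper substitutes the componentwise bounds directly inside $\lse_{\beta}$. Both versions yield the sharper $\max\{\beta\log m,\delta\}$, which the paper also notes.
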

\begin{proof}[Proof of \Cref{lemma:smooth_approx}]
These guarantees are well-known, but we prove them anyway for the sake of self-containment. We first prove that for any $\vv\in\R^m$, we have 
\begin{align*}
    \max_{1 \le i \le m} v_i \le \lse_{\beta}(\vv) \le \max_{1 \le i \le m} v_i +\beta\log m.
\end{align*}
In one direction, we have
\begin{align*}
    \lse_{\beta}(\vv) \le \beta\logv{\sum_{i=1}^m \expv{\frac{\max_{1\le i \le m} v_i}{\beta}}} = \beta\log m + \max_{1 \le i \le m} v_i,
\end{align*}
and in the other, we have
\begin{align*}
    \lse_{\beta}(\vv) \ge \beta\logv{\expv{\frac{\max_{1 \le i \le m} v_i}{\beta}}} = \max_{1 \le i \le m} v_i.
\end{align*}
Next, for $\vv \in \R^m$, we will show that
\begin{align*}
    \norm{\vv}_2 - \delta \le \sqrt{\delta^2+\norm{\vv}_2^2} - \delta \le \norm{\vv}_2.
\end{align*}
Indeed, we have
\begin{align*}
    \sqrt{\delta^2+\norm{\vv}_2^2} - \delta \le \sqrt{\delta^2} + \sqrt{\norm{\vv}_2^2} - \delta = \norm{\vv}_2,
\end{align*}
and
\begin{align*}
    \sqrt{\delta^2+\norm{\vv}_2^2} - \delta \ge \sqrt{\norm{\vv}_2^2} - \delta = \norm{\vv}_2 - \delta.
\end{align*}
From this, we get
\begin{align*}
    \ftilde_{\beta,\delta}(\vx) \le \max_{1 \le i \le m} \inparen{\sqrt{\delta^2+\norm{\mA_{S_i}\vx-\vb_{S_i}}_2^2}-\delta} + \beta\log m \le \gnorm{\mA\vx-\vb}{\infty} + \beta\log m
\end{align*}
and
\begin{align*}
    \ftilde_{\beta,\delta}(\vx) \ge \beta\logv{\sum_{i=1}^m \expv{\frac{\norm{\mA_{S_i}\vx-\vb_{S_i}}_2-\delta}{\beta}}} \ge \gnorm{\mA\vx-\vb}{\infty}-\delta.
\end{align*}
Putting these together gives
\begin{align*}
    \abs{\ftilde_{\beta,\delta}(\vx)-\gnorm{\mA\vx-\vb}{\infty}} \le \max\inparen{\beta\log m, \delta} \le \beta\log m + \delta,
\end{align*}
completing the proof of \Cref{lemma:smooth_approx}.
\end{proof}
Eventually, we will choose $\beta = \eps/(4\log m)$ and $\delta = \eps/4$ and then minimize $\ftilde_{\beta,\delta}$ to $\eps/2$ additive error. In light of \Cref{lemma:smooth_approx}, this will be enough to get an $\eps$-additive approximation to the optimum for $\gnorm{\mA\vx-\vb}{\infty}$.


\subsection{Calculus for \lsetext}
\label{sec:lse_calculus}

We investigate certain properties of $\lse_{\beta}(\vy)$ when each entry $[\vy]_i$ is a function $h_i(t)$ for $t \in \R$ for all $i\in[m]$. Let $h(t) \in \R^m$ denote the vector where its $i$th entry is given by $h_i(t)$. We treat each $h_i$ as a one-dimensional restriction of a function $\myfunc{g_i}{\R^m}{\R}$, so $h_i(t) = g_i(\vy+t\vd)$ for center $\vy$ and direction $\vd$ (we omit the parameters $\vy,\vd$ in the notation $h_i$ as it will be clear from context). Finally, recall the definition of quasi-self-concordance (\Cref{defn:qsc}).

We begin with calculating the first two derivatives of $\lse_{\beta}(h(t))$ with respect to $t$ in \Cref{lemma:lse_first_two}.
\begin{lemma}
\label{lemma:lse_first_two}
Let $\lambda_i(t) \coloneqq \expv{h_i(t)/\beta}$. Then, we have
\begin{align*}
    \inparen{\frac{d}{dt}}\lse_{\beta}(h(t)) &= \frac{\sum_{i=1}^m \inparen{\lambda_i(t) \cdot h_i'(t)}}{\sum_{i=1}^m \lambda_i(t)} \\
    \inparen{\frac{d}{dt}}^2\lse_{\beta}(h(t)) &= \frac{1}{\beta}\inparen{\frac{\sum_{i=1}^m \lambda_i(t)h_i'(t)^2}{\sum_{i=1}^m \lambda_i(t)} - \inparen{\frac{\sum_{i=1}^m \lambda_i(t)h_i'(t)}{\sum_{i=1}^m \lambda_i(t)}}^2} + \frac{\sum_{i=1}^m \lambda_i(t)h_i''(t)}{\sum_{i=1}^m \lambda_i(t)}.
\end{align*}
\end{lemma}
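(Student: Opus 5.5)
We compute both derivatives by direct differentiation, using the chain rule together with the identity $\lambda_i'(t) = \lambda_i(t)h_i'(t)/\beta$. Write $\Lambda(t) \coloneqq \sum_{i=1}^m \lambda_i(t)$, so that $\lse_{\beta}(h(t)) = \beta\log \Lambda(t)$.

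\textbf{First derivative.} Differentiating gives
\begin{align*}
\inparen{\frac{d}{dt}}\lse_{\beta}(h(t)) = \beta\frac{\Lambda'(t)}{\Lambda(t)} = \beta\frac{\sum_{i=1}^m \lambda_i(t)h_i'(t)/\beta}{\Lambda(t)},
\end{align*}
and the factors of $\beta$ cancel. This is exactly the first claimed identity, a $\lambda$-weighted average of the $h_i'(t)$.

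\textbf{Second derivative.} We differentiate the ratio $N(t)/\Lambda(t)$, where $N(t) \coloneqq \sum_{i=1}^m \lambda_i(t)h_i'(t)$. By the product rule,
\begin{align*}
N'(t) = \sum_{i=1}^m \inparen{\lambda_i(t)h_i'(t)^2/\beta + \lambda_i(t)h_i''(t)},
\end{align*}
and $\Lambda'(t) = N(t)/\beta$. The quotient rule then gives
\begin{align*}
\frac{N'(t)}{\Lambda(t)} - \frac{N(t)\Lambda'(t)}{\Lambda(t)^2} = \frac{N'(t)}{\Lambda(t)} - \frac{1}{\beta}\inparen{\frac{N(t)}{\Lambda(t)}}^2.
\end{align*}
Substituting the expression for $N'(t)$ and grouping the two terms carrying the factor $1/\beta$ produces the variance-like term. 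The remaining term is the $\lambda$-weighted average of the $h_i''(t)$. This is precisely the claimed second identity.

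The calculation is routine. The only care needed is bookkeeping of the $1/\beta$ factors, in particular noting that $\Lambda'(t) = N(t)/\beta$.
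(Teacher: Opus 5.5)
Your proposal is correct and follows essentially the same route as the paper: the chain rule with $\lambda_i'(t) = \lambda_i(t)h_i'(t)/\beta$ for the first derivative, then the quotient rule on $N(t)/\Lambda(t)$, grouping the $1/\beta$ terms into the variance-like expression. Your notation $N$, $\Lambda$ and the observation $\Lambda'(t) = N(t)/\beta$ make the bookkeeping somewhat cleaner than the paper's expanded fractions, but the argument is the same.
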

\begin{proof}[Proof of \Cref{lemma:lse_first_two}]
The first derivative follows from the chain rule. Indeed, we have
\begin{align*}
    \lse_{\beta}'(h(t)) &= \beta \cdot \frac{\sum_{i=1}^m \lambda_i'(t)}{\sum_{i=1}^m \lambda_i(t)} = \beta \cdot \frac{\sum_{i=1}^m \inparen{\lambda_i(t) \cdot \frac{h_i'(t)}{\beta}}}{\sum_{i=1}^m \lambda_i(t)} = \frac{\sum_{i=1}^m \inparen{\lambda_i(t) \cdot h_i'(t)}}{\sum_{i=1}^m \lambda_i(t)} \le \max_{i} h_i'(t).
\end{align*}
For the second derivative, we use the differentiation rule for multiplication and division and the chain rule, giving
\begin{align*}
    \lse_{\beta}''(h(t)) &= \frac{\left[\inparen{\sum_{i=1}^m \lambda_i'(t)h_i'(t)+\lambda_i(t)h_i''(t)}\inparen{\sum_{i=1}^m \lambda_i(t)}\right] - \frac{1}{\beta}\inparen{\sum_{i=1}^m \lambda_i(t)h_i'(t)}^2}{\inparen{\sum_{i=1}^m \lambda_i(t)}^2} \\
    &= \frac{\left[\frac{1}{\beta}\inparen{\sum_{i=1}^m \lambda_i(t)h_i'(t)^2+\beta\lambda_i(t)h_i''(t)}\inparen{\sum_{i=1}^m \lambda_i(t)}\right] - \frac{1}{\beta}\inparen{\sum_{i=1}^m \lambda_i(t)h_i'(t)}^2}{\inparen{\sum_{i=1}^m \lambda_i(t)}^2} \\
    &= \frac{1}{\beta}\inparen{\frac{\sum_{i=1}^m \lambda_i(t)h_i'(t)^2}{\sum_{i=1}^m \lambda_i(t)} - \frac{\inparen{\sum_{i=1}^m \lambda_i(t)h_i'(t)}^2}{\inparen{\sum_{i=1}^m \lambda_i(t)}^2}} + \frac{\sum_{i=1}^m \lambda_i(t)h_i''(t)}{\sum_{i=1}^m \lambda_i(t)}\enspace.
\end{align*}
This completes the proof of \Cref{lemma:lse_first_two}.
\end{proof}
Next, we prove a general fact regarding composing $\lse$ with a vector formed by functions that are themselves quasi self concordant. See \Cref{lemma:composed_qsc}.
\composedqsc
As far as we are aware, this type of composition result was not previously known and may be of independent interest.

To prove \Cref{lemma:composed_qsc}, we need \Cref{lemma:variance_rv_bound}.
\begin{lemma}
\label{lemma:variance_rv_bound}
For any two random variables $X, Y$, we have
\begin{align*}
    \var{XY} \le 2\maxnorm{Y}^2\var{X}+2\maxnorm{X}^2\var{Y}.
\end{align*}
\end{lemma}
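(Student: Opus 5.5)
The plan is to decompose $XY$ around the means of $X$ and $Y$ and then use the elementary inequality $(a+b)^2 \le 2a^2+2b^2$. Write $\mu_X \coloneqq \exv{X}$ and $\mu_Y \coloneqq \exv{Y}$. Since the variance of a random variable $Z$ is the minimum over constants $c$ of $\exv{(Z-c)^2}$, we have
\begin{align*}
    \var{XY} \le \exv{\inparen{XY - \mu_X\mu_Y}^2}.
\end{align*}
So it suffices to bound this second moment about the (generally suboptimal) centering constant $\mu_X\mu_Y$.

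Next we split the difference into two pieces:
\begin{align*}
    XY - \mu_X\mu_Y = Y\inparen{X-\mu_X} + \mu_X\inparen{Y-\mu_Y}.
\end{align*}
Applying $(a+b)^2\le 2a^2+2b^2$ pointwise and taking expectations gives
\begin{align*}
    \exv{\inparen{XY-\mu_X\mu_Y}^2} \le 2\exv{Y^2\inparen{X-\mu_X}^2} + 2\mu_X^2\exv{\inparen{Y-\mu_Y}^2}.
\end{align*}

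Each term is then bounded separately. In the first term, we pull out $Y^2\le \maxnorm{Y}^2$ almost surely, which leaves $2\maxnorm{Y}^2\var{X}$. In the second term, we use $\mu_X^2 \le \maxnorm{X}^2$, which gives $2\maxnorm{X}^2\var{Y}$. Adding the two bounds yields the claim.

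There is no real obstacle here. The only point needing care is the asymmetric split: we attach $Y$ itself to the first factor and the constant $\mu_X$ to the second. This ensures each resulting term pairs one $\infty$-norm with one variance. If $X$ or $Y$ is unbounded, the right-hand side is infinite and the inequality holds trivially.
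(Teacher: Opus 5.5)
Your proof is correct and is essentially the paper's argument: you use the same split of $XY$ into $(X-\exv{X})Y$ plus a term carrying $\exv{X}$, the same factor-$2$ inequality, and the same final bounds $\exv{((X-\exv{X})Y)^2}\le\maxnorm{Y}^2\var{X}$ and $\exv{X}^2\le\maxnorm{X}^2$. The only cosmetic difference is how you get the factor-$2$ step: you use $\var{Z}\le\exv{(Z-c)^2}$ with $c=\exv{X}\exv{Y}$ together with the pointwise bound $(a+b)^2\le 2a^2+2b^2$, whereas the paper uses $\var{U+V}\le 2\var{U}+2\var{V}$ (derived from $\var{U-V}\ge 0$) with $U=(X-\exv{X})Y$ and $V=\exv{X}Y$.
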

\begin{proof}[Proof of \Cref{lemma:variance_rv_bound}]
The proof follows that of \cite{var_stackexchange}, but we reproduce it here for completeness. First, notice that for random variables $U, V$, we have
\begin{align*}
    2\var{U}+2\var{V} - \var{U+V} = \var{U}+\var{V}-2\cov{U,V} = \var{U-V} \ge 0.
\end{align*}
Let $U=(X-\exv{X})Y$ and $V=\exv{X}Y$. Then, $U+V=XY$, and we have
\begin{align*}
    \var{XY} \le 2\var{(X-\exv{X})Y}+2\var{\exv{X}Y}=2\var{(X-\exv{X})Y}+2\exv{X}^2\var{Y}.
\end{align*}
It remains to bound $\var{(X-\exv{X})Y}$. By H\"older's inequality, we have
\begin{align*}
    \var{(X-\exv{X})Y} \le \exv{((X-\exv{X})Y)^2} \le \exv{(X-\exv{X})^2}\maxnorm{Y}^2=\var{X}\maxnorm{Y}^2.
\end{align*}
Combining everything gives us the conclusion of \Cref{lemma:variance_rv_bound}.
\end{proof}
We are now ready to prove \Cref{lemma:composed_qsc}.
\begin{proof}[Proof of \Cref{lemma:composed_qsc}]
Let $\lambda_i(t) \coloneqq \expv{h_i(t)/\beta}$.

In this proof, we will encounter many weighted averages of vectors $\vz \in \R^m$ of the form
\begin{align*}
    \frac{\sum_{i=1}^m \lambda_i(t)z_i}{\sum_{i=1}^m \lambda_i(t)}.
\end{align*}
Let $\cD$ be the distribution over $[m]$ whose entries are given by $\cD_j = \lambda_j(t)/\sum_{i=1}^m \lambda_i(t)$. In the rest of this proof, all expected values, variances, and covariances will be taken with respect to this distribution. In an abuse of notation, let $h(t)$ denote the ``random'' variable that is $h_i(t)$ with probability $\cD_i$. Define $h'(t), h''(t), h'''(t)$ analogously.

To find the third derivative of $\lse_{\beta}(h(t))$, we start with its second derivative. By \Cref{lemma:lse_first_two}, it is given by
\begin{align*}
    \lse_{\beta}''(h(t)) &= \underbrace{\frac{1}{\beta}\inparen{\frac{\sum_{i=1}^m \lambda_i(t)h_i'(t)^2}{\sum_{i=1}^m \lambda_i(t)} - \inparen{\frac{\sum_{i=1}^m \lambda_i(t)h_i'(t)}{\sum_{i=1}^m \lambda_i(t)}}^2}}_{T_1} + \underbrace{\frac{\sum_{i=1}^m \lambda_i(t)h_i''(t)}{\sum_{i=1}^m \lambda_i(t)}}_{T_2} \\
    &= \frac{1}{\beta}\var{h'(t)}+\exv{h''(t)}.
\end{align*}
We now differentiate the above term by term. First, we have
\begin{align*}
    T_2'(t) &= \frac{\sum_{i=1}^m \lambda_i(t)\inparen{\inparen{\frac{h_i'(t)h_i''(t)}{\beta}}+h_i'''(t)}}{\sum_{i=1}^m \lambda_i(t)} - \frac{1}{\beta}\cdot\frac{\inparen{\sum_{i=1}^m \lambda_i(t)h_i'(t)}\inparen{\sum_{i=1}^m \lambda_i(t)h_i''(t)}}{\inparen{\sum_{i=1}^m \lambda_i(t)}^2} \\
    &= \frac{1}{\beta}\inparen{\frac{\sum_{i=1}^m \lambda_i(t)h_i'(t)h_i''(t)}{\sum_{i=1}^m \lambda_i(t)} - \frac{\inparen{\sum_{i=1}^m \lambda_i(t)h_i'(t)}\inparen{\sum_{i=1}^m \lambda_i(t)h_i''(t)}}{\inparen{\sum_{i=1}^m \lambda_i(t)}^2}} + \frac{\sum_{i=1}^m \lambda_i(t)h_i'''(t)}{\sum_{i=1}^m \lambda_i(t)} \\
    &= \frac{1}{\beta}\cov{h'(t),h''(t)} + \exv{h'''(t)}.
\end{align*}
Next, we have
\begin{align*}
    \frac{d}{dt}\exv{h'(t)}^2 = 2\exv{h'(t)} \cdot \frac{d}{dt}\exv{h'(t)} = 2\exv{h'(t)}\inparen{\frac{1}{\beta}\var{h'(t)} + \exv{h''(t)}}
\end{align*}
and
\begin{align*}
    &\quad \frac{d}{dt}\exv{h'(t)^2}\\
    &= \frac{\inparen{\sum_{i=1}^m \lambda_i'(t)h_i'(t)^2 + 2h_i'(t)h_i''(t)\lambda_i(t)}\inparen{\sum_{i=1}^m \lambda_i(t)} - \frac{1}{\beta}\inparen{\sum_{i=1}^m \lambda_i(t)h_i'(t)}\inparen{\sum_{i=1}^m \lambda_i(t)h_i'(t)^2}}{\inparen{\sum_{i=1}^m \lambda_i(t)}^2} \\
    &= \frac{\inparen{\sum_{i=1}^m \lambda_i'(t)h_i'(t)^2 + 2h_i'(t)h_i''(t)\lambda_i(t)}}{\sum_{i=1}^m \lambda_i(t)} - \frac{1}{\beta}\cdot\frac{\inparen{\sum_{i=1}^m \lambda_i(t)h_i'(t)}\inparen{\sum_{i=1}^m \lambda_i(t)h_i'(t)^2}}{\inparen{\sum_{i=1}^m \lambda_i(t)}^2} \\
    &= \frac{\sum_{i=1}^m \lambda_i(t)\inparen{\frac{h_i'(t)^3}{\beta} + 2h_i'(t)h_i''(t)}}{\sum_{i=1}^m \lambda_i(t)} - \frac{1}{\beta}\cdot\frac{\inparen{\sum_{i=1}^m \lambda_i(t)h_i'(t)}\inparen{\sum_{i=1}^m \lambda_i(t)h_i'(t)^2}}{\inparen{\sum_{i=1}^m \lambda_i(t)}^2} \\
    &= \frac{1}{\beta}\cov{h'(t), h'(t)^2} + 2\exv{h'(t)h''(t)}.
\end{align*}
Combining everything gives us
\begin{align*}
    &\quad \lse_{\beta}'''(h(t)) \\
    &= \frac{1}{\beta}\inparen{\frac{1}{\beta}\cov{h'(t), h'(t)^2} + 2\exv{h'(t)h''(t)} - 2\exv{h'(t)}\inparen{\frac{1}{\beta}\var{h'(t)} + \exv{h''(t)}}}\\
    &\quad + \frac{1}{\beta}\cov{h'(t),h''(t)} + \exv{h'''(t)} \\
    &= \frac{1}{\beta^2}\cov{h'(t),h'(t)^2} - \frac{2}{\beta^2}\exv{h'(t)}\var{h'(t)} + \frac{3}{\beta}\cov{h'(t),h''(t)} + \exv{h'''(t)}.
\end{align*}
We first analyze the terms that only depend on $h'(t)$. To do so, we use \Cref{lemma:variance_rv_bound} to write
\begin{align*}
    \abs{\cov{h'(t),h'(t)^2}} \le \sqrt{\var{h'(t)}}\sqrt{\var{h'(t)^2}} \le 2\norm{\vd}\var{h'(t)}.
\end{align*}
Now, we have
\begin{align*}
    &\quad \frac{1}{\beta^2}\abs{\cov{h'(t),h'(t)^2} - 2\exv{h'(t)}\var{h'(t)}} \\
    &\le \frac{1}{\beta^2}\abs{\cov{h'(t),h'(t)^2}} + \frac{2}{\beta^2}\abs{\exv{h'(t)}\var{h'(t)}} \\
    &\le \frac{4}{\beta^2}\norm{\vd}\var{h'(t)} \le \frac{4}{\beta}\norm{\vd}\lse_{\beta}''(h(t)).
\end{align*}
Next, we take care of the remaining terms. We have
\begin{align*}
    \frac{3}{\beta}\abs{\cov{h'(t),h''(t)}} + \abs{\exv{h'''(t)}} &\le \frac{6}{\beta}\inparen{\max_{i} h_i'(t)}\exv{\abs{h''(t)-\exv{h''(t)}}} + \abs{\exv{h'''(t)}} \\
    &\le \frac{12}{\beta}\norm{\vd}\lse_{\beta}''(h(t)) + \exv{\abs{h'''(t)}} \\
    &\le \frac{12}{\beta}\norm{\vd}\lse_{\beta}''(h(t)) + \nu\norm{\vd}\exv{h''(t)} \\
    &\le \inparen{\frac{12}{\beta}+\nu}\norm{\vd}\lse_{\beta}''(h(t)),
\end{align*}
where the penultimate line follows from \Cref{lemma:hi_qsc}. Combining these conclusions yields
\begin{align*}
    \abs{\lse_{\beta}'''(h(t))} \le \inparen{\frac{16}{\beta}+\nu}\norm{\vd}\lse_{\beta}''(h(t)),
\end{align*}
completing the proof of \Cref{lemma:composed_qsc}.
\end{proof}

\subsection{Smoothness and Quasi-self-concordance of the Modified Objective}
\label{sec:lse_calculus_special}

The main result of this subsection is \Cref{lemma:obj_smooth_qsc}.

\begin{lemma}
\label{lemma:obj_smooth_qsc}
Let $\mW$ be such that for all $\vz \in \R^d$, we have $\gnorm{\mA\vz}{\infty} \le \norm{\mW^{1/2}\mA\vz}_2$. For all $\vx, \vz \in \R^d$ and $t\in\R$, we have
\begin{align*}
    \inparen{\frac{d}{dt}}^2 \ftilde_{\beta,\delta}(\vx+t\vz) &\le \inparen{\frac{1}{\delta}+\frac{1}{\beta}}\norm{\mW^{1/2}\mA\vz}_2^2 & \text{(smoothness)} \\
    \abs{\inparen{\frac{d}{dt}}^3 \ftilde_{\beta,\delta}(\vx+t\vz)} &\le \inparen{\frac{16}{\delta}+\frac{3}{\beta}}\norm{\mW^{1/2}\mA\vz}_2\inparen{\frac{d}{dt}}^2 \ftilde_{\beta,\delta}(\vx+t\vz) & \text{(quasi-self-concordance)}.
\end{align*}
\end{lemma}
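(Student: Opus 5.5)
The plan is to work along a line $t\mapsto \vx+t\vz$ and treat $\ftilde_{\beta,\delta}(\vx+t\vz)=\lse_{\beta}(h(t))$. Here $h_i(t)\coloneqq\phi_i(t)-\delta$ with $\phi_i(t)\coloneqq\sqrt{\delta^2+\norm{\vu_i(t)}_2^2}$, $\vu_i(t)\coloneqq\mA_{S_i}(\vx+t\vz)-\vb_{S_i}$ and $\vv_i\coloneqq\mA_{S_i}\vz$. Throughout, the norm is $\norm{\vz}\coloneqq\norm{\mW^{1/2}\mA\vz}_2$. The hypothesis on $\mW$ gives $\norm{\vv_i}_2\le\gnorm{\mA\vz}{\infty}\le\norm{\vz}$ for every $i$.

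\textbf{Step 1: the inner functions.} Differentiating $\phi_i^2=\delta^2+\norm{\vu_i}_2^2$ once gives $\phi_i\phi_i'=\ip{\vu_i,\vv_i}$. Differentiating again gives $\phi_i'^2+\phi_i\phi_i''=\norm{\vv_i}_2^2$, and a third time gives $3\phi_i'\phi_i''+\phi_i\phi_i'''=0$. Since $\abs{\ip{\vu_i,\vv_i}}\le\norm{\vu_i}_2\norm{\vv_i}_2\le\phi_i\norm{\vv_i}_2$ and $\phi_i\ge\delta$, this yields:
\begin{itemize}
\item $\abs{h_i'}\le\norm{\vv_i}_2\le\norm{\vz}$ (Lipschitzness);
\item $0\le h_i''=(\norm{\vv_i}_2^2-\phi_i'^2)/\phi_i\le\norm{\vz}^2/\delta$;
\item $\abs{h_i'''}=3\abs{\phi_i'}h_i''/\phi_i\le\frac{3}{\delta}\norm{\vz}h_i''$.
\end{itemize}
So each $h_i$ is $\norm{\vz}$-Lipschitz and $(3/\delta)$-quasi-self-concordant in this norm.

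\textbf{Step 2: smoothness.} By \Cref{lemma:lse_first_two}, in the probabilistic notation of the proof of \Cref{lemma:composed_qsc}, we have $\lse_\beta''=\frac1\beta\var{h'}+\exv{h''}$. Using $\var{h'}\le\exv{h'^2}\le\norm{\vz}^2$ and the bound on $h_i''$ from Step 1 gives exactly $(1/\delta+1/\beta)\norm{\vz}^2$.

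\textbf{Step 3: quasi-self-concordance.} Rather than quoting \Cref{lemma:composed_qsc} as a black box (it would return the constant $16/\beta+3/\delta$), I would redo its bookkeeping to get the stated constant. The proof of \Cref{lemma:composed_qsc} gives
\[
\lse_\beta'''=\tfrac1{\beta^2}\exv{(h'-\mu)^3}+\tfrac3\beta\exv{(h'-\mu)h''}+\exv{h'''},\qquad \mu\coloneqq\exv{h'}.
\]
Here I used $\cov{X,X^2}-2\mu\var{X}=\exv{(X-\mu)^3}$ and $\cov{X,Y}=\exv{(X-\mu)Y}$. The terms are bounded separately, using $\abs{h'-\mu}\le 2\norm{\vz}$ and $h''\ge0$.
\begin{itemize}
\item The cubic term is at most $\frac{2}{\beta}\norm{\vz}\cdot\frac1\beta\var{h'}$.
\item The $h'''$ term is at most $\frac3\delta\norm{\vz}\exv{h''}$ by Step 1.
\item For the mixed term $\frac3\beta\exv{(h'-\mu)h''}$, I would not bound it by $\frac6\beta\norm{\vz}\exv{h''}$. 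Instead I would use $h_i''\le\norm{\vz}^2/\delta$ together with Cauchy--Schwarz:
\[
\abs{\exv{(h'-\mu)h''}}\le\sqrt{\var{h'}}\sqrt{\exv{h''^2}}\le\sqrt{\var{h'}}\sqrt{\tfrac{\norm{\vz}^2}{\delta}\exv{h''}}.
\]
Then AM--GM splits it as $a\cdot\frac{1}{\beta}\var{h'}+b\cdot\exv{h''}$, with the weights tuned so that the $\frac1\beta\var{h'}$ and $\exv{h''}$ coefficients both stay below $(16/\delta+3/\beta)\norm{\vz}$.
\end{itemize}

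Summing the three pieces against $\lse_\beta''=\frac1\beta\var{h'}+\exv{h''}$ should give the stated bound. The main obstacle is this mixed term: choosing the AM--GM split so that the $1/\delta$-weight absorbs it, rather than the $1/\beta$-weight, is exactly what determines whether the constant comes out as $16/\delta+3/\beta$. If the split falls short, my fallback is to exploit the cancellation between $\frac3\beta\exv{(h'-\mu)h''}$ and $\exv{h'''}=-3\exv{h'h''/\phi}$, which share the factor $h'h''$.
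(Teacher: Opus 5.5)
Your proof is correct, and Step 3 takes a genuinely different route from the paper's. That difference is what actually delivers the constant in the statement.

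The paper proves this lemma by transferring \Cref{lemma:base_smooth} and \Cref{lemma:base_qsc} from $\gnorm{\mA\vz}{\infty}$ to $\norm{\mW^{1/2}\mA\vz}_2$. \Cref{lemma:base_qsc} is the black-box composition result \Cref{lemma:composed_qsc} with $\nu=3/\delta$. That lemma charges the cubic term at $4/\beta$ (via \Cref{lemma:variance_rv_bound}) and the mixed term $\frac{3}{\beta}\cov{h',h''}$ at $12/\beta$. It uses only $\abs{h'}\le\norm{\vz}$ and $h''\ge 0$, never an upper bound on $h_i''$. So, as you noticed, the chain yields $\frac{16}{\beta}+\frac{3}{\delta}$. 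The last display of the paper's proof nonetheless records $\frac{16}{\delta}+\frac{3}{\beta}$. That constant is strictly stronger whenever $\beta<\delta$, which is the regime of \Cref{alg:fair_regression} for $m\ge 3$.

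Your two refinements close this gap. First, the exact central-moment identity costs only $2/\beta$ on the cubic term, which leaves $1/\beta$ of slack. Second, Cauchy--Schwarz together with $0\le h_i''\le\norm{\vz}^2/\delta$ lets most of the mixed term be charged to the $1/\delta$ budget.

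The split you left open does work, so the fallback is unnecessary. Write $V\coloneqq\frac{1}{\beta}\var{h'}$ and $E\coloneqq\exv{h''}$. The mixed term is at most $3\norm{\vz}\sqrt{VE/(\beta\delta)}$. Applying $2\sqrt{xy}\le x+y$ with $x=\frac{\norm{\vz}}{\beta}V$ and $y=\frac{9\norm{\vz}}{4\delta}E$ bounds it by $\frac{\norm{\vz}}{\beta}V+\frac{9\norm{\vz}}{4\delta}E$. Adding your bounds on the other two terms gives
\begin{align*}
\abs{\lse_\beta'''(h(t))}\le\norm{\vz}\inparen{\frac{3}{\beta}V+\frac{21}{4\delta}E}\le\max\inbraces{\frac{3}{\beta},\frac{21}{4\delta}}\norm{\vz}\,\lse_\beta''(h(t))\le\inparen{\frac{16}{\delta}+\frac{3}{\beta}}\norm{\vz}\,\lse_\beta''(h(t)).
\end{align*}

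Each route buys something different. The paper's gives a modular composition lemma for arbitrary Lipschitz, quasi-self-concordant inner functions, at the price of a $16/\beta$ constant. Yours exploits the bounded second derivative of $\sqrt{\delta^2+\norm{\cdot}_2^2}$ to obtain the sharper, $\beta$--$\delta$ balanced constant the statement actually asserts. Your Step 2 coincides with \Cref{lemma:base_smooth}.
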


Our goal in the rest of this section is to prove \Cref{lemma:obj_smooth_qsc}.

We begin with defining $h_i(t)$ as (absorb the $\delta,\vy,\vd$ parameters into the definition of $h_i$)
\begin{align*}
    h_i(t) \coloneqq \sqrt{\delta^2+\norm{\vy_{S_i}+t\vd_{S_i}}_2^2}.
\end{align*}
Let $h(t)$ denote the vector whose $i$th entry is $h_i(t)$. Then, observe that
\begin{align*}
    \lse_{\beta}(h(t)) = \beta\logv{\sum_{i=1}^m \expv{\frac{h_i(t)}{\beta}}} = \beta\logv{\sum_{i=1}^m \expv{\frac{\sqrt{\delta^2+\norm{\vy_{S_i}+t\vd_{S_i}}_2^2}}{\beta}}}.
\end{align*}
It is easy to see that every one-dimensional restriction of $\ftilde_{\beta,\delta}$ can be obtained by an affine transformation of $\lse_{\beta}(h(t))$ after appropriate choices of $\vy,\vd\in\R^m$. Hence, we first analyze $\lse_{\beta}(h(t))$ for all $\vy,\vd\in\R^m$. 

We begin with proving the smoothness of $\lse_{\beta}(h(t))$ with respect to $\gnorm{\cdot}{\infty}$.
\begin{lemma}
\label{lemma:base_smooth}
For all $\vy,\vd\in\R^m$ and all $t \in \R$, we have
\begin{align*}
    \inparen{\frac{d}{dt}}^2 \lse_{\beta}(h(t)) \le \inparen{\frac{1}{\delta}+\frac{1}{\beta}}\gnorm{\vd}{\infty}^2.
\end{align*}
\end{lemma}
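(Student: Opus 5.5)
The plan is to start from the second-derivative formula in \Cref{lemma:lse_first_two}, written in the probabilistic notation used in the proof of \Cref{lemma:composed_qsc}:
\begin{align*}
    \inparen{\frac{d}{dt}}^2 \lse_{\beta}(h(t)) = \frac{1}{\beta}\var{h'(t)} + \exv{h''(t)},
\end{align*}
where expectations are over $\cD_j \propto \lambda_j(t)$. The two terms are bounded separately: the variance term gives the $1/\beta$ part and the expected second derivative gives the $1/\delta$ part. Since $\cD$ is a probability distribution, it suffices to prove pointwise bounds on $h_i'(t)^2$ and $h_i''(t)$ for each $i$.

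\textbf{Step 1 (first derivative).} Write $\vu_i(t) \coloneqq \vy_{S_i}+t\vd_{S_i}$, so $h_i(t) = \sqrt{\delta^2+\norm{\vu_i}_2^2}$. Then $h_i'(t) = \ip{\vu_i,\vd_{S_i}}/h_i(t)$. By Cauchy--Schwarz and $\norm{\vu_i}_2 \le h_i(t)$, we get $\abs{h_i'(t)} \le \norm{\vd_{S_i}}_2 \le \gnorm{\vd}{\infty}$. Consequently
\begin{align*}
    \var{h'(t)} \le \exv{h'(t)^2} \le \gnorm{\vd}{\infty}^2,
\end{align*}
which handles the first term.

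\textbf{Step 2 (second derivative).} Differentiating once more gives
\begin{align*}
    h_i''(t) = \frac{\norm{\vd_{S_i}}_2^2}{h_i(t)} - \frac{\ip{\vu_i,\vd_{S_i}}^2}{h_i(t)^3} = \frac{\norm{\vd_{S_i}}_2^2 h_i(t)^2 - \ip{\vu_i,\vd_{S_i}}^2}{h_i(t)^3}.
\end{align*}
The numerator is nonnegative by Cauchy--Schwarz, so $h_i''(t) \ge 0$. It is also at most $\norm{\vd_{S_i}}_2^2/h_i(t)$, and since $h_i(t) \ge \delta$, this is at most $\norm{\vd_{S_i}}_2^2/\delta \le \gnorm{\vd}{\infty}^2/\delta$. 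Hence $\exv{h''(t)} \le \gnorm{\vd}{\infty}^2/\delta$. Adding this to Step 1 gives the claimed bound $(1/\delta+1/\beta)\gnorm{\vd}{\infty}^2$.

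\textbf{Remarks.} The constant offset $-\delta$ in $\ftilde_{\beta,\delta}$ does not affect derivatives, so working with $h_i$ without the shift is harmless. There is no real obstacle here. The only point needing care is the computation of $h_i''$ and the observation that $h_i \ge \delta$ controls the denominator uniformly; this is exactly where the $1/\delta$ factor enters. (The statement writes $\vy,\vd\in\R^m$, but these should be read as vectors in $\R^n$ indexed by the blocks $S_i$; the argument is unchanged.)
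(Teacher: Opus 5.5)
Your proof is correct and is essentially the paper's argument. Like the paper, you substitute the explicit formulas for $h_i'$ and $h_i''$ into the second-derivative expression from \Cref{lemma:lse_first_two}. You then bound the variance term by $\max_i h_i'(t)^2 \le \max_i \norm{\vd_{S_i}}_2^2$ using Cauchy--Schwarz, and bound $h_i''(t) \le \norm{\vd_{S_i}}_2^2/h_i(t) \le \norm{\vd_{S_i}}_2^2/\delta$ using $h_i(t) \ge \delta$.
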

\begin{proof}[Proof of \Cref{lemma:base_smooth}]
By direct calculation, it is easy to see that
\begin{equation}\label{eq:hi_derivatives}
    \begin{aligned}
         h_i'(t) &= \frac{\ip{\vy_{S_i}+t\vd_{S_i},\vd_{S_i}}}{h_i(t)} \\
         h_i''(t) &= \frac{\norm{\vd_{S_i}}_2^2 h_i(t) - h_i'(t)^2h_i(t)}{h_i(t)^2} = \frac{\norm{\vd_{S_i}}_2^2-h_i'(t)^2}{h_i(t)}.
    \end{aligned}
\end{equation}
We plug this into the result of \Cref{lemma:lse_first_two} and get
\begin{align*}
    \lse_{\beta}''(h(t)) &\le \frac{1}{\beta}\max_i h_i'(t)^2 + \max_i h_i''(t) \\
    &= \frac{1}{\beta} \max_{i}\inparen{\frac{\ip{\vy_{S_i}+t\vd_{S_i},\vd_{S_i}}}{\sqrt{\delta^2 + \norm{\vy_{S_i}+t\vd_{S_i}}_2^2}}}^2 + \max_{i} \frac{\norm{\vd_{S_i}}_2^2 - h_i'(t)^2}{\sqrt{\delta^2+\norm{\vy_{S_i}+t\vd_{S_i}}_2^2}} \\
    &\le \frac{1}{\beta}\max_{i} \norm{\vd_{S_i}}_2^2 + \frac{1}{\delta}\max_{i}\norm{\vd_{S_i}}_2^2 = \inparen{\frac{1}{\beta}+\frac{1}{\delta}}\gnorm{\vd}{\infty}^2,
\end{align*}
completing the proof of \Cref{lemma:base_smooth}.
\end{proof}
Our next task is to show that $\lse_{\beta}(h(t))$ is $O(1/\beta + 1/\delta)$-quasi-self-concordant in $\gnorm{\cdot}{\infty}$. To do so, we will appeal to \Cref{lemma:composed_qsc}. To be able to do this, we first have to prove the quasi-self-concordance of each component function in $\lse_{\beta}(h(t))$.
\begin{lemma}
\label{lemma:hi_qsc}
For all $\vy,\vd\in\R^m$ and all $t\in\R$, we have
\begin{align*}
    \abs{\inparen{\frac{d}{dt}}^3 \sqrt{\delta^2+\norm{\vy_{S_i}+t\vd_{S_i}}_2^2}} \le \frac{3}{\delta}\norm{\vd_{S_i}}_2\inparen{\inparen{\frac{d}{dt}}^2\sqrt{\delta^2+\norm{\vy_{S_i}+t\vd_{S_i}}_2^2}}.
\end{align*}
\end{lemma}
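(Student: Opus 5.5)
Our plan is a direct one-dimensional computation that reduces everything to a scalar quadratic. Fix $i$ and write $\vu \coloneqq \vy_{S_i}$ and $\vv \coloneqq \vd_{S_i}$. Then $q(t) \coloneqq \delta^2 + \norm{\vu + t\vv}_2^2 = a t^2 + 2bt + c$ with $a = \norm{\vv}_2^2$, $b = \ip{\vu,\vv}$ and $c = \delta^2 + \norm{\vu}_2^2$, and $h_i(t) = \sqrt{q(t)}$. The discriminant quantity $D \coloneqq ac - b^2 = \delta^2\norm{\vv}_2^2 + \norm{\vu}_2^2\norm{\vv}_2^2 - \ip{\vu,\vv}^2$ satisfies $D \ge \delta^2\norm{\vv}_2^2 \ge 0$ by Cauchy--Schwarz. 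We will get the whole statement from this one observation.

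\textbf{Step 1 (closed forms).} Starting from \eqref{eq:hi_derivatives}, $h_i'' = (\norm{\vv}_2^2 - h_i'^2)/h_i$, and using $h_i h_i' = at + b$, we simplify to $h_i''(t) = (a q - (at+b)^2)/q^{3/2} = D\, q(t)^{-3/2}$. This is manifestly nonnegative, so the right-hand side of the claimed inequality is nonnegative. Differentiating once more gives $h_i'''(t) = -\tfrac{3}{2} D\, q^{-5/2} q' = -3D\,(at+b)\, q^{-5/2}$.

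\textbf{Step 2 (ratio bound).} From Step 1, $\abs{h_i'''}/h_i'' = 3\abs{at+b}/q = 3\abs{\ip{\vu+t\vv,\vv}}/q$. By Cauchy--Schwarz the numerator is at most $3\norm{\vu+t\vv}_2\norm{\vv}_2$. Writing $r \coloneqq \norm{\vu+t\vv}_2$, it then suffices to show $r/(\delta^2 + r^2) \le 1/\delta$. This holds because $\delta^2 + r^2 \ge 2\delta r \ge \delta r$ by AM--GM. (This actually gives the sharper constant $3/(2\delta)$, but $3/\delta$ is what is claimed.) If $h_i''(t) = 0$, i.e.\ $D = 0$ (which forces $\vv = 0$ since $\delta > 0$), both sides vanish.

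\textbf{Main obstacle.} The only step needing care is recognizing that $h_i''$ collapses to $D q^{-3/2}$. Without this, one would bound $\abs{h_i'''}$ term by term from the unsimplified expression in \eqref{eq:hi_derivatives} and risk losing the multiplicative factor of $h_i''$. We would verify this identity carefully, and the rest follows from the two elementary inequalities above.
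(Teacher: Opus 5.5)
Your proposal is correct and is essentially the paper's argument: both proofs reduce to the identity $h_i''' = -3\,(h_i'/h_i)\,h_i''$ (your $-3(at+b)q^{-1}h_i''$) and then bound $\abs{h_i'/h_i} = \abs{\ip{\vy_{S_i}+t\vd_{S_i},\vd_{S_i}}}/h_i^2$ by Cauchy--Schwarz; the paper gets $\norm{\vd_{S_i}}_2/\delta$ directly from $\norm{\vy_{S_i}+t\vd_{S_i}}_2\le h_i$ and $h_i\ge\delta$. Your closed form $h_i'' = Dq^{-3/2}$ usefully makes the nonnegativity of $h_i''$ explicit, which the paper uses only implicitly, and your AM--GM step gives the sharper constant $3/(2\delta)$.
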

\begin{proof}[Proof of \Cref{lemma:hi_qsc}]
Although a similar fact appears in \cite[Section 2.1.2]{ob18}, it is not in the exact form we need. So, we prove the required statement here.

Recycling the computation from \eqref{eq:hi_derivatives}, recall
\begin{align*}
    h_i''(t) = \frac{\norm{\vd_{S_i}}_2^2 - h_i'(t)^2}{h_i(t)},
\end{align*}
which gives
\begin{align*}
    h_i'''(t) = \frac{-2h_i'(t)h_i''(t)h_i(t) - h_i'(t)(h_i(t)h_i''(t))}{h_i(t)^2} = -\frac{3h_i'(t)h_i''(t)}{h_i(t)}.
\end{align*}
Finally, again recalling \eqref{eq:hi_derivatives}, notice that
\begin{align*}
    \abs{\frac{h_i'(t)}{h_i(t)}} = \abs{\frac{\ip{\vy_{S_i}+t\vd_{S_i},\vd_{S_i}}}{h_i(t)^2}} = \abs{\ip{\frac{\vy_{S_i}+t\vd_{S_i}}{\sqrt{\delta^2+\norm{\vy_{S_i}+t\vd_{S_i}}_2^2}}, \frac{\vd_{S_i}}{\sqrt{\delta^2+\norm{\vy_{S_i}+t\vd_{S_i}}_2^2}}}} \le \frac{\norm{\vd_{S_i}}_2}{\delta}.
\end{align*}
Combining everything completes the proof of \Cref{lemma:hi_qsc}.
\end{proof}
We are now ready to prove the quasi-self-concordance of $\lse_{\beta}(h(t))$ in $\gnorm{\cdot}{\infty}$.
\begin{lemma}
\label{lemma:base_qsc}
For all $\vy, \vd \in \R^m$ and $t\in\R$, we have
\begin{align*}
    \abs{\inparen{\frac{d}{dt}}^3 \lse_{\beta}(h(t))} \le \inparen{\frac{16}{\beta}+\frac{3}{\delta}}\gnorm{\vd}{\infty}\inparen{\frac{d}{dt}}^2 \lse_{\beta}(h(t)).
\end{align*}
\end{lemma}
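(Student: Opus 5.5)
The plan is to apply \Cref{lemma:composed_qsc} with the inner functions $h_i(t) = \sqrt{\delta^2+\norm{\vy_{S_i}+t\vd_{S_i}}_2^2}$ and the norm $\gnorm{\cdot}{\infty}$. Once its two hypotheses are verified with $\nu = 3/\delta$, the conclusion is exactly the claimed bound with constant $\frac{16}{\beta}+\frac{3}{\delta}$. Adding the constant $-\delta$ inside the exponent changes $\lse_{\beta}$ only by an additive constant, so it does not matter whether the inner functions include the $-\delta$ shift.

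\textbf{Lipschitzness.} From \eqref{eq:hi_derivatives},
\begin{align*}
h_i'(t) = \frac{\ip{\vy_{S_i}+t\vd_{S_i},\vd_{S_i}}}{h_i(t)}.
\end{align*}
Cauchy--Schwarz together with $\norm{\vy_{S_i}+t\vd_{S_i}}_2 \le h_i(t)$ gives $\abs{h_i'(t)} \le \norm{\vd_{S_i}}_2 \le \gnorm{\vd}{\infty}$. The proof of \Cref{lemma:composed_qsc} really uses a bound on $\abs{h_i'}$ (for instance, to bound $\var{h'(t)^2}$ and the covariance term), and we have that bound.

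\textbf{Quasi-self-concordance.} \Cref{lemma:hi_qsc} already gives
\begin{align*}
\abs{h_i'''(t)} \le \frac{3}{\delta}\norm{\vd_{S_i}}_2 h_i''(t) \le \frac{3}{\delta}\gnorm{\vd}{\infty} h_i''(t),
\end{align*}
where the second step uses $h_i''(t) \ge 0$. That holds because $h_i''(t) = (\norm{\vd_{S_i}}_2^2 - h_i'(t)^2)/h_i(t)$ and we just showed $\abs{h_i'(t)}\le\norm{\vd_{S_i}}_2$. So the hypothesis of \Cref{lemma:composed_qsc} holds uniformly in $i$ with $\nu = 3/\delta$ and norm $\gnorm{\cdot}{\infty}$.

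The main point needing care is that \Cref{lemma:composed_qsc} is stated for one common norm across all $i$, while the natural per-component bounds are in terms of $\norm{\vd_{S_i}}_2$. Since both the Lipschitz and the quasi-self-concordance bounds are upper bounds multiplied by nonnegative quantities ($h_i''\ge 0$), replacing $\norm{\vd_{S_i}}_2$ by the larger $\gnorm{\vd}{\infty}$ is valid, and then the lemma applies directly.
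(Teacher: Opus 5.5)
Your proposal is correct and follows the paper's proof: apply \Cref{lemma:composed_qsc} with the norm $\gnorm{\cdot}{\infty}$, get the Lipschitz hypothesis from the formula for $h_i'$, and get quasi-self-concordance with $\nu = 3/\delta$ from \Cref{lemma:hi_qsc}. Your extra remarks spell out steps the paper leaves implicit. These are the two-sided bound $\abs{h_i'(t)} \le \gnorm{\vd}{\infty}$, which the composition proof actually uses, and the nonnegativity of $h_i''$, which justifies replacing $\norm{\vd_{S_i}}_2$ by $\gnorm{\vd}{\infty}$.
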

\begin{proof}[Proof of \Cref{lemma:base_qsc}]
In the statement of \Cref{lemma:composed_qsc}, let $\norm{\cdot} = \gnorm{\cdot}{\infty}$. By the definition of $\gnorm{\cdot}{\infty}$ and $h_i$, we have for all $i$ and $t$ that $h_i'(t) \le \gnorm{\vd}{\infty}$. Additionally, from \Cref{lemma:hi_qsc}, we have that the $h_i(t)$ are $3/\delta$-quasi-self-concordant in the norm $\gnorm{\vd}{\infty}$ for all $i$. \Cref{lemma:base_qsc} now follows immediately from \Cref{lemma:composed_qsc}.
\end{proof}
Finally, we can prove \Cref{lemma:obj_smooth_qsc}.
\begin{proof}[Proof of \Cref{lemma:obj_smooth_qsc}]
By the conclusion of \Cref{lemma:base_smooth}, we know that for all $\vy, \vd \in \R^m$ and $t\in\R$ that
\begin{align*}
    \inparen{\frac{d}{dt}}^2 \lse_{\beta}(h(t)) \le \inparen{\frac{1}{\delta}+\frac{1}{\beta}}\gnorm{\vz}{\infty}^2.
\end{align*}
Let $\vy = \mA\vx - \vb$ for some $\vx$ and $\vd = \mA\vz$ for some $\vz$. Let
\begin{align*}
    g(\vy) \coloneqq \beta\logv{\sum_{i=1}^m \expv{\frac{\sqrt{\delta^2+\norm{\vy_{S_i}}_2^2}-\delta}{\beta}}}.
\end{align*}
Then,
\begin{align*}
    \inparen{\frac{d}{dt}}^2 \ftilde_{\beta,\delta}(\vx+t\vz) = \inparen{\frac{d}{dt}}^2 g(\mA\vx - \vb + t\mA\vz) \le \inparen{\frac{1}{\delta}+\frac{1}{\beta}}\gnorm{\mA\vz}{\infty}^2.
\end{align*}
With the exact same reasoning applied to the conclusion of \Cref{lemma:base_qsc}, we also see that
\begin{align*}
    \abs{\inparen{\frac{d}{dt}}^3 \ftilde_{\beta,\delta}(\vx+t\vz)} \le \inparen{\frac{16}{\delta}+\frac{3}{\beta}}\gnorm{\mA\vz}{\infty}\inparen{\frac{d}{dt}}^2 \ftilde_{\beta,\delta}(\vx+t\vz).
\end{align*}
The conclusion of \Cref{lemma:obj_smooth_qsc} then follows from remembering that we have $\mW$ such that for all $\vz\in\R^d$, $\gnorm{\mA\vz}{\infty} \le \norm{\mW^{1/2}\mA\vz}_2$ (following from \Cref{thm:gp_regression_blw_intro}).
\end{proof}

\subsection{Analysis of \texorpdfstring{\Cref{alg:fair_regression}}{Algorithm 1}}
\label{sec:gp_robust_analysis_combining}

In this subsection, we use the calculus facts from the previous two subsections to analyze \Cref{alg:fair_regression}. The outline of this proof follows that of \cite[Theorem 2]{jls21}, which in turn builds up to using the proof used in \cite[Corollary 12]{msbacon}. The main idea is to define the algorithm based on the norm given by a good choice of positive semidefinite $\mM$, given by \Cref{thm:gp_regression_blw_intro}.

In the rest of this section, let $\mW$ be factor-$2$ block Lewis weight overestimates for $\insquare{\mA \vert \vb}$. As in Line \ref{line:blw} of \Cref{alg:fair_regression} and from the corresponding guarantee given in \cite[Lemmas 5.6, 5.8]{mo23}, this means that within $2\log m$ linear-system-solves in $\mA^{\top}\mD\mA$ for diagonal $\mD$, we can find $\mW$ such that for all $\vx\in\R^d$ and $c\in\R$ we have
\begin{align*}
    \gnorm{\mA\vx-c\vb}{\infty} \le \norm{\mW^{1/2}\mA\vx-c\mW^{1/2}\vb}_2 \le \sqrt{2(\rank{\mA}+1)}\gnorm{\mA\vx-c\vb}{\infty}.
\end{align*}
Note that choosing $c=1$ yields our original objective on either side of the above inequality. Motivated by the above, it is natural to use the norm given by $\mM \coloneqq \mA^{\top}\mW\mA$ to give the geometry for the ball optimization oracle and for the analysis. Additionally, without loss of generality and for the sake of the analysis, let us rescale the problem so that
\begin{align*}
    1 = \opt \coloneqq \gnorm{\mA\xstar-\vb}{\infty}.
\end{align*}
Also, as mentioned earlier, assume without loss of generality that $\rank{\mA} = d$.

We begin with \Cref{lemma:initialization}, which bounds our initial suboptimality in $\ftilde$ and in $\norm{\cdot}_{\mM}$.
\begin{lemma}
\label{lemma:initialization}
Let $\xtilde_{\beta,\delta} \coloneqq \argmin{\vx\in\R^d} \ftilde_{\beta,\delta}(\vx)$. Then,
\begin{equation*}
    \begin{aligned}
        \norm{\xtilde_{\beta,\delta} - \vx_0}_{\mM} &\le (2+2(\beta\log m + \delta))\sqrt{2(d+1)} \\
        \ftilde_{\beta,\delta}(\vx_0) - \ftilde_{\beta,\delta}(\xtilde_{\beta,\delta}) &\le \sqrt{2(d+1)} - 1+2(\beta\log m + \delta)
    \end{aligned}.
\end{equation*}
\end{lemma}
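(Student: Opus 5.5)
The approach is to sandwich everything between the original objective $F(\vx) \coloneqq \gnorm{\mA\vx-\vb}{\infty}$ and the weighted least-squares norm $\norm{\mW^{1/2}(\mA\vx-\vb)}_2$. Three facts do the work: \Cref{lemma:smooth_approx}, which gives $\abs{\ftilde_{\beta,\delta}-F}\le\beta\log m+\delta$; the block Lewis weight sandwich with $c=1$ and $c=0$; and the optimality of $\vx_0$ for the weighted least-squares problem. Throughout, $\opt=1$ and $\rank{\mA}=d$, and we write $\eta\coloneqq\beta\log m+\delta$.

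\textbf{Function-value gap.} Minimality of $\vx_0$ followed by the upper sandwich gives
\begin{align*}
F(\vx_0)\le\norm{\mW^{1/2}(\mA\vx_0-\vb)}_2\le\norm{\mW^{1/2}(\mA\xstar-\vb)}_2\le\sqrt{2(d+1)}\,F(\xstar)=\sqrt{2(d+1)}.
\end{align*}
Next we bound $\ftilde_{\beta,\delta}(\xtilde_{\beta,\delta})$ from below. Since $\ftilde_{\beta,\delta}\ge F-\eta$, we get $\ftilde_{\beta,\delta}(\xtilde_{\beta,\delta})\ge\min F-\eta=1-\eta$. Similarly, $\ftilde_{\beta,\delta}(\vx_0)\le F(\vx_0)+\eta\le\sqrt{2(d+1)}+\eta$. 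Subtracting the two bounds yields $\sqrt{2(d+1)}-1+2\eta$, which is the second claimed inequality.

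\textbf{Distance bound.} First control $F(\xtilde_{\beta,\delta})$: using $F\le\ftilde_{\beta,\delta}+\eta$ and $\ftilde_{\beta,\delta}(\xtilde_{\beta,\delta})\le\ftilde_{\beta,\delta}(\xstar)\le 1+\eta$, we get $F(\xtilde_{\beta,\delta})\le 1+2\eta$. Now apply the lower sandwich with $c=0$, which gives $\norm{\vz}_{\mM}=\norm{\mW^{1/2}\mA\vz}_2\le\sqrt{2(d+1)}\,\gnorm{\mA\vz}{\infty}$ for every $\vz$. Take $\vz=\xtilde_{\beta,\delta}-\vx_0$ and use the triangle inequality for the norm $\gnorm{\cdot}{\infty}$:
\begin{align*}
\gnorm{\mA(\xtilde_{\beta,\delta}-\vx_0)}{\infty}\le F(\xtilde_{\beta,\delta})+F(\vx_0).
\end{align*}

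\textbf{Main obstacle.} Multiplying this triangle-inequality bound directly by $\sqrt{2(d+1)}$ gives a bound of order $d$, because the naive estimate $F(\vx_0)\le\sqrt{2(d+1)}$ already carries one factor of $\sqrt{d}$. The fix is to work in $\norm{\cdot}_{\mM}$ and split through $\mA\vx-\vb$:
\begin{align*}
\norm{\xtilde_{\beta,\delta}-\vx_0}_{\mM}\le\norm{\mW^{1/2}(\mA\xtilde_{\beta,\delta}-\vb)}_2+\norm{\mW^{1/2}(\mA\vx_0-\vb)}_2.
\end{align*}
The second term is at most $\norm{\mW^{1/2}(\mA\xstar-\vb)}_2\le\sqrt{2(d+1)}$ by minimality of $\vx_0$. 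The first term is at most $\sqrt{2(d+1)}\,F(\xtilde_{\beta,\delta})\le\sqrt{2(d+1)}(1+2\eta)$ by the upper sandwich with $c=1$. Adding the two gives $(2+2\eta)\sqrt{2(d+1)}$, exactly the first claimed inequality.

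If the algorithm has reset $\mW=\mI_n$, the same argument goes through with $\sqrt{m}$ in place of $\sqrt{2(d+1)}$, because $\gnorm{\cdot}{\infty}\le\norm{\cdot}_2\le\sqrt{m}\,\gnorm{\cdot}{\infty}$.
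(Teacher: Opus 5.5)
Your proof is correct and follows essentially the same route as the paper. The function-value gap comes from \Cref{lemma:smooth_approx}, the minimality of $\vx_0$, and the block Lewis weight sandwich. The distance bound comes from the triangle inequality in the $\mW^{1/2}$-weighted norm, splitting through the residuals $\mA\vx-\vb$: the $\xtilde_{\beta,\delta}$ residual is bounded by $\sqrt{2(d+1)}(1+2(\beta\log m+\delta))$ and the $\vx_0$ residual by $\sqrt{2(d+1)}$.
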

\begin{proof}[Proof of \Cref{lemma:initialization}]
It is easy to check that
\begin{align*}
    \vx_0 \coloneqq \inparen{\mA^{\top}\mW\mA}^{-1}\mA^{\top}\mW\vb = \argmin{\vx \in \R^d} \norm{\mW^{1/2}\mA\vx-\mW^{1/2}\vb}_2.
\end{align*}
By \Cref{lemma:smooth_approx}, for all $\vx\in\R^d$,
\begin{align*}
    \abs{\ftilde_{\beta,\delta}(\vx)-\gnorm{\mA\vx-\vb}{\infty}} \le \beta\log m+\delta,
\end{align*}
implying
\begin{align*}
    \abs{\gnorm{\mA\xstar-\vb}{\infty} - \ftilde_{\beta,\delta}(\xtilde_{\beta,\delta})} \le \beta\log m+\delta.
\end{align*}
Combining this with \Cref{thm:blw_to_l2}, we get
\begin{align*}
    1 \le \gnorm{\mA\xstar-\vb}{\infty} \le \gnorm{\mA\vx_0-\vb}{\infty} \le \norm{\mW^{1/2}\mA\vx_0 - \mW^{1/2}\vb}_2
\end{align*}
and
\begin{align*}
    \frac{\norm{\mW^{1/2}\mA\vx_0 - \mW^{1/2}\vb}_2}{\sqrt{2(d+1)}} \le \frac{\norm{\mW^{1/2}\mA\xstar-\mW^{1/2}\vb}_2}{\sqrt{2(d+1)}} \le \gnorm{\mA\xstar-\vb}{\infty} = 1.
\end{align*}
Combining these gives
\begin{align*}
    1 \le \norm{\mW^{1/2}\mA\vx_0 - \mW^{1/2}\vb}_2 \le \sqrt{2(d+1)}.
\end{align*}
Additionally,
\begin{align*}
     \norm{\mW^{1/2}\mA\xtilde_{\beta,\delta}-\mW^{1/2}\vb}_2 &\le \sqrt{2(d+1)}\gnorm{\mA\xtilde_{\beta,\delta}-\vb}{\infty} \\
     &\le \sqrt{2(d+1)}\inparen{\ftilde_{\beta,\delta}(\xtilde_{\beta,\delta})+\beta\log m+\delta} \\
     &\le \sqrt{2(d+1)}\inparen{\gnorm{\mA\xstar-\vb}{\infty}+2(\beta\log m + \delta)} \\
     &= \sqrt{2(d+1)}(1+2(\beta\log m + \delta)).
\end{align*}
Then,
\begin{align*}
    \norm{\xtilde-\vx_0}_{\mM} &= \norm{\inparen{\mW^{1/2}\mA\xtilde_{\beta,\delta}-\mW^{1/2}\vb} - \inparen{\mW^{1/2}\mA\vx_0-\mW^{1/2}\vb}}_2 \\
    &\le\norm{\mW^{1/2}\mA\xtilde_{\beta,\delta}-\mW^{1/2}\vb}_2 +\norm{\mW^{1/2}\mA\vx_0-\mW^{1/2}\vb}_2 \\
    &\le (2+2(\beta\log m + \delta))\sqrt{2(d+1)},
\end{align*}
and
\begin{align*}
    \ftilde_{\beta,\delta}(\vx_0) - \ftilde_{\beta,\delta}(\xtilde_{\beta,\delta}) &\le \gnorm{\mA\vx_0-\vb}{\infty} - \gnorm{\mA\xstar-\vb}{\infty}+2(\beta\log m + \delta) \\
    &\le \norm{\mW^{1/2}\mA\vx_0 - \mW^{1/2}\vb}_2 - \opt+2(\beta\log m + \delta) \\
    &\le \sqrt{2(d+1)}-1+2(\beta\log m + \delta).
\end{align*}
This completes the proof of \Cref{lemma:initialization}.
\end{proof}

We are now ready to prove \Cref{mainthm:fair_regression_iteration_complexity}.

\begin{proof}[Proof of \Cref{mainthm:fair_regression_iteration_complexity}]
\Cref{alg:fair_regression} optimizes the regularization of $\ftilde$ given by
\begin{align*}
    \fhat(\vx) \coloneqq \ftilde_{\beta,\delta}(\vx) + \frac{\eps}{110R^2}\norm{\mW^{1/2}\mA(\vx-\vx_0)}_2^2,
\end{align*}
where $R$ is such that $\norm{\vx_0 - \xtilde_{\beta,\delta}}_{\mM} \le R$. Let $ \xhat \coloneqq \argmin{\vx\in\R^d} \fhat(\vx)$. Using \cite[Proof of Corollary 12]{msbacon}, we know that for every iterate $\vx$ of \Cref{alg:fair_regression},
\begin{align*}
    \abs{\fhat(\vx)-\ftilde_{\beta,\delta}(\vx)} \le \frac{\eps}{4}.
\end{align*}
We now choose $\beta = \eps/(4\log m)$ and $\delta = \eps/4$, so that $\ftilde_{\beta,\delta}$ approximates $f$ up to error $\eps/2$ on every point. Using \Cref{lemma:initialization}, this gives $R = (2+\eps)\sqrt{2(d+1)}$. It is therefore sufficient to optimize $\fhat$ up to $\eps/4$ additive error.

Next, using \Cref{lemma:obj_smooth_qsc} and \cite[Lemmas 11, 43]{msbacon}, we have that $\fhat$ is $(1/\nu, e)$-Hessian stable in $\norm{\cdot}_{\mM}$ for $\nu = \Omega(1/(\eps\log m))$. We now invoke \cite[Theorem 9]{msbacon}, which tells us that we can implement a $(C/\sqrt{d}, C/\eps)$-ball optimization oracle for $f$ with $O\inparen{\logv{\frac{d}{\eps}}^2}$ linear-system-solves. 

The next step is to turn the ball optimization oracle into a $\frac{1}{2}$-MS oracle (\Cref{defn:ms_oracle}). Using \cite[Proposition 5]{msbacon}, we get a ball oracle complexity of $O\inparen{\logv{\frac{d}{\eps}}}$ to implement the MS oracle. In total, our linear-system-solve complexity for implementing the MS oracle for iteration $t$ is $O\inparen{\logv{\frac{d}{\eps}}^3}$.

Finally, using \cite[Theorem 6]{msbacon}, we get that \Cref{alg:fair_regression} has a Newton iteration complexity of
\begin{align*}
    &\quad O\inparen{\inparen{\frac{(1+\eps)\sqrt{d}\log m}{\eps}}^{2/3}\logv{\frac{\sqrt{d}+\eps}{\eps}}\inparen{\logv{\frac{(\log m/\eps)d(1+(1+\eps)\sqrt{d}\log m/\eps)}{\eps}}}^3} \\
    &= O\inparen{\frac{d^{1/3}}{\eps^{2/3}}\logv{\frac{d\log m}{\eps}}^{14/3}},
\end{align*}
as promised.

Next, we analyze what happens if we fall in the case where $\mW=\mI_m$. Here, by using the $\sqrt{m}$ distortion from approximating $\ell_{\infty}^m$ with $\ell_{2}^m$, we have for all $\vx\in\R^d$,
\begin{align*}
    \frac{\norm{\mA\vx-\vb}_2}{\sqrt{m}} \le \gnorm{\mA\vx-\vb}{\infty} \le \norm{\mA\vx-\vb}_2.
\end{align*}
Using this and repeating the previous analysis with this choice of $\mM$ gives us a rate of
\begin{align*}
    O\inparen{\frac{m^{1/3}}{\eps^{2/3}}\logv{\frac{m\log m}{\eps}}^{14/3}},
\end{align*}
as required.

It remains to determine the form of the Newton steps. For this, it is sufficient to understand the Hessian of $\fhat$. A straightforward calculation shows that it is of the form $\mA^{\top}\mB\mA$ where $\mB$ is a block-diagonal matrix where each block has size $\abs{S_i} \times \abs{S_i}$. Thus, each Newton step solves a linear system of the form $\mA^{\top}\mB\mA\vz = \vv$.

Combining this with the iteration complexity guarantee to find $\mW$ (see \Cref{thm:gp_regression_blw_intro}) completes the proof of \Cref{mainthm:fair_regression_iteration_complexity}.
\end{proof}

\input{interpolation}

\input{experiments}

\subsubsection*{Acknowledgments}
We thank Aaron Sidford for useful comments during the early stage of the project. We also thank the anonymous reviewers at ICLR'26 and NeurIPS'25 for their significant contributions to improving the paper. The authors used ChatGPT 5.3 and Claude Opus 4.6 to implement the algorithms in this paper. Most of this work was done when NSM and KKP were graduate students at Toyota Technological Institute, Chicago (TTIC). KKP and NSM were supported through the NSF TRIPOD Institute on Data, Economics, Algorithms and Learning (IDEAL) and other awards from DARPA and NSF. NSM thanks Citadel LLC for sponsoring the conference travel and attendance for the presentation of this work.

%% file: intro.tex
Machine learning algorithms and their training datasets have grown substantially in both size and complexity over the past decade. This increased model complexity has made it challenging to interpret and predict their behavior in unobserved scenarios. Hence, many applications that involve societal decisions still rely on simple, interpretable models like linear regression, often after feature engineering. Examples of such applications include predicting national housing prices, estimating wages across industries, forecasting loan amounts across banks, predicting life insurance premiums across groups, and projecting energy consumption across communities \citep{cohen2024fairness}. 

A shared safety and sometimes legal concern across the above applications is the potential for wildly different model qualities for different distributions, i.e., outputting a notably worse model for some source data distributions~\citep{data2014seizing, barocas2016big,hardt2016equality,veale2018fairness,selbst2019fairness,berk2021fairness, corbett2023measure,chouldechova2016fair,kleinberg2018algorithmic, agarwal2019fair,cohen2024fairness,svwz24}. Specifically, consider fitting a linear model $\vx\in\R^d$ to make real predictions on some task over $m$ groups where group $i$'s dataset consists of $n_i$ entries and is denoted by $S_i=\{(\va_i^j, b_i^j)\}_{j\in [n_i]}$. The \textit{utilitarian} or the total-cost-minimizing objective minimizes the average squared prediction error across groups, i.e.,        
\begin{align}
    \min_{\vx\in\R^d}\frac{1}{m}\sum_{i\in[m]}\frac{1}{n_i}\norm{\mA_{S_i}\vx-\vb_{S_i}}_2^2\enspace,\label{eq:utilitarian}
\end{align}
where $\mA_{S_i} \coloneqq [\va_i^1 \dots \va_i^{n_i}]^{\top}\in \R^{n_i \times d}$ is the feature matrix and $\vb_{S_i} \coloneqq [b_i^1 \dots b_i^{n_i}]^{\top} \in \R^{n_i}$ is the label vector for group $i\in[m]$. 

Due to the inherent heterogeneity of the datasets, the model derived by optimizing the objective \eqref{eq:utilitarian} may be particularly detrimental to some groups, as the prediction error may be disproportionately higher for these groups. To overcome these limitations, the following \textit{egalitarian} or group Distributionally Robust Optimization (DRO) objective  has been considered in several recent works \citep{ben2013robust, duchi2016statistics, sagawa2019distributionally, levy2020large, soma2022optimal,aakmrz22,svwz24},
\begin{align}
    \min_{\vx\in\R^d} \max_{i\in[m]}\frac{1}{n_i}\norm{\mA_{S_i}\vx-\vb_{S_i}}_2^2\enspace.\label{eq:main_objective}
\end{align}
Objective \ref{eq:main_objective} is the ``fairest'' objective among all objectives that balance utility and distributional robustness~\citep{kleinberg2018algorithmic,chouldechova2018frontiers,asadpour2022sequential,chen2022fair,rahmattalabi2019exploring,golrezaei2024online}. Since objective \ref{eq:main_objective} is a convex problem, it is natural to apply standard black-box optimization techniques to solve it. However, we identify several challenges in applying existing methods:
    \paragraph{Efficient first-order algorithms have geometry-dependent rates.} To our knowledge, using an efficient first-order method (such as sub-gradient descent) will incur a geometry-dependent runtime. In particular, if the matrices $\mA_{S_i}$ or if the stacked matrix $\mA\coloneqq [\mA_{S_1}^{\top} \dots \mA_{S_M}^{\top}]^{\top}$ are poorly conditioned, then this will be reflected accordingly in the convergence rates. This is a drawback of the existing results by \citet{aakmrz22} and \citet{svwz24}.
    \paragraph{Objective \eqref{eq:main_objective} is not smooth.} Since the objective is the pointwise maximum of several continuous functions, the derivative is not well-defined at the points at which the maximizing function changes. Thus, applying subgradient descent to this objective without a tailored analysis will yield a rather unimpressive $1/\eps^{2}$ dependence in the iteration complexity. 
    \paragraph{Min-max optimization/regret minimization approaches have a $1/\eps^{2}$ dependence on iteration complexity.} Since problem \ref{eq:main_objective} is a min-max optimization objective, it is also natural to try to use game theory-inspired approaches that use some oracle (such as gradients) for each group as a black box. For instance, we can cast objective \ref{eq:main_objective} as a repeated game between a min player (equipped with a no-regret algorithm) and a max player (equipped with the best response oracle). The main shortcoming of this approach is that even though the function for each group is smooth, the iteration complexity (to get $\eps$ average regret) for smooth online convex optimization still has an unimpressive $1/\eps^2$ dependence (as opposed to $1/\eps$ for smooth convex optimization)~\citep{soma2022optimal, zhang2024stochastic}. Thus, this approach is no better than optimizing \eqref{eq:main_objective} using sub-gradient descent.
    \paragraph{Interior point methods have a poor iteration complexity for large $m$.} Another natural approach (that can partially address the previous two issues), following the discussion by \citet[Section 6.4]{boyd2004convex}, is to rewrite problem \ref{eq:main_objective} in its epigraph form and use an interior point method (IPM) to solve the resulting problem (which, in this case, is a quadratically constrained linear program). Unfortunately, this will give an algorithm whose analysis is only known to yield an iteration complexity of $O(\sqrt{m})$, where each iteration solves a linear system in matrices of the form $\mA^{\top}\mB\mA$ for a block-diagonal $\mB$ (see \Cref{rem:linear_systems}).
    A na\"ive implementation of this algorithm will thus have a superlinear runtime in the number of groups, which is undesirable when the number of groups is large. Alternately, consider an example in which we copy each group $k$ times in the objective. The new objective value does not change from the original objective value, but the iteration complexity from the IPM now blows up to $\sqrt{mk}$. This also signals to us that we should search for an algorithm whose iteration complexity is mostly independent from $m$. 

Hence, designing an algorithm without these shortcomings requires novel ideas. 

\subsection{Our Results}
\label{sec:gp_our_results}

In this paper, we present a new algorithm (\Cref{alg:fair_regression}) to approximately optimize objective \ref{eq:main_objective}, which addresses the aforementioned difficulties. We state the iteration complexity of our algorithm in the following theorem. 

\begin{mainthm}[Robust regression]\label{mainthm:fair_regression_iteration_complexity}
Let $\mA_{S_i} \in \R^{n_i \times d}$ and $\vb_{S_i} \in \R^{n_i}$ for all $i\in[m]$. Denote their concatenations by $\mA\coloneqq [\mA_{S_1}^{\top} \dots \mA_{S_M}^{\top}]^{\top}\in\R^{n\times d}$ and $\vb \coloneqq [\vb_{S_1}^{\top}\dots\vb_{S_M}^{\top}]^{\top}\in \R^n$ where $n:= \sum_{i\in[m]}n_i$. Let $\eps > 0$. Then \Cref{alg:fair_regression} returns $\xhat$ such that,
\begin{align}
    \max_{i\in[m]} \frac{1}{\sqrt{n_i}}\norm{\mA_{S_i}\xhat-\vb_{S_i}}_2 \le (1+\eps)\cdot\min_{\vx\in\R^d}\max_{i\in[m]} \frac{1}{\sqrt{n_i}}\norm{\mA_{S_i}\vx-\vb_{S_i}}_2\enspace,\label{eq:objective_approx}
\end{align}
and it runs in
\begin{align*}
    O\inparen{\frac{\min\inbraces{\mathsf{rank}(\mA),m}^{1/3}\inparen{\logv{\frac{n\log m}{\eps}}^{14/3}+\logv{m}}}{\eps^{2/3}}}
\end{align*}
linear-system-solves in matrices of the form $\mA^{\top}\mB\mA$, where $\mB$ is a block-diagonal matrix for which block $i$ has size $n_i \times n_i$.
\end{mainthm}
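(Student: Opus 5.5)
The plan is to reduce \Cref{mainthm:fair_regression_iteration_complexity} to additive optimization of a smooth, quasi-self-concordant surrogate, and then run the ball-oracle acceleration framework of \cite{msbacon} in the geometry $\mM=\mA^{\top}\mW\mA$.

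First, fold the $1/\sqrt{n_i}$ factors into $\mA_{S_i},\vb_{S_i}$ and rescale so that $\opt=\gnorm{\mA\xstar-\vb}{\infty}=1$; a $(1+\eps)$-multiplicative guarantee then becomes an $\eps$-additive one. The degenerate case $\opt=0$ is handled directly by solving the least-squares system. Next, compute factor-$2$ block Lewis weights for $\insquare{\mA\vert\vb}$ via \Cref{thm:gp_regression_blw_intro}, using $O(\log m)$ solves. If $\sum_i w_i\ge m$, reset $\mW=\mI$, which gives distortion $\sqrt m$ from $\ell_2^m$ versus $\ell_\infty^m$. In either case the distortion is $\kappa=O(\sqrt{\min\{\rank{\mA},m\}})$. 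Take $\vx_0$ to be the weighted least-squares solution. \Cref{lemma:initialization} then gives $\norm{\vx_0-\xtilde_{\beta,\delta}}_{\mM}\le R=O(\kappa)$ and initial suboptimality $O(\kappa)$.

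Second, fix $\beta=\eps/(4\log m)$ and $\delta=\eps/4$. By \Cref{lemma:smooth_approx}, it suffices to minimize $\ftilde_{\beta,\delta}$ to additive accuracy $\eps/2$. \Cref{lemma:obj_smooth_qsc} shows $\ftilde$ is $O(1/\eps\cdot\log m)$-smooth and $O(\log m/\eps)$-quasi-self-concordant in $\norm{\cdot}_{\mM}$, since $\gnorm{\mA\vz}{\infty}\le\norm{\vz}_{\mM}$. Add the regularizer $\frac{\eps}{CR^2}\norm{\vx-\vx_0}_{\mM}^2$ to obtain $\fhat$. This keeps quasi-self-concordance, perturbs values on the relevant region by at most $\eps/4$ as in \cite[Corollary 12]{msbacon}, and gives strong convexity for solving the subproblems. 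By \cite[Lemma 11]{msbacon}, quasi-self-concordance yields Hessian stability on balls of radius $r=\Theta(\eps/\log m)$. Within each such ball, \cite[Theorem 9]{msbacon} implements a ball optimization oracle with polylogarithmically many Newton steps. \cite[Proposition 5]{msbacon} converts it into a $\tfrac12$-MS oracle at another polylog cost.

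Third, apply the accelerated MS framework \cite[Theorem 6]{msbacon}. It needs roughly $(R/r)^{2/3}$ oracle calls, up to logs from restarts or accuracy. With $R=O(\kappa)$ and $r=\Theta(\eps/\log m)$, this is $\widetilde O(\kappa^{2/3}\eps^{-2/3})=\widetilde O(\min\{\rank{\mA},m\}^{1/3}\eps^{-2/3})$. Multiplying by the polylog cost per oracle gives the stated $\log^{14/3}$ factor, and the Lewis-weight computation contributes the additive $\log m$.

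Finally, each Newton step solves a system in $\nabla^2\fhat$. By the chain rule this equals $\mA^{\top}\mB\mA$, where $\mB$ is a sum of block-diagonal terms from each $\nabla^2\sqrt{\delta^2+\norm{\cdot}_2^2}$, the $\mW$ term, and the softmax covariance $\nabla h\,(\mathrm{diag}(p)-pp^{\top})\nabla h^{\top}$. The covariance term is not block-diagonal, but it is a rank-one correction to a block-diagonal matrix. It can therefore be handled by Sherman–Morrison with one extra block-diagonal solve, or absorbed by stating the Hessian-stable surrogate as in \cite{msbacon}.

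The main obstacle I expect is the bookkeeping of the \cite{msbacon} parameters: matching the Hessian-stability radius to $\eps/\log m$, and confirming that the regularization error and the choice of $R$ survive the additive-to-multiplicative conversion without extra $\kappa$ factors. The quasi-self-concordance itself is already supplied by \Cref{lemma:obj_smooth_qsc}.
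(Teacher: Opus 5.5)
Your proposal follows the paper's proof essentially step for step. It uses block Lewis weights for $\insquare{\mA\vert\vb}$ with the reset to $\mW=\mI$ when $\sum_i w_i\ge m$, the smoothing $\ftilde_{\beta,\delta}$ with $\beta=\eps/(4\log m)$ and $\delta=\eps/4$, and the regularized $\fhat$. It then runs the ball-oracle $\to$ MS-oracle $\to$ acceleration chain of \cite{msbacon}, giving $(R/r)^{2/3}$ calls with $R=O(\kappa)$ and $r=\Theta(\eps/\log m)$.

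Your remark on the Newton systems is correct and tightens the paper's final step. The softmax part of the Hessian is $\mA^{\top}(\text{block-diagonal} - \text{rank one})\mA$, which Sherman--Morrison handles with one extra block-diagonal solve. The paper instead asserts, without accounting for the rank-one term, that each system is exactly $\mA^{\top}\mB\mA$ with block-diagonal $\mB$.
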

We prove \Cref{mainthm:fair_regression_iteration_complexity} in \Cref{sec:gp_robust_proof}. We compare the guarantee of \Cref{mainthm:fair_regression_iteration_complexity} against the other baselines in \Cref{table:compare}. 

\def\arraystretch{1.5}
\begin{table}[h]
\centering
\begin{tabular}{l c c} 
\toprule\toprule
\textbf{Algorithm}                                                                                                     & \textbf{Iteration Complexity}                                                                              & \textbf{Each Iteration}                     \\ \midrule\midrule
\makecell[l]{Subgradient descent}                                                                                           & \makecell[c]{$\frac{\norm{\xstar}_2 \max_{1 \le i \le m} \tfrac{1}{\sqrt{n_i}}\opnorm{\mA_{S_i}}}{\eps^{2}}$}                                   & \makecell[c]{Evaluate $\nabla f(\vx)$}                    \\ \midrule
\makecell[l]{Nesterov acceleration\\ on smoothened objective}                                                                 & \makecell[c]{$\frac{\norm{\xstar}_2 \left(\max_{1 \le i \le m} \tfrac{1}{\sqrt{n_i}}\opnorm{\mA_{S_i}}\right)^{1/2}}{\eps}$}                      & \makecell[c]{Evaluate $\nabla \ftilde_{\beta,\delta}(\vx)$} \\ \midrule
\makecell[l]{\cite{aakmrz22}}                                                                                               & \makecell[c]{$\frac{\norm{\xstar}_2 \max_{1 \le i \le m} \tfrac{1}{\sqrt{n_i}}\opnorm{\mA_{S_i}}}{\eps}$}                                       & \makecell[c]{Evaluate $\nabla \ftilde_{\beta,\delta}(\vx)$} \\ \midrule
\makecell[l]{Interior point with log barrier\\ \citep{boyd2004convex}}                                                         & \makecell[c]{$m^{1/2} \log\left(\frac{1}{\eps}\right)$}                                                                    & \makecell[c]{Linear-system-solve\\ in $\mA^{\top}\mB\mA$}    \\ \midrule
\makecell[l]{\textbf{This paper} \\ (na\"ive geometry)}                                                                                          & \makecell[c]{$\frac{m^{1/3}}{\eps^{2/3}}$}                                                                                & \makecell[c]{Linear-system-solve\\ in $\mA^{\top}\mB\mA$}    \\ \midrule
\makecell[l]{$\ell_{\infty}$ regression with Lewis \\weights \citep{jls21}}                                                    & \makecell[c]{$\frac{\rank{\mA}^{1/3}}{\eps^{2/3}}$}                                                                       & \makecell[c]{Linear-system-solve\\ in $\mA^{\top}\mD\mA$}    \\ \midrule
\makecell[l]{$\ell_{\infty}$ regression with IPM\\ \citep{ls19}}                                                               & \makecell[c]{$\rank{\mA}^{1/2} \log\left(\frac{1}{\eps}\right)$}                                                           & \makecell[c]{Linear-system-solve\\ in $\mA^{\top}\mD\mA$}    \\ \midrule
\makecell[l]{\textbf{This paper (\Cref{mainthm:fair_regression_iteration_complexity})}}                                                                                                    & \makecell[c]{$\frac{\min\inbraces{\rank{\mA},m}^{1/3}}{\eps^{2/3}}$}                                                               & \makecell[c]{Linear-system-solve\\ in $\mA^{\top}\mB\mA$}    \\ \bottomrule\bottomrule
\end{tabular}
\caption{The complexities of algorithms for optimizing \eqref{eq:main_objective} or for the special case of $\ell_{\infty}$ regression, assuming $\opt=1$ (the first three guarantees are additive approximations) and ignoring $\mathrm{polylog}(n,m)$ terms. We write $\mD$ to be a diagonal matrix and $\mB$ to be a block-diagonal matrix where each block has size $(n_i + o(1)) \times (n_i + o(1))$. We remark that in the special case where $n_i=1$, our algorithm exactly recovers guarantees of \cite{jls21}. We stress that we include the references to $\ell_{\infty}$ regression only to show that our algorithm is no worse than that of \cite{jls21} in this special case of $n_i=1$ for all $i$, and none of their algorithms apply to our general setting.\label{table:compare}}
\end{table}

Unlike the aforementioned first-order methods, our algorithm has no geometry-dependent terms. Additionally, our algorithm improves over the standard log-barrier IPM when the desired accuracy $\eps \ge m^{-1/4}$ --- this improvement is more pronounced when $m \gg \rank{\mA}$, i.e. when the number of data sources is much larger than the dimension of the parameter vector $\vx$. Additionally, for $\eps \ge \rank{\mA}^{-1/4}$, our guarantee matches the best known guarantee for $\ell_{\infty}$ regression \citep{ls19,jls21}.

\begin{remark}[Why use linear-system-solve complexity?]\label{rem:linear_systems}
    We benchmark our algorithms using the number of linear-system-solves for a few reasons. First, this is typically how second-order algorithms are compared, such as interior point methods for linear programming \citep{ls19}. Second, the particular structure of the linear-system-solves presents the possibility of a faster amortized runtime for the systems over the algorithm's run. This observation, combined with an understanding of how the linear systems changed between iterations, was used recently used to achieve fast runtimes for linear programming \citep{ls19} and $\ell_{\infty}$ regression \citep{ajk24}.
\end{remark}

\paragraph{Interpolating between robust and nonrobust optimization.} We also study the following family of objectives that interpolate between objectives \ref{eq:utilitarian} and \ref{eq:main_objective} for different values of $p\geq 2$,
\begin{align}
    \min_{\vx\in\R^d}\frac{1}{m}\sum_{i\in[m]}\left(\frac{1}{n_i}\norm{\mA_{S_i}\vx-\vb_{S_i}}_2^2\right)^{p/2}\enspace.\label{eq:interpolation}
\end{align}
In particular, note that choosing $p=2$ in the above objective gives us the average least-squares problem in objective \ref{eq:utilitarian}, while $p\to\infty$ recovers objective \ref{eq:main_objective}. Varying $p$ from $2$ to $\infty$ and minimizing gives solutions that interpolate between utilitarian and egalitarian approaches, allowing for a smooth trade-off between utility and robustness. To this end, we give \Cref{alg:gp_regression_alg} to approximately optimize objective \ref{eq:interpolation} and prove the following guarantee about its iteration complexity.

\begin{mainthm}[Trading off utility and robustness]\label{mainthm:gp_regression_iteration_complexity}
Let $\mA_{S_i} \in \R^{n_i \times d}$ and $\vb_{S_i} \in \R^{n_i}$ for all $i\in[m]$. Denote their concatenations by $\mA\coloneqq [\mA_{S_1}^{\top} \dots \mA_{S_M}^{\top}]^{\top}\in\R^{n\times d}$ and $\vb \coloneqq [\vb_{S_1}^{\top}\dots\vb_{S_M}^{\top}]^{\top}\in \R^n$ where $n \coloneqq \sum_{i\in[m]}n_i$. Let $p \ge 2$ and $\eps > 0$. Then \Cref{alg:gp_regression_alg} returns $\xhat$ such that,
\begin{align}\label{eq:interpolation_approx}
    \inparen{\sum_{i=1}^m \inparen{\frac{1}{\sqrt{n_i}}\norm{\mA_{S_i}\xhat-\vb_{S_i}}_2}^p}^{1/p} \le (1+\eps)\cdot\min_{\vx\in\R^d}\inparen{\sum_{i=1}^m \inparen{\frac{1}{\sqrt{n_i}}\norm{\mA_{S_i}\vx-\vb_{S_i}}_2}^p}^{1/p}
\end{align}
and runs in
\begin{align*}          
    O\inparen{p^{O(1)}\min\inbraces{\rank{\mA},m}^{\frac{p-2}{3p-2}}\logv{\frac{pd}{\eps}}^3}
\end{align*}
linear-system-solves in matrices of the form $\mA^{\top}\mB\mA$, where $\mB$ is a block-diagonal matrix for which block $i$ has size $n_i \times n_i$.
\end{mainthm}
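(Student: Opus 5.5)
We follow the blueprint from \Cref{sec:gp_reg_overview} for $2\le p<\infty$. Fold the factors $1/\sqrt{n_i}$ into $\mA_{S_i},\vb_{S_i}$, and work with $f(\vx)=\sum_{i=1}^m\norm{\mA_{S_i}\vx-\vb_{S_i}}_2^p$, which is the $p$th power of the objective. A $(1+\eps/p)$-type guarantee on $f$ then translates to \eqref{eq:interpolation_approx} at a cost of only $p^{O(1)}$ factors. We rescale so that $\opt=1$.

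\textbf{Geometry.} Compute block Lewis weights via \Cref{thm:gp_regression_blw_intro} on $\insquare{\mA\vert\vb}$ in $O(\log m)$ solves. Set $\mM=\mA^{\top}\mW^{1-2/p}\mA$ if $\rank{\mA}\le m$, and $\mM=\mA^{\top}\mA$ otherwise, using the $\ell_2^m$ versus $\ell_p^m$ comparison in the latter case. Either way, $\norm{\mA\vx-c\vb}_{\mM\text{-induced}}$ approximates $f^{1/p}$ with distortion $\triangle\lesssim\min\{\rank{\mA},m\}^{1/2-1/p}$. Initialize $\vx_0$ as the corresponding weighted least-squares solution. Exactly as in \Cref{lemma:initialization}, we get $f(\vx_0)\le\triangle^p$ and $\norm{\vx_0-\xstar}_{\mM}\lesssim\triangle$.

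\textbf{Subproblem oracle.} For a query $\vq$, approximately solve the regularized proximal problem \eqref{eq:gp_prox_intro_reg},
\[
\min_{\vx}\; f(\vx)+ep^p\norm{\vx-\vq}_{\mM}^p.
\]
The key analytic input is Hessian stability. Within a region where $\norm{\vx-\vx'}_{\mM}$ is small relative to the current residual scale, the Hessian of each block term $\norm{\mA_{S_i}\vx-\vb_{S_i}}_2^p$ changes by at most a constant factor, because $\gnorm{\mA\vz}{p}\le\norm{\vz}_{\mM}$. This makes the proximal objective relatively smooth and relatively strongly convex with respect to a reference function whose Hessian has the form $\mA^{\top}\mB\mA$ with $\mB$ block diagonal. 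We then run the modified mirror descent from \Cref{sec:mirror_descent}, which uses $O(p^{O(1)}\log(pd/\eps))$ solves per oracle call.

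To produce an approximate \emph{stationary point}, as the acceleration scheme requires, we apply \Cref{lemma:strong_convexity_component} with $\vv=\mM^{1/2}(\vx^\star_{\vq}-\vq)$. This gives uniform convexity of order $p$ of $\norm{\cdot-\vq}_{\mM}^p$, so small function-value error forces small distance to the proximal minimizer. Local gradient Lipschitzness, which follows from Hessian stability, then converts that small distance into a small gradient norm.

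\textbf{Acceleration and restarts.} Feed this oracle into the $p$th-order Monteiro--Svaiter-type scheme of \cite{chjjs22}, stated as \Cref{thm:optimal_ms_acceleration}, which needs no implicit line search. After $T\approx p^{O(1)}\bigl(\norm{\vx_0-\xstar}_{\mM}^p/\epsilon_f\bigr)^{2/(3p-2)}$ iterations it reaches function error $\epsilon_f$. We run it only to constant accuracy relative to the current distance, i.e.\ until $f(\vx_T)-f(\xstar)\le c\,\norm{\vx_0-\xstar}_{\mM}^p/p^{O(1)}$. Applying \Cref{lemma:strong_convexity_component} blockwise gives
\[
f(\vx)-f(\xstar)\ge\tfrac{4}{2^p}\gnorm{\mA(\vx-\xstar)}{p}^p\ge\tfrac{4}{2^p}\triangle^{-p}\norm{\vx-\xstar}_{\mM}^p,
\]
where the linear term vanishes by first-order optimality. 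Combining the two bounds yields $\norm{\vx_T-\xstar}_{\mM}\le\tfrac12\norm{\vx_0-\xstar}_{\mM}$.

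Each such phase costs $p^{O(1)}(\triangle^p)^{2/(3p-2)}=p^{O(1)}\min\{\rank{\mA},m\}^{(p-2)/(3p-2)}$ oracle calls. We repeat for $O(\log(pd/\eps))$ phases to reach relative accuracy $\eps$. Multiplying by the per-oracle solve cost gives the stated bound, with the $\log^3$ factor collecting the phase count, the mirror-descent iterations, and the binary search inside the acceleration scheme. Every Hessian, of both $f$ and the regularizer, has the form $\mA^{\top}\mB\mA$ with $\mB$ block diagonal with $n_i\times n_i$ blocks.

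\textbf{Main obstacle.} The hardest step is the subproblem solver. We need a Hessian-stability region, in $\norm{\cdot}_{\mM}$, that is large enough for the accelerated rate to hold with the regularization weight $ep^p$. We also need an approximate stationary point rather than just a function-value guarantee. I would first prove stability by bounding the derivative of each block Hessian by $p\,\norm{\mA_{S_i}\vz}_2/\norm{\mA_{S_i}\vx-\vb_{S_i}}_2$. Blocks with small residuals would then be handled by comparing against the regularizer's own Hessian, in the style of \cite[Section 4]{jls21}.
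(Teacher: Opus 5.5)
Your outline is the paper's proof in structure. It uses the block-Lewis-weight geometry, with the $\mM=\mA^{\top}\mA$ switch when $m<\rank{\mA}$ as in the overview. It then uses the $ep^p\norm{\cdot-\vq}_{\mM}^p$-regularized prox, relative smoothness plus inexact mirror descent, \Cref{lemma:strong_convexity_component} to turn function-value error into a $\tfrac12$-MS point, and \Cref{thm:optimal_ms_acceleration} with restarts. Halving $\norm{\vx-\xstar}_{\mM}$ per phase instead of $f(\vx)-f(\xstar)$ is equivalent. However, the phase threshold must be at most $4^{1-p}\triangle^{-p}\norm{\vx_0-\xstar}_{\mM}^p$, not $\norm{\vx_0-\xstar}_{\mM}^p/p^{O(1)}$. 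Only then does halving follow from your strong-convexity bound, and this is in fact the threshold your stated phase cost corresponds to. The Hessian stability you flag as the main obstacle needs no local region. The paper uses the blockwise bound $\norm{\mA_{S_i}\vx-\vb_{S_i}}_2^{p-2}\le(p-1)^{p-2}\norm{\mA_{S_i}(\vx-\vq)}_2^{p-2}+e\norm{\mA_{S_i}\vq-\vb_{S_i}}_2^{p-2}$, which is your small-residual case split in one line. Combined with H\"older and $\gnorm{\mA\vz}{p}\le\norm{\vz}_{\mM}$, it gives $\tfrac{1}{2pe}\nabla^2 h_{\vq}\preceq\nabla^2 f_{\vq}\preceq pe\,\nabla^2 h_{\vq}$ on all of $\R^d$, with $h_{\vq}(\vx)=\norm{\vx-\vq}_{\nabla^2 f(\vq)}^2+ep^p\norm{\vx-\vq}_{\mM}^p$ (\Cref{lemma:gp_hessian_stable}).

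The genuine gap is a step you treat as free. Each mirror-descent iteration must solve $\min_{\vx}\ip{\vg,\vx}+L\,h_{\vq}(\vx)$, and this is not a linear system.
\begin{itemize}
\item For $p>2$ no quadratic reference can work: $\nabla^2 f_{\vq}(\vq)=\nabla^2 f(\vq)$, while $\nabla^2 f_{\vq}(\vx)\succeq ep^{p+1}\norm{\vx-\vq}_{\mM}^{p-2}\mM$ is unbounded.
\item The regularizer's Hessian contains the dense rank-one term $\mM(\vx-\vq)(\vx-\vq)^{\top}\mM$. So your closing claim that every Hessian is $\mA^{\top}\mB\mA$ with block-diagonal $\mB$ is false.
\end{itemize}
The paper (\Cref{lemma:gp_prox_relativesmoothsolve}) freezes the scalar $\norm{\vx-\vq}_{\mM}^{p-2}$ as a parameter $\tau$. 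Each trial is then one solve in $\nabla^2 f(\vq)+c\,\tau\mM$, which has the required form. Since $\norm{\vx(\tau)-\vq}_{\mM}$ is monotone in $\tau$, binary search finds the self-consistent $\tau$ to the argument and stationarity accuracy that \Cref{thm:gp_mirror_descent} tolerates. That search is the third logarithm; the acceleration scheme itself has no bisection, as you note earlier.

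Finally, your per-call $\log(pd/\eps)$ bound needs two facts you do not supply.
\begin{itemize}
\item The MS condition is relative to $\norm{\xtilde-\vq}_{\mM}$, so you need $\norm{\vx_{\vq}-\vq}_{\mM}$ bounded below. The paper gets this from $\norm{\vx_{\vq}-\vq}_{\mM}=\inparen{\norm{\mM^{-1}\nabla f(\vx_{\vq})}_{\mM}/(ep^{p+1})}^{1/(p-1)}$. It adds the observation that $\norm{\mM^{-1}\nabla f(\vx_{\vq})}_{\mM}\le\eps/\norm{\vx_{\vq}-\xstar}_{\mM}$ already makes $\vx_{\vq}$ $\eps$-optimal.
\item You need $f(\vq_t)$ bounded above along the run (\Cref{lemma:fq_bounded}), since it enters the initial Bregman divergence and the required accuracy.
\end{itemize}
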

We prove \Cref{mainthm:gp_regression_iteration_complexity} in \Cref{sec:gp_interpolating_proof}.

In the special case where $n_i=1$ for all $i$ (and therefore the problem is $\ell_p$ regression for $p \ge 2$), the complexity promised by \Cref{mainthm:gp_regression_iteration_complexity} is comparable to that promised by \citet{jls21} for $\ell_p$ regression. The main difference is that our iteration complexity is unconditionally polynomial in $p$. In contrast, the comparable result from \cite{jls21} seems to require mild assumptions on the problem parameters (see the ``Discussion on numerical stability'' by \citet[Section 4]{jls21}).

\begin{remark}[Large values of $p$] 
    Note that for values of $p$ larger than $\log(m)$, solving \eqref{eq:main_objective} is almost equivalent to solving \eqref{eq:interpolation}. To intuitively see this, first recall that for any vector $\vx\in\R^d$ and $p=\log_2(m)$ we have, $\|\vx\|_\infty \leq \|\vx\|_{p} \leq 2\cdot\|\vx\|_\infty$. This implies that for all $i\in[m]$ we have the following for objective \eqref{eq:interpolation} (for $p=\log_2(m)$) for any $\vx\in\R^d$,
    \begin{align*}
        \max_{i\in[m]}\norm{\mA_{S_i}\vx-\vb_{S_i}}_2 \leq \left(\sum_{i\in[m]}\norm{\mA_{S_i}\vx-\vb_{S_i}}_2^{p}\right)^{1/p}\leq 2\cdot\max_{i\in[m]}\norm{\mA_{S_i}\vx-\vb_{S_i}}_2\enspace. 
    \end{align*}
    In particular, this means that minimizing the interpolating objective \eqref{eq:interpolation} also minimizes the robust objective \eqref{eq:main_objective} (up to numerical constants) and vice versa. Thus, for $p=\Omega\left(\log_2(m)\right)$, for our intended applications, it makes sense to minimize the robust objective instead. This is why, in \Cref{mainthm:gp_regression_iteration_complexity}, we do not care too much about the exponent on $p$ in the iteration complexity. Our main goal is to show that we can get a $O(\mathsf{poly}(p,\logv{\frac{1}{\eps}})\min\inbraces{\rank{\mA},m}^{1/3})$ iteration complexity.
\end{remark}

\subsection{Prior Results, Connections, and Open Problems}
\input{related}

\subsection{Paper Outline}

In the remainder of this paper, we will outline the key details of our approach and provide a proof outline for our theoretical results. In \Cref{sec:gp_reg_overview}, we give proof sketches of our main results. In \Cref{sec:blw_to_l2}, we prove some background results that appear in the main body, particularly about block Lewis weights. In \Cref{sec:mirror_descent}, we give an analysis of mirror descent under inexact subproblem solves -- we will need this in the proof of \Cref{mainthm:gp_regression_iteration_complexity}. In \Cref{sec:optimal_ms_acceleration}, we modify an acceleration scheme due to \cite{chjjs22}, which we will use to iterate calls to the proximal subproblem solver \eqref{eq:gp_prox_intro_reg} for the proof of \Cref{mainthm:gp_regression_iteration_complexity}. In \Cref{sec:gp_robust_proof}, we prove \Cref{mainthm:fair_regression_iteration_complexity}. In \Cref{sec:gp_interpolating_proof}, we prove \Cref{mainthm:gp_regression_iteration_complexity}. Finally, in \Cref{app:exps} we include an empirical comparison of our proposed algorithms against the aforementioned baselines.

%% file: related.tex
Here, we discuss prior work that conceptually and technically relates to ours. We then suggest natural directions for future work.

\paragraph{Multi-distribution learning.}
Many learning problems involve multiple data sources, for instance, when multiple agents generate their data independently. One can formulate these multi-distribution problems as standard learning/optimization problems by considering a mixture of their distributions, as in objective \ref{eq:utilitarian}. However, this approach often biases solutions toward dominant data sources, leading to poor performance on outliers—an issue stemming from statistical heterogeneity. This limitation motivates the study of multi-objective optimization problems~\citep{miettinen1999nonlinear, ehrgott2005multicriteria}, where each agent \( m \) has a distribution \( \mathcal{D}_m \) that defines its objective as \( \mathbb{E}_{z\sim \mathcal{D}_m}[f(\vx_m;z)] \), and where models \( \vx_m \) can vary across agents---a framework known as personalization. 

One of the earliest algorithms for such problems was introduced by \citet{blum2017collaborative}, where each agent's objective must be minimized to a pre-specified threshold \( \epsilon \) with high probability, framed within a PAC learning framework~\citep{valiant1984theory,vapnik2013nature}. Subsequent research has refined these algorithms, achieving optimal sample complexity guarantees for learning from multiple distributions~\citep{chen2018tight,nguyen2018improved,hanneke2019value,haghtalab2022demand,zhang2024optimal}. Our objectives \( \ref{eq:main_objective} \) and \( \ref{eq:interpolation} \) offer different approaches to multi-distribution learning, where data distributions correspond to empirical agent distributions. In particular, \citet{mohri2019agnostic} analyzed objective \( \ref{eq:main_objective} \) to establish generalization bounds for unknown mixtures of agents' distributions. 

Beyond sample efficiency, researchers have also examined other challenges, such as communication costs in large-scale distributed optimization~\citep{mcmahan2016s}. A particularly relevant study is that of \citet{bullins2021stochastic}, which employs an efficient distributed quadratic sub-solver~\citep{woodworth2020local,patel2024limits} to implement an inexact Newton method for optimizing quasi-self-concordant functions (see \Cref{defn:qsc}).

\paragraph{Group fairness.} 
Recently, interest in algorithmic fairness has intensified~\citep{barocas2016big, abebe2020roles, kasy2021fairness} with researchers exploring fairness across various domains, including supervised learning~\citep{calders2009building, dwork2012fairness, hardt2016equality, kusner2017counterfactual, goel2018non, ustun2019fairness}, resource allocation~\citep{bertsimas2011price, bertsimas2012efficiency, hooker2012combining, donahue2020fairness, manshadi2021fair}, scheduling~\citep{mulvany2021fair}, online matching~\citep{chierichetti2019matroids, ma2023fairness}, assortment planning~\citep{singh2018fairness, biega2018equity, singh2019policy, chen2022fair}, and facility location~\citep{gupta2022socially}. The extensive literature on algorithmic fairness falls into three main categories: (1) individual fairness, which ensures that similar individuals receive comparable predictions~\citep{dwork2012fairness, loi2019philosophical, chen2022fair}, (2) group fairness, which aims for equal treatment of different demographic groups, often in terms of resource allocation or performance parity~\citep{singh2018fairness, balseiro2021regularized}, and (3) subgroup fairness, which blends aspects of both individual and group fairness~\citep{kearns2018preventing, kearns2019empirical}. 

This paper focuses on a well-studied group fairness notion in machine learning literature: the group DRO problem~\citep{ben2013robust, duchi2016statistics, sagawa2019distributionally}. The idea of interpolating between robustness and utility is also common~\citep{golrezaei2024online} and closely related to multi-objective optimization, where scalarization~\citep{miettinen1999nonlinear, ehrgott2005multicriteria} helps recover desired solutions along the Pareto frontier. 

\paragraph{Linear programming and $\ell_p$ regression.} 
In the last several years, there has been a surge of work in obtaining second-order, condition-free algorithms for linear programming and $\ell_p$ regression \citep{bcll18,ls19,akps19,jls21}. Observe that $\ell_p$ regression is a special case of the problem we study in objective \eqref{eq:interpolation}, which is recovered when all $n_i=1$, and $\ell_{\infty}$ regression is captured by linear programming. Note that neither of these problem families is expressive enough to capture the objectives we study. In general, to achieve iteration complexities in the smaller of the two dimensions for these problems, it appears that a geometric understanding of the solution space is required --- these ideas were central to the improvements obtained by \citet{ls19,jls21} as well as our work.

\paragraph{Open problems.} 
Our work raises several open questions. One limitation of \Cref{mainthm:fair_regression_iteration_complexity} is that its iteration complexity is not high-accuracy, meaning its dependence on \( \eps \) is not \(\mathrm{polylog}(1/\eps)\). Designing a high-accuracy solver under the same conditions as \Cref{mainthm:gp_regression_iteration_complexity} with iteration complexity \(\widetilde{O}\inparen{\mathsf{poly}(\min\inbraces{\rank{\mA}, m}, \logv{\frac{1}{\eps}})}\) remains an open problem.

A more ambitious general goal is to design algorithms for convex quadratic programs with the aforementioned iteration complexity. This would generalize analogous results for linear programming \citep{ls19}. We view the current work as a first step towards this goal, as the objective \eqref{eq:main_objective} is a structured convex quadratic program for which we get an iteration complexity independent of $m$. It would also be interesting to consider other complexity measures beyond $\rank{\mA}$, for instance, assumptions about the ground-truth labeling vector $\vx_i^\star$ for each group's data $S_i$.

Finally, our results suggest that optimizing for ``$\ell_p$-interpolants'' between non-robust and robust objectives may be computationally easier than optimizing for the robust objective alone. A more precise statistical characterization of how robustness and utility trade-off as \( p \) varies in collaborative, fair, or multi-distributional learning settings would be valuable. Additionally, exploring interpolations or solution concepts along the Pareto frontier of the \( m \)-dimensional multi-objective optimization problem or other DRO notions (eg Wasserstein DRO \citep{bkm19, cpo20}) could yield further insights.

%% file: other_proofs.tex
\section{Block Lewis Weights and their Properties}
\label{sec:blw_to_l2}

In this section, we introduce \textit{block Lewis weights} and explore some of their properties. Several of these statements can be found in \cite{jlls23,mo23}, but we include definitions and proofs here for self-completion.

We first need to define \textit{leverage scores}.

\begin{definition}[Leverage scores]
\label{defn:leverage_score}
For a matrix $\mA\in\R^{n \times d}$ with rows $\va_1,\dots,\va_n$, let $\tau_j$ denote the $j$th \textit{leverage score} of $\mA$, which we define to be
\begin{align*}
    \tau_j(\mA) \coloneqq \max_{\vx\in\R^d\setminus \inbraces{0}} \frac{\ip{\va_j,\vx}^2}{\norm{\mA\vx}_2^2} = \va_j^{\top}\inparen{\mA^{\top}\mA}^{-1}\va_j\enspace.
\end{align*}
\end{definition}

We now introduce the main object of interest in this section, \Cref{defn:blw}. Our version of the definition is adapted from \cite[Definition 1.2]{mo23} (there, we set $p_1=\dots=p_m=2$, let their $\mW = \mI$, replace $\vlambda$ with $\vw/\norm{\vw}_1$, and rescale $\Fstar$ appropriately).

\begin{definition}[Adapted from {\cite[Definition 1.2]{mo23}}]
\label{defn:blw}
Let $\vw \in \R^m_{\ge 0}$ and $\mW \in \R^{n \times n}_{\ge 0}$ be a diagonal matrix for which for all $j \in S_i$, we have $\mW_{jj} = w_i$. Let $p > 0$. We say that $\vw$ is a block Lewis overestimate if for all $i \in [m]$, we have
\begin{align*}
    \frac{\sum_{j \in S_i} \tau_j\inparen{\mW^{\frac{1}{2}-\frac{1}{p}}\mA}}{w_i} \le 1\enspace.
\end{align*}
\end{definition}

The main reason that \Cref{defn:blw} is interesting is that it gives us a formula with which we can relate the level sets of the group norm $\gnorm{\cdot}{p}$ to $\ell_2$. See \Cref{thm:blw_to_l2}. 

\begin{theorem}[Block Lewis weights give us ellipsoidal approximations to $\gnorm{\cdot}{p}$]
\label{thm:blw_to_l2}
Let $p \ge 2$. If $\vw$ is a block Lewis overestimate, then for all $\vx\in\R^d$, we have
\begin{align*}
    \frac{\norm{\mW^{\frac{1}{2}-\frac{1}{p}}\mA\vx}_2}{\norm{\vw}_1^{\frac{1}{2}-\frac{1}{p}}} \le \gnorm{\mA\vx}{p} \le \norm{\mW^{\frac{1}{2}-\frac{1}{p}}\mA\vx}_2\enspace.
\end{align*}
\end{theorem}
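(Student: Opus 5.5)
Throughout, set $\mM \coloneqq \mA^{\top}\mW^{1-\frac{2}{p}}\mA$, so that $\norm{\mW^{\frac{1}{2}-\frac{1}{p}}\mA\vx}_2^2 = \norm{\vx}_{\mM}^2 = \sum_{i=1}^m w_i^{1-\frac{2}{p}}\norm{\mA_{S_i}\vx}_2^2$. This identity holds because $\mW$ is constant equal to $w_i$ on the block $S_i$. Both inequalities will come from comparing $\gnorm{\mA\vx}{p}^p=\sum_i \norm{\mA_{S_i}\vx}_2^p$ with this weighted sum. The case $p=2$ is trivial, since then $\mW^0=\mI$ and both sides coincide. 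So assume $p>2$.

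\textbf{Lower bound.} I will apply H\"older's inequality to $\sum_i w_i^{1-\frac{2}{p}}\norm{\mA_{S_i}\vx}_2^2$ with conjugate exponents $\frac{p}{p-2}$ (on the $w_i$ factor) and $\frac{p}{2}$ (on the $\norm{\mA_{S_i}\vx}_2^2$ factor). This gives
\begin{align*}
\norm{\vx}_{\mM}^2 \le \inparen{\sum_{i=1}^m w_i}^{1-\frac{2}{p}}\inparen{\sum_{i=1}^m \norm{\mA_{S_i}\vx}_2^p}^{\frac{2}{p}} = \norm{\vw}_1^{1-\frac{2}{p}}\gnorm{\mA\vx}{p}^2 .
\end{align*}
Taking square roots yields the left inequality. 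This step does not use the overestimate property at all.

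\textbf{Upper bound.} This is the step that uses \Cref{defn:blw}, and it is the main point of the argument. First, I will get a per-block bound $\norm{\mA_{S_i}\vx}_2^2 \le w_i^{\frac{2}{p}}\norm{\vx}_{\mM}^2$. For $j\in S_i$, the $j$th row of $\mW^{\frac{1}{2}-\frac{1}{p}}\mA$ is $w_i^{\frac{1}{2}-\frac{1}{p}}\va_j$. By the variational form in \Cref{defn:leverage_score},
\begin{align*}
\tau_j\inparen{\mW^{\frac{1}{2}-\frac{1}{p}}\mA} \ge w_i^{1-\frac{2}{p}}\frac{\ip{\va_j,\vx}^2}{\norm{\vx}_{\mM}^2}.
\end{align*}
Summing over $j\in S_i$ and using the overestimate condition $\sum_{j\in S_i}\tau_j \le w_i$ gives $w_i^{1-\frac{2}{p}}\norm{\mA_{S_i}\vx}_2^2 \le w_i\norm{\vx}_{\mM}^2$, which is the claimed per-block bound. (Blocks with $w_i=0$ force $\mA_{S_i}\vx=0$ by the same inequality, so they cause no trouble.)

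Next, I will split $\norm{\mA_{S_i}\vx}_2^p = \norm{\mA_{S_i}\vx}_2^2\cdot\norm{\mA_{S_i}\vx}_2^{p-2}$ and apply the per-block bound only to the second factor:
\begin{align*}
\norm{\mA_{S_i}\vx}_2^{p-2} \le w_i^{1-\frac{2}{p}}\norm{\vx}_{\mM}^{p-2}.
\end{align*}
Summing over $i$ then gives
\begin{align*}
\gnorm{\mA\vx}{p}^p \le \norm{\vx}_{\mM}^{p-2}\sum_{i=1}^m w_i^{1-\frac{2}{p}}\norm{\mA_{S_i}\vx}_2^2 = \norm{\vx}_{\mM}^p,
\end{align*}
which is the right inequality. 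The delicate part is choosing this split so that the leftover weighted sum reproduces exactly $\norm{\vx}_{\mM}^2$.
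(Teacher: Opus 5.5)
Your proof is correct and is essentially the paper's argument. The paper gets the lower bound from monotonicity of $L_p$ norms under the probability weights $\vw/\norm{\vw}_1$; your H\"older step with exponents $\frac{p}{p-2}$ and $\frac{p}{2}$ is the same fact written out. The upper bound also follows the paper: split $\norm{\mA_{S_i}\vx}_2^p$ as $\norm{\mA_{S_i}\vx}_2^2\cdot\norm{\mA_{S_i}\vx}_2^{p-2}$, control the second factor with the per-block leverage-score estimate $w_i^{-2/p}\norm{\mA_{S_i}\vx}_2^2\le \frac{1}{w_i}\sum_{j\in S_i}\tau_j\inparen{\mW^{\frac{1}{2}-\frac{1}{p}}\mA}\cdot\norm{\mW^{\frac{1}{2}-\frac{1}{p}}\mA\vx}_2^2$, and recognize the leftover weighted sum as $\norm{\mW^{\frac{1}{2}-\frac{1}{p}}\mA\vx}_2^2$.

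One side remark is false, though. A block with $w_i=0$ is not forced to have $\mA_{S_i}\vx=0$. For $p>2$ the rows of $\mW^{\frac{1}{2}-\frac{1}{p}}\mA$ indexed by $S_i$ vanish, so $\tau_j=0$ on that block and your inequality degenerates to $0\le 0$.

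The upper bound actually fails in that case. Take $d=1$, $m=2$, $\mA_{S_1}=\mA_{S_2}=1$ and $\vw=(1,0)$. The condition $\sum_{j\in S_i}\tau_j\le w_i$ holds for both blocks, yet $\gnorm{\mA x}{p}=2^{1/p}\abs{x}>\abs{x}=\norm{\mW^{\frac{1}{2}-\frac{1}{p}}\mA x}_2$ for $x\neq 0$.

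So the right fix is to assume $w_i>0$ for all $i$, not to argue that zero weights are harmless. This assumption is implicit in \Cref{defn:blw}, which divides by $w_i$, and the paper's proof uses it through $w_i^{-1/p}$. Under it, your division by $\norm{\vx}_{\mM}^2$ is also harmless: $\norm{\vx}_{\mM}=0$ forces $\mA\vx=0$, and then all three quantities vanish.
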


We prove \Cref{thm:blw_to_l2} in \Cref{sec:blw_to_l2}. An analogous statement can also be shown for $p\le2$, but since we do not use it in this paper, we do not write it here.

Observe that if we can get $\vw$ that satisfies \Cref{defn:blw} and for which $\norm{\vw}_1 = \rank{\mA}$, then \Cref{thm:blw_to_l2} gives us the optimal relationship between $\ell_2$ and $\gnorm{\cdot}{p}$ whenever $\rank{\mA} \le m$. Furthermore, for intuition, suppose $p=\infty$. By John's theorem, we know that for any symmetric convex body, there exists an ellipsoid such that the ellipsoid approximates the convex body up to a $\sqrt{d}$ distortion. Moreover, this is worst-case tight (e.g. the best distortion we can get when we approximate $\ell_1^d$ with $\ell_2$ is $\sqrt{d}$). Thus, assuming we can find $\norm{\vw}_1 \approx \rank{\mA}$, in this case, we get a guarantee that is similar to what John's theorem tells us.

Now, assuming we can find a low-distortion ellipsoidal approximation to the level sets of our loss, we get that the ``effective'' diameter of our problem is $\sim \sqrt{d}$. Combining this and the discussion in \Cref{sec:gp_reg_overview_blw} (or, more formally, \Cref{thm:optimal_ms_acceleration}), we can see why we should expect an iteration complexity of $\sim d^{1/3}$ (or better, if we can find a better ellipsoid).

What is left is whether weights $\vw$ satisfying \Cref{defn:blw} with small sum can be found. To this end, we invoke \cite[Algorithm 2]{mo23}.

\begin{theorem}[{\cite[Algorithm 2 and Lemma 5.6]{mo23}}]
\label{thm:gp_regression_blw_alg}
There exists an algorithm that returns a block Lewis overestimate $\vw$ for which $\norm{\vw}_1 \le 2\rank{\mA}$. The algorithm runs in $O(\log m)$ linear system solves with matrices of the form $\mA^{\top}\mD\mA$ for nonnegative diagonal $\mD$.
\end{theorem}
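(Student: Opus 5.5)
The plan is to reconstruct the fixed-point/averaging argument behind \cite[Algorithm 2]{mo23} rather than treat it as a black box. Write $\alpha \coloneqq 1-\frac{2}{p} \in [0,1]$ and, for $\vw\in\R^m_{>0}$, define the block leverage scores $\sigma_i(\vw) \coloneqq \sum_{j\in S_i}\tau_j\inparen{\mW^{\alpha/2}\mA}$. By \Cref{defn:leverage_score},
\begin{align*}
\sigma_i(\vw) = w_i^{\alpha}\sum_{j\in S_i}\va_j^{\top}\inparen{\mA^{\top}\mW^{\alpha}\mA}^{-1}\va_j,
\qquad
\sum_{i=1}^m \sigma_i(\vw) = \rank{\mA}.
\end{align*}
The goal is a $\vw$ with $\sigma_i(\vw)\le w_i$ for every $i$ (\Cref{defn:blw}) and $\norm{\vw}_1\le 2\rank{\mA}$. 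Each evaluation of $\sigma(\vw)$ costs one linear-system-solve in $\mA^{\top}\mD\mA$ with $\mD=\mW^{\alpha}$ (up to the usual sketching/JL step for estimating all $\tau_j$ at once, which I would treat as standard).

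\textbf{Algorithm.} Start from the uniform weights $\vw^{(0)} = \frac{\rank{\mA}}{m}\mathbf{1}$. Run the fixed-point iteration
\begin{align*}
w^{(k+1)}_i = w^{(k)}_i\inparen{\frac{\sigma_i(\vw^{(k)})}{w^{(k)}_i}}^{\gamma}
\end{align*}
for $T=O(\log m)$ rounds, with an exponent $\gamma$ chosen so that $\alpha\gamma$-type exponents telescope ($\gamma=1$ when $\alpha<1$ suffices in the first attempt). Output a constant multiple of the geometric mean $\widehat{w}_i \coloneqq \inparen{\prod_{k<T} w^{(k)}_i}^{1/T}$. The geometric mean bounds the arithmetic mean from below, so after the final rescaling it also gives a valid upper bound on $\norm{\widehat{\vw}}_1$.

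\textbf{Key inequality.} I will show that $\sigma_i(\widehat{\vw})/\widehat{w}_i$ is at most the average of the per-round ratios. I expect this to be the main obstacle. The two convexity facts I would use are:
\begin{enumerate}
\item Concavity of $t\mapsto t^{\alpha}$ together with AM--GM. This gives $\widehat{\mW}^{\alpha}$ on the geometric-mean side, and lets me compare $\mA^{\top}\widehat{\mW}^{\alpha}\mA$ with the averaged matrices $\frac1T\sum_k \mA^{\top}\mW_k^{\alpha}\mA$ (taking arithmetic means where needed).
\item Operator convexity of $\mM\mapsto\mM^{-1}$, i.e. $\inparen{\frac1T\sum_k\mM_k}^{-1}\preceq\frac1T\sum_k\mM_k^{-1}$.
\end{enumerate}
Combining them should give
\begin{align*}
\frac{\sigma_i(\widehat{\vw})}{\widehat{w}_i}\;\lesssim\;\frac1T\sum_{k<T}\frac{\sigma_i(\vw^{(k)})}{w_i^{(k)}}\cdot\inparen{\text{correction}},
\end{align*}
where the correction is a product of ratios $w^{(k+1)}_i/w^{(k)}_i$ that telescopes to $\inparen{w^{(T)}_i/w^{(0)}_i}^{1/T}$. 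If the arithmetic-versus-geometric mean mismatch does not close cleanly, my first fallback is to run the update in log-space, as a mirror-descent step on the concave potential $\vw\mapsto\log\det\inparen{\mA^{\top}\mW^{\alpha}\mA}$ over the scaled simplex, and use its standard $O(\log m/T)$ duality-gap bound.

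\textbf{Bounding the telescoped product.} Since $\sigma_i\le \rank{\mA}$ and $w^{(0)}_i=\rank{\mA}/m$, every weight stays within a $\mathrm{poly}(m)$ factor of where it started. Hence the telescoped product is at most $m^{O(1/T)}$, which is $\le 2$ once $T = C\log m$. Rescaling $\widehat{\vw}$ by this factor gives $\sigma_i(\widehat{\vw})\le\widehat{w}_i$ for all $i$, i.e. a block Lewis overestimate. For the sum, each round preserves total mass at most $\rank{\mA}$ (the $\sigma_i$ sum to $\rank{\mA}$), geometric means are at most arithmetic means, and the rescaling factor is at most $2$; together these give $\norm{\widehat{\vw}}_1\le 2\rank{\mA}$. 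Finally, the $T=O(\log m)$ rounds each cost one solve, matching the claimed complexity.
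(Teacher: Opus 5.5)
The paper gives no proof of this statement; it imports it from \cite{mo23}. Your reconstruction therefore has to stand on its own, and its key inequality does not go through as set up.

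With the geometric-mean output, AM--GM gives $\widehat{w}_i^{\alpha}\le\frac1T\sum_k(w_i^{(k)})^{\alpha}$. Hence $\mA^{\top}\widehat{\mW}^{\alpha}\mA\preceq\frac1T\sum_k\mA^{\top}\mW_k^{\alpha}\mA$, and so $(\mA^{\top}\widehat{\mW}^{\alpha}\mA)^{-1}\succeq(\frac1T\sum_k\mA^{\top}\mW_k^{\alpha}\mA)^{-1}$. That is a \emph{lower} bound on the matrix you must bound from above, so it cannot be chained with operator convexity of the inverse. Even granting $(\mA^{\top}\widehat{\mW}^{\alpha}\mA)^{-1}\preceq\frac1T\sum_k(\mA^{\top}\mW_k^{\alpha}\mA)^{-1}$, you would get $\widehat{w}_i^{\alpha-1}$ times an \emph{arithmetic} mean of $\sigma_i(\vw^{(k)})/(w_i^{(k)})^{\alpha}$. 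For $\gamma=1$ this is essentially an arithmetic mean of the ratios $w_i^{(k+1)}/w_i^{(k)}$. Only their product telescopes: ratios alternating between $m$ and $1/m$ have product $1$ but mean about $m/2$, and no multiplicative correction fixes that.

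The fallback has the same hole. A mirror-descent guarantee for $\log\det(\mA^{\top}\mW^{\alpha}\mA)$ bounds a function value or an averaged regret, not the coordinatewise condition $\max_i\sigma_i(\vw)/w_i\le 1$ required by \Cref{defn:blw}. Moreover, the gradient entries $\alpha\sigma_i(\vw)/w_i$ are unbounded on the simplex, so no off-the-shelf $O(\log m/T)$ rate applies.

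The missing idea is log-convexity in the variable $\vv\coloneqq\vw^{\alpha}$, taken entrywise. (The case $p=2$ is trivial, since then $\sigma_i$ does not depend on $\vw$.)
\begin{itemize}
\item Write $\sigma_i(\vw)/w_i=v_i^{-(1-\alpha)/\alpha}\sum_{j\in S_i}\va_j^{\top}(\mA^{\top}\mathbf{V}\mA)^{-1}\va_j$. The first factor is log-convex because $(1-\alpha)/\alpha\ge 0$.
\item Each summand equals $\sup_{\vx}(\va_j^{\top}\vx)^2/\sum_l v_l\norm{\mA_{S_l}\vx}_2^2$. Its logarithm is a supremum over $\vx$ of convex functions of $\vv$, namely a constant minus the log of a positive linear function.
\item Sums and products of log-convex functions are log-convex, so $\phi_i(\vv)\coloneqq\log(\sigma_i(\vw)/w_i)$ is convex in $\vv$.
\end{itemize}

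Keep your update with $\gamma=1$, but output the power mean $\widehat{w}_i\coloneqq(\frac1T\sum_{k<T}(w_i^{(k)})^{\alpha})^{1/\alpha}$, i.e.\ average arithmetically in $\vv$ rather than geometrically in $\vw$. Jensen then gives
$\phi_i(\frac1T\sum_{k<T}\vv^{(k)})\le\frac1T\sum_{k<T}\log(w_i^{(k+1)}/w_i^{(k)})=\frac1T\log(w_i^{(T)}/w_i^{(0)})\le\frac{\log m}{T}$.
This uses only the upper bound $w_i^{(T)}\le\rank{\mA}$. Your claim that weights also stay within a $\mathrm{poly}(m)$ factor from below is false in general, since block leverage scores can be arbitrarily small, but it is not needed.

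The power-mean inequality for $\alpha\le 1$ gives $\norm{\widehat{\vw}}_1\le\frac1T\sum_k\norm{\vw^{(k)}}_1=\rank{\mA}$. Since $\sigma$ is invariant under rescaling $\vw$, the vector $2\widehat{\vw}$ with $T=\lceil\log_2 m\rceil$ is a block Lewis overestimate. Its total mass is at most $2\rank{\mA}$, and it costs $O(\log m)$ leverage-score computations.
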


Thus, by applying \Cref{thm:gp_regression_blw_alg} as a preprocessing step, we get an $\ell_2$ geometry under which we can run the accelerated proximal algorithms. As an example of the power of this, observe the following.

\begin{lemma}
\label{lemma:gp_regression_initialization}
Consider the matrix $\widehat{\mA} \coloneqq \mA \vert \vb \in \R^{n \times (d+1)}$ that is formed by appending the column vector $\vb$ to the right of the matrix $\mA$. If we have a vector $\vw$ of block Lewis overestimates for the matrix $\widehat{\mA}$, then there exists an algorithm that finds an initialization $\vx_0$ for which
\begin{equation*}
    \begin{aligned}
        \norm{\vx_0-\xstar}_{\mA^{\top}\mW^{\frac{1}{2}-\frac{1}{p}}\mA} &\le 2\inparen{2\rank{\mA}}^{\frac{1}{2}-\frac{1}{p}}\gnorm{\mA\xstar-\vb}{p} \\
        \gnorm{\mA\vx_0-\vb}{p} &\le \inparen{2\rank{\mA}}^{\frac{1}{2}-\frac{1}{p}}\gnorm{\mA\xstar-\vb}{p}
    \end{aligned}\enspace .
\end{equation*}
The algorithm runs in $1$ linear system solve in $\widehat{\mA}^{\top}\mD\widehat{\mA}$.
\end{lemma}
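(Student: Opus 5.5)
We take $\vx_0$ to be the weighted least squares solution
\begin{align*}
    \vx_0 \coloneqq \argmin{\vx\in\R^d} \norm{\mW^{\frac{1}{2}-\frac{1}{p}}(\mA\vx-\vb)}_2 = \inparen{\mA^{\top}\mW^{1-\frac{2}{p}}\mA}^{-1}\mA^{\top}\mW^{1-\frac{2}{p}}\vb .
\end{align*}
This costs one linear system solve, since $\vx_0$ is obtained from the normal equations of the block $\widehat{\mA}^{\top}\mW^{1-\frac{2}{p}}\widehat{\mA}$, with $\mD=\mW^{1-\frac{2}{p}}$ diagonal. The key observation is that every residual $\mA\vx-c\vb$ equals $\widehat{\mA}\,[\vx; -c]$. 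Since $\vw$ is a block Lewis overestimate for $\widehat{\mA}$, \Cref{thm:blw_to_l2} applied to $\widehat{\mA}$ gives, for all $\vx$ and $c$,
\begin{align*}
    \norm{\vw}_1^{-(\frac{1}{2}-\frac{1}{p})}\norm{\mW^{\frac{1}{2}-\frac{1}{p}}(\mA\vx-c\vb)}_2 \le \gnorm{\mA\vx-c\vb}{p} \le \norm{\mW^{\frac{1}{2}-\frac{1}{p}}(\mA\vx-c\vb)}_2 .
\end{align*}
We use $\norm{\vw}_1 \le 2\rank{\widehat{\mA}}$ from \Cref{thm:gp_regression_blw_alg}. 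The exact constant for $\rank{\widehat{\mA}}\le\rank{\mA}+1$ versus $\rank{\mA}$ is bookkeeping; I expect this to be the only delicate point. The stated $2\rank{\mA}$ is matched by absorbing the $+1$, or by interpreting the rank appropriately.

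For the second inequality, chain the two sides of the sandwich at $c=1$:
\begin{align*}
    \gnorm{\mA\vx_0-\vb}{p} \le \norm{\mW^{\frac{1}{2}-\frac{1}{p}}(\mA\vx_0-\vb)}_2 \le \norm{\mW^{\frac{1}{2}-\frac{1}{p}}(\mA\xstar-\vb)}_2 \le \norm{\vw}_1^{\frac{1}{2}-\frac{1}{p}}\gnorm{\mA\xstar-\vb}{p}.
\end{align*}
The middle step holds by optimality of $\vx_0$ for the weighted least squares problem.

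For the first inequality, we read the norm $\norm{\cdot}_{\mA^{\top}\mW^{\frac12-\frac1p}\mA}$ as the $\norm{\mW^{\frac{1}{2}-\frac{1}{p}}\mA(\cdot)}_2$ geometry used throughout. Apply the triangle inequality around $\vb$:
\begin{align*}
    \norm{\mW^{\frac{1}{2}-\frac{1}{p}}\mA(\vx_0-\xstar)}_2 \le \norm{\mW^{\frac{1}{2}-\frac{1}{p}}(\mA\vx_0-\vb)}_2 + \norm{\mW^{\frac{1}{2}-\frac{1}{p}}(\mA\xstar-\vb)}_2 \le 2\norm{\mW^{\frac{1}{2}-\frac{1}{p}}(\mA\xstar-\vb)}_2 .
\end{align*}
The last step again uses optimality of $\vx_0$. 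The lower sandwich bound then gives $2\norm{\vw}_1^{\frac12-\frac1p}\gnorm{\mA\xstar-\vb}{p}$.

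Both conclusions are short consequences of \Cref{thm:blw_to_l2}. The crux is applying that theorem to the augmented matrix $\widehat{\mA}$, so that the ellipsoidal approximation holds uniformly for the affine residuals $\mA\vx-\vb$ and not only for $\mA\vx$. This mirrors the argument in \Cref{lemma:initialization}.
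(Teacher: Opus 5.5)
Your proposal is correct and matches the paper's proof essentially step for step: the weighted least-squares $\vx_0$, the sandwich from \Cref{thm:blw_to_l2} applied to residuals $\mA\vx-c\vb$ through $\widehat{\mA}$, optimality of $\vx_0$ for the second bound, and the triangle inequality around $\vb$ for the first. The paper has the same $+1$ issue you flag: it writes $(2(d+1))^{\frac{1}{2}-\frac{1}{p}}$ in the sandwich and then $(2d)^{\frac{1}{2}-\frac{1}{p}}$ in the conclusions, so strictly both arguments give the bound with $\norm{\vw}_1 \le 2(\rank{\mA}+1)$ rather than $2\rank{\mA}$, which is a harmless constant-factor loss but not something that can literally be absorbed.
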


\begin{proof}[Proof of \Cref{lemma:gp_regression_initialization}]
 By \Cref{thm:blw_to_l2}, our weights $\vw$ are such that for all $\vx\in\R^n$ and reals $c \in \R$,
\begin{align*}
    \frac{\norm{\mW^{\frac{1}{2}-\frac{1}{p}}\mA\vx-c\mW^{\frac{1}{2}-\frac{1}{p}}\vb}_2}{(2(d+1))^{\frac{1}{2}-\frac{1}{p}}} \le \gnorm{\mA\vx-c\vb}{p} \le \norm{\mW^{\frac{1}{2}-\frac{1}{p}}\mA\vx-c\mW^{\frac{1}{2}-\frac{1}{p}}\vb}_2.
\end{align*}
Let $\vx_0$ be the solution to the least squares regression problem
\begin{align*}
    \vx_0 &\coloneqq \argmin{\vx\in\R^d} \norm{\mW^{\frac{1}{2}-\frac{1}{p}}\mA\vx-\mW^{\frac{1}{2}-\frac{1}{p}}\vb}_2 = \inparen{\mA^{\top}\mW^{1-\frac{2}{p}}\mA}^{-1}\mA^{\top}\mW^{\frac{1}{2}-\frac{1}{p}}\vb.
\end{align*}
It is easy to see that computing $\vx_0$ amounts to $1$ linear system solve in $\mA^{\top}\mD\mA$.

Next, let $\mM \coloneqq \mA^{\top}\mW^{1-\frac{2}{p}}\mA$ and observe that
\begin{align*}
    \norm{\vx_0-\xstar}_{\mM} &= \norm{\inparen{\mW^{\frac{1}{2}-\frac{1}{p}}\mA\vx_0-\mW^{\frac{1}{2}-\frac{1}{p}}\vb} - \inparen{\mW^{\frac{1}{2}-\frac{1}{p}}\mA\xstar-\mW^{\frac{1}{2}-\frac{1}{p}}\vb}}_2 \\
    &\le 2\norm{\mW^{\frac{1}{2}-\frac{1}{p}}\mA\xstar-\mW^{\frac{1}{2}-\frac{1}{p}}\vb}_2 \le 2\inparen{2d}^{\frac{1}{2}-\frac{1}{p}}\gnorm{\mA\xstar-\vb}{p}.
\end{align*}
Finally, write
\begin{align*}
    \gnorm{\mA\vx_0-\vb}{p} &\le \norm{\mW^{\frac{1}{2}-\frac{1}{p}}\mA\vx_0-\mW^{\frac{1}{2}-\frac{1}{p}}\vb}_2 \\
    &\le \norm{\mW^{\frac{1}{2}-\frac{1}{p}}\mA\xstar-\mW^{\frac{1}{2}-\frac{1}{p}}\vb}_2 \le \inparen{2d}^{\frac{1}{2}-\frac{1}{p}}\gnorm{\mA\xstar-\vb}{p},
\end{align*}
giving us the conclusion of \Cref{lemma:gp_regression_initialization}.
\end{proof}

\begin{proof}[Proof of \Cref{thm:blw_to_l2}]
Let $\vlambda \coloneqq \vw/\norm{\vw}_1$ and $\mLambda \coloneqq \mW/\norm{\vw}_1$. It is easy to check that $\vlambda$ is a probability measure on $[m]$. When $p \ge 2$, using monotonicity of $L_p$ norms taken under probability measures, we get
\begin{align*}
    \inparen{\sum_{i=1}^m \norm{\mA_{S_i}\vx}_2^p}^{\frac{1}{p}} = \inparen{\sum_{i=1}^m \lambda_i\norm{\lambda_i^{-\frac{1}{p}}\mA_{S_i}\vx}_2^p}^{\frac{1}{p}} \ge \inparen{\sum_{i=1}^m \lambda_i\norm{\lambda_i^{-\frac{1}{p}}\mA_{S_i}\vx}_2^2}^{1/2}.
\end{align*}
Expanding the RHS and substituting $\lambda_i = w_i/\norm{\vw}_1$ gives
\begin{align*}
    \gnorm{\mA\vx}{p} \ge \frac{\norm{\mW^{\frac{1}{2}-\frac{1}{p}}\mA\vx}_2}{\norm{\vw}_1^{\frac{1}{2}-\frac{1}{p}}}.
\end{align*}
For the ``hard'' direction, we will use \Cref{defn:blw} in a nontrivial way. Notice that
\begin{align*}
    &\inparen{\sum_{i=1}^m w_i\norm{w_i^{-\frac{1}{p}}\mA_{S_i}\vx}_2^p}^{\frac{1}{p}} = \inparen{\sum_{i=1}^m w_i\norm{w_i^{-\frac{1}{p}}\mA_{S_i}\vx}_2^2\norm{w_i^{-\frac{1}{p}}\mA_{S_i}\vx}_2^{p-2}}^{\frac{1}{p}} \\
    &\le \inparen{\sum_{i=1}^m w_i\norm{w_i^{-\frac{1}{p}}\mA_{S_i}\vx}_2^2 \cdot \max_{\vx\in\R^d\setminus\inbraces{0}}\frac{\norm{w_i^{-\frac{1}{p}}\mA_{S_i}\vx}_2^{p-2}}{\norm{\mW^{\frac{1}{2}-\frac{1}{p}}\mA\vx}_2^{p-2}}}^{\frac{1}{p}} \\
    &= \inparen{\sum_{i=1}^m w_i\norm{w_i^{-\frac{1}{p}}\mA_{S_i}\vx}_2^2 \cdot \inparen{\max_{\vx\in\R^d\setminus\inbraces{0}}\frac{\norm{w_i^{-\frac{1}{p}}\mA_{S_i}\vx}_2^{2}}{\norm{\mW^{\frac{1}{2}-\frac{1}{p}}\mA\vx}_2^{2}}}^{\frac{p}{2}-1}\cdot\norm{\mW^{\frac{1}{2}-\frac{1}{p}}\mA\vx}_{2}^{p-2}}^{\frac{1}{p}} \\
    &\le \inparen{\sum_{i=1}^m w_i\norm{w_i^{-\frac{1}{p}}\mA_{S_i}\vx}_2^2 \cdot \inparen{\frac{\sum_{j\in S_i}\tau_j\inparen{\mW^{\frac{1}{2}-\frac{1}{p}}\mA}}{w_i}}^{\frac{p}{2}-1}\norm{\mW^{\frac{1}{2}-\frac{1}{p}}\mA\vx}_{2}^{p-2}}^{\frac{1}{p}} \\
    &\overset{\text{\Cref{defn:blw}}}{\le} \inparen{\sum_{i=1}^m w_i\norm{w_i^{-\frac{1}{p}}\mA_{S_i}\vx}_2^2\norm{\mW^{\frac{1}{2}-\frac{1}{p}}\mA\vx}_{2}^{p-2}}^{\frac{1}{p}} = \norm{\mW^{\frac{1}{2}-\frac{1}{p}}\mA\vx}_2,
\end{align*}
so combining our upper and lower bounds gives the conclusion of \Cref{thm:blw_to_l2}.
\end{proof}

%% file: mirror_descent.tex
\section{Mirror Descent with Inexact Updates}
\label{sec:mirror_descent}

\paragraph{Notation warning.} This section is meant to be a self-contained, standalone analysis of mirror descent under inexact updates. The notation is chosen to be consistent with most material we could find on mirror descent and therefore conflicts with the notation used in the rest of the paper.

In this section, we give an analysis of unconstrained mirror descent when each Bregman proximal problem is solved only approximately (\Cref{alg:mirror_descent}). Although we expect that this is a standard fact about mirror descent, we could not find an appropriate reference. Hence, we produce it here.

\begin{algorithm}[H]
\caption{\textsf{ApproximateMirrorDescent}: Implements mirror descent to optimize convex and differentiable $f$ given $L$-relative smoothness and $\mu$-relative strong convexity in the reference $h$ when we may not be able to solve each proximal problem exactly.}
\label{alg:mirror_descent}
\begin{algorithmic}[1]
\Require Initial point $\vx_0$, iteration count $T$.
\State Define \begin{equation*}
    \begin{aligned}
    D_{h}(\vx,\vy) &\coloneqq h(\vx)-h(\vy)-\ip{\nabla h(\vy),\vx-\vy} \\
    \xstar &\coloneqq \argmin{\vx\in\R^d} f(\vx)
    \end{aligned}.
\end{equation*}
\For{$i = 1, \dots, T$}
    \State $\xstar_i = \argmin{\xtilde \in \R^d} f(\vx_{i-1}) + \ip{\nabla f(\vx_{i-1}), \xtilde-\vx_{i-1}} + LD_h(\xtilde,\vx_{i-1})$ \label{line:mirror_descent_exact} \Comment{We may only be able to approximate $\xstar_i$ -- see the next line.}
    \State Let $\vx_i$ be an approximate stationary point for the above objective.\label{line:mirror_descent_approx}
\EndFor
\Return $\argmin{0 \le i \le T} f(\vx_i)$
\end{algorithmic}
\end{algorithm}

In \Cref{alg:mirror_descent}, we assume that the function $f$ is $\mu$-relatively strongly convex and $L$-smooth in a \textit{reference function} $h$. This means that for all $\vx,\vy\in\R^d$, we have
\begin{align*}
    \mu D_h(\vx,\vy) \le f(\vx) - f(\vy) - \ip{\nabla f(\vy), \vx-\vy} \le  LD_h(\vx,\vy).
\end{align*}
Using \cite[Proposition 1.1]{lfn18}, when $f$ is twice-differentiable, this condition is equivalent to asking for all $\vx\in\R^d$, 
\begin{align*}
    \mu \nabla^2 h(\vx) \preceq \nabla^2 f(\vx) \preceq L\nabla^2 h(\vx).
\end{align*}
We are now ready to state the performance guarantee of \Cref{alg:mirror_descent}. See \Cref{thm:gp_mirror_descent}.

\begin{theorem}
\label{thm:gp_mirror_descent}
Let index $j$ be the index output by \Cref{alg:mirror_descent}. Let $\triangle_i$ be defined such that
\begin{align*}
    \triangle_i \coloneqq \nabla f(\vx_{i-1}) + L\inparen{\nabla h(\vx_i) - \nabla h(\vx_{i-1})}.
\end{align*}
Then, we have
\begin{align*}
    f(\vx_j)-f(\xstar) \le L\inparen{1-\frac{\mu}{L}}^TD_h(\xstar,\vx_0)+\max_{1\le i \le n}\ip{\triangle_i,\vx_i-\xstar}.
\end{align*}
\end{theorem}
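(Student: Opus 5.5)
The plan is the standard one-step inequality for mirror descent, with the inexactness absorbed into the error vector $\triangle_i$, and then a linear recurrence on $D_h(\xstar,\vx_i)$.

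\textbf{Per-step inequality.} Fix an iteration $i$. By $L$-relative smoothness,
\begin{align*}
f(\vx_i) \le f(\vx_{i-1}) + \ip{\nabla f(\vx_{i-1}),\vx_i-\vx_{i-1}} + LD_h(\vx_i,\vx_{i-1}).
\end{align*}
By the definition of $\triangle_i$ we have $\nabla f(\vx_{i-1}) = \triangle_i - L(\nabla h(\vx_i)-\nabla h(\vx_{i-1}))$. I will add and subtract $\xstar$ in the linear term:
\begin{align*}
\ip{\nabla f(\vx_{i-1}),\vx_i-\vx_{i-1}} = \ip{\nabla f(\vx_{i-1}),\xstar-\vx_{i-1}} + \ip{\nabla f(\vx_{i-1}),\vx_i-\xstar}.
\end{align*}
In the second piece I substitute the expression for $\nabla f(\vx_{i-1})$. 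The $\triangle_i$ contribution is exactly $\ip{\triangle_i,\vx_i-\xstar}$. For the $-L\ip{\nabla h(\vx_i)-\nabla h(\vx_{i-1}),\vx_i-\xstar}$ contribution, I apply the three-point identity
\begin{align*}
\ip{\nabla h(\vx_i)-\nabla h(\vx_{i-1}),\xstar-\vx_i} = D_h(\xstar,\vx_{i-1}) - D_h(\xstar,\vx_i) - D_h(\vx_i,\vx_{i-1}).
\end{align*}
The resulting $-LD_h(\vx_i,\vx_{i-1})$ cancels the smoothness term.

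\textbf{Using strong convexity.} For the first piece, $\mu$-relative strong convexity gives
\begin{align*}
f(\vx_{i-1}) + \ip{\nabla f(\vx_{i-1}),\xstar-\vx_{i-1}} \le f(\xstar) - \mu D_h(\xstar,\vx_{i-1}).
\end{align*}
Combining everything, and writing $\epsilon_i \coloneqq \ip{\triangle_i,\vx_i-\xstar}$ and $E \coloneqq \max_i \epsilon_i$, I expect
\begin{align*}
f(\vx_i) - f(\xstar) \le (L-\mu)D_h(\xstar,\vx_{i-1}) - LD_h(\xstar,\vx_i) + \epsilon_i .
\end{align*}

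\textbf{Closing the recursion.} Since $f(\vx_i)\ge f(\xstar)$, the per-step bound yields the recurrence
\begin{align*}
D_h(\xstar,\vx_i) \le \inparen{1-\frac{\mu}{L}}D_h(\xstar,\vx_{i-1}) + \frac{\epsilon_i}{L}.
\end{align*}
There are two ways to finish, and the bookkeeping of the error term is the main obstacle in either.

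\begin{itemize}
\item \emph{Weighted average.} Multiply the $i$th per-step inequality by weights $(1-\mu/L)^{-i}$ (equivalently, use the standard Lu–Freund–Nesterov telescoping) and sum. Then $\min_i f(\vx_i)-f(\xstar)$ is at most a weighted average of the right-hand sides. The $D_h$ terms telescope to something of order $L(1-\mu/L)^T D_h(\xstar,\vx_0)$, while the $\epsilon_i$ average to at most $E$.
\item \emph{Last step only.} Bound only the final iterate, $f(\vx_T)-f(\xstar)\le (L-\mu)D_h(\xstar,\vx_{T-1})+\epsilon_T$, and unroll the recurrence for $D_h(\xstar,\vx_{T-1})$. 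This produces an accumulated error $\sum_k (1-\mu/L)^k E$, which can be as large as roughly $(L/\mu)E$ rather than $E$.
\end{itemize}

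The weighted-average route keeps the error coefficient at $1$. If that normalization does not come out cleanly, I will accept a constant or $L/\mu$ factor on the error term, since the statement's $\max_i\ip{\triangle_i,\vx_i-\xstar}$ term is meant to be driven small by accurate subproblem solves anyway. The returned iterate is the $\argmin$ over $f(\vx_i)$, so bounding the minimum suffices.
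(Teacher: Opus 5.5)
Your proposal is correct and follows the paper's proof: your per-step inequality is exactly the paper's single-step lemma, obtained from the same three-point identity together with relative smoothness and relative strong convexity. Your weighted-average route with weights $(1-\mu/L)^{-i}$ is also exactly how the paper telescopes and averages. The normalization comes out cleanly, so you do not need the fallback of an extra constant or $L/\mu$ factor. Setting $C_T \coloneqq \sum_{i=1}^T (1-\mu/L)^{-i} \ge (1-\mu/L)^{-T}$, dividing by $C_T$ gives $L D_h(\xstar,\vx_0)/C_T \le L(1-\mu/L)^T D_h(\xstar,\vx_0)$, and the error term keeps coefficient exactly $1$.
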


To prove \Cref{thm:gp_mirror_descent}, we begin with a few standard facts about the mirror descent iterations.

\begin{lemma}
\label{lemma:gp_md_threepoint}
Let $\vy\in\R^d$ be arbitrary. We have
\begin{align*}
    \ip{\nabla f(\vx_{i-1}), \vx_i-\vy} = L\inparen{D_h(\vy,\vx_{i-1})-D_h(\vy,\vx_i)-D_h(\vx_i,\vx_{i-1})} + \ip{\triangle_i, \vx_i-\vy}.
\end{align*}
\end{lemma}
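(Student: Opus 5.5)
The plan is to combine the definition of $\triangle_i$ with the standard three-point identity for Bregman divergences; no optimality condition of the subproblem is needed, because $\triangle_i$ exactly absorbs the inexactness. First, rearrange the definition of $\triangle_i$ to solve for the gradient:
\begin{align*}
    \nabla f(\vx_{i-1}) = \triangle_i - L\inparen{\nabla h(\vx_i) - \nabla h(\vx_{i-1})}.
\end{align*}
Taking the inner product with $\vx_i - \vy$ then gives
\begin{align*}
    \ip{\nabla f(\vx_{i-1}), \vx_i-\vy} = L\ip{\nabla h(\vx_{i-1}) - \nabla h(\vx_i), \vx_i - \vy} + \ip{\triangle_i, \vx_i-\vy}.
\end{align*}
In the exact case, where $\vx_i = \xstar_i$, the first-order optimality condition of Line \ref{line:mirror_descent_exact} makes $\triangle_i = 0$. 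The identity above holds regardless of exactness.

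Next, verify the three-point identity:
\begin{align*}
    \ip{\nabla h(\vx_{i-1}) - \nabla h(\vx_i), \vx_i - \vy} = D_h(\vy,\vx_{i-1}) - D_h(\vy,\vx_i) - D_h(\vx_i,\vx_{i-1}).
\end{align*}
To check it, expand all three Bregman divergences on the right using the definition $D_h(\vx,\vy) = h(\vx)-h(\vy)-\ip{\nabla h(\vy),\vx-\vy}$. The $h(\vy)$ terms cancel, as do $h(\vx_{i-1})$ and $h(\vx_i)$. What remains is
\begin{align*}
    -\ip{\nabla h(\vx_{i-1}),\vy-\vx_{i-1}} + \ip{\nabla h(\vx_i),\vy-\vx_i} + \ip{\nabla h(\vx_{i-1}),\vx_i-\vx_{i-1}}.
\end{align*}
This simplifies to $\ip{\nabla h(\vx_{i-1}), \vx_i - \vy} - \ip{\nabla h(\vx_i), \vx_i-\vy}$, which is exactly the left-hand side.

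Substituting the three-point identity into the first display yields the claimed equality of \Cref{lemma:gp_md_threepoint}. There is no real obstacle here; the only care required is in tracking the signs when expanding the three divergences so that the cancellations come out correctly.
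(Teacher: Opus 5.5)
Your proof is correct and follows the same route as the paper: combine the three-point identity for Bregman divergences with the definition of $\triangle_i$, which gives $\nabla h(\vx_i)-\nabla h(\vx_{i-1}) = \frac{1}{L}(\triangle_i - \nabla f(\vx_{i-1}))$. The only difference is that you verify the three-point identity by direct expansion instead of citing it, and your sign bookkeeping in that expansion is right.
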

\begin{proof}[Proof of \Cref{lemma:gp_md_threepoint}]
By the three point identity (see, e.g., \cite[Equation (A.9)]{syls16}), we have
\begin{align*}
    D_h(\vy,\vx_{i-1})-D_h(\vy,\vx_i)-D_h(\vx_i,\vx_{i-1})&=-\ip{\nabla h(\vx_i)-\nabla h(\vx_{i-1}),\vx_i-\vy} \\
    &= \frac{1}{L}\ip{\nabla f(\vx_{i-1})-\triangle_i,\vx_i-\vy},
\end{align*}
completing the proof of \Cref{lemma:gp_md_threepoint}.
\end{proof}
\begin{lemma}[Mirror descent lemma under approximate stationary point updates]
\label{lemma:gp_md_singlestep}
Let $\vy\in\R^d$ be arbitrary. For every iteration $i$, we have
\begin{align*}
    f(\vx_i)-f(\vy) \le (L-\mu)D_h(\vy,\vx_{i-1}) - LD_h(\vy,\vx_i) + \ip{\triangle_i,\vx_i-\vy}.
\end{align*}
\end{lemma}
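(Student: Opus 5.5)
We prove \Cref{lemma:gp_md_singlestep} by combining relative smoothness at $\vx_{i-1}$, relative strong convexity at $\vx_{i-1}$, and the three-point identity of \Cref{lemma:gp_md_threepoint}.

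\textbf{Upper bound on $f(\vx_i)$.} By $L$-relative smoothness applied at the pair $(\vx_i,\vx_{i-1})$,
\begin{align*}
    f(\vx_i) \le f(\vx_{i-1}) + \ip{\nabla f(\vx_{i-1}), \vx_i-\vx_{i-1}} + LD_h(\vx_i,\vx_{i-1}).
\end{align*}

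\textbf{Lower bound on $f(\vy)$.} By $\mu$-relative strong convexity applied at $(\vy,\vx_{i-1})$,
\begin{align*}
    f(\vy) \ge f(\vx_{i-1}) + \ip{\nabla f(\vx_{i-1}), \vy-\vx_{i-1}} + \mu D_h(\vy,\vx_{i-1}).
\end{align*}

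\textbf{Combining.} Subtracting the second inequality from the first cancels $f(\vx_{i-1})$ and collapses the two inner products into a single one:
\begin{align*}
    f(\vx_i)-f(\vy) \le \ip{\nabla f(\vx_{i-1}), \vx_i-\vy} + LD_h(\vx_i,\vx_{i-1}) - \mu D_h(\vy,\vx_{i-1}).
\end{align*}
We then substitute \Cref{lemma:gp_md_threepoint} for the inner product. Its $-LD_h(\vx_i,\vx_{i-1})$ term cancels the $+LD_h(\vx_i,\vx_{i-1})$ above. The remaining terms are $LD_h(\vy,\vx_{i-1}) - LD_h(\vy,\vx_i) - \mu D_h(\vy,\vx_{i-1}) + \ip{\triangle_i,\vx_i-\vy}$, which is exactly the claimed bound
\begin{align*}
    f(\vx_i)-f(\vy) \le (L-\mu)D_h(\vy,\vx_{i-1}) - LD_h(\vy,\vx_i) + \ip{\triangle_i,\vx_i-\vy}.
\end{align*}

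The argument is routine bookkeeping with no real obstacle. The one point to verify is that the error from the inexact update enters only through $\triangle_i$. This holds because \Cref{lemma:gp_md_threepoint} already writes $\nabla f(\vx_{i-1}) = -L(\nabla h(\vx_i)-\nabla h(\vx_{i-1})) + \triangle_i$, which is the definition of $\triangle_i$. Consequently the argument never uses exact optimality of $\vx_i$ for the subproblem in Line~\ref{line:mirror_descent_exact}.
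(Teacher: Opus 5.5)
Your proof is correct and is the paper's argument: you apply $L$-relative smoothness at $(\vx_i,\vx_{i-1})$ and $\mu$-relative strong convexity at $(\vy,\vx_{i-1})$, then invoke \Cref{lemma:gp_md_threepoint} to rewrite $\ip{\nabla f(\vx_{i-1}),\vx_i-\vy}$, which cancels the $LD_h(\vx_i,\vx_{i-1})$ term. You are also right that the inexactness of the update enters only through $\triangle_i$.
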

\begin{proof}[Proof of \Cref{lemma:gp_md_singlestep}]
The definition of $\mu$-relative strong convexity tells us that
\begin{align*}
    f(\vx_{i-1})-f(\vy) \le \ip{\nabla f(\vx_{i-1}), \vx_{i-1}-\vy} - \mu D_h(\vy,\vx_{i-1}).
\end{align*}
We now write
\begin{align*}
    f(\vx_i)-f(\vy) &\le f(\vx_{i-1})-f(\vy) + \ip{\nabla f(\vx_{i-1}), \vx_i-\vx_{i-1}} + LD_h(\vx_i,\vx_{i-1}) & \text{($L$-RS)} \\
    &\le \ip{\nabla f(\vx_{i-1}), \vx_i-\vy}-\mu D_h(\vy,\vx_{i-1}) + LD_h(\vx_i,\vx_{i-1}) & \text{($\mu$-RSC)} \\
    &\le (L-\mu)D_h(\vy,\vx_{i-1}) - LD_h(\vy,\vx_i) + \ip{\triangle_i,\vx_i-\vy}, & \text{(\Cref{lemma:gp_md_threepoint})}
\end{align*}
completing the proof of \Cref{lemma:gp_md_singlestep}.
\end{proof}

We now have the tools to complete the proof of \Cref{thm:gp_mirror_descent}.

\begin{proof}[Proof of \Cref{thm:gp_mirror_descent}]
Let $E_i \coloneqq f(\vx_i)-f(\xstar)-\ip{\triangle_i,\vx_i-\xstar}$. Substituting $\vy=\xstar$ and rearranging the conclusion of \Cref{lemma:gp_md_singlestep} gives
\begin{align*}
    E_i \le (L-\mu)D_h(\xstar,\vx_{i-1})-LD_h(\xstar,\vx_i).
\end{align*}
We multiply both sides by $\inparen{\frac{L}{L-\mu}}^i$ and write
\begin{align*}
    \inparen{\frac{L}{L-\mu}}^iE_i \le \frac{L^i}{(L-\mu)^{i-1}}D_h(\xstar,\vx_{i-1})-\frac{L^{i+1}}{(L-\mu)^{i}}D_h(\xstar,\vx_i).
\end{align*}
Adding over all $T$ iterations yields
\begin{align*}
    \sum_{i=1}^T \inparen{\frac{L}{L-\mu}}^iE_i \le LD_h(\xstar,\vx_0)-\inparen{\frac{L}{L-\mu}}^TLD_h(\xstar,\vx_T) \le LD_h(\xstar,\vx_0).
\end{align*}
Expanding out the definition of $E_i$ and rearranging gives
\begin{align*}
    \sum_{i=1}^T \inparen{\frac{L}{L-\mu}}^i(f(\vx_i)-f(\xstar)) \le LD_h(\xstar,\vx_0) + \sum_{i=1}^T \inparen{\frac{L}{L-\mu}}^i\ip{\triangle_i,\vx_i-\xstar}.
\end{align*}
By the geometric series summation formula, we define and have
\begin{align*}
    C_T \coloneqq \sum_{i=1}^T \inparen{\frac{L}{L-\mu}}^i = \frac{L}{\mu}\inparen{\inparen{1+\frac{\mu}{L-\mu}}^T-1}.
\end{align*}
Let $j$ be the index that \Cref{alg:mirror_descent} returns. It is easy to check that
\begin{align*}
    \sum_{i=1}^T \inparen{\frac{L}{L-\mu}}^i(f(\vx_i)-f(\xstar)) \ge C_T\inparen{f(\vx_j)-f(\xstar)}
\end{align*}
and
\begin{align*}
    \sum_{i=1}^T \inparen{\frac{L}{L-\mu}}^i\ip{\triangle_i,\vx_i-\xstar} \le C_T\max_{1 \le i \le n}\ip{\triangle_i,\vx_i-\xstar}.
\end{align*}
This gives us
\begin{align*}
    f(\vx_j)-f(\xstar) \le \frac{L}{C_T}D_h(\xstar,\vx_0)+\max_{1 \le i \le n}\ip{\triangle_i,\vx_i-\xstar}.
\end{align*}
Finally, notice that
\begin{align*}
    \frac{L}{C_T} = \frac{\mu}{\inparen{1+\frac{\mu}{L-\mu}}^T-1} \le L\inparen{1-\frac{\mu}{L}}^T.
\end{align*}
Combining everything completes the proof of \Cref{thm:gp_mirror_descent}.
\end{proof}

Finally, we add another useful lemma that quantifies the descent, if any, in the objective value between iterations.

\begin{lemma}
\label{lemma:gp_md_descent}
For every iteration $i$, we have
\begin{align*}
    f(\vx_i)-f(\vx_{i-1}) \le -LD_h(\vx_{i-1},\vx_i) + \ip{\triangle_i,\vx_i-\vx_{i-1}}.
\end{align*}
In particular, if $\ip{\triangle_i,\vx_{i}-\vx_{i-1}} \le LD_h(\vx_{i-1},\vx_i)$, then iteration $i$ is a descent step.
\end{lemma}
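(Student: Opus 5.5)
Since $\vx_i$ is exactly the point at which $\triangle_i$ is evaluated, I will write the one-step progress of $f$ using $L$-relative smoothness and then substitute the definition of $\triangle_i$.

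First, apply the upper inequality of $L$-relative smoothness with $\vx=\vx_i$ and $\vy=\vx_{i-1}$:
\begin{align*}
    f(\vx_i)-f(\vx_{i-1}) \le \ip{\nabla f(\vx_{i-1}),\vx_i-\vx_{i-1}} + LD_h(\vx_i,\vx_{i-1}).
\end{align*}
By the definition of $\triangle_i$ in \Cref{thm:gp_mirror_descent},
\begin{align*}
    \nabla f(\vx_{i-1}) = \triangle_i - L\inparen{\nabla h(\vx_i)-\nabla h(\vx_{i-1})}.
\end{align*}
Substituting this, the right-hand side becomes
\begin{align*}
    \ip{\triangle_i,\vx_i-\vx_{i-1}} - L\ip{\nabla h(\vx_i)-\nabla h(\vx_{i-1}),\vx_i-\vx_{i-1}} + LD_h(\vx_i,\vx_{i-1}).
\end{align*}

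Next, use the symmetrized Bregman identity
\begin{align*}
    \ip{\nabla h(\vx_i)-\nabla h(\vx_{i-1}),\vx_i-\vx_{i-1}} = D_h(\vx_i,\vx_{i-1}) + D_h(\vx_{i-1},\vx_i).
\end{align*}
This follows directly by adding the two definitions of $D_h$; it is also the case $\vy=\vx_{i-1}$ of the three-point identity used in \Cref{lemma:gp_md_threepoint}. After substitution the $LD_h(\vx_i,\vx_{i-1})$ terms cancel, which leaves
\begin{align*}
    f(\vx_i)-f(\vx_{i-1}) \le -LD_h(\vx_{i-1},\vx_i) + \ip{\triangle_i,\vx_i-\vx_{i-1}}.
\end{align*}
This is the claimed bound.

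The ``in particular'' statement is then immediate. If $\ip{\triangle_i,\vx_i-\vx_{i-1}}\le LD_h(\vx_{i-1},\vx_i)$, the right-hand side is at most $0$, so $f(\vx_i)\le f(\vx_{i-1})$.

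The only step needing care is keeping the argument order of the Bregman divergences straight. Relative smoothness produces $D_h(\vx_i,\vx_{i-1})$, while the conclusion involves $D_h(\vx_{i-1},\vx_i)$, and the symmetrized identity is exactly what converts one into the other. A sign error at that point would give the wrong divergence, so I would verify the identity by expanding both divergences before relying on it.
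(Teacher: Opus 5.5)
Your proof is correct and is essentially the paper's argument. The paper substitutes $\vy=\vx_{i-1}$ into \Cref{lemma:gp_md_singlestep}, whose proof is exactly $L$-relative smoothness plus the three-point identity, and at $\vy=\vx_{i-1}$ the relative strong convexity step becomes vacuous; you have inlined that lemma, which also makes explicit that only relative smoothness is needed for this descent bound.
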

\begin{proof}[Proof of \Cref{lemma:gp_md_descent}]
We substitute $\vy=\vx_{i-1}$ in the conclusion of \Cref{lemma:gp_md_singlestep}. This gives
\begin{align*}
    f(\vx_i)-f(\vx_{i-1}) \le -LD_h(\vx_{i-1},\vx_i) + \ip{\triangle_i,\vx_i-\vx_{i-1}},
\end{align*}
completing the proof of \Cref{lemma:gp_md_descent}.
\end{proof}

%% file: improved_ms.tex
\section{Optimal MS Acceleration under Custom Euclidean Geometry}
\label{sec:optimal_ms_acceleration}

In this section, we adapt the bisection-free Monteiro-Svaiter acceleration framework developed in \cite{chjjs22} to handle custom Euclidean geometries. The object of interest here is \Cref{alg:optimal_ms_acceleration}, which we will call with different choices of the oracle $\oms$ for our algorithms.

\begin{algorithm}[H]
\caption{\textsf{OptimalMSAcceleration}: optimizes function $f$ given MS oracle $\oms$.}
\label{alg:optimal_ms_acceleration}
\begin{algorithmic}[1]
\Require Initial $\vx_0$, function $f$, oracle $\oms$, initial $\lambda_0'$, multiplicative adjustment factor $\alpha >1$, iteration count $T$
\State Set $\vv_0 = \vx_0$, $A_0 = 0$, $A'_0 = 0$.
\State Set $\xtilde_1, \lambda_1 = \cO(\vx_0; \lambda_0')$ and $\lambda_1' = \lambda_1$.
\For{$t=0,\dots,T$}
    \State $a_{t+1}' = \frac{1}{2\lambda_{t+1}'}\inparen{1+\sqrt{1+4\lambda'_{t+1}A_t}}$
    \State $A_{t+1}' = A_t+a_{t+1}'$
    \State $\vq_t = \frac{A_t}{A_{t+1}'}\vx_t + \frac{a_{t+1}'}{A_{t+1}'}\vv_t$\label{line:optimal_ms_query}
    \If{$t > 0$} $\xtilde_{t+1},\lambda_{t+1}=\oms(\vq_t;\lambda_{t+1}')$ \label{line:optimal_ms_acceleration_query}\EndIf
    \State $\gamma_{t+1} = \min\inbraces{1, \frac{\lambda_{t+1}'}{\lambda_{t+1}}}$
    \State $a_{t+1} = \gamma_{t+1}a'_{t+1}$ and $A_{t+1} = A_{t} + a_{t+1}$ \Comment{$A_{t+1} = A_{t+1}'-(1-\gamma_{t+1})a'_{t+1}$}
    \State $\vx_{t+1} = \frac{(1-\gamma_{t+1})A_{t}}{A_{t+1}}\vx_t + \frac{\gamma_{t+1}A_{t+1}'}{A_{t+1}}\xtilde_{t+1}$
    \If{$\gamma_{t+1} = 1$}
        \State $\lambda_{t+2}' = \frac{1}{\alpha}\lambda'_{t+1}$
    \Else
        \State $\lambda_{t+1}' = \alpha\lambda_{t+1}'$
    \EndIf
    \State $\vv_{t+1} = \vv_{t}-a_{t+1}\mM^{-1}\nabla f(\xtilde_{t+1})$
\EndFor
\end{algorithmic}
\end{algorithm}

In order to state the performance guarantee of \Cref{alg:optimal_ms_acceleration}, we require the notions of an \textit{MS oracle} and a \textit{movement bound}. See \Cref{defn:ms_oracle} and \Cref{defn:movement_bound}.

\begin{definition}[MS oracle, generalization of {\cite[Definition 1]{chjjs22}}]
\label{defn:ms_oracle}
Let $\mM \in \R^{d\times d}$ be a positive semidefinite matrix. An oracle $\myfunc{\cO}{\R^d \times \R_{\ge 0}}{\R^d \times \R_{\ge 0}}$ is a $\sigma$-MS oracle for function $\myfunc{f}{\R^d}{\R}$ if for every $\vq\in\R^d$ and $\lambda' > 0$, the points $(\vx,\lambda)=\cO(\vq;\lambda')$ satisfy
\begin{align*}
    \norm{\vx-\vq+\frac{1}{\lambda}\mM^{-1}\nabla f(\vx)}_{\mM} \le \sigma\norm{\vx-\vq}_{\mM}.
\end{align*}
\end{definition}

\begin{definition}[Movement bound {\cite[Definition 2]{chjjs22}}]
\label{defn:movement_bound}
For a norm $\norm{\cdot}_{\mM}$ induced by positive semidefinite $\mM\in\R^{d\times d}$, numbers $s \ge 1, c, \lambda > 0$, and $\vx,\vy\in\R^d$, we say that $(\vx,\vy,\lambda)$ satisfies a $(s,c)$-movement bound if
\begin{align*}
    \norm{\vx-\vy}_{\mM} \ge \begin{cases} \inparen{\frac{\lambda}{c^s}}^{\frac{1}{s-1}} & \text{ if } s < \infty \\ \frac{1}{c} & \text{ if } s = \infty \end{cases}.
\end{align*}
\end{definition}

With these in hand, we are ready to state the convergence guarantee we get with \Cref{alg:optimal_ms_acceleration}. See \Cref{thm:optimal_ms_acceleration}.

\begin{theorem}[Modification of {\cite[Theorem 1]{chjjs22}}]
\label{thm:optimal_ms_acceleration}
Let $\myfunc{f}{\R^d}{\R}$ be convex and differentiable. Consider running \Cref{alg:optimal_ms_acceleration} with parameters $\alpha = \expv{3-\frac{2}{s+1}}$ and a $\sigma$-MS oracle with $0 \le \sigma < 0.99$ (\Cref{defn:ms_oracle}). Let $s\ge1$ and $c>0$ and suppose that for all $t$ such that $\lambda_t>\lambda'_t$ or $t=1$, the iterates $(\xtilde_t,\vq_{t-1},\lambda_t)$ satisfy an $(s,c)$-movement bound (\Cref{defn:movement_bound}). Let $C$ be a universal constant. For any iteration count $T$ satisfying
\begin{align*}
    T \ge C\begin{cases} s\inparen{\frac{c^s\norm{\vx_0-\xstar}_{\mM}^{s+1}}{\eps}}^{\frac{2}{3s+1}} & \text{ if } s < \infty \\ \inparen{c\norm{\vx_0-\xstar}_{\mM}}^{2/3}\logv{\frac{\lambda_1\norm{\vx_0-\xstar}_{\mM}^2}{\eps}} & \text{ if } s = \infty \end{cases},
\end{align*}
we have
\begin{align*}
    f(\vx_T)-f(\xstar) \le \eps.
\end{align*}
\end{theorem}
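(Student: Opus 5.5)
The plan is to reduce the statement to the Euclidean case, i.e.\ \cite[Theorem 1]{chjjs22}, by a linear change of variables that turns $\norm{\cdot}_{\mM}$ into $\norm{\cdot}_2$. Since we assume $\rank{\mA}=d$, the matrix $\mM$ is positive definite. So $\mM^{1/2}$ is invertible, and we define $g(\vy)\coloneqq f(\mM^{-1/2}\vy)$. This $g$ is convex and differentiable, with $\nabla g(\vy)=\mM^{-1/2}\nabla f(\mM^{-1/2}\vy)$, minimizer $\vy^\star=\mM^{1/2}\xstar$, and $g(\vy^\star)=f(\xstar)$. If $\mM$ were only semidefinite, I would first restrict to the range of $\mM$, where $\mM^{-1}$ is the pseudoinverse as in our notational convention, and argue identically there.

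Next I would check that every ingredient transforms correctly under $\vx\mapsto\mM^{1/2}\vx$.
\begin{enumerate*}[label={(\roman*)}]
\item For any $\vx,\vq$, set $\vy=\mM^{1/2}\vx$ and $\vy_{\vq}=\mM^{1/2}\vq$. Then $\norm{\vx-\vq+\lambda^{-1}\mM^{-1}\nabla f(\vx)}_{\mM}=\norm{\vy-\vy_{\vq}+\lambda^{-1}\nabla g(\vy)}_2$ and $\norm{\vx-\vq}_{\mM}=\norm{\vy-\vy_{\vq}}_2$. Hence a $\sigma$-MS oracle for $f$ in the sense of \Cref{defn:ms_oracle}, conjugated by $\mM^{1/2}$, is exactly a $\sigma$-MS oracle for $g$ in the sense of \cite[Definition 1]{chjjs22}.
\item The $(s,c)$-movement bound of \Cref{defn:movement_bound} for $(\xtilde_t,\vq_{t-1},\lambda_t)$ becomes the Euclidean movement bound of \cite[Definition 2]{chjjs22} for the transformed triple.
\item $\norm{\vx_0-\xstar}_{\mM}=\norm{\vy_0-\vy^\star}_2$.
\end{enumerate*}

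Then I would show that \Cref{alg:optimal_ms_acceleration} run on $f$ with oracle $\oms$ is, after applying $\mM^{1/2}$ to every iterate, exactly the algorithm of \cite{chjjs22} run on $g$ with the conjugated oracle. I would argue this by induction on $t$. The scalar sequences $\lambda_t,\lambda_t',a_t',A_t',\gamma_t,a_t,A_t$ depend only on the oracle's returned $\lambda$'s, which coincide by (i). The points $\vq_t$ and $\vx_{t+1}$ are affine combinations with these same scalar coefficients, so they commute with the linear map. The only non-affine step is the dual update. Multiplying it by $\mM^{1/2}$ gives $\mM^{1/2}\vv_{t+1}=\mM^{1/2}\vv_t-a_{t+1}\mM^{-1/2}\nabla f(\xtilde_{t+1})=\mM^{1/2}\vv_t-a_{t+1}\nabla g(\mM^{1/2}\xtilde_{t+1})$, which is precisely the Euclidean dual step for $g$. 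The adaptive rules for $\lambda'$ and the choice $\alpha=\exp(3-\frac{2}{s+1})$ carry over unchanged.

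With this correspondence in hand, \cite[Theorem 1]{chjjs22} gives $g(\vy_T)-g(\vy^\star)\le\eps$ once $T$ exceeds the stated bound with $\norm{\vy_0-\vy^\star}_2=\norm{\vx_0-\xstar}_{\mM}$. That is exactly $f(\vx_T)-f(\xstar)\le\eps$.

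I expect the main obstacle to be the $s=\infty$ branch and the hypothesis bookkeeping rather than the change of variables itself. I would need to verify that \cite{chjjs22} actually covers the $s=\infty$ movement bound with the $(c\norm{\vx_0-\xstar})^{2/3}\log(\lambda_1\norm{\vx_0-\xstar}^2/\eps)$ rate, and that their oracle accuracy hypothesis matches our condition $\sigma<0.99$ and our indexing of which iterates must satisfy the movement bound ($\lambda_t>\lambda'_t$ or $t=1$). If their $s=\infty$ case is stated only as a limit, I would redo their potential argument. Concretely, I would use the bound $\sum_t\lambda_t^{-1/2}\gtrsim$ growth of $A_t$, the movement bound $\norm{\xtilde_t-\vq_{t-1}}\ge 1/c$ on the steps where $\lambda$ increases, and a counting argument over the geometric up/down adjustments of $\lambda'$. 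All of this takes place in the $\vy$ coordinates, where the analysis is purely Euclidean.
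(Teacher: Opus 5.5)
Your reduction is correct, and it takes a different route from the paper. The paper never changes variables. It re-proves the potential inequality of \cite[Proposition 1]{chjjs22} directly in the $\mM$-geometry (\Cref{lemma:ms_acceleration_potential}), replacing Euclidean inner products by $\ip{\cdot,\cdot}_{\mM}$ and $\nabla f$ by $\mM^{-1}\nabla f$. It then asserts that the rest of \cite[Appendix A]{chjjs22} ``follows exactly as written.''

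You instead show that $\vx\mapsto\mM^{1/2}\vx$ conjugates the entire run of \Cref{alg:optimal_ms_acceleration} onto the Euclidean algorithm for $g=f\circ\mM^{-1/2}$. This covers the oracle condition, the movement bound, the scalar recursions and the dual step, so \cite[Theorem 1]{chjjs22} applies as a black box. Your route makes rigorous exactly the step the paper leaves implicit, because every later argument in their appendix is transported at once.

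Your worry about the $s=\infty$ branch should be unnecessary. The statement here appears to be a transcription of \cite[Theorem 1]{chjjs22}, which (down to the $\lambda_1$ in the log term) already covers $s=\infty$ under the same hypotheses, so the fallback re-derivation is not needed. The paper's route does yield an explicit $\mM$-norm potential inequality, which it reuses in \Cref{lemma:ms_acceleration_iterate_diameter}. That inequality also follows from your conjugation applied to \cite[Proposition 1]{chjjs22}.

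One caveat concerns your semidefinite aside. Restricting to $\mathrm{range}(\mM)$ works only if $f$ is constant along $\ker\mM$; otherwise the statement can fail. Take $\mM=\mathrm{diag}(1,0)$, $f(\vx)=x_1^2+x_2^2$ and $\vx_0=(1,1)$, with an oracle that takes an exact prox step in $x_1$, leaves $x_2$ fixed and returns $\lambda=\lambda'$. All hypotheses hold for large $c$, yet $f(\vx_T)\ge 1$ for every $T$.

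The paper silently needs the same condition when it uses $\ip{\mM^{-1}\nabla f(\xtilde_{t+1}),\xstar-\xtilde_{t+1}}_{\mM}\le f(\xstar)-f(\xtilde_{t+1})$. In its setting ($\rank{\mA}=d$, positive weights) $\mM$ is positive definite, so your main argument covers everything required.
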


The proof of \Cref{thm:optimal_ms_acceleration} follows the same recipe as the proof of \cite[Theorem 1]{chjjs22}. The only modification needed is that stated in \Cref{lemma:ms_acceleration_potential}.

\begin{lemma}[Replaces {\cite[Proposition 1]{chjjs22}}]
\label{lemma:ms_acceleration_potential}
In the context of \Cref{thm:optimal_ms_acceleration}, let $E_t \coloneqq f(\vx_t) - f(\xstar), D_t \coloneqq \frac{1}{2}\norm{\vv_t-\xstar}_{\mM}^2, N_{t+1} \coloneqq \frac{1}{2}\norm{\xtilde_{t+1}-\vq_t}_{\mM}^2$. Then, for all $t \ge 0$, we have
\begin{align*}
    A_{t+1}E_{t+1}+D_{t+1}+(1-\sigma^2)A_{t+1}'\min\inbraces{\lambda_{t+1},\lambda_{t+1}'}N_{t+1} \le A_tE_t+D_t.
\end{align*}
Consequently, for all $T\ge1, \sqrt{A_T}\ge\frac{1}{2}\sum_{t \in \cS_{T}^{\le}} \frac{1}{\sqrt{\lambda_t'}}$,
\begin{align*}
    E_{T} \le \frac{D_0}{A_T} \quad \text{ and } \quad (1-\sigma^2)\sum_{t \in \cS_{T}^{\ge}} A_t\lambda_t'N_t \le D_0 - A_TE_T.
\end{align*}
\end{lemma}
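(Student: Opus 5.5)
The plan is to repeat the one-step potential argument of \cite[Proposition 1]{chjjs22}, with the Euclidean inner product replaced by the $\mM$-inner product. Concretely, gradients enter the primal space only through $\mM^{-1}\nabla f$, and their size is measured in the dual norm $\norm{\cdot}_{\mM^{-1}}$. Write $\vg \coloneqq \nabla f(\xtilde_{t+1})$, $\gamma \coloneqq \gamma_{t+1}$, $a' \coloneqq a'_{t+1}$, $a \coloneqq a_{t+1}$ and $A' \coloneqq A'_{t+1}$. The argument has three ingredients: the convexity bound for the $\vx$-update, the exact expansion of the $\vv$-update, and the MS oracle inequality. They are combined using the scalar identity $\lambda'_{t+1}(a')^2 = A_t + a' = A'$, which follows from the choice of $a'_{t+1}$ as the positive root of a quadratic.

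\textbf{Step 1 ($\vx$-update).} Since $A_{t+1} = (1-\gamma)A_t + \gamma A'$, the point $\vx_{t+1}$ is a convex combination of $\vx_t$ and $\xtilde_{t+1}$. Convexity of $f$ then gives
\[
A_{t+1}E_{t+1} \le (1-\gamma)A_tE_t + \gamma A'\bigl(f(\xtilde_{t+1})-f(\xstar)\bigr).
\]
Next, use the two first-order bounds $f(\xtilde_{t+1}) - f(\xstar) \le \ip{\vg,\xtilde_{t+1}-\xstar}$ (weighted by $a'$) and $f(\xtilde_{t+1}) - f(\vx_t) \le \ip{\vg,\xtilde_{t+1}-\vx_t}$ (weighted by $A_t$). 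By the definition of $\vq_t$ in Line~\ref{line:optimal_ms_query}, we have $A'\vq_t = A_t\vx_t + a'\vv_t$. Together these give
\[
A_{t+1}E_{t+1} - A_tE_t \le \gamma A'\ip{\vg,\xtilde_{t+1}-\vq_t} + a\ip{\vg,\vv_t-\xstar}.
\]

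\textbf{Step 2 ($\vv$-update).} Expanding $\vv_{t+1} = \vv_t - a\mM^{-1}\vg$ in the $\mM$-norm gives exactly
\[
D_{t+1} - D_t = -a\ip{\vg,\vv_t-\xstar} + \tfrac{a^2}{2}\norm{\vg}_{\mM^{-1}}^2.
\]
Adding this to Step 1, the $\ip{\vg,\vv_t-\xstar}$ terms cancel. We are left with
\[
\gamma A'\ip{\vg,\xtilde_{t+1}-\vq_t} + \tfrac{a^2}{2}\norm{\vg}^2_{\mM^{-1}}.
\]

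\textbf{Step 3 (oracle inequality).} Square the $\sigma$-MS condition of \Cref{defn:ms_oracle} and expand it in the $\mM$ inner product, writing $\lambda \coloneqq \lambda_{t+1}$. This gives
\[
\ip{\vg,\xtilde_{t+1}-\vq_t} \le -\lambda(1-\sigma^2)N_{t+1} - \tfrac{1}{2\lambda}\norm{\vg}^2_{\mM^{-1}}.
\]
Substituting, it remains to check that $a^2 \le \gamma A'/\lambda$. This holds because $a^2 = \gamma^2(a')^2 = \gamma^2 A'/\lambda'_{t+1}$ and $\gamma \le \lambda'_{t+1}/\lambda$. The surviving negative term is $\gamma\lambda A'(1-\sigma^2)N_{t+1}$, and $\gamma\lambda = \min\{\lambda_{t+1},\lambda'_{t+1}\}$. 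This is exactly the claimed one-step inequality.

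\textbf{Consequences.} The two stated bounds follow by telescoping from $t=0$, using $A_0=0$. The lower bound on $\sqrt{A_T}$ involves only the scalar recursions for $A_t$, $a_t'$, $\gamma_t$ and $\lambda_t'$, which never involve $\mM$. So the bookkeeping from \cite{chjjs22}, which splits into the sets $\cS_T^{\le}$ and $\cS_T^{\ge}$, can be cited verbatim.

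I expect the main obstacle to be the careful accounting in Step 1 when $\gamma<1$. There one must verify that the convex-combination weights on $\vx_{t+1}$ really match the split of $A'$ into $A_t$ and $a'$, so that the cancellation in Step 2 is exact. The geometry change itself is harmless, because every inner product pairs a gradient with a primal displacement.
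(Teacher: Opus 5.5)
Your proposal is correct and follows essentially the same argument as the paper. It uses convexity of $f$ along the $\vx_{t+1}$ combination and the first-order bounds at $\xtilde_{t+1}$ against $\xstar$ and $\vx_t$, recombined via $A'_{t+1}\vq_t = A_t\vx_t + a'_{t+1}\vv_t$. It then uses the exact $\mM$-norm expansion of the $\vv$-update, the squared MS condition, and the bound $a_{t+1}^2 \le \gamma_{t+1}A'_{t+1}/\lambda_{t+1}$ obtained from $\lambda'_{t+1}(a'_{t+1})^2 = A'_{t+1}$. As in the paper, the telescoped consequences are deferred to \cite{chjjs22}. The only difference is that you assemble the pieces starting from the $\vx$-update rather than from the $D_{t+1}$ expansion, which is purely cosmetic.
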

\begin{proof}[Proof of \Cref{lemma:ms_acceleration_potential}]
This proof is a straightforward modification of \cite[Proposition 1]{chjjs22}. We have
\begin{align*}
    D_{t+1} &= \frac{1}{2}\norm{\vv_{t+1}-\xstar}_{\mM}^2 = \frac{1}{2}\norm{\vv_t-a_{t+1}\mM^{-1}\nabla f(\xtilde_{t+1})-\xstar}_{\mM}^2 \\
    &= D_t + a_{t+1}\ip{\mM^{-1}\nabla f(\xtilde_{t+1}), \xstar-\vv_t}_{\mM} + \frac{a_{t+1}^2}{2}\norm{\mM^{-1}\nabla f(\xtilde_{t+1})}_{\mM}^2.
\end{align*}
By definition of $\vq_t$ and $A_{t+1}' = A_t + a'_{t+1}$, we have
\begin{align*}
    a_{t+1}'\vv_t = A_{t+1}'\vq_t - A_t\vx_t = a_{t+1}'\xtilde_{t+1} + A'_{t+1}\inparen{\vq_t-\xtilde_{t+1}} - A_t\inparen{\vx_t-\xtilde_{t+1}}.
\end{align*}
Subtracting $a_{t+1}'\xstar$ and taking the inner product with $\mM^{-1}\nabla f(\xtilde_{t+1})$ gives
\begin{align*}
    &\quad a_{t+1}'\ip{\mM^{-1}\nabla f(\xtilde_{t+1}), \xstar-\vv_t}_{\mM} \\
    &= \ip{\mM^{-1}\nabla f(\xtilde_{t+1}),  a_{t+1}'(\xstar-\xtilde_{t+1}) + A'_{t+1}\inparen{\xtilde_{t+1}-\vq_t} + A_t\inparen{\vx_t-\xtilde_{t+1}}}_{\mM} \\
    &\le a_{t+1}'\inparen{f(\xstar)-f(\xtilde_{t+1})}+A_{t+1}'\ip{\mM^{-1}\nabla f(\xtilde_{t+1}),\xtilde_{t+1}-\vq_t}_{\mM} + A_t\inparen{f(\vx_t)-f(\xtilde_{t+1})} \\
    &\le A_tE_t-A_{t+1}'\inparen{f(\xtilde_{t+1})-f(\xstar)}+A_{t+1}'\ip{\mM^{-1}\nabla f(\xtilde_{t+1}),\xtilde_{t+1}-\vq_t}_{\mM}.
\end{align*}
Rearranging gives
\begin{align*}
    A_{t+1}'\inparen{f(\xtilde_{t+1})-f(\xstar)} &\le A_tE_t+a_{t+1}'\ip{\mM^{-1}\nabla f(\xtilde_{t+1}), \vv_t-\xstar}_{\mM}\\
    &\quad +A_{t+1}'\ip{\mM^{-1}\nabla f(\xtilde_{t+1}),\xtilde_{t+1}-\vq_t}_{\mM}.
\end{align*}
Next, recall that by \Cref{defn:ms_oracle}, we have
\begin{align*}
    \norm{\mM^{-1}\nabla f(\xtilde_{t+1})+\lambda_{t+1}\inparen{\xtilde_{t+1}-\vq_t}}_{\mM}^2 \le \lambda_{t+1}^2\sigma^2\norm{\xtilde_{t+1}-\vq_t}_{\mM}^2.
\end{align*}
We use this to write
\begin{align*}
    &\quad \lambda_{t+1}\ip{\mM^{-1}\nabla f(\xtilde_{t+1}),\xtilde_{t+1}-\vq_t}_{\mM} \\
    &= \frac{1}{2}\norm{\mM^{-1}\nabla f(\xtilde_{t+1})+\lambda_{t+1}(\xtilde_{t+1}-\vq_t)}_{\mM}^2 - \frac{1}{2}\norm{\mM^{-1}\nabla f(\xtilde_{t+1})}_{\mM}^2-\frac{\lambda_{t+1}^2}{2}\norm{\xtilde_{t+1}-\vq_t}_{\mM}^2 \\
    &\le -\lambda_{t+1}^2(1-\sigma^2)N_{t+1}-\frac{1}{2}\norm{\mM^{-1}\nabla f(\xtilde_{t+1})}_{\mM}^2,
\end{align*}
from which we conclude
\begin{align*}
    \ip{\mM^{-1}\nabla f(\xtilde_{t+1}),\xtilde_{t+1}-\vq_t}_{\mM} &\le -\lambda_{t+1}(1-\sigma^2)N_{t+1}-\frac{1}{2\lambda_{t+1}}\norm{\mM^{-1}\nabla f(\xtilde_{t+1})}_{\mM}^2.
\end{align*}
Substituting back gives
\begin{align*}
    A_{t+1}'\inparen{f(\xtilde_{t+1})-f(\xstar)} &\le A_tE_t+a_{t+1}'\ip{\mM^{-1}\nabla f(\xtilde_{t+1}), \vv_t-\xstar}_{\mM}\\
    &\quad +A_{t+1}'\ip{\mM^{-1}\nabla f(\xtilde_{t+1}),\xtilde_{t+1}-\vq_t}_{\mM} \\
    &\le A_tE_t+a_{t+1}'\ip{\mM^{-1}\nabla f(\xtilde_{t+1}), \vv_t-\xstar}_{\mM}\\
    &\quad -A_{t+1}'\lambda_{t+1}(1-\sigma^2)N_{t+1}-\frac{A_{t+1}'}{2\lambda_{t+1}}\norm{\mM^{-1}\nabla f(\xtilde_{t+1})}_{\mM}^2.
\end{align*}
Next, recall that $\gamma_{t+1}a'_{t+1}=a_{t+1}$ and $\gamma_{t+1}\lambda_{t+1}=\min\inbraces{\lambda_{t+1},\lambda_{t+1}'}$, by construction. Let $\widehat{\lambda}_{t+1} \coloneqq \min\inbraces{\lambda_{t+1},\lambda_{t+1}'}$ We multiply both sides by $\gamma_{t+1}$ and conclude
\begin{align*}
    \gamma_{t+1}A_{t+1}'\inparen{f(\xtilde_{t+1})-f(\xstar)}&\le \gamma_{t+1}A_tE_t+a_{t+1}\ip{\mM^{-1}\nabla f(\xtilde_{t+1}), \vv_t-\xstar}_{\mM}\\
    &\quad -A_{t+1}'\widehat{\lambda}_{t+1}(1-\sigma^2)N_{t+1}-\frac{\gamma_{t+1}A_{t+1}'}{2\lambda_{t+1}}\norm{\mM^{-1}\nabla f(\xtilde_{t+1})}_{\mM}^2. 
\end{align*}
Now, by convexity of $f$ and from the definition of $\vx_{t+1}$, we have
\begin{align*}
    f(\vx_{t+1})-f(\xstar) \le \frac{(1-\gamma_{t+1})A_t}{A_{t+1}}\inparen{f(\vx_{t})-f(\xstar)}+\frac{\gamma_{t+1}A_{t+1}'}{A_{t+1}}\inparen{f(\xtilde_{t+1})-f(\xstar)}.
\end{align*}
Recall the definition of $E_t$, multiply both sides by $A_{t+1}$, apply our bound on $\gamma_{t+1}A_{t+1}'\inparen{f(\xtilde_{t+1})-f(\xstar)}$, and we get
\begin{align*}
    A_{t+1}E_{t+1} &\le (1-\gamma_{t+1})A_tE_t + \gamma_{t+1}A_{t+1}'\inparen{f(\xtilde_{t+1})-f(\xstar)} \\
    &\le A_tE_t + a_{t+1}\ip{\mM^{-1}\nabla f(\xtilde_{t+1}), \vv_t-\xstar}_{\mM}\\
    &\quad -A_{t+1}'\widehat{\lambda}_{t+1}(1-\sigma^2)N_{t+1}-\frac{\gamma_{t+1}A_{t+1}'}{2\lambda_{t+1}}\norm{\mM^{-1}\nabla f(\xtilde_{t+1})}_{\mM}^2
\end{align*}
After shifting terms around, we see that it remains to show
\begin{align*}
    a_{t+1}\ip{\mM^{-1}\nabla f(\xtilde_{t+1}), \vv_t-\xstar}_{\mM}-\frac{\gamma_{t+1}A_{t+1}'}{2\lambda_{t+1}}\norm{\mM^{-1}\nabla f(\xtilde_{t+1})}_{\mM}^2 \overset{?}{\le} D_t-D_{t+1}.
\end{align*}
In fact, by the choice of $a'_{t+1}$ and the definition of $A_{t+1}'$, we have
\begin{align*}
    \lambda'_{t+1}(a'_{t+1})^2=a'_{t+1}+A_{t}=A_{t+1}'.
\end{align*}
Multiply both sides by $\gamma_{t+1}^2/(2\lambda'_{t+1})$ and we get
\begin{align*}
    \frac{a_{t+1}^2}{2}=\frac{\gamma_{t+1}^2A_{t+1}'}{2\lambda'_{t+1}} = \frac{\min\inbraces{1,\frac{\lambda_{t+1}'}{\lambda_{t+1}}}\gamma_{t+1}A_{t+1}'}{2\lambda'_{t+1}} \le \frac{\gamma_{t+1}A_{t+1}'}{2\lambda_{t+1}}.
\end{align*}
We recycle an earlier computation and know that
\begin{align*}
    D_t-D_{t+1} &= a_{t+1}\ip{\mM^{-1}\nabla f(\xtilde_{t+1}), \vv_t-\xstar}_{\mM} - \frac{a_{t+1}^2}{2}\norm{\mM^{-1}\nabla f(\xtilde_{t+1})}_{\mM}^2 \\
    &\ge a_{t+1}\ip{\mM^{-1}\nabla f(\xtilde_{t+1}), \vv_t-\xstar}_{\mM}-\frac{\gamma_{t+1}A_{t+1}'}{2\lambda_{t+1}}\norm{\mM^{-1}\nabla f(\xtilde_{t+1})}_{\mM}^2,
\end{align*}
which completes the proof of the potential decrease.

The remaining statements follow as written in \cite[Proof of Proposition 1]{chjjs22}, and we conclude the proof of \Cref{lemma:ms_acceleration_potential}.
\end{proof}

Now that we have shown \Cref{lemma:ms_acceleration_potential}, we refer the reader to \cite[Appendix A]{chjjs22} for the proof of \Cref{thm:optimal_ms_acceleration}, as it now follows exactly as written there.

We also give additional bounds on the movement of the iterates in $\norm{\cdot}_{\mM}$, which is a straightforward adaptation of \cite[Lemma 31]{msbacon} to the improved framework from \cite{chjjs22}.

\begin{lemma}
\label{lemma:ms_acceleration_iterate_diameter}
For all $t\ge 1$, we have both
\begin{equation*}
    \begin{aligned}
        \norm{\vv_t-\xstar}_{\mM} &\le \sqrt{2}\norm{\vx_0-\xstar}_{\mM} \\
        \norm{\vx_t-\xstar}_{\mM} &\le \inparen{\sqrt{2}+\max_{1 \le i \le t} \frac{\lambda_i'}{\lambda_i} \cdot \sqrt{\frac{2}{1-\sigma^2}}}\norm{\vx_0-\xstar}_{\mM}
    \end{aligned}.
\end{equation*}
\end{lemma}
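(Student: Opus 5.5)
The first bound follows from the potential inequality in \Cref{lemma:ms_acceleration_potential}. That lemma shows $A_tE_t+D_t$ is nonincreasing in $t$. Since $A_0=0$ and $\vv_0=\vx_0$, we get
\begin{align*}
    \tfrac{1}{2}\norm{\vv_t-\xstar}_{\mM}^2 = D_t \le A_tE_t + D_t \le D_0 = \tfrac{1}{2}\norm{\vx_0-\xstar}_{\mM}^2,
\end{align*}
where the middle inequality uses $A_t\ge0$ and $E_t\ge0$ (as $\xstar$ is a minimizer). Taking square roots gives $\norm{\vv_t-\xstar}_{\mM}\le\sqrt{2}\norm{\vx_0-\xstar}_{\mM}$; in fact the constant $1$ suffices.

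For the second bound, I would argue by induction on $t$, writing $\vx_{t+1}$ as a convex combination. Since $A_{t+1}=(1-\gamma_{t+1})A_t+\gamma_{t+1}A'_{t+1}$, the update for $\vx_{t+1}$ is a convex combination of $\vx_t$ and $\xtilde_{t+1}$. Hence
\begin{align*}
    \norm{\vx_{t+1}-\xstar}_{\mM}\le\max\inbraces{\norm{\vx_t-\xstar}_{\mM},\norm{\xtilde_{t+1}-\xstar}_{\mM}},
\end{align*}
so it suffices to bound each $\norm{\xtilde_i-\xstar}_{\mM}$ by the claimed quantity; the base case $\vx_1$ is handled the same way. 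The triangle inequality gives $\norm{\xtilde_i-\xstar}_{\mM}\le\norm{\vq_{i-1}-\xstar}_{\mM}+\norm{\xtilde_i-\vq_{i-1}}_{\mM}$. The query $\vq_{i-1}$ is a convex combination of $\vx_{i-1}$ and $\vv_{i-1}$, so its distance to $\xstar$ is at most the larger of the two distances. The $\vv$ part is at most $\sqrt{2}\norm{\vx_0-\xstar}_{\mM}$ by the first bound. Feeding the $\vx_{i-1}$ part into the induction naively would compound, so the first term needs a sharper treatment, discussed at the end.

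The key step is bounding the movement $\norm{\xtilde_i-\vq_{i-1}}_{\mM}$ using the second conclusion of \Cref{lemma:ms_acceleration_potential}. Its per-step form gives
\begin{align*}
    (1-\sigma^2)A_i'\min\inbraces{\lambda_i,\lambda_i'}N_i\le D_0 .
\end{align*}
By construction $\lambda_i'(a_i')^2=A_i'$, so $A_i'\lambda_i'\ge 1$ whenever $A_i'\ge a_i'$. Also $\min\inbraces{\lambda_i,\lambda_i'}\ge \lambda_i'\cdot\min\inbraces{1,\lambda_i/\lambda_i'}$. Combining these, $N_i\lesssim D_0\max\inbraces{1,\lambda_i'/\lambda_i}/(1-\sigma^2)$, i.e.
\begin{align*}
    \norm{\xtilde_i-\vq_{i-1}}_{\mM}\le \max\inbraces{1,\tfrac{\lambda_i'}{\lambda_i}}\sqrt{\tfrac{2}{1-\sigma^2}}\norm{\vx_0-\xstar}_{\mM},
\end{align*}
up to replacing a square root of the ratio by the ratio itself, which is harmless when the ratio is at least $1$.

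The main obstacle is the first term, $\norm{\vq_{i-1}-\xstar}_{\mM}$, since running the induction through $\vx_{i-1}$ naively compounds the movement bound at every step. To avoid this, I would follow \cite[Lemma 31]{msbacon} and bound the sequence with weights $A_t$ directly. Writing $\vx_t$ as an $A_t$-weighted average of the points $\vv_{i-1}$ plus the displacements $\xtilde_i-\vq_{i-1}$ should yield $\vx_t-\xstar=\sum_i\theta_i(\vv_{i-1}-\xstar)+\sum_i\theta_i'(\xtilde_i-\vq_{i-1})$ with nonnegative weights $\sum_i\theta_i\le1$ and $\sum_i\theta_i'\le1$. Verifying this unrolled representation is the step I expect to take the most care. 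Once it holds, the triangle inequality together with the two bounds above gives the stated estimate $\sqrt{2}+\max_i(\lambda_i'/\lambda_i)\sqrt{2/(1-\sigma^2)}$.
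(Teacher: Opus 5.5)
Your first bound is correct (constant $1$ indeed suffices). The unrolled representation you aim for is also the right object: it is exactly what the paper gets by unrolling its recursion with $\delta_{t+1}=A_t/A_{t+1}$.

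The gap is the claim $\sum_i\theta_i'\le 1$, which is false. Multiply the update for $\vx_{t+1}$ by $A_{t+1}$ and substitute the definition of $\vq_t$. This gives $A_{t+1}\vx_{t+1}=A_t\vx_t+a_{t+1}\vv_t+\gamma_{t+1}A_{t+1}'(\xtilde_{t+1}-\vq_t)$. Using $A_0=0$, you get $\theta_i=a_i/A_t$ but $\theta_i'=\gamma_iA_i'/A_t$, and
\begin{align*}
    \sum_{i=1}^t\theta_i'=1+\frac{1}{A_t}\sum_{i=1}^t\gamma_iA_{i-1}.
\end{align*}
This is of order $t$ when, e.g., all $\gamma_i=1$ and $A_t$ grows polynomially. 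Each displacement enters $\vx_i$ with weight $\gamma_iA_i'/A_i\le 1$ and is afterwards only damped by $A_i/A_t$. So a uniform bound on $\norm{\xtilde_i-\vq_{i-1}}_{\mM}$, summed against these weights, produces a bound that grows with $t$.

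The missing idea: using only $A_i'\lambda_i'\ge1$ discards exactly the information that saves the argument. In fact $A_i'\lambda_i'=(A_i'/a_i')^2$ grows, and this growth must be paired with the weight. Set $\widehat{\lambda}_i\coloneqq\min\{\lambda_i,\lambda_i'\}=\gamma_i\lambda_i$ and use $\lambda_i'(a_i')^2=A_i'$. One checks that $\gamma_iA_i'/\lambda_i=a_i^2\max\{1,\lambda_i'/\lambda_i\}$. The per-step potential bound $(1-\sigma^2)A_i'\widehat{\lambda}_iN_i\le D_0$ then gives
\begin{align*}
    \gamma_iA_i'\norm{\xtilde_i-\vq_{i-1}}_{\mM}=\sqrt{\tfrac{\gamma_iA_i'}{\lambda_i}}\cdot\sqrt{A_i'\widehat{\lambda}_i}\,\norm{\xtilde_i-\vq_{i-1}}_{\mM}\le a_i\sqrt{\max\{1,\tfrac{\lambda_i'}{\lambda_i}\}}\,\frac{\norm{\vx_0-\xstar}_{\mM}}{\sqrt{1-\sigma^2}}.
\end{align*}
Dividing by $A_t=\sum_i a_i$ and summing then yields the stated estimate. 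The paper performs the same pairing, but via Cauchy--Schwarz against the summed potential $(1-\sigma^2)\sum_iA_i'\widehat{\lambda}_iN_i\le D_0$ rather than term by term.
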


In the statement of \Cref{lemma:ms_acceleration_iterate_diameter}, the cost of overshooting the guess $\lambda_i'$ becomes evident -- without an additional strong convexity guarantee, it is challenging to ensure that each iterate remains in a small ball around $\xstar$. This is the main reason we are unable to apply the framework of \cite{chjjs22} to the $p=\infty$ case.

\begin{proof}[Proof of \Cref{lemma:ms_acceleration_iterate_diameter}]
Using the same notation as in \Cref{lemma:ms_acceleration_potential} and in that proof, we define
\begin{align*}
    P_t &\coloneqq A_tE_t+D_t \\
    \widehat{\lambda}_t &\coloneqq \min\inbraces{\lambda_t,\lambda_t'}.
\end{align*}
By induction on the conclusion of \Cref{lemma:ms_acceleration_potential}, for $t\ge 1$ we have
\begin{align*}
    \frac{1}{2}\norm{\vv_t-\xstar}_{\mM}^2 = D_t \le P_t + (1-\sigma^2)\sum_{k=1}^t A_k'\widehat{\lambda}_kN_k \le P_0 = \norm{\vx_0-\xstar}_{\mM}^2.
\end{align*}
Thus,
\begin{align*}
    \norm{\vv_t-\xstar}_{\mM} \le \sqrt{2}\norm{\vx_0-\xstar}_{\mM}.
\end{align*}
For the second conclusion, we introduce the following notation.
\begin{align*}
    \alpha_{t+1} &\coloneqq \frac{(1-\gamma_{t+1})A_t}{A_{t+1}} \\
    \beta_{t+1} &\coloneqq \frac{A_t}{A_{t+1}'} \\
    \delta_{t+1} &\coloneqq 1-(1-\alpha_{t+1})(1-\beta_{t+1}) = 1 - \frac{\gamma_{t+1}A'_{t+1}}{A_{t+1}} \cdot \frac{a'_{t+1}}{A'_{t+1}} = \frac{A_t}{A_{t+1}}
\end{align*}
We also establish for any $i$,
\begin{align*}
    \frac{\gamma_i A_i'}{\lambda_i a_i^2} = \frac{A_i'}{\lambda_i \gamma_i (a_i')^2} = \frac{1}{\gamma_i} \cdot \frac{\lambda_i'}{\lambda_i} = \max\inbraces{\frac{\lambda_i'}{\lambda_i}, 1},
\end{align*}
which implies
\begin{align*}
    \frac{\gamma_i A_i'}{\lambda_i} = a_i^2\max\inbraces{\frac{\lambda_i'}{\lambda_i}, 1}.
\end{align*}
Notice that
\begin{align*}
    \norm{\vx_{t+1}-\xstar}_{\mM} &\le \alpha_{t+1}\norm{\vx_t-\xstar}_{\mM} + (1-\alpha_{t+1})\norm{\xtilde_{t+1}-\xstar}_{\mM} \\
    &\le \alpha_{t+1}\norm{\vx_t-\xstar}_{\mM} + (1-\alpha_{t+1})\inparen{\norm{\vq_t-\xstar}_{\mM}+\norm{\xtilde_{t+1}-\vq_t}_{\mM}} \\
    &\le \alpha_{t+1}\norm{\vx_t-\xstar}_{\mM} \\
    &\quad + (1-\alpha_{t+1})\inparen{\beta_{t+1}\norm{\vx_t-\xstar}_{\mM}+(1-\beta_{t+1})\norm{\vv_t-\xstar}_{\mM}+\norm{\xtilde_{t+1}-\vq_t}_{\mM}} \\
    &= \inparen{\beta_{t+1}+\alpha_{t+1}-\alpha_{t+1}\beta_{t+1}}\norm{\vx_t-\xstar}_{\mM} \\
    &\quad + \inparen{1-\alpha_{t+1}}\inparen{1-\beta_{t+1}}\norm{\vv_t-\xstar}_{\mM} + \inparen{1-\alpha_{t+1}}\norm{\xtilde_{t+1}-\vq_t}_{\mM} \\
    &= \delta_{t+1}\norm{\vx_t-\xstar}_{\mM} + (1-\delta_{t+1})\norm{\vv_t-\xstar}_{\mM} + \inparen{1-\alpha_{t+1}}\norm{\xtilde_{t+1}-\vq_t}_{\mM} \\
    &\le \prod_{i=0}^t \delta_{i+1}\norm{\vx_0-\xstar}_{\mM} + \inparen{1-\prod_{i=0}^t \delta_{i+1}}\norm{\vv_t-\xstar}_{\mM} \\
    &\quad + \sum_{i=1}^{t+1} \prod_{j=i+1}^{t+1}\delta_{j}(1-\alpha_{i})\norm{\xtilde_{i}-\vq_{i-1}}_{\mM} \\
    &\le \sqrt{2}\norm{\vx_0-\xstar}_{\mM} + \sum_{i=1}^{t+1} \prod_{j=i+1}^{t+1}\delta_{j}(1-\alpha_{i})\norm{\xtilde_{i}-\vq_{i-1}}_{\mM} \\
    &= \sqrt{2}\norm{\vx_0-\xstar}_{\mM} + \sum_{i=1}^{t+1} \frac{A_i}{A_{t+1}}(1-\alpha_{i})\norm{\xtilde_{i}-\vq_{i-1}}_{\mM} \\
    &= \sqrt{2}\norm{\vx_0-\xstar}_{\mM} + \sum_{i=1}^{t+1} \frac{A_i}{A_{t+1}}\cdot\frac{\gamma_iA_i'}{A_i}\norm{\xtilde_{i}-\vq_{i-1}}_{\mM} \\
    &= \sqrt{2}\norm{\vx_0-\xstar}_{\mM} + \frac{1}{A_{t+1}}\sum_{i=1}^{t+1} \sqrt{\frac{\gamma_iA_i'}{\lambda_i}} \cdot \sqrt{\lambda_i\gamma_iA_i'}\norm{\xtilde_{i}-\vq_{i-1}}_{\mM} \\
    &\le \sqrt{2}\norm{\vx_0-\xstar}_{\mM} + \frac{\inparen{\sum_{i=1}^{t+1} \frac{\gamma_iA_i'}{\lambda_i}}^{1/2}}{A_{t+1}} \cdot \inparen{\sum_{i=1}^{t+1} \lambda_i\gamma_iA_i'\norm{\xtilde_{i}-\vq_{i-1}}_{\mM}^2}^{1/2} \\
    &\le \sqrt{2}\norm{\vx_0-\xstar}_{\mM} + \frac{\inparen{\sum_{i=1}^{t+1} \frac{\gamma_iA_i'}{\lambda_i}}^{1/2}}{A_{t+1}} \cdot \sqrt{\frac{2}{1-\sigma^2}}\norm{\vx_0-\xstar}_{\mM} \\
    &\le \sqrt{2}\norm{\vx_0-\xstar}_{\mM} + \frac{\sum_{i=1}^{t+1} a_i\max\inbraces{1,\frac{\lambda_i'}{\lambda_i}}}{A_{t+1}} \cdot \sqrt{\frac{2}{1-\sigma^2}}\norm{\vx_0-\xstar}_{\mM} \\
    &\le \sqrt{2}\norm{\vx_0-\xstar}_{\mM} + \max_{1 \le i \le t+1} \frac{\lambda_i'}{\lambda_i} \cdot \sqrt{\frac{2}{1-\sigma^2}}\norm{\vx_0-\xstar}_{\mM} \\
    &= \inparen{\sqrt{2}+\max_{1 \le i \le t+1} \frac{\lambda_i'}{\lambda_i} \cdot \sqrt{\frac{2}{1-\sigma^2}}}\norm{\vx_0-\xstar}_{\mM},
\end{align*}
completing the proof of \Cref{lemma:ms_acceleration_iterate_diameter}.
\end{proof}

%% file: interpolation.tex
\section{Interpolating Between Average and Robust Losses}
\label{sec:gp_interpolating_proof}

In this section, we prove \Cref{mainthm:gp_regression_iteration_complexity}. As before, our proof follows the outline in \Cref{sec:gp_reg_overview}. The main technical challenges are to establish a form of strong convexity for our objective $f$ and then to build a solver for the proximal problem \eqref{eq:gp_prox_intro_reg}. 

The rest of this section is organized as follows. In \Cref{sec:gp_objective_calculus}, we derive calculus facts about our objective $f$, including bounds on its Hessian and the promised strong convexity (particularly \Cref{lemma:strong_convexity_gp} and the more general result it builds on, \Cref{lemma:strong_convexity_component}). In \Cref{sec:gp_iterate_facts}, we prove some facts about the iterates of \Cref{alg:optimal_ms_acceleration} when applied to our setting. In \Cref{sec:gp_prox_solver}, we more precisely define and analyze our solver for proximal sub-problems. This section is fairly technical and we give a more detailed outline there. Finally, in \Cref{sec:gp_interpolating_alg}, we assemble all these components and analyze \Cref{alg:gp_regression_alg}, thereby proving \Cref{mainthm:gp_regression_iteration_complexity}.

Throughout this analysis, we rescale the problem so that $f(\xstar) = 1$. It is now sufficient to solve for an $\eps$-additive error solution.

\subsection{Calculus for the objective}
\label{sec:gp_objective_calculus}

In this section, we work out some calculus facts related to our objective $\gnorm{\mA\vx-\vb}{p}^p$. Throughout this discussion, let $f(\vx) \coloneqq \gnorm{\mA\vx-\vb}{p}^p$.

\begin{lemma}
\label{lemma:gp_regression_hessian_upperbound}
For any $\vz\in\R^d$, we have
\begin{align*}
    p\sum_{i=1}^m \norm{\mA_{S_i}\vx-\vb_{S_i}}_2^{p-2}\norm{\mA_{S_i}\vz}_2^2 \le \vz^{\top}\inparen{\nabla^2 f(\vx)}\vz \le p(p-1)\sum_{i=1}^m\norm{\mA_{S_i}\vx - \vb_{S_i}}_2^{p-2}\norm{\mA_{S_i}\vz}_2^2.
\end{align*}
\end{lemma}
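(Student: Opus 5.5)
The plan is to reduce to a single block and then sum over $i$. Since $f(\vx)=\sum_{i=1}^m g(\mA_{S_i}\vx-\vb_{S_i})$ with $g(\vy)\coloneqq\norm{\vy}_2^p$, the chain rule gives
\begin{align*}
\vz^{\top}\nabla^2 f(\vx)\vz=\sum_{i=1}^m (\mA_{S_i}\vz)^{\top}\nabla^2 g(\mA_{S_i}\vx-\vb_{S_i})(\mA_{S_i}\vz).
\end{align*}
It therefore suffices to show that for every $\vy,\vu\in\R^k$ (with $k=n_i$),
\begin{align*}
p\norm{\vy}_2^{p-2}\norm{\vu}_2^2\le \vu^{\top}\nabla^2 g(\vy)\vu\le p(p-1)\norm{\vy}_2^{p-2}\norm{\vu}_2^2 .
\end{align*}
Applying this with $\vy=\mA_{S_i}\vx-\vb_{S_i}$ and $\vu=\mA_{S_i}\vz$ and summing over $i$ gives the statement.

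To get the per-block bound, I will compute the Hessian directly. Writing $g(\vy)=(\norm{\vy}_2^2)^{p/2}$, we have $\nabla g(\vy)=p\norm{\vy}_2^{p-2}\vy$, and hence for $\vy\neq 0$
\begin{align*}
\nabla^2 g(\vy)=p\norm{\vy}_2^{p-2}\mI+p(p-2)\norm{\vy}_2^{p-4}\vy\vy^{\top}.
\end{align*}
This gives $\vu^{\top}\nabla^2 g(\vy)\vu=p\norm{\vy}_2^{p-2}\norm{\vu}_2^2+p(p-2)\norm{\vy}_2^{p-4}\ip{\vy,\vu}^2$. Because $p\ge 2$, the second term is nonnegative, which yields the lower bound. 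For the upper bound, Cauchy--Schwarz gives $\ip{\vy,\vu}^2\le\norm{\vy}_2^2\norm{\vu}_2^2$, so the second term is at most $p(p-2)\norm{\vy}_2^{p-2}\norm{\vu}_2^2$. Adding the two terms produces the coefficient $p+p(p-2)=p(p-1)$.

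The only delicate point is the case $\vy=0$, where $\norm{\vy}_2^{p-4}$ can blow up if $p<4$. Here I will argue as follows:
\begin{itemize}
\item If $p=2$, the Hessian is exactly $2\mI$, and both bounds hold with equality since $\norm{\vy}_2^{0}=1$.
\item If $p>2$, the expression $\norm{\vy}_2^{p-4}\vy\vy^{\top}$ has norm at most $\norm{\vy}_2^{p-2}\to 0$, so $g$ is $C^2$ with $\nabla^2 g(0)=0$. Both sides of the claimed inequality are then $0$ at $\vy=0$ (interpreting $0^{p-2}=0$).
\end{itemize}
I expect no real obstacle beyond this boundary case; the rest is routine calculus.
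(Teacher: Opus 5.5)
Your proposal is correct and follows the paper's proof: compute the Hessian blockwise, drop the nonnegative rank-one term (using $p \ge 2$) for the lower bound, and apply Cauchy--Schwarz to that term for the upper bound. Your handling of the zero-residual case $\mA_{S_i}\vx=\vb_{S_i}$, where you note that the Hessian is $C^2$ there with value $0$ for $p>2$ and $2\mI$ for $p=2$, is a small rigor addition that the paper leaves implicit.
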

\begin{proof}[Proof of \Cref{lemma:gp_regression_hessian_upperbound}]
Let us first calculate the derivative and hessian for $f(\cdot)$ using the chain rule and usual matrix differentiation rules:
\begin{align}
    f(\vx) &= \sum_{i=1}^m\norm{\mA_{S_i}\vx - \vb_{S_i}}_2^p\enspace,\nonumber\\
    \nabla f(\vx) &= p\sum_{i=1}^m \norm{\mA_{S_i}\vx - \vb_{S_i}}_2^{p-2}\mA_{S_i}^{\top}(\mA_{S_i}\vx - \vb_{S_i})\enspace,\label{eq:f_grad}\\
    \nabla^2 f(\vx) &= p\sum_{i=1}^m\norm{\mA_{S_i}\vx - \vb_{S_i}}_2^{p-2}\mA_{S_i}^{\top}\mA_{S_i} \nonumber\\
    &\quad + p(p-2)\sum_{i=1}^m\norm{\mA_{S_i}\vx - \vb_{S_i}}_2^{p-4}\left(\mA_{S_i}^{\top}(\mA_{S_i}\vx - \vb_{S_i})(\mA_{S_i}\vx - \vb_{S_i})^{\top}\mA_{S_i}\right)\enspace.\label{eq:f_hess}
\end{align}
Using this formula, we take the quadratic form with respect to a vector $\vz$. By Cauchy-Schwarz, notice that
\begin{align*}
    &\quad \vz^{\top}\norm{\mA_{S_i}\vx - \vb_{S_i}}_2^{p-4}\left(\mA_{S_i}^{\top}(\mA_{S_i}\vx - \vb_{S_i})(\mA_{S_i}\vx - \vb_{S_i})^{\top}\mA_{S_i}\right)\vz \\
    &= \norm{\mA_{S_i}\vx - \vb_{S_i}}_2^{p-4}\ip{\mA_{S_i}\vz, \mA_{S_i}\vx-\vb_{S_i}}^2 \le \norm{\mA_{S_i}\vx-\vb_{S_i}}_2^{p-2}\norm{\mA_{S_i}\vz}_2^2.
\end{align*}
With that, we have
\begin{align}
    \vz^{\top}\inparen{\nabla^2 f(\vx)}\vz &\le p\sum_{i=1}^m \norm{\mA_{S_i}\vx-\vb_{S_i}}^{p-2}\norm{\mA_{S_i}\vz}_2^2 + (p-2)\norm{\mA_{S_i}\vx-\vb_{S_i}}^{p-2}\norm{\mA_{S_i}\vz}_2^2 \enspace,\nonumber\\
    &= p(p-1)\sum_{i=1}^m\norm{\mA_{S_i}\vx - \vb_{S_i}}_2^{p-2}\norm{\mA_{S_i}\vz}_2^2\enspace.\label{eq:f_quad_form}
\end{align}
For the lower bound, we use our calculation for $\nabla^2 f(\vx)$ to write
\begin{align*}
    \vz^{\top}\inparen{\nabla^2 f(\vx)}\vz \ge p\sum_{i=1}^m \norm{\mA_{S_i}\vx-\vb_{S_i}}_2^{p-2}\norm{\mA_{S_i}\vz}_2^2,
\end{align*}
completing the proof of \Cref{lemma:gp_regression_hessian_upperbound}.
\end{proof}

\subsubsection{Strong Convexity of the Objective}

The main pair of results of this section are \Cref{lemma:strong_convexity_gp} and \Cref{lemma:strong_convexity_component}. We can think of \Cref{lemma:strong_convexity_gp} as a form of strong convexity for our objective.

\begin{lemma}[Strong convexity of $f$]
\label{lemma:strong_convexity_gp}
Let $f(\vx) \coloneqq \gnorm{\mA\vx-\vb}{p}^p$. For all $\vd\in\R^d$, we have
\begin{align*}
    f(\vx+\vd) \ge f(\vx) + \ip{\nabla f(\vx),\vd} + \frac{4}{2^p}\gnorm{\mA\vd}{p}^p,
\end{align*}
and therefore
\begin{align*}
    \norm{\vx-\xstar}_{\mM} \le 2^{3/2 - 3/p}d^{1/2-1/p}(f(\vx)-f(\xstar))^{1/p}\enspace.
\end{align*}
\end{lemma}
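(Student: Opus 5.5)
The plan is to get the first inequality blockwise from \Cref{lemma:strong_convexity_component} and then convert it into the distance bound using first-order optimality of $\xstar$ together with the block Lewis weight sandwich from \Cref{thm:blw_to_l2}.

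\textbf{First inequality.} For each group $i$, apply \Cref{lemma:strong_convexity_component} with $\vv = \mA_{S_i}\vx-\vb_{S_i}$ and $\triangle = \mA_{S_i}\vd$. This gives
\begin{align*}
\norm{\mA_{S_i}(\vx+\vd)-\vb_{S_i}}_2^p \ge \norm{\mA_{S_i}\vx-\vb_{S_i}}_2^p + p\norm{\mA_{S_i}\vx-\vb_{S_i}}_2^{p-2}\ip{\mA_{S_i}\vx-\vb_{S_i},\mA_{S_i}\vd} + \frac{4}{2^p}\norm{\mA_{S_i}\vd}_2^p.
\end{align*}
Summing over $i\in[m]$, the middle terms add up to exactly $\ip{\nabla f(\vx),\vd}$ by the gradient formula \eqref{eq:f_grad}. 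The last terms add up to $\frac{4}{2^p}\gnorm{\mA\vd}{p}^p$, which proves the first claim.

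\textbf{Distance bound.} Apply the first inequality at the base point $\xstar$ with $\vd = \vx-\xstar$. Since $\nabla f(\xstar)=0$ (unconstrained convex minimizer),
\begin{align*}
f(\vx)-f(\xstar)\ge \frac{4}{2^p}\gnorm{\mA(\vx-\xstar)}{p}^p.
\end{align*}
Here $\mM = \mA^{\top}\mW^{1-\frac{2}{p}}\mA$ is built from a block Lewis overestimate $\vw$ with $\norm{\vw}_1\le 2d$ (from \Cref{thm:gp_regression_blw_alg}, with $\rank{\mA}=d$). The lower bound of \Cref{thm:blw_to_l2} then gives
\begin{align*}
\norm{\vx-\xstar}_{\mM}=\norm{\mW^{\frac12-\frac1p}\mA(\vx-\xstar)}_2 \le (2d)^{\frac12-\frac1p}\gnorm{\mA(\vx-\xstar)}{p}.
\end{align*}

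\textbf{Combining.} Putting the two displays together yields
\begin{align*}
\norm{\vx-\xstar}_{\mM}\le (2d)^{\frac12-\frac1p}\,2^{1-\frac2p}\,(f(\vx)-f(\xstar))^{1/p} = 2^{\frac32-\frac3p}d^{\frac12-\frac1p}(f(\vx)-f(\xstar))^{1/p},
\end{align*}
because $(2^p/4)^{1/p}=2^{1-2/p}$. This matches the claimed constant exactly.

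\textbf{Main obstacle.} The only delicate point is bookkeeping: $\mM$ must be the block-Lewis geometry with total weight at most $2d$. One should also check that the weights computed for $[\mA\mid\vb]$, with $\norm{\vw}_1\le 2(d+1)$, still give the stated constant with $d$ rather than $d+1$; otherwise a harmless $d+1$ would appear. Beyond that, the argument is a direct reduction to \Cref{lemma:strong_convexity_component}, which is assumed here.
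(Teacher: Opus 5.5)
Your proposal is correct and follows the paper's proof essentially step for step: apply \Cref{lemma:strong_convexity_component} blockwise and sum using the gradient formula, then evaluate at $\xstar$ with $\nabla f(\xstar)=0$ and convert via the block Lewis weight lower bound with distortion $(2d)^{\frac12-\frac1p}$. The $d$ versus $d+1$ discrepancy you flag (weights computed for $[\mA\mid\vb]$) is glossed over in the paper too, which simply writes $(2d)^{\frac12-\frac1p}$; it affects only constants.
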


\lemmastrongconvexitycomponent

To motivate \Cref{lemma:strong_convexity_component}, let us see how \Cref{lemma:strong_convexity_component} implies \Cref{lemma:strong_convexity_gp}.

\begin{proof}[Proof of \Cref{lemma:strong_convexity_gp}]
Note that
\begin{align*}
    \nabla f(\vx) = \sum_{i=1}^m p\norm{\mA_{S_i}\vx-\vb_{S_i}}_2^{p-2}\mA_{S_i}^{\top}(\mA_{S_i}\vx-\vb_{S_i})\enspace.
\end{align*}
This implies
\begin{align*}
    \sum_{i=1}^m p\norm{\mA_{S_i}\vx-\vb_{S_i}}_2^{p-2}\ip{\mA_{S_i}\vx-\vb_{S_i},\mA_{S_i}\vd} = \ip{\nabla f(\vx),\vd}\enspace.
\end{align*}
Combining this and applying \Cref{lemma:strong_convexity_component} (which is a strong convexity lemma for $\|\cdot\|_2^p$ that we prove subsequently in this section), we get
\begin{align*}
    f(\vx+\vd) &= \gnorm{\mA(\vx+\vd)-\vb}{p}^p = \gnorm{\mA\vd+(\mA\vx-\vb)}{p}^p\enspace,\\
    &= \sum_{i=1}^m \norm{\mA_{S_i}\vd+(\mA_{S_i}\vx-\vb_{S_i})}_2^p\enspace,\\
    &\ge^{\text{(\Cref{lemma:strong_convexity_component})}} \sum_{i=1}^m \norm{\mA_{S_i}\vx-\vb_{S_i}}_2^p + p\|\mA_{S_i}\vx-\vb_{S_i}\|_2^{p-2}\ip{(\mA_{S_i}\vx-\vb_{S_i}),\mA_{S_i}\vd}+ \frac{4}{2^p}\norm{\mA_{S_i}\vd}_2^p\enspace,\\
    &= \sum_{i=1}^m \norm{\mA_{S_i}\vx-\vb_{S_i}}_2^p + \ip{p\|\mA_{S_i}\vx-\vb_{S_i}\|_2^{p-2}\mA_{S_i}^{\top}(\mA_{S_i}\vx-\vb_{S_i}),\vd}+ \frac{4}{2^p}\norm{\mA_{S_i}\vd}_2^p\enspace,\\
    &=^{\text{\eqref{eq:f_grad}}} \gnorm{\mA\vx-\vb}{p}^p + \ip{\nabla f(\vx),\vd} + \frac{4}{2^p}\gnorm{\mA\vd}{p}^p = f(\vx) + \ip{\nabla f(\vx),\vd} + \frac{4}{2^p}\gnorm{\mA\vd}{p}^p\enspace.
\end{align*}
We now take care of the second statement. Observe that at optimality, we have $\nabla f(\xstar) = 0$. Plugging this in (replace $\vx$ by $\vx^\star$ and $\vd$ by $\vx-\vx^\star$ above), rearranging, and taking $p$th roots gives
\begin{align*}
    \gnorm{\mA(\vx-\xstar)}{p} \le \left(\frac{4}{2^p}\right)^{-1/p}(f(\vx)-f(\xstar))^{1/p} = \frac{2}{4^{1/p}}(f(\vx)-f(\xstar))^{1/p}\enspace. 
\end{align*}
Next, recall that by \Cref{thm:gp_regression_blw_intro},
\begin{align*}
    \norm{\vx-\xstar}_{\mM} = \norm{\mW^{1/2-1/p}\mA(\vx-\xstar)}_2 \le (2d)^{1/2-1/p}\gnorm{\mA(\vx-\xstar)}{p}\enspace.
\end{align*}
Stitching the inequalities together completes the proof of \Cref{lemma:strong_convexity_gp}.
\end{proof}

In the rest of this subsection, we prove \Cref{lemma:strong_convexity_component}. We begin with a few numerical inequalities. 

\begin{lemma}
\label{lemma:gp_sc_galpha}
For $\alpha \le -1/2$ and $p\ge 2$, $g(\alpha) := \frac{1+p\alpha}{(-(2\alpha+1))^{p/2}}$ is nonincreasing in $\alpha$.
\end{lemma}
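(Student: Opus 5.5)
The approach is to substitute away the awkward denominator and differentiate directly. At $\alpha=-1/2$ the denominator vanishes, so I read the claim on the open range $\alpha<-1/2$; the endpoint is only a limit. On this range set $u \coloneqq -(2\alpha+1) > 0$. The map $\alpha \mapsto u$ is strictly decreasing, so $g$ being nonincreasing in $\alpha$ is equivalent to the reparametrized function being nondecreasing in $u$ on $(0,\infty)$.

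Since $\alpha = -(1+u)/2$, we have $1+p\alpha = \frac{2-p-pu}{2}$. Hence
\begin{align*}
    g = h(u) \coloneqq \frac{2-p}{2}\,u^{-p/2} - \frac{p}{2}\,u^{1-p/2}\enspace,
\end{align*}
which is a sum of two power functions of $u$.

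Differentiating term by term gives
\begin{align*}
    h'(u) = \frac{p(p-2)}{4}\,u^{-p/2-1} + \frac{p(p-2)}{4}\,u^{-p/2} = \frac{p(p-2)}{4}\,u^{-p/2-1}(1+u)\enspace.
\end{align*}
This is nonnegative for every $u>0$ whenever $p \ge 2$, and it is identically zero when $p=2$, in which case $g$ is constant. So $h$ is nondecreasing in $u$, and therefore $g$ is nonincreasing in $\alpha$.

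There is no real obstacle. The only care needed is the sign bookkeeping in the reparametrization (decreasing $\alpha$ corresponds to increasing $u$) and noting that the fractional power $(-(2\alpha+1))^{p/2}$ is well defined and positive exactly on $\alpha<-1/2$. As a cross-check, one can also differentiate $g$ in $\alpha$ directly with the quotient rule. The numerator of $g'(\alpha)$, after factoring out $(-(2\alpha+1))^{p/2-1}$, should reduce to $-\frac{p(p-2)}{2}\alpha$ times a positive factor. Since $\alpha<0$, this would be nonnegative, which matches the computation above.
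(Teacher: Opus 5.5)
Your argument is correct and is essentially the paper's: the paper applies the quotient rule directly in $\alpha$ to get $g'(\alpha)=p(p-2)\alpha\,(-(2\alpha+1))^{-p/2-1}\le 0$, and your substitution $u=-(2\alpha+1)$ is the same derivative-sign computation written as a sum of power functions. One slip in your closing cross-check: $-\frac{p(p-2)}{2}\alpha\,u^{-p/2-1}$ is $h'(u)$, not $g'(\alpha)$. Since $g'(\alpha)=-2h'(u)$, the factored numerator of $g'(\alpha)$ is $p(p-2)\alpha\le 0$, and a nonnegative $g'(\alpha)$ would contradict the claim rather than confirm it.
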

\begin{proof}[Proof of \Cref{lemma:gp_sc_galpha}]
We first take the derivative of $g$ with respect to $\alpha$,
\begin{align*}
    g'(\alpha) &= \frac{p(-(2\alpha+1))^{p/2} - \inparen{(-2)\frac{p}{2}\inparen{-(2\alpha+1)}^{p/2-1}}(1+p\alpha)}{(-(2\alpha+1))^p}\enspace,\\
    &= \frac{p(-(2\alpha+1)^{p/2}) + p\inparen{-(2\alpha+1)}^{p/2-1}(1+p\alpha)}{(-(2\alpha+1))^p}\enspace,\\
    &= p\cdot\frac{(-(2\alpha+1)) + (1+p\alpha)}{(-(2\alpha+1))^{p/2+1}}\enspace,\\
    &= p\cdot\frac{(p-2)\alpha}{(-(2\alpha+1))^{p/2+1}} \le 0\enspace,
\end{align*}
where in the final inequality we used that $p\geq 2$ and $\alpha\leq -1/2$. This completes the proof of the lemma.
\end{proof}
We also need the following lemma, which is similar to a result due to \citeauthor{akps19} \cite[Lemma 4.5]{akps19}. It amounts to proving \Cref{lemma:strong_convexity_component} when the dimension $k=1$.
\begin{lemma}[Case A. of \Cref{lemma:gp_sc_alphaall}]\label{lemma:adil_sc_lemma}
For any $\alpha\in\R$ and $p \ge 2$,
\begin{align*}
    \abs{1+\alpha}^p \ge 1+p\alpha+\frac{4}{2^p}\abs{\alpha}^p\enspace.
\end{align*}
\end{lemma}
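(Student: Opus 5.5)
Set $\phi(\alpha) \coloneqq \abs{1+\alpha}^p - 1 - p\alpha - \frac{4}{2^p}\abs{\alpha}^p$. We need $\phi \ge 0$ on all of $\R$, and we split into the ranges $\alpha \ge 0$, $-1/2 \le \alpha \le 0$, and $\alpha \le -1/2$. Throughout, the coefficient $4/2^p$ is at most $1$ for $p\ge 2$.

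\textbf{Case $\alpha \ge 0$.} Write $(1+\alpha)^p - 1 - p\alpha = \int_0^{\alpha} p\bigl((1+s)^{p-1} - 1\bigr)\,ds$. Since $(1+s)^{p-1} - 1 \ge s^{p-1}$ for $s\ge 0$ and $p \ge 2$ (superadditivity of $u\mapsto u^{p-1}$ on $\R_{\ge 0}$), this integral is at least $\alpha^p \ge \frac{4}{2^p}\alpha^p$.

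\textbf{Case $-1/2 \le \alpha \le 0$.} Convexity of $u\mapsto\abs{u}^p$ gives $\abs{1+\alpha}^p \ge 1+p\alpha$, but we need the extra $\frac{4}{2^p}\abs{\alpha}^p$. Taylor's theorem with remainder gives
\begin{align*}
(1+\alpha)^p - 1 - p\alpha = \tfrac{p(p-1)}{2}(1+\xi)^{p-2}\alpha^2
\end{align*}
for some $\xi$ between $\alpha$ and $0$, so $1+\xi \ge 1/2$. The remainder is therefore at least $\tfrac{p(p-1)}{2}2^{2-p}\alpha^2$. Since $\abs{\alpha}\le 1/2$ we have $\abs{\alpha}^p \le 2^{2-p}\alpha^2$, hence $\frac{4}{2^p}\abs{\alpha}^p \le 2^{4-2p}\alpha^2$. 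It then suffices that $\tfrac{p(p-1)}{2}2^{2-p} \ge 2^{4-2p}$, i.e.\ $p(p-1)2^{p-3}\ge 1$, which holds for $p\ge 2$.

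\textbf{Case $\alpha \le -1/2$.} This is the delicate range, since $1+\alpha$ may be negative and the target is sharp at $\alpha=-2$: there $\abs{1+\alpha}^p = 1$, $1+p\alpha = 1-2p$, and $\frac{4}{2^p}2^p = 4$. Substitute $\beta = -\alpha \ge 1/2$ and aim for
\begin{align*}
\abs{1-\beta}^p + p\beta - 1 \ge 4\,(\beta/2)^p .
\end{align*}
I would first try to use \Cref{lemma:gp_sc_galpha}. Its quantity $-(2\alpha+1) = 2\beta - 1 \ge 0$ suggests comparing against $(2\beta-1)^{p/2}$, presumably via the inequality $\abs{1-\beta}^2 \ge 2\beta - 1$ (equivalently $\beta^2 \ge 0$) after writing $\abs{1-\beta}^p = (\abs{1-\beta}^2)^{p/2}$. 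Monotonicity of $g$ then controls the ratio $(1+p\alpha)/(2\beta-1)^{p/2}$ by its value at an endpoint, and the goal is to reduce the claim to a comparison at that endpoint.

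If that route gives the wrong inequality direction, I would fall back to a split at $\beta = 2$. For $\beta \ge 2$, set $u = \beta - 1 \ge 1$. Convexity gives $u^p \ge \bigl(\tfrac{u+1}{2}\bigr)^p \cdot$ appropriate terms, and more precisely $\frac{4}{2^p}(u+1)^p \le 2^{2-p}\cdot 2^{p-1}(u^p+1) = 2(u^p+1)$. That bound is too lossy, so the actual plan there is to show $\psi(\beta) = (\beta-1)^p + p\beta - 1 - 4(\beta/2)^p$ satisfies $\psi(2) = 2p-2 \ge 0$ and $\psi' \ge 0$ for $\beta\ge 2$, using $(\beta-1)^{p-1} \ge (\beta/2)^{p-1}$ when $\beta \ge 2$. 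For $1/2\le\beta\le 2$ we have $\abs{1-\beta}\le 1$, the right side is at most $4$, and the left side is at least $p\beta - 1$; one handles this subrange by combining with convexity bounds.

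I expect this last case to be the main obstacle, and I would organize it around \Cref{lemma:gp_sc_galpha}, as the paper evidently intends.
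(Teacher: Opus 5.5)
Your cases $\alpha\ge 0$ and $-1/2\le\alpha\le 0$ are correct. The case $\alpha\le -1/2$, however, carries all the difficulty, and it is not proved.

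\textbf{The gap.} There are three separate problems in that case.
\begin{enumerate}
\item The first route rests on a false inequality. We have $(1-\beta)^2-(2\beta-1)=\beta^2-4\beta+2$, which is negative for $\beta\in(2-\sqrt{2},2+\sqrt{2})$. The statement that is equivalent to $\beta^2\ge 0$ is $(1-\beta)^2\ge 1-2\beta$, which is vacuous for $\beta\ge 1/2$. Also, the paper does not use \Cref{lemma:gp_sc_galpha} here at all; it only enters Case C.II of the two-variable lemma.
\item In the fallback for $\beta\ge 2$, $\psi'(\beta)=p(\beta-1)^{p-1}+p-2p(\beta/2)^{p-1}$. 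Your bound $(\beta-1)^{p-1}\ge(\beta/2)^{p-1}$ therefore only yields $\psi'(\beta)\ge p\bigl(1-(\beta/2)^{p-1}\bigr)$, which is negative for every $\beta>2$. Separately, $\psi(2)=2p-4$, not $2p-2$.
\item Most importantly, for $1/2\le\beta\le 2$ you give no argument. This range contains $\alpha\in[-2,-2/3]$, where $\phi''\le 0$, so neither your Taylor bound nor convexity is available. Since the inequality is an identity at $p=2$, crude estimates cannot work. Dropping $\abs{1-\beta}^p$ already fails at $\beta=1/2$, $p=2$, where $p\beta-1=0<1/4=4(\beta/2)^p$.
\end{enumerate}

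\textbf{The missing idea.} The paper's entire proof is a sign analysis of
$\phi''(\alpha)=p(p-1)\bigl(\abs{1+\alpha}^{p-2}-\abs{\alpha/2}^{p-2}\bigr)$.
For $p>2$ its sign is that of $3\alpha^2+8\alpha+4$, so $\phi''\le 0$ exactly on $[-2,-2/3]$. Moreover $\phi'(-2)=\phi'(0)=0$. Hence:
\begin{itemize}
\item on $(-\infty,-2]$, $\phi'$ rises to $0$;
\item on $[-2,-2/3]$, $\phi'$ falls from $0$;
\item on $[-2/3,0]$, $\phi'$ rises to $0$.
\end{itemize}
So $\phi'\le 0$ on $(-\infty,0]$ and $\phi'\ge 0$ on $[0,\infty)$, giving $\phi\ge\phi(0)=0$ in one stroke. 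This also makes your first two cases unnecessary.

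\textbf{Patching your own decomposition.} If you prefer to keep your split, you can close it as follows.
\begin{itemize}
\item For $\beta\ge 1$: Jensen for the convex map $u\mapsto u^{p-1}$ gives $(\beta-1)^{p-1}+1\ge 2(\beta/2)^{p-1}$. Hence $\psi'\ge 0$ on all of $[1,\infty)$, and $\psi(1)=p-1-2^{2-p}\ge 0$.
\item For $\beta\in[1/2,1]$: let $\chi(\beta)=(1-\beta)^p+p\beta-1-4(\beta/2)^p$. Then $\chi'(\beta)/p=1-(1-\beta)^{p-1}-2^{2-p}\beta^{p-1}$ is concave and nonnegative at both endpoints, so $\chi'\ge 0$ on this interval. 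Finally, $\chi(1/2)=p/2-1+2^{-p}(1-2^{2-p})\ge 0$.
\end{itemize}
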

\begin{proof}[Proof of \Cref{lemma:adil_sc_lemma}]
Note that the inequality is true when $p=2$ and becomes an equality. We consider the case when $p>2$ and use $h(\alpha)$ to denote the error function,
\begin{align*}
    h(\alpha) \coloneqq \abs{1+\alpha}^p - \inparen{1+p\alpha+\frac{4}{2^p}\abs{\alpha}^p}\enspace.
\end{align*}
We aim to show $h(\alpha) \ge 0$ for all $\alpha \in \R$. Let us first write the derivatives of $h$.
\begin{align*}
    h'(\alpha) &= p\inparen{\abs{1+\alpha}^{p-2}(1+\alpha) - \inparen{1 + \frac{4}{2^p}\abs{\alpha}^{p-2}\alpha}}\enspace,\\
    h''(\alpha) &= p(p-1)\inparen{\abs{1+\alpha}^{p-2}-\frac{4}{2^p}\abs{\alpha}^{p-2}} = p(p-1)\inparen{\abs{1+\alpha}^{p-2}-\abs{\frac{\alpha}{2}}^{p-2}}\enspace.
\end{align*}
It is now easy to verify the following statements about $h$,
\begin{itemize}
    \item[I.] $h'(-2) = h''(-2) = 0$ and $h''(\alpha) > 0$ for $\alpha < -2$, $\Rightarrow$ within the range $(-\infty,-2]$ the function $h$ is minimized at $-2$;
    \item[II.] $h'(-2)=0$ and $h''(\alpha) \leq 0$ for $\alpha\in (-2, -2/3]$ $\Rightarrow$ $h'(\alpha) < 0$ in the range $(-2, -2/3]$, i.e., in that range the function $h$ is minimized at $-2/3$;
    \item[III.] $h'(-2/3) < 0 = h'(0)$ and $h''(\alpha) > 0$ for $\alpha > -2/3$ $\Rightarrow$ the function $h$ is decreasing in $(-2/3,0)$ and increasing in $[0, \infty)$, i.e., within the range $(-2/3, \infty)$ the function $h$ is minimized at $0$.
\end{itemize}
As a result of the above observations, it is enough to check the inequality at the inputs $\alpha \in \inbraces{-2, -2/3, 0}$. We have for $p>2$,
\begin{align*}
    h(-2) &= 1 - \inparen{1-2p+4} = 2p -4 > 0\enspace,\\
    h\inparen{-\frac{2}{3}} &= \frac{1}{3^p} - \inparen{1-\frac{2p}{3}+\frac{4}{2^p}\abs{\frac{2}{3}}^p} = \frac{1}{3^p}-1+\frac{2p}{3}-\frac{4}{3^p} = -1 + \frac{2p}{3}-\frac{3}{3^p} > 0\\
    h(0) &= 1 - 1 = 0\enspace.
\end{align*}
This implies that $h(\alpha)\geq 0$ for all values of $\alpha$, concluding the proof of \Cref{lemma:adil_sc_lemma}.
\end{proof}

Next, we prove a special case of \Cref{lemma:strong_convexity_component}.

\begin{lemma}\label{lemma:gp_sc_alphaall}
    For any $\alpha\in \R$, $\beta\ge 0$, and $p \ge 2$, we have
\begin{align*}
    \inparen{(1+\alpha)^2+\beta^2}^{p/2} \ge 1+p\alpha + \frac{4}{2^p}\inparen{\alpha^2+\beta^2}^{p/2}\enspace.
\end{align*}
\end{lemma}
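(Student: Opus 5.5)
The plan is to reduce \Cref{lemma:gp_sc_alphaall} to a one-variable problem in $s \coloneqq \alpha^2+\beta^2$ with $\alpha$ held fixed. Since $(1+\alpha)^2+\beta^2 = 1+2\alpha+s$, the claim is that
\begin{align*}
    D(s) \coloneqq (1+2\alpha+s)^{p/2} - 1 - p\alpha - \frac{4}{2^p}s^{p/2} \ge 0 \quad \text{for all } s \ge \alpha^2 .
\end{align*}
The endpoint $s=\alpha^2$ (that is, $\beta=0$) is exactly \Cref{lemma:adil_sc_lemma}, since then $1+2\alpha+s=(1+\alpha)^2$. So $D(\alpha^2)\ge 0$, and it remains to control how $D$ varies in $s$. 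The case $p=2$ is an identity, so I assume $p>2$.

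Step one is to differentiate in $s$:
\begin{align*}
    D'(s) = \frac{p}{2}\inparen{(1+2\alpha+s)^{p/2-1} - \frac{4}{2^p}s^{p/2-1}} .
\end{align*}
Since $\inparen{4/2^p}^{2/(p-2)} = 1/4$, the sign of $D'(s)$ equals the sign of $(1+2\alpha+s) - s/4 = \tfrac34 s + 1+2\alpha$. This quantity is increasing in $s$. Hence $D$ is nondecreasing on $[\alpha^2,\infty)$ whenever $\tfrac34\alpha^2+2\alpha+1\ge 0$, which holds iff $\alpha\notin(-2,-2/3)$. For such $\alpha$ the claim follows from $D(\alpha^2)\ge 0$.

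Step two handles $\alpha\in(-2,-2/3)$. There $D$ decreases on $[\alpha^2,s_0]$ and increases afterwards, where $s_0 \coloneqq -\tfrac43(1+2\alpha) > \alpha^2$, so it suffices to show $D(s_0)\ge 0$. At $s_0$ we have $1+2\alpha+s_0 = s_0/4$, hence
\begin{align*}
    D(s_0) = 2^{-p}s_0^{p/2} - 4\cdot 2^{-p}s_0^{p/2} - 1-p\alpha = -3\cdot 2^{-p}s_0^{p/2} - (1+p\alpha).
\end{align*}
Substituting $s_0=\tfrac43(-(2\alpha+1))$ gives $3\cdot 2^{-p}s_0^{p/2} = 3\cdot 3^{-p/2}(-(2\alpha+1))^{p/2}$. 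So $D(s_0)\ge 0$ is equivalent to
\begin{align*}
    g(\alpha) = \frac{1+p\alpha}{(-(2\alpha+1))^{p/2}} \le -3\cdot 3^{-p/2}.
\end{align*}

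Step three invokes \Cref{lemma:gp_sc_galpha}. Since $g$ is nonincreasing on $\alpha\le -1/2$, and $\alpha\ge -2$ here, we get $g(\alpha)\le g(-2) = (1-2p)/3^{p/2}$. This is at most $-3/3^{p/2}$ precisely when $p\ge 2$, which completes the argument.

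The main obstacle is the middle regime $\alpha\in(-2,-2/3)$, where monotonicity fails. The key observation is that the interior critical point $s_0$ turns the remaining inequality into exactly the quantity bounded by \Cref{lemma:gp_sc_galpha}. I would double-check that $s_0>\alpha^2$ in this range (equivalent to $\tfrac34\alpha^2+2\alpha+1<0$) and that $-(2\alpha+1)>0$ there.
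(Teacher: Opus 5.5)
Your proof is correct and follows the paper's argument essentially step for step. The paper also starts from the $\beta=0$ case (\Cref{lemma:adil_sc_lemma}), shows that the $\beta$-derivative is positive iff $3\alpha^2+8\alpha+4+3\beta^2>0$ (your variable $s=\alpha^2+\beta^2$ is just a monotone change of variable), and for $\alpha\in(-2,-2/3)$ evaluates at the critical point to get $-1-p\alpha-3^{1-p/2}(-2\alpha-1)^{p/2}$, which it then controls with \Cref{lemma:gp_sc_galpha}. Your final step, bounding $g(\alpha)\le g(-2)$ and comparing with $-3\cdot 3^{-p/2}$, is actually the cleaner way to phrase that last comparison: for $p>2$ the critical-point value is strictly decreasing in $\alpha$ on this interval, so it is not bounded below by its value at $\alpha=-2$, even though the paper's text literally states that bound.
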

\begin{proof}[Proof of \Cref{lemma:gp_sc_alphaall}]
    Let us study the difference of both sides of the inequality using the following function,
    \begin{align*}
        h(\alpha, \beta) \coloneqq \inparen{(1+\alpha)^2+\beta^2}^{p/2} - \left(1+p\alpha + \frac{4}{2^p}\inparen{\alpha^2+\beta^2}^{p/2}\right)\enspace.
    \end{align*}
    We want to show that for $\alpha\in \R$, $\beta\ge 0$, and $p \ge 2$, $h(\alpha, \beta) \geq 0$. We will break this proof into three cases: \textbf{A.} $\alpha \in \R$ and $\beta =0$; \textbf{B.} $\alpha \in (-\infty, -2]\cup[-2/3,\infty)$ and $\beta >0$; and \textbf{C.} $\alpha \in (-2, -2/3)$ and $\beta>0$. These cases together cover of the entire range of $\alpha\in \R$ and $\beta\geq 0$.
    
    \paragraph{\underline{Case A.}} When $\beta=0$, the proof simply follows from the statement of \Cref{lemma:adil_sc_lemma} by noting $|\alpha|^{p} = (\sqrt{\alpha^2})^{p} = (\alpha^2)^{p/2}$. 
    
    In the remaining two cases we will show that for any $\alpha\in \R$, increasing the value of $\beta$ still maintains $h(\alpha, \beta) \geq 0$. To see this, we first note that the derivative of $h(\alpha, \beta)$ w.r.t. $\beta$ is given by,
    \begin{align*}
        \nabla_\beta h(\alpha, \beta) &= p\beta\left(\inparen{(1+\alpha)^2+\beta^2}^{p/2-1} - \frac{4}{2^p}\inparen{\alpha^2+\beta^2}^{p/2-1}\right)\enspace.
    \end{align*}
    For $\beta > 0$, ensuring this derivative is positive is equivalent to the following,
    \begin{align}
        \nabla_\beta h(\alpha, \beta) > 0 &\equiv p\beta\inparen{(1+\alpha)^2+\beta^2}^{p/2-1} > p\beta \cdot \frac{4}{2^p}\inparen{\alpha^2+\beta^2}^{p/2-1}\enspace,\nonumber\\
        &\equiv^{(p\beta > 0)} (1+\alpha)^2 + \beta^2  > \left(\frac{1}{2^{p-2}}\right)^{2/(p-2)}\cdot \left(\alpha^2 + \beta^2\right)\enspace,\nonumber\\
        &\equiv (1+\alpha)^2 + \beta^2  > \frac{1}{4}\cdot \left(\alpha^2 + \beta^2\right)\enspace,\nonumber\\
        &\equiv (3\alpha^2 + 8\alpha + 4) + 3\beta^2  > 0\enspace,\nonumber\\
        &\equiv \beta^2 > -\left(\alpha^2 + \frac{8}{3}\alpha + \frac{4}{3}\right)\enspace.\label{eq:grad_pos_equiv}
    \end{align}
    
    \paragraph{\underline{Case B.}} Note that the roots of the quadratic function $3\alpha^2 + 8\alpha + 4$ are given by $\alpha_1 = -2$ and $\alpha_2 = -2/3$. This means that for $\alpha \in (-\infty, -2]\cup[-2/3,\infty)$ we have $3\alpha^2 + 8\alpha + 4 \geq 0$ which is \textbf{sufficient} to ensure using \eqref{eq:grad_pos_equiv} that $\nabla_\beta h(\alpha, \beta) > 0$, and hence $h(\alpha, \beta) > 0$. This takes care of Case B. 
    
    \paragraph{\underline{Case C.}} Now we only need to consider the range $\alpha \in (-2,-2/3)$ with $\beta>0$. In this range, the recall the equivalence \eqref{eq:grad_pos_equiv}, 
    \begin{align*}
        \nabla_\beta h(\alpha, \beta) > 0 &\equiv \beta > \sqrt{-\left(\alpha^2 + \frac{8}{3}\alpha + \frac{4}{3}\right)} =: \beta_0(\alpha)\enspace.
    \end{align*}
    Thus for all $\beta > \beta_0(\alpha)$ we know that $h(\alpha, \beta)$ is increasing in $\beta$ and vice-versa. This allows us for any given $\alpha\in (-2,-2/3)$ to further break Case C into two sub-cases:
    
    \paragraph{\underline{Case C.I}} For $\beta \in [0, \beta_0)$, since $h(\alpha, \beta)$ is decreasing in $\beta$ its lowest value is attained at $\beta=0$ and we only need to verify that $h(\alpha, 0) \geq 0$. We get this directly from \Cref{lemma:adil_sc_lemma}.
    
    \paragraph{\underline{Case C.II}} For $\beta \in [\beta_0, \infty)$, since $h(\alpha, \beta)$ is increasing in $\beta$ its lowest value is attained at $\beta=\beta_0$ and we only need to verify that $h(\alpha, \beta_0(\alpha)) \geq 0$. We first simplify the expression for $h(\alpha, \beta_0(\alpha))$,
    \begin{align*}
        h(\alpha, \beta_0(\alpha)) &=  \inparen{(1+\alpha)^2+\beta_0^2}^{p/2} - \left(1+p\alpha + K_p\inparen{\alpha^2+\beta_0^2}^{p/2}\right)\enspace,\\
        &= \inparen{-\frac{1}{3}-\frac{2}{3}\alpha}^{p/2} - \left(1+p\alpha + \frac{4}{2^p}\inparen{-\frac{8}{3}\alpha -\frac{4}{3}}^{p/2}\right)\enspace,\\
        &= \inparen{-\frac{1}{3}-\frac{2}{3}\alpha}^{p/2} - \left(1+p\alpha + 4\inparen{-\frac{2}{3}\alpha -\frac{1}{3}}^{p/2}\right)\enspace,\\
        &= -1 - p\alpha - 3\inparen{-\frac{2}{3}\alpha -\frac{1}{3}}^{p/2}\enspace,\\
        &= -1 - p\alpha - \frac{1}{3^{p/2-1}}(-2\alpha -1)^{p/2}\enspace,\\
        &= -(-2\alpha-1)^{p/2}\left(\frac{1+p\alpha}{(-2\alpha-1)^{p/2}} + \frac{1}{3^{p/2-1}}\right)\enspace. 
    \end{align*}
    Now since $\alpha\in (-2,-2/3)<-1/2$ we can use \Cref{lemma:gp_sc_galpha} to note that the first term is non-decreasing in $\alpha$ which means that its lowest value in this range can be lower bounded by its value at $\alpha = -2$, i.e., for $\alpha\in (-2,-2/3)$,
    \begin{align*}
        h(\alpha, \beta_0(\alpha)) &\geq h(-2, \beta_0(-2))\enspace,\\ 
        &= -3^{p/2}\left(\frac{1-2p}{3^{p/2}} + \frac{1}{3^{p/2-1}}\right)\enspace,\\
        &= 2p-1-3 = 2(p-2) > 0\enspace,
    \end{align*}
    which finishes the proof of Case C.II and also Case C. Together Cases A, B and C complete the proof of \Cref{lemma:gp_sc_alphaall}.  
\end{proof}

We are now ready to prove \Cref{lemma:strong_convexity_component}.

\begin{proof}[Proof of \Cref{lemma:strong_convexity_component}]
First, assume that $\norm{\vv}_2=1$. We will later extend the result to all $\vv$.

Since $\norm{\vv}_2=1$, we can write $\triangle = \alpha\vv+\beta\vw$ where $\ip{\vv,\vw} = 0$ and $\norm{\vw}_2=1$, so that we have $\norm{\triangle}_2^2=\alpha^2+\beta^2$. Without loss of generality, we have $\beta \ge 0$. Fixing $\vw$ and $\alpha$ for now, it is enough to show that for all $\beta \ge 0$, we have
\begin{align*}
    \norm{(1+\alpha)\vv+\beta\vw}_2^p = \inparen{(1+\alpha)^2+\beta^2}^{p/2} \overset{?}{\ge} 1+p\alpha + \frac{4}{2^p}\norm{\triangle}_2^p = 1+p\alpha + \frac{4}{2^p}\inparen{\alpha^2+\beta^2}^{p/2}.
\end{align*}
This follows immediately by \Cref{lemma:gp_sc_alphaall}.

We now extend the result for all $\vv$. Let $\bar{\vv} \coloneqq \vv/\norm{\vv}_2$ and note that
\begin{align*}
    \norm{\vv+\triangle}_2^p = \norm{\vv}_2^p\norm{\bar{\vv}+\frac{\triangle}{\norm{\vv}_2}}_2^p &\ge \norm{\vv}_2^p\inparen{1+\ip{\bar{\vv},\frac{\triangle}{\norm{\vv}_2}}+\frac{4}{2^p}\norm{\frac{\triangle}{\norm{\vv}_2}}_2^p} \\
    &= \norm{\vv}_2^p+p\norm{\vv}_2^{p-2}\ip{\vv,\triangle}+\frac{4}{2^p}\norm{\triangle}_2^p,
\end{align*}
completing the proof of \Cref{lemma:strong_convexity_component}.
\end{proof}

\subsubsection{Smoothness of the Objective}

The main result of this subsection is \Cref{lemma:gp_smooth}.

\begin{lemma}
\label{lemma:gp_smooth}
For all $\vx\in\R^d$, we have
\begin{align*}
    f(\vx)-f(\xstar) \le \frac{p(p-1)}{2}f(\vx)^{1-\frac{2}{p}}\gnorm{\mA(\vx-\xstar)}{p}^2.
\end{align*}
\end{lemma}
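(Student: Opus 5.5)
The plan is a second-order Taylor expansion of $f$ around $\vx$, not around $\xstar$. By the integral form of Taylor's theorem along the segment $\vx_s \coloneqq \vx + s(\xstar-\vx)$, $s\in[0,1]$,
\begin{align*}
    f(\xstar) = f(\vx) + \ip{\nabla f(\vx),\xstar-\vx} + \int_0^1 (1-s)\,(\xstar-\vx)^{\top}\nabla^2 f(\vx_s)(\xstar-\vx)\,ds .
\end{align*}
Expanding around $\xstar$ instead would use $\nabla f(\xstar)=0$, but then the Hessian would be evaluated on the whole segment, and we would need to control $f$ along it. The linear term is handled by convexity: $f(\xstar) \ge f(\vx) + \ip{\nabla f(\vx),\xstar-\vx}$, so $f(\vx)-f(\xstar) \le \ip{\nabla f(\vx),\vx-\xstar}$.

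So the cleaner route is to bound the gradient inner product directly. Using \eqref{eq:f_grad},
\begin{align*}
    \ip{\nabla f(\vx),\vx-\xstar} = p\sum_{i=1}^m \norm{\mA_{S_i}\vx-\vb_{S_i}}_2^{p-2}\ip{\mA_{S_i}\vx-\vb_{S_i},\mA_{S_i}(\vx-\xstar)} .
\end{align*}
Applying Cauchy--Schwarz to each inner product and then H\"older with exponents $\frac{p}{p-1}$ and $p$ bounds this by $p\, f(\vx)^{1-1/p}\gnorm{\mA(\vx-\xstar)}{p}$. This has the wrong powers compared with the target, which asks for $f(\vx)^{1-2/p}\gnorm{\cdot}{p}^2$. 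So a purely first-order argument is not enough, and the Hessian bound must be used somewhere.

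The fix is to expand around $\vx$ while using the optimality of $\xstar$. Write $\vd = \xstar-\vx$ and $g(s) = f(\vx+s\vd)$. Then $g(1)\le g(s)$ for all $s$, so $f(\vx)-f(\xstar) \le g(0) - \min_s g(s)$. Take the upper quadratic model of $g$ on the region where $g(s)\le g(0)$: for $s\in[0,1]$ convexity gives $f(\vx_s)\le f(\vx)$, so \Cref{lemma:gp_regression_hessian_upperbound} combined with H\"older (exponents $\frac{p}{p-2}$ and $\frac p2$) gives
\begin{align*}
    g''(s) \le p(p-1) f(\vx_s)^{1-2/p}\gnorm{\mA\vd}{p}^2 \le p(p-1) f(\vx)^{1-2/p}\gnorm{\mA\vd}{p}^2 \eqqcolon L .
\end{align*}
Then use the standard fact for an $L$-smooth convex function on an interval containing the minimizer. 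Convexity gives $g(0)-g(1)\le -g'(0)$. Smoothness on $[0,1]$ gives $g(1)\le g(s)\le g(0)+s g'(0)+\frac{L}{2}s^2$ for every $s\in[0,1]$. If $-g'(0)\le L$, choose $s=-g'(0)/L$ to get $g(0)-g(1)\ge \frac{g'(0)^2}{2L}$. Chaining these yields $g(0)-g(1)\le -g'(0)\le L$, which is off by a factor of $2$.

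The main obstacle is getting the exact constant $\frac{p(p-1)}{2}$. To get it, expand around $\vx$ to second order: $g(1)\ge g(0)+g'(0)$ and $g(1)\le g(0)+g'(0)+\frac L2$ both hold. Combine these with $g'(1)=0$ and the convexity of $g'$'s antiderivative, $g(0)-g(1)=\int_0^1 s\,g''(s)\,ds \le \frac{L}{2}$, which is Taylor's theorem around $s=1$ using $g'(1)=0$. The bound $g''(s)\le L$ holds on $[0,1]$ by the monotonicity of $f$ along the segment. This gives the claim exactly.
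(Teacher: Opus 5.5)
Your final argument is correct and is exactly the paper's proof. You apply Taylor's theorem to $g$ around $s=1$ with $g'(1)=\ip{\nabla f(\xstar),\vd}=0$, so $f(\vx)-f(\xstar)=\int_0^1 s\,g''(s)\,ds\le L/2$, and you get $g''\le L$ on $[0,1]$ from \Cref{lemma:gp_regression_hessian_upperbound}, H\"older, and $f(\vx_s)\le f(\vx)$ by convexity; the paper does the same expansion around $\xstar$, using the mean-value form of the remainder instead of the integral form. This is precisely the expansion around $\xstar$ that your first paragraph dismissed, so that objection, along with the detours through the first-order bound and the factor-$2$ smoothness argument, can simply be deleted.
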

\begin{proof}[Proof of \Cref{lemma:gp_smooth}]
By Taylor's/mean-value theorem, we can write for some $\vy$ on the line connecting $\xstar$ and $\vx$,
\begin{align*}
    f(\vx) &= f(\xstar) + \ip{\nabla f(\xstar), \vx-\xstar} + \frac{1}{2}(\vx-\xstar)^{\top}\nabla^2 f(\vy)(\vx-\xstar) \\
    &\le^{\eqref{eq:f_quad_form}} f(\xstar) + \frac{p(p-1)}{2}\sum_{i=1}^m \norm{\mA_{S_i}\vy-\vb_{S_i}}_2^{p-2}\norm{\mA_{S_i}(\vx-\xstar)}_2^2 \\
    &\le f(\xstar) + \frac{p(p-1)}{2} \inparen{\sum_{i=1}^m \norm{\mA_{S_i}\vy-\vb_{S_i}}_2^{p}}^{\frac{p-2}{p}}\inparen{\sum_{i=1}^m \norm{\mA_{S_i}(\vx-\xstar)}_2^p}^{\frac{2}{p}} \\
    &\le f(\xstar) + \frac{p(p-1)}{2}f(\vx)^{1-\frac{2}{p}}\gnorm{\mA(\vx-\xstar)}{p}^2,
\end{align*}
completing the proof of \Cref{lemma:gp_smooth}.
\end{proof}

\subsection{Facts about the Iterates}
\label{sec:gp_iterate_facts}

The main result of this section is \Cref{lemma:fq_bounded}. In words, \Cref{lemma:fq_bounded} tells us that each proximal query we make in \Cref{alg:optimal_ms_acceleration} (see Line \ref{line:optimal_ms_acceleration_query} of \Cref{alg:optimal_ms_acceleration}) has bounded objective value. We will need this later when we argue about the convergence rates for the algorithms used to solve the proximal subproblems.

\begin{lemma}
\label{lemma:fq_bounded}
For all queries $\vq_t$, we have
\begin{align*}
    f(\vq_t) \le f(\vx_t) + \inparen{9p(p-1)}^{\frac{p}{2}}d^{\frac{p}{2}-1}.
\end{align*}
\end{lemma}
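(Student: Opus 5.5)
We bound $f(\vq_t)$ through a second-order Taylor expansion of $f$ around $\vx_t$, with the Hessian controlled by \Cref{lemma:gp_regression_hessian_upperbound}. Write $\vq_t-\vx_t = \frac{a_{t+1}'}{A_{t+1}'}(\vv_t-\vx_t)$ from Line~\ref{line:optimal_ms_query} of \Cref{alg:optimal_ms_acceleration}; since $a_{t+1}'/A_{t+1}'\le 1$, it suffices to control $\norm{\vv_t-\vx_t}_{\mM}$. By \Cref{lemma:ms_acceleration_iterate_diameter} together with the normalization $f(\xstar)=1$ and the initialization bound of \Cref{lemma:gp_regression_initialization}, we have $\norm{\vv_t-\xstar}_{\mM}\le \sqrt2\norm{\vx_0-\xstar}_{\mM}\lesssim d^{1/2-1/p}$. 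We expect $\vx_t$ to be handled similarly, either through the second bound of \Cref{lemma:ms_acceleration_iterate_diameter} (this needs $\lambda_i'/\lambda_i$ bounded, which the $\gamma$-adjustment keeps within a factor $\alpha$) or, more robustly, through the strong convexity bound \Cref{lemma:strong_convexity_gp}. That bound gives $\norm{\vx_t-\xstar}_{\mM}\le 2^{3/2}d^{1/2-1/p}(f(\vx_t)-1)^{1/p}$, and \Cref{lemma:ms_acceleration_potential} gives $f(\vx_t)\le f(\vx_0)$ up to constants, which is at most $(2d)^{p/2-1}$ by \Cref{lemma:gp_regression_initialization}. Combining, we aim for $\norm{\vq_t-\vx_t}_{\mM}\le 3d^{1/2-1/p}$ or a similar constant times $d^{1/2-1/p}$; the constant $9$ in the statement suggests that $3^2$ is what appears.

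Next we convert this to the group norm using \Cref{thm:blw_to_l2}: $\gnorm{\mA(\vq_t-\vx_t)}{p}\le \norm{\vq_t-\vx_t}_{\mM}$. Rather than a plain Taylor bound, which has a gradient term we cannot control, we use the triangle inequality for the norm $\gnorm{\cdot}{p}$:
\begin{align*}
f(\vq_t)^{1/p}\le f(\vx_t)^{1/p}+\gnorm{\mA(\vq_t-\vx_t)}{p}.
\end{align*}
This route avoids the gradient entirely, but it does not directly produce the additive form with $(p(p-1))^{p/2}$. To match the statement, we instead integrate the Hessian bound along the segment, as in \Cref{lemma:gp_smooth}, and combine it with convexity and Hölder. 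We get
\begin{align*}
f(\vq_t)\le f(\vx_t)+\ip{\nabla f(\vx_t),\vq_t-\vx_t}+\tfrac{p(p-1)}{2}\max_{\vy}f(\vy)^{1-2/p}\gnorm{\mA(\vq_t-\vx_t)}{p}^2 .
\end{align*}
We then absorb the $f(\vy)^{1-2/p}$ factor by Young's inequality: $ab\le \frac{p-2}{p}a^{p/(p-2)}+\frac2p b^{p/2}$. The resulting term has the form $(p(p-1)\cdot 9d^{1-2/p})^{p/2}=(9p(p-1))^{p/2}d^{p/2-1}$, which matches the statement.

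The main obstacle is the linear term $\ip{\nabla f(\vx_t),\vq_t-\vx_t}$ and the dependence of the maximum over $\vy$ on $f(\vq_t)$ itself. We plan to handle both by a self-bounding argument. First, $\vq_t$ is a convex combination of $\vx_t$ and $\vv_t$, so convexity gives $f(\vq_t)\le \max\{f(\vx_t),f(\vv_t)\}$, and the bound reduces to bounding $f(\vv_t)$. For that, we apply the triangle inequality around $\xstar$: $f(\vv_t)^{1/p}\le 1+\gnorm{\mA(\vv_t-\xstar)}{p}\le 1+\sqrt2\cdot 2(2d)^{1/2-1/p}$. Raising to the $p$th power and using $(a+b)^p$ with $p$-dependent slack then gives a bound of the form $f(\vx_t)+(Cp^2)^{p/2}d^{p/2-1}$. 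The remaining bookkeeping is to confirm that the constants fit within $9p(p-1)$.
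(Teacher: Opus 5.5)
Your final paragraph is a correct proof, and it bounds $f(\vv_t)$ differently from the paper. Both arguments share two steps. First, $\vq_t$ is a convex combination of $\vx_t$ and $\vv_t$, so $f(\vq_t)\le\max\{f(\vx_t),f(\vv_t)\}\le f(\vx_t)+f(\vv_t)$ because $f\ge 0$. Second, $\norm{\vv_t-\xstar}_{\mM}\le\sqrt2\norm{\vx_0-\xstar}_{\mM}\le 2\sqrt2\,(2d)^{1/2-1/p}$ by \Cref{lemma:ms_acceleration_iterate_diameter} and \Cref{lemma:gp_regression_initialization}.

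The paper then feeds this into the smoothness bound \Cref{lemma:gp_smooth}, obtaining $f(\vv_t)-1\le 8p(p-1)d^{1-2/p}f(\vv_t)^{1-2/p}$. It solves this self-bounding inequality using $f(\vv_t)\ge 1$, which is where $9p(p-1)$ comes from. You instead use that $f(\vx)^{1/p}=\gnorm{\mA\vx-\vb}{p}$ is a norm of an affine map. The triangle inequality and \Cref{thm:blw_to_l2} then give $f(\vv_t)^{1/p}\le 1+2^{2-1/p}d^{1/2-1/p}$ in one line. Your route needs neither the Hessian bound nor the fixed-point argument. It is also sharper: it yields $f(\vv_t)\le(1+2^{2-1/p})^p d^{p/2-1}\le 5^p d^{p/2-1}$, whereas the Hessian-based smoothness step costs the paper a factor $p(p-1)$ inside the $p/2$ power.

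The constant check you deferred does close, but it is tighter near $p=2$ than your $(Cp^2)^{p/2}$ phrasing suggests. Set $D\coloneqq d^{1/2-1/p}\ge 1$, so $D^p=d^{p/2-1}$. It suffices that $1+2^{2-1/p}D\le 3\sqrt{p(p-1)}\,D$, i.e.\ $1+2^{2-1/p}\le 3\sqrt{p(p-1)}$. For $p\in[2,3]$ the left side is at most $1+2^{5/3}<4.18<3\sqrt2$, and for $p\ge 3$ it is below $5<3\sqrt6$. Do not simplify $(2d)^{1/2-1/p}$ to $\sqrt2\,D$: that gives $1+4D$, which already exceeds $3\sqrt2\,D$ at $p=2$.

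Delete everything before your last paragraph. None of it is needed, since the statement keeps $f(\vx_t)$ on the right-hand side, so $\vx_t$ never has to be localized. Several claims there are also unsupported:
\begin{itemize}
\item \Cref{lemma:ms_acceleration_potential} gives $f(\vx_t)-f(\xstar)\le D_0/A_t$, not $f(\vx_t)\lesssim f(\vx_0)$.
\item Nothing in \Cref{alg:optimal_ms_acceleration} keeps $\lambda_i'/\lambda_i$ bounded by $\alpha$; the paper flags exactly this difficulty right after \Cref{lemma:ms_acceleration_iterate_diameter}.
\item The Taylor--Young route leaves $\ip{\nabla f(\vx_t),\vq_t-\vx_t}$ uncontrolled, as you yourself noted.
\end{itemize}
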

\begin{proof}[Proof of \Cref{lemma:fq_bounded}]
We establish the following upper bound on $f(\vv_t) - f(\xstar)$ using the ingredients developed so far:
\begin{align*}
    f(\vv_t) - f(\xstar) &\le \frac{p(p-1)}{2}f(\vv_t)^{1-\frac{2}{p}}\gnorm{\mA(\vv_t-\xstar)}{p}^2 & \text{(\Cref{lemma:gp_smooth})} \\
    &\le \frac{p(p-1)}{2}f(\vv_t)^{1-\frac{2}{p}}\norm{\vv_t-\xstar}_{\mM}^2 & \text{(\Cref{thm:gp_regression_blw_intro})} \\
    &\le p(p-1)f(\vv_t)^{1-\frac{2}{p}}\norm{\vx_0-\xstar}_{\mM}^2 & \text{(\Cref{lemma:ms_acceleration_iterate_diameter})}\\
    &\le p(p-1)f(\vv_t)^{1-\frac{2}{p}}2^2(2d)^{1-\frac{2}{p}} & \text{(\Cref{lemma:gp_regression_initialization})} \\
    &\leq 8d^{1-\frac{2}{p}}p(p-1)f(\vv_t)^{1-\frac{2}{p}}\enspace.
\end{align*}
Now, recall that we assume by rescaling that $f(\xstar)=1$. From this, it trivially follows that $1 \le d^{1-\frac{2}{p}}p(p-1)f(\vv_t)^{1-\frac{2}{p}}$. Combining these and re-arranging the above inequality leads to the following  polynomial inequality in $f(\vv_t)$,
\begin{align}
    0 &\ge f(\vv_t)-8d^{1-\frac{2}{p}}p(p-1)f(\vv_t)^{1-\frac{2}{p}} -1\enspace,\nonumber\\ 
    &= f(\vv_t)-9d^{1-\frac{2}{p}}p(p-1)f(\vv_t)^{1-\frac{2}{p}} +  d^{1-\frac{2}{p}}p(p-1)f(\vv_t)^{1-\frac{2}{p}} -1\enspace,\nonumber\\
    &\ge f(\vv_t)-9d^{1-\frac{2}{p}}p(p-1)f(\vv_t)^{1-\frac{2}{p}}\enspace,\label{eq:gp_fq_upper}
\end{align}
where in the last inequality we used the fact that the optimal value $f(\vx^\star)=1$ (due to our rescaling), which implies that for $p\geq 2$,
$$1 \leq f(\vv_t) \leq d^{1-\frac{2}{p}}p(p-1)f(\vv_t)^{1-\frac{2}{p}}\enspace.$$
Solving for $f(\vv_t)$ in \eqref{eq:gp_fq_upper}, we get
\begin{align*}
    f(\vv_t) \le \inparen{9p(p-1)}^{\frac{p}{2}}d^{\frac{p}{2}-1}\enspace.
\end{align*}
Using the definition of $\vq_t$ from \Cref{alg:optimal_ms_acceleration} (Line \ref{line:optimal_ms_query}) along with the convexity of $f$ (Jensen's inequality), and using our bound on $f(\vv_t)$ we note that,
\begin{align*}
    f(\vq_t) &\le f(\vx_t) + f(\vv_t)\enspace,\\
    &\leq f(\vx_t) + \inparen{9p(p-1)}^{\frac{p}{2}}d^{\frac{p}{2}-1}\enspace,
\end{align*}
which completes the proof of \Cref{lemma:fq_bounded}.
\end{proof}

\subsection{Proximal Subproblems -- Calculus, Algorithms, Proofs}
\label{sec:gp_prox_solver}

Let
\begin{align*}
    f_{\vq_t}(\xtilde) \coloneqq f(\xtilde)+ep^p\norm{\xtilde-\vq_t}_{\mM}^p\enspace.
\end{align*}
In this subsection, we design and analyze an algorithm (\Cref{alg:gp_prox_solver}) that approximately solves the subproblem
\begin{align*}
    \argmin{\xtilde\in\R^d} f_{\vq_t}(\xtilde).
\end{align*} 
Specifically, we will output $(\xtilde_{t+1},\lambda_{t+1})$ that satisfy the $\frac{1}{2}$-MS oracle condition (\Cref{defn:ms_oracle}) and an appropriate movement bound (\Cref{defn:movement_bound}).

This subproblem is the workhorse of \Cref{alg:gp_regression_alg}, and once we implement and analyze the solver, it is very straightforward to plug this into \Cref{alg:optimal_ms_acceleration} and \Cref{thm:optimal_ms_acceleration} to get our final iteration complexity.

\begin{algorithm}[H]
\caption{\textsf{GpRegressionProxOracle}: Implements $\frac{1}{2}$-MS oracle for $\gnorm{\cdot}{p}$ regression (see \Cref{lemma:prox_subproblem_ms} and \Cref{alg:mirror_descent}.}
\label{alg:gp_prox_solver}
\begin{algorithmic}[1]
\Require Query $\vq_t$, previous iterate $\vx_t$, intended parameter distance $\gamma$.
\State Define \begin{equation*}
    \begin{aligned}
    f_{\vq_t}(\xtilde) &\coloneqq f(\xtilde) + ep^p\norm{\xtilde-\vq_t}_{\mM}^p \\
    h_{\vq_t}(\xtilde) &\coloneqq \norm{\xtilde-\vq_t}_{\nabla^2 f(\vq_t)}^2 + ep^p\norm{\xtilde-\vq_t}_{\mM}^p \\
    D_{h_{\vq_t}}(\vx,\vy) &\coloneqq h_{\vq_t}(\vx)-h_{\vq_t}(\vy) - \ip{\nabla h_{\vq_t}(\vy),\vx-\vy} \\
    \xtilde_{\vq_t} &\coloneqq \argmin{\xtilde\in\R^d} f_{\vq_t}(\xtilde)
    \end{aligned}.
\end{equation*}
\State Let $T \ge Cp^{O(1)}e\logv{dpeh_{\vq_t}(\xtilde_{\vq_t})\inparen{\frac{4}{p\gamma}}^p}$.
\State Run \Cref{alg:mirror_descent} with input iteration count $T$, base function $f_{\vq_t}$, reference function $h_{\vq_t}$, and initialization $\vq_t$.
\end{algorithmic}
\end{algorithm}

The goal of the rest of this section is to analyze \Cref{alg:gp_prox_solver}. The analysis follows several steps:
\begin{enumerate}
    \item We find a reference function $h_{\vq_t}$ that depends on the query point $\vq_t$ for which the proximal objective $f_{\vq_t}$ is relatively smooth and relatively strongly convex with $O(p^{O(1)})$ condition number (see \Cref{sec:mirror_descent} for a sense of why this is useful). The main result here is \Cref{lemma:gp_hessian_stable}.
    \item We show that $f_{\vq_t}$ is strongly convex, following from \Cref{lemma:strong_convexity_component}. This will help us understand the argument suboptimality for any point that approximately optimizes $f_{\vq_t}$ in function value. We also show that the reference function $h_{\vq_t}$ is strongly convex, using the same tools, for the same reason.
    \item We show a form of smoothness for $f_{\vq_t}$. This helps us bound the gradient of any point that approximately optimizes $f_{\vq_t}$. Combining these later will tell us that an approximate solution to $f_{\vq_t}$ in argument value is also an approximate stationary point, i.e., it satisfies the $\frac{1}{2}$-MS condition (\Cref{defn:ms_oracle}).
    \item We solve the proximal subproblems. This solution itself follows a few steps:\begin{enumerate}
        \item We apply \Cref{thm:gp_mirror_descent}. This tells us that as long as we can approximately solve the Bregman proximal problems (approximately implementing Line \ref{line:mirror_descent_exact} in \Cref{alg:mirror_descent}), we will be in good shape.
        \item This means we have to figure out how to approximately solve problems of the form $\argmin{\vx\in\R^d} \ip{\vg,\vx} + Lh_{\vq_t}(\vx)$, where $L$ is the smoothness constant derived for $f_{\vq_t}$ with respect to $h_{\vq_t}$. We do this up to an accuracy that approximate mirror descent can handle (see \Cref{thm:gp_mirror_descent} for details on what we want this approximation to look like). For the approximation to work, we need to approximately solve this problem up to both argument accuracy and approximate stationarity. The main technical result of interest here is \Cref{lemma:gp_prox_relativesmoothsolve}.
    \end{enumerate}
    \item We use the smoothness and strong convexity guarantees to show that our solution from the previous step satisfies the $\frac{1}{2}$-MS oracle (\Cref{defn:ms_oracle}), which means we can plug-and-play into \Cref{thm:optimal_ms_acceleration}.
\end{enumerate}

\subsubsection{Hessian Stability}

Throughout this section, we adopt the following notation:
\begin{align*}
    C_p &\coloneqq ep^{p} \\
    f(\vx) &\coloneqq \sum_{i=1}^m \norm{\mA_{S_i}\vx-\vb_{S_i}}_2^p \\
    f_{\vq}(\vx) &\coloneqq f(\vx) + C_p\norm{\vx-\vq}_{\mM}^p \\
    h_{\vq}(\vx) &\coloneqq \norm{\vx-\vq}_{\nabla^2 f(\vq)}^2 + C_p\norm{\vx-\vq}_{\mM}^p
\end{align*}
We begin with proving our Hessian stability fact, which should also be equivalently viewed as showing that $f_{\vq_t}$ is relatively smooth and relatively strongly convex in $h_{\vq_t}$ with $O(p^{O(1)})$ condition number. Our main result is \Cref{lemma:gp_hessian_stable} which relies on analytical results \Cref{lem:small_lemma_for_hessian_stability_one} and \Cref{lem:small_lemma_for_hessian_stability_two} that we prove later.
\begin{lemma}
\label{lemma:gp_hessian_stable}
    For all $\vx \in\R^d$ and $p\geq 2$, we have $$\frac{1}{2p\cdot e}\nabla^2h_{\vq}(\vx)\preceq \nabla^2f_{\vq}(\vx) \preceq p\cdot e\nabla^2 h_{\vq}(\vx)\enspace.$$
\end{lemma}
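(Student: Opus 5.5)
Since both $f_{\vq}$ and $h_{\vq}$ contain the same regularizer $C_p\norm{\vx-\vq}_{\mM}^p$, whose Hessian is positive semidefinite, I would reduce the claim to comparing the Hessian of $f$ at $\vx$ with the quadratic $2\nabla^2 f(\vq)$, with the regularizer Hessian $R(\vx)\coloneqq C_p\nabla^2\norm{\vx-\vq}_{\mM}^p$ available to absorb the discrepancy. Concretely, it suffices to show, for every direction $\vz$, the two inequalities
\begin{align*}
\vz^{\top}\nabla^2 f(\vx)\vz &\le 2pe\,\vz^{\top}\nabla^2 f(\vq)\vz + (pe-1)\,\vz^{\top}R(\vx)\vz, \\
\frac{1}{pe}\,\vz^{\top}\nabla^2 f(\vq)\vz &\le \vz^{\top}\nabla^2 f(\vx)\vz + \inparen{1-\frac{1}{2pe}}\vz^{\top}R(\vx)\vz.
\end{align*}
For $\nabla^2 f$ I would use \Cref{lemma:gp_regression_hessian_upperbound}, which sandwiches it (up to the factor $p-1$) by the quadratic form $Q_{\vy}(\vz)\coloneqq p\sum_i \norm{\mA_{S_i}\vy-\vb_{S_i}}_2^{p-2}\norm{\mA_{S_i}\vz}_2^2$. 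For the regularizer I would compute $R(\vx)\succeq C_p p\norm{\vx-\vq}_{\mM}^{p-2}\mM$, the same way the lower bound in \Cref{lemma:gp_regression_hessian_upperbound} was obtained.

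The key step is to relate $Q_{\vx}$ and $Q_{\vq}$ groupwise. Write $\vr_i=\mA_{S_i}\vq-\vb_{S_i}$ and $\vs_i=\mA_{S_i}(\vx-\vq)$. I would use the elementary bound $\norm{\vr_i+\vs_i}^{p-2}\le (1+\theta)^{p-2}\norm{\vr_i}^{p-2}+(1+1/\theta)^{p-2}\norm{\vs_i}^{p-2}$ with $\theta=1/p$, so that $(1+1/p)^{p-2}\le e$; this is presumably the content of \Cref{lem:small_lemma_for_hessian_stability_one}. The residual term $\sum_i (p+1)^{p-2}\norm{\vs_i}^{p-2}\norm{\mA_{S_i}\vz}^2$ I would control by H\"older, as in \Cref{lemma:gp_smooth}, by $\gnorm{\mA(\vx-\vq)}{p}^{p-2}\gnorm{\mA\vz}{p}^2$. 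Then \Cref{thm:gp_regression_blw_intro} converts both factors into $\norm{\cdot}_{\mM}$ norms, giving a bound of the form $(p+1)^{p-2}\norm{\vx-\vq}_{\mM}^{p-2}\norm{\vz}_{\mM}^2$. This is dominated by $R(\vx)/p$ precisely because $C_p=ep^p$ is large enough. The upper bound would then follow with constant roughly $(p-1)e$, which is at most $pe$.

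For the lower bound, I would reverse the roles: write $\vr_i=(\vr_i+\vs_i)-\vs_i$ and apply the same splitting, which gives $\norm{\vr_i}^{p-2}\le e\norm{\vr_i+\vs_i}^{p-2}+(p+1)^{p-2}\norm{\vs_i}^{p-2}$, presumably \Cref{lem:small_lemma_for_hessian_stability_two}. This yields $Q_{\vq}\le eQ_{\vx}+(\text{residual})$, and the residual is again absorbed by $R(\vx)$. Combining with the factor $p-1$ gives the claimed constant $1/(2pe)$.

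The main obstacle is the constant bookkeeping. The regularizer coefficient must dominate $(p+1)^{p-2}$ times the factors coming from the Hessian bounds, and the sharp constants $pe$ and $1/(2pe)$ must come out exactly. If $(1+1/\theta)^{p-2}$ with $\theta=1/p$ proves too lossy against $ep^p$, I would instead tune $\theta$, or split into cases according to whether $\norm{\vs_i}\le\norm{\vr_i}/p$.
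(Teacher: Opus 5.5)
Your proposal is correct and is essentially the paper's argument. Both use the Hessian sandwich of \Cref{lemma:gp_regression_hessian_upperbound}, a groupwise split of $\norm{\mA_{S_i}\vx-\vb_{S_i}}_2^{p-2}$ into a $\vq$-part and an $\mA_{S_i}(\vx-\vq)$-part with conjugate weights, H\"older plus the block Lewis weight bound to turn the residual into $\norm{\vx-\vq}_{\mM}^{p-2}\norm{\vz}_{\mM}^2$, absorption into the regularizer Hessian because $C_p=ep^p$, and the same steps with $\vx$ and $\vq$ swapped for the lower bound.

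The only real difference is the split weights. The paper uses $p-1$ and $\frac{p-1}{p-2}$, which forces a separate $p=2$ case. Your $\theta=1/p$ covers all $p\ge 2$ at once and bounds the residual by $\frac{p-1}{p^2}R(\vx)$, so the constant bookkeeping you flagged as the main obstacle closes with room to spare. One correction: \Cref{lem:small_lemma_for_hessian_stability_one} and \Cref{lem:small_lemma_for_hessian_stability_two} are scalar checks on the leftover coefficient of the regularizer Hessian, not the splitting inequality you attributed to them.
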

\begin{proof}[Proof of \Cref{lemma:gp_hessian_stable}]
    Using an arbitrary $\vz\in\mathbb{R}^d$ we can write the following quadratic form of the hessian of $f$,
    \begin{align}
        \vz^{\top}\nabla^2 f(\vx)\vz &\le^{\text{(a)}} p\cdot(p-1)\sum_{i=1}^m\norm{\mA_{S_i}\vx - \vb_{S_i}}_2^{p-2}\norm{\mA_{S_i}\vz}_2^2\enspace,\nonumber\\
        &= p\cdot(p-1)\sum_{i=1}^m\norm{\mA_{S_i}(\vx - \vq) + \mA_{S_i}\vq - \vb_{S_i}}_2^{p-2}\norm{\mA_{S_i}\vz}_2^2\enspace,\nonumber\\
        &\leq^{\text{(b)}}  p\cdot(p-1)\sum_{i=1}^m\left(\alpha_p^{p-2}\norm{\mA_{S_i}(\vx - \vq)}_2^{p-2}\norm{\mA_{S_i}\vz}_2^2 + \beta_p^{p-2}\norm{\mA_{S_i}\vq - \vb_{S_i}}_2^{p-2}\norm{\mA_{S_i}\vz}_2^2\right)\enspace,\nonumber\\
        &\leq^{\text{(c)}}p\cdot(p-1)\cdot\alpha_p^{p-2}\sum_{i=1}^m\norm{\mA_{S_i}(\vx - \vq)}_2^{p-2}\norm{\mA_{S_i}\vz}_2^2 + (p-1)\cdot\beta_p^{p-2}\vz^{\top}\nabla^2f(\vq)\vz\enspace,\nonumber\\
        &\leq^{\text{(d)}}p\cdot(p-1)\cdot\alpha_p^{p-2}\left(\norm{\vx-\vq}^p_{\mM}\right)^{(p-2)/p}\left(\norm{\vz}^p_{\mM}\right)^{2/p}+ (p-1)\cdot\beta_p^{p-2}\vz^{\top}\nabla^2f(\vq)\vz\enspace,\nonumber\\
        &= p\cdot(p-1)\cdot\alpha_p^{p-2}\norm{\vx-\vq}^{p-2}_{\mM}\norm{\vz}^2_{\mM} + (p-1)\cdot\beta_p^{p-2}\vz^{\top}\nabla^2f(\vq)\vz\enspace,\nonumber\\
        &\leq^{(e)} \frac{(p-1)\cdot \alpha_p^{p-2}}{C_p} \vz^{\top}\nabla^2g_{\vq}(\vx)\vz + (p-1)\cdot\beta_p^{p-2}\vz^{\top}\nabla^2f(\vq)\vz\enspace,\label{eq:quad_form_f_ub}
    \end{align}
    where in (a) we apply the upper bound from \Cref{lemma:gp_regression_hessian_upperbound}, in (b) we pick $\alpha_p,\beta_p\geq 1$ such that $1/\alpha_p + 1/\beta_p = 1$ (we will choose them later), in (c) we apply the lower bound from \Cref{lemma:gp_regression_hessian_upperbound}, in (d) we use the choice of our weights in designing $\mM$ and \Cref{thm:gp_regression_blw_intro}\todo{write something more explicit for justifying (d).} and finally in (e) we use the following calculations for the regularizer term for some $\vz\in\mathbb{R}^d$,
    \begin{align*}
        g_{\vq}(\vx) &\coloneqq C_p\norm{\vx-\vq}^p_{\mM}\enspace,\\    
        \nabla g_{\vq}(\vx) &= pC_p\norm{\vx-\vq}_{\mM}^{p-2}{\mM}(\vx-\vq)\enspace,\\
        \nabla^2 g_{\vq}(\vx) &= pC_p\norm{\vx-\vq}_{\mM}^{p-2}\mM + p(p-2)C_p\norm{\vx-\vq}_{\mM}^{p-4}\mM(\vx-\vq)(\vx-\vq)^{\top}\mM\enspace,\\
        \vz^{\top}\nabla^2 g_{\vq}(\vx)\vz &= pC_p\norm{\vx-\vq}_{\mM}^{p-2}\norm{\vz}^2_{\mM}+ p(p-2)C_p\norm{\vx-\vq}_{\mM}^{p-4}\left((\vx-\vq)^{\top}\mM\vz\right)^2 \geq^{(p\geq 2)} 0\enspace.
    \end{align*}
    Combining \eqref{eq:quad_form_f_ub} with the definition of $f_{\vq}$ gives us,
    \begin{align*}
        \vz^{\top}\nabla^2f_{\vq}(\vx)\vz &= \vz^{\top}\nabla^2f(\vx)\vz + \vz^{\top}\nabla^2g_{\vq}(\vx)\vz\enspace,\\
        &\leq^{\text{using }\eqref{eq:quad_form_f_ub}} (p-1)\cdot\beta_p^{p-2}\vz^{\top}\nabla^2f(\vq)\vz + \left(1+\frac{(p-1)\cdot \alpha_p^{p-2}}{C_p}\right) \vz^{\top}\nabla^2g_{\vq}(\vx)\vz\enspace.
    \end{align*}
    Thus, in order to finish the proof for the upper bound we need to pick $\alpha_p, \beta_p$. We split the analysis here into two cases: \textbf{A.} $p>2$ and \textbf{B.} $p=2$.
    
    \paragraph{\underline{Case A.} ($p>2$)} For simplicity we will just pick $\alpha_p = p-1$ and $\beta_p = \frac{p-1}{p-2}$ which implies,
    \begin{align*}
        \vz^{\top}\nabla^2f_{\vq}(\vx)\vz &\leq (p-1)\cdot\left(1 + \frac{1}{p-2}\right)^{p-2}\vz^{\top}\nabla^2f(\vq)\vz + \left(1+\frac{(p-1)\cdot (p-1)^{p-2}}{C_p}\right) \vz^{\top}\nabla^2g_{\vq}(\vx)\vz\enspace,\\
        &\leq (p-1)\cdot e\vz^{\top}\nabla^2f(\vq)\vz + \left(1+\frac{(p-1)^{p-1}}{C_p}\right) \vz^{\top}\nabla^2g_{\vq}(\vx)\vz\enspace,\\
        &= \frac{(p-1)\cdot e}{2} \vz^{\top}\left(\nabla^2h_{\vq}(\vx) - \nabla^2g_{\vq}(\vx)\right)\vz + \left(1+\frac{(p-1)^{p-1}}{C_p}\right) \vz^{\top}\nabla^2g_{\vq}(\vx)\vz\enspace,\\
        &\leq^{(p\geq 2)} p\cdot e \vz^{\top}\nabla^2h_{\vq}(\vx)\vz + \left(1+\frac{(p-1)^{p-1}}{C_p} - \frac{(p-1)\cdot e}{2}\right)\vz^{\top}\nabla^2g_{\vq}(\vx)\vz\enspace,\\
        &= p\cdot e \vz^{\top}\nabla^2h_{\vq}(\vx)\vz + \left(1+\frac{(p-1)^{p-1}}{ep^p} - \frac{(p-1)\cdot e}{2}\right)\vz^{\top}\nabla^2g_{\vq}(\vx)\vz\enspace,\\
        &\leq^{\text{(\Cref{lem:small_lemma_for_hessian_stability_one})}} p\cdot e \vz^{\top}\nabla^2h_{\vq}(\vx)\vz\enspace,
    \end{align*}
    where in the final inequality we use \Cref{lem:small_lemma_for_hessian_stability_one} which tell us that for $p\geq 2$ the constant in front of $\vz^{\top}\nabla^2g_{\vq}(\vx)\vz$ is negative along with the fact that $\vz^{\top}\nabla^2g_{\vq}(\vx)\vz$ is non-negative. To get the lower bound we first exchange $\vx, \vq$ in \eqref{eq:quad_form_f_ub} (and use the values of $\alpha_p$ and $\beta_p$) to get,
    \begin{align*}
        &\vz^{\top}\nabla^2f(\vq)\vz \leq \frac{(p-1)\cdot(p-1){p-2}}{ep^p}\vz^{\top}\nabla^2g_{\vx}(\vq)\vz + (p-1)\left(1 + \frac{1}{p-2}\right)^{p-2}\vz^{\top}\nabla^2f(\vx)\vz\enspace,\\
        &\Rightarrow \vz^{\top}\nabla^2f(\vq)\vz \leq \frac{(p-1)^{p-1}}{ep^p}\vz^{\top}\nabla^2g_{\vx}(\vq)\vz + (p-1)e\vz^{\top}\nabla^2f(\vx)\vz\enspace,\\
        \Rightarrow &\frac{1}{(p-1)e}\vz^{\top}\nabla^2f(\vq)\vz - \frac{(p-1)^{p-2}}{e^2p^p}\vz^{\top}\nabla^2g_{\vx}(\vq)\vz \leq \vz^{\top}\nabla^2f(\vx)\vz\enspace.
    \end{align*}
    We can finally lower bound,
    \begin{align*}
        \vz^{\top}\nabla^2f_{\vq}(\vx)\vz &= \vz^{\top}\nabla^2f(\vx)\vz + \vz^{\top}\nabla^2g_{\vq}(\vx)\vz\enspace,\\
        &\geq \frac{1}{(p-1)e}\vz^{\top}\nabla^2f(\vq)\vz - \frac{(p-1)^{p-2}}{e^2p^p}\vz^{\top}\nabla^2g_{\vx}(\vq)\vz + \vz^{\top}\nabla^2g_{\vq}(\vx)\vz\enspace,\\
        &= \frac{1}{2(p-1)e}\vz^{\top}\left(\nabla^2h_{\vq}(\vx)- \nabla^2g_{\vq}(\vx)\right)\vz - \frac{(p-1)^{p-2}}{e^2p^p}\vz^{\top}\nabla^2g_{\vx}(\vq)\vz + \vz^{\top}\nabla^2g_{\vq}(\vx)\vz\enspace,\\
        &\geq^{(g_{\vq}(\vx) = g_{\vx}(\vq))} \frac{1}{2pe}\vz^{\top}\nabla^2h_{\vq}(\vx)\vz + \left(1 - \frac{1}{2(p-1)e}- \frac{(p-1)^{p-2}}{e^2p^p}\right)\vz^{\top}\nabla^2g_{\vq}(\vx)\vz\enspace,\\
        &\geq^{\text{(\Cref{lem:small_lemma_for_hessian_stability_two})}}\frac{1}{2pe}\vz^{\top}\nabla^2h_{\vq}(\vx)\vz\enspace, 
    \end{align*}
    where in the final inequality we use \Cref{lem:small_lemma_for_hessian_stability_two} and the fact that $\vz^{\top}\nabla^2g_{\vq}(\vx)\vz$ is non-negative. This finishes the proof for Case A.

    We finally consider the corner case with $p=2$. 

    \paragraph{\underline{Case B.} ($p=2$)} In this case the proof is trivial, and follows from simply writing the quadratic forms for $f_{\vq}$ and $h_{\vq}$. We do so below,
    \begin{align*}
        \vz^{\top}\nabla^2f_{\vq}(\vx)\vz &= \vz^{\top}\nabla^2f(\vx)\vz + \vz^{\top}\nabla^2g_{\vq}(\vx)\vz\enspace,\\
        &= \vz^{\top}\nabla^2f(\vx)\vz + 2C_2\norm{\vz}^2_{\mM}\enspace,\\
        &\leq 2\vz^{\top}\nabla^2f(\vx)\vz + 2C_2\norm{\vz}^2_{\mM} = \vz^{\top}\nabla^2h_{\vq}(\vx)\vz\enspace,
    \end{align*}
    which shows the relative smoothness with a constant of $1$ which is smaller (and hence better) than the claimed constant (for $p=2$) of $2e$ in the lemma. Now for the relative strong convexity we do the same,
    \begin{align*}
        \vz^{\top}\nabla^2f_{\vq}(\vx)\vz &= \vz^{\top}\nabla^2f(\vx)\vz + 2C_2\norm{\vz}^2_{\mM}\enspace,\\
        &\geq \frac{1}{2}\cdot\left(2\vz^{\top}\nabla^2f(\vx)\vz + 2C_2\norm{\vz}^2_{\mM}\right)\enspace,\\
        &= \frac{1}{2}\vz^{\top}\nabla^2h_{\vq}(\vx)\vz\enspace,
    \end{align*}
    which shows relative strong-convexity with a constant of $\frac{1}{2}$ which is larger (and hence better) than the claimed constant (for $p=2$) of $\frac{1}{4e}$ in the lemma. This finishes the proof for Case B. 
    
    This completes the proof of \Cref{lemma:gp_hessian_stable}.
\end{proof}
We prove two small technical lemmas that we used in the above proof now. 
\begin{lemma}\label{lem:small_lemma_for_hessian_stability_one}
    For all $p\geq 2$, $g(p) = 1+\frac{(p-1)^{p-1}}{ep^p} - \frac{(p-1)\cdot e}{2} \leq 0$.
\end{lemma}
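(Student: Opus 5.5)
We need to show $g(p) = 1+\frac{(p-1)^{p-1}}{ep^p} - \frac{(p-1)e}{2} \le 0$ for all $p \ge 2$. The plan is to bound the middle term by something simple and then observe that the linear term dominates.

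First, rewrite the middle term as
\begin{align*}
    \frac{(p-1)^{p-1}}{ep^p} = \frac{1}{ep}\inparen{1-\frac{1}{p}}^{p-1}.
\end{align*}
Since $0 < 1-\frac{1}{p} < 1$ and $p-1 \ge 1$, the factor $\inparen{1-\frac{1}{p}}^{p-1}$ is at most $1$. Hence the middle term is at most $\frac{1}{ep} \le \frac{1}{2e}$ for $p \ge 2$. This is the only step requiring any care, and it is a one-line monotonicity bound; one could alternatively use $(1-1/p)^{p-1}\le 1$ directly without appealing to $e^{-1}$ asymptotics.

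Second, the last term $\frac{(p-1)e}{2}$ is increasing in $p$, so it is at least $\frac{e}{2}$ for $p\ge 2$. Therefore
\begin{align*}
    g(p) \le 1 + \frac{1}{2e} - \frac{e}{2}.
\end{align*}
Numerically, $1 + \frac{1}{2e} \approx 1.184$ and $\frac{e}{2}\approx 1.359$, so the right-hand side is negative. To make this exact, note that $1+\frac{1}{2e}\le \frac{e}{2}$ is equivalent to $e^2 - 2e - 1 \ge 0$, i.e.\ $e \ge 1+\sqrt{2}$, which holds since $e > 2.71 > 2.415$.

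No real obstacle is expected. The main thing to check is that the crude bound $\frac{1}{ep}$ is already enough at the worst case $p=2$. There the exact value is $g(2) = 1 + \frac{1}{4e} - \frac{e}{2} < 0$, so it is.
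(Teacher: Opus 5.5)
Your proof is correct, but it takes a different route from the paper.

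The paper treats $g$ as a single function. It checks $g(2)=\frac{4e+1-2e^2}{4e}<0$, then differentiates. Using $\frac{d}{dp}\frac{(p-1)^{p-1}}{p^p}=-\frac{(p-1)^{p-1}}{p^p}\ln\frac{p}{p-1}$, it concludes $g'(p)<0$ and hence $g(p)\le g(2)$ on $[2,\infty)$.

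You instead bound the two $p$-dependent terms separately. Writing the middle term as $\frac{1}{ep}(1-\frac1p)^{p-1}\le\frac{1}{2e}$ and using $\frac{(p-1)e}{2}\ge\frac e2$ reduces everything to the numerical fact $e^2-2e-1\ge 0$, with no calculus.

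The paper's approach identifies the exact worst case, $p=2$, and so gives the slightly sharper uniform bound $g(p)\le 1+\frac{1}{4e}-\frac e2\approx-0.27$ versus your $\approx-0.18$. Nothing downstream in the Hessian-stability argument needs that slack, only the sign.

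Your version is shorter and avoids differentiating $(p-1)^{p-1}/p^p$, which is where the paper's write-up has small slips. It drops the factor $\frac1e$ from the derivative of the middle term, and it says ``increasing'' where it actually shows $g'<0$. Neither affects the paper's conclusion, but your argument has no such fragile step.

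One cosmetic point: your closing remark about the exact value of $g(2)$ is not what certifies the crude bound. That was already done by the check $1+\frac{1}{2e}\le\frac e2$, so the remark can be dropped.
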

\begin{proof}
     First note that at $p=2$ the function takes a strictly negative value,
     \begin{align*}
         g(2) = 1+\frac{(1}{e2^2} - \frac{e}{2} = \frac{4e+ 1 - 2e^2}{4e} < 0\enspace. 
     \end{align*}
     We will now show that the function is increasing in $p$ for $p\geq 2$,
     \begin{align*}
         g'(p) &= -\frac{(p-1)^{p-1}p^p(\ln(p) + 1)}{p^2p} + \frac{(p-1)^{p-1}(\ln(p-1) + 1)}{p^p} - \frac{e}{2}\enspace,\\
         &= -\frac{(p-1)^{p-1}\ln(p/(p-1))}{p^p} - \frac{e}{2} < 0\enspace. 
     \end{align*}
     Thus, the function attains its maximum value at $p=2$ in the range $p\geq 2$, implying it is strictly negative in that range.
\end{proof}

\begin{lemma}\label{lem:small_lemma_for_hessian_stability_two}
    For all $p\geq 2$, $g(p) = 1-\frac{1}{2(p-1)e} - \frac{(p-1)^{p-2}}{e^2p^p} \geq 0$.
\end{lemma}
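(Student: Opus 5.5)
We need $g(p) = 1 - \frac{1}{2(p-1)e} - \frac{(p-1)^{p-2}}{e^2p^p} \ge 0$ for all $p\ge2$. The plan is to bound each of the two subtracted terms by a small explicit constant, uniformly in $p\ge 2$, and check that the two constants sum to at most $1$. No monotonicity argument in $p$ should be needed (the previous lemma's derivative computation is not needed here).

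\textbf{First term.} Since $p-1\ge 1$, we have $\frac{1}{2(p-1)e}\le \frac{1}{2e}<0.19$.

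\textbf{Second term.} Write
\begin{align*}
\frac{(p-1)^{p-2}}{p^p} = \frac{1}{p^2}\inparen{1-\frac{1}{p}}^{p-2} \le \frac{1}{p^2}\le \frac14,
\end{align*}
using $0\le 1-\tfrac1p<1$ and $p-2\ge 0$. Hence the second term is at most $\frac{1}{4e^2}<0.04$.

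\textbf{Conclusion.} Combining the two bounds gives $g(p)\ge 1-0.19-0.04>0$ for every $p\ge2$, which is the claim.

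The only point requiring care is the exponent $p-2$: it must be nonnegative for $(1-1/p)^{p-2}\le 1$ to hold, and this is exactly the hypothesis $p\ge 2$. At $p=2$ the bound is attained with equality, since there $(p-1)^{p-2}/p^p = 1/4$. Otherwise the argument is routine, and the slack is large.
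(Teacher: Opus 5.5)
Your argument is correct, but it takes a different route from the paper. The paper first checks $g(2)=1-\frac{1}{2e}-\frac{1}{4e^2}>0$. It then differentiates, obtaining $g'(p)=\frac{1}{2(p-1)^2e}+\frac{(p-1)^{p-2}}{e^2p^p}\bigl(\ln\frac{p}{p-1}+\frac{1}{p-1}\bigr)>0$, and concludes that $g$ is increasing on $[2,\infty)$ and minimized at $p=2$.

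You instead bound each subtracted term separately by its own value at $p=2$. For the first term you use $p-1\ge 1$; for the second you use the factorization $\frac{(p-1)^{p-2}}{p^p}=\frac{1}{p^2}\bigl(1-\frac{1}{p}\bigr)^{p-2}$. This gives exactly $g(p)\ge g(2)$, the same sharp conclusion, without any calculus. In particular you avoid differentiating $(p-1)^{p-2}/p^p$, which is the error-prone step in the paper's argument.

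The only extra thing the paper's route provides is strict monotonicity of $g$, which is never needed. The lemma is used only to show that the coefficient multiplying the regularizer's Hessian term in the lower bound of \Cref{lemma:gp_hessian_stable} is nonnegative, and your uniform bound $g(p)\ge 0.77$ is enough for that.
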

\begin{proof}
    First note that at $p=2$ the function takes a strictly positive value,
    \begin{align*}
        g(2) = 1 - \frac{1}{2e} -\frac{1^0}{e^22^2} = 1 - \frac{1}{2e} - \frac{1}{4e^2} = \frac{4e^2 - 2e - 1}{4e^2} > 0\enspace.
    \end{align*}
    We will now show that the function is increasing in $p$ for $p\geq 2$,
    \begin{align*}
        g'(p) &= \frac{1}{2(p-1)^2e} + \frac{(p-1)^{p-2}p^p(\ln(p) + 1)}{e^2p^{2p}} - \frac{(p-1)^{p-2}(\ln(p-1) + (p-2)/(p-1))}{e^2p^p}\enspace,\\
        &= \frac{1}{2(p-1)^2e} + \frac{(p-1)^{p-2}(\ln(p) + 1)}{e^2p^{p}} - \frac{(p-1)^{p-2}(\ln(p-1) + 1 - 1/(p-1))}{e^2p^p}\enspace,\\
        &= \frac{1}{2(p-1)^2e} + \frac{(p-1)^{p-2}\left(\ln(p/(p-1)) + 1/(p-1)\right)}{e^2p^{p}} > 0\enspace.
    \end{align*}
    Thus, the function $g$ attains its minimum value at $p=2$ in the range $p\geq2$, implying that it is strictly positive in that range.
\end{proof}

\subsubsection{Strong Convexity of the Proximal Objective and Friends}

We begin by showing that the proximal objective enjoys a form of strong convexity.
\begin{lemma}
\label{lemma:fq_strongly_convex}
For all $\vx,\vd\in\R^d$, we have
\begin{align*}
    f_{\vq}(\vx+\vd) \ge f_{\vq}(\vx) + \ip{\nabla f_{\vq}(\vx), \vd} + \frac{4}{2^p}\inparen{\gnorm{\mA\vd}{p}^p + C_p\norm{\vd}_{\mM}^p}.
\end{align*}
\end{lemma}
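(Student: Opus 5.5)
The plan is to split $f_{\vq}$ into its two summands, $f(\vx)=\gnorm{\mA\vx-\vb}{p}^p$ and $g_{\vq}(\vx)\coloneqq C_p\norm{\vx-\vq}_{\mM}^p$. We then prove the first-order strong convexity inequality for each summand separately and add the two. Because gradients are linear, $\nabla f_{\vq}=\nabla f+\nabla g_{\vq}$, so the linear terms combine exactly into $\ip{\nabla f_{\vq}(\vx),\vd}$.

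For the first summand there is nothing new to do. \Cref{lemma:strong_convexity_gp} already gives
\[
f(\vx+\vd)\ge f(\vx)+\ip{\nabla f(\vx),\vd}+\frac{4}{2^p}\gnorm{\mA\vd}{p}^p .
\]

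For the regularizer, we reduce to \Cref{lemma:strong_convexity_component} by a change of variables. Write $\mM=\mM^{1/2}\mM^{1/2}$ using the PSD square root, and set $\vv\coloneqq\mM^{1/2}(\vx-\vq)$ and $\triangle\coloneqq\mM^{1/2}\vd$. Then $\norm{\vx+\vd-\vq}_{\mM}^p=\norm{\vv+\triangle}_2^p$. \Cref{lemma:strong_convexity_component}, applied in $\R^d$, gives
\[
\norm{\vv+\triangle}_2^p\ge\norm{\vv}_2^p+p\norm{\vv}_2^{p-2}\ip{\vv,\triangle}+\frac{4}{2^p}\norm{\triangle}_2^p .
\]
Translating back:
\begin{itemize}
\item $\norm{\vv}_2=\norm{\vx-\vq}_{\mM}$ and $\norm{\triangle}_2=\norm{\vd}_{\mM}$;
\item $p\norm{\vv}_2^{p-2}\ip{\vv,\triangle}=p\norm{\vx-\vq}_{\mM}^{p-2}(\vx-\vq)^{\top}\mM\vd$, which is exactly $\ip{\nabla g_{\vq}(\vx),\vd}/C_p$ by the gradient formula for $g_{\vq}$ computed in the proof of \Cref{lemma:gp_hessian_stable}.
\end{itemize}
Multiplying through by $C_p>0$ yields
\[
g_{\vq}(\vx+\vd)\ge g_{\vq}(\vx)+\ip{\nabla g_{\vq}(\vx),\vd}+\frac{4}{2^p}C_p\norm{\vd}_{\mM}^p .
\]

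Adding the two inequalities gives the claim. The only mildly delicate points are routine. When $\vv=0$, the gradient term is interpreted as $0$; this matches $\nabla g_{\vq}(\vq)=0$ for $p\ge2$, and the inequality $\norm{\triangle}_2^p\ge\frac{4}{2^p}\norm{\triangle}_2^p$ still holds. The PSD (possibly singular) $\mM$ also causes no issue, since only $\mM^{1/2}$ appears. I do not expect a real obstacle: the substantive work is already contained in \Cref{lemma:strong_convexity_component}.
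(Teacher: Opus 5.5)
Your proof is correct and matches the paper's argument: the paper also applies \Cref{lemma:strong_convexity_component} to $\mM^{1/2}(\vx-\vq)$ and $\mM^{1/2}\vd$ to get the inequality for the regularizer, then adds the conclusion of \Cref{lemma:strong_convexity_gp} and combines the linear terms into $\ip{\nabla f_{\vq}(\vx),\vd}$. Your separate check of the case $\vx=\vq$ is worth keeping, because the paper's proof of \Cref{lemma:strong_convexity_component} normalizes by $\norm{\vv}_2$ and never addresses $\vv=0$.
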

\begin{proof}[Proof of \Cref{lemma:fq_strongly_convex}]
Let $K_p \coloneqq \frac{4}{2^p}$.

The plan is to apply \Cref{lemma:strong_convexity_component} to $f_{\vq}(\vx+\vd)$. We start with the regularizer. Notice that
\begin{align}
    \norm{\vx+\vd-\vq}_{\mM}^p &= \norm{\mM^{1/2}(\vx+\vd-\vq)}_2^p = \norm{\mM^{1/2}(\vx-\vq) + \mM^{1/2}\vd}_2^p\enspace,\nonumber \\
    &\ge^{\text{(\Cref{lemma:strong_convexity_component})}} \norm{\mM^{1/2}(\vx-\vq)}_2^p \\
    &\quad + \ip{p\norm{\mM^{1/2}(\vx-\vq)}_2^{p-2}\mM^{1/2}(\vx-\vq),\mM^{1/2}\vd} + K_p\norm{\mM^{1/2}\vd}_2^p\enspace,\nonumber\\
    &= \norm{\vx-\vq}_{\mM}^p + \ip{p\norm{\vx-\vq}_{\mM}^{p-2}\mM(\vx-\vq), \vd} + K_p\norm{\vd}_{\mM}^p\enspace,\nonumber\\
    &= \norm{\vx-\vq}_{\mM}^p + \ip{\nabla_{\vx} \inparen{\norm{\vx-\vq}_{\mM}^p}, \vd} + K_p\norm{\vd}_{\mM}^p\enspace.\label{eq:M_norm_sc}
\end{align}
We combine this with the conclusion of \Cref{lemma:strong_convexity_gp}, giving
\begin{align*}
    f_{\vq}(\vx+\vd) &= f(\vx+\vd) + C_p\|\vx+\vd - \vq\|_{\mM}^p\enspace,\\
    &\geq^{\text{(\Cref{lemma:strong_convexity_gp})}} f(\vx) + \ip{\nabla f(\vx),\vd} + K_p\gnorm{\mA\vd}{p}^p+ C_p\|\vx+\vd - \vq\|_{\mM}^p\enspace,\\
    &\geq^{\text{\eqref{eq:M_norm_sc}}} f(\vx) + \ip{\nabla f(\vx),\vd} + K_p\gnorm{\mA\vd}{p}^p + C_p\norm{\vx-\vq}_{\mM}^p\\ 
    &\quad + C_p\ip{\nabla_{\vx} \inparen{\norm{\vx-\vq}_{\mM}^p}, \vd} + K_pC_p\norm{\vd}_{\mM}^p\enspace,\\
    &= \textcolor{red}{f(\vx)}  + \textcolor{red}{C_p\norm{\vx-\vq}_{\mM}^p} + \ip{\textcolor{blue}{\nabla_{\vx} \left(f(\vx) + C_p\norm{\vx-\vq}_{\mM}^p\right)},\vd}\\
    &\quad + K_p\gnorm{\mA\vd}{p}^p + K_pC_p\norm{\vd}_{\mM}^p\enspace,\\
    &= \textcolor{red}{f_{\vq}(\vx)} + \ip{\textcolor{blue}{\nabla f_{\vq}(\vx)},\vd} + K_p\inparen{\gnorm{\mA\vd}{p}^p + C_p\norm{\vd}_{\mM}^p}\enspace.
\end{align*}
completing the proof of \Cref{lemma:fq_strongly_convex}.
\end{proof}

We also show that the subproblems we solve in Line \ref{line:mirror_descent_exact} of \Cref{alg:mirror_descent} are strongly convex.
\begin{lemma}
\label{lemma:gp_h_strongly_convex}
Fix $\vz, \vq, \vd \in \R^d$ and let $L > 0$. Consider the function
\begin{align*}
    g(\vx) &\coloneqq \ip{\vz,\vx} + L\inparen{\norm{\vx-\vq}_{\nabla^2 f(\vq)}^2 + C_p\norm{\vx-\vq}_{\mM}^p}\enspace.
\end{align*}
Then,
\begin{align*}
    g(\vx+\vd) \ge g(\vx) + \ip{\nabla g(\vx),\vd} + L\inparen{\norm{\vd}_{\nabla^2 f(\vq)}^2 + \frac{4C_p}{2^p}\norm{\vd}_{\mM}^p}\enspace.
\end{align*}
In particular, if $\vz$ is the minimizer for $g$, then for any $\vd\in\R^d$, we have
\begin{align*}
    \norm{\vd}_{\mM} \le \frac{2}{p \cdot (4e)^{1/p}}\inparen{\frac{g(\vz+\vd) - g(\vz)}{L}}^{1/p}\enspace.
\end{align*}
\end{lemma}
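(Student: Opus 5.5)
The plan is to split $g$ into its three pieces, bound the first-order remainder of each separately, and add. The linear term $\ip{\vz,\vx}$ has zero remainder: $\ip{\vz,\vx+\vd} = \ip{\vz,\vx} + \ip{\vz,\vd}$ exactly. The quadratic piece $\norm{\vx-\vq}_{\nabla^2 f(\vq)}^2$ has Hessian $2\nabla^2 f(\vq)$, which is PSD because $f$ is convex. Its exact expansion is
\begin{align*}
\norm{\vx+\vd-\vq}_{\nabla^2 f(\vq)}^2 = \norm{\vx-\vq}_{\nabla^2 f(\vq)}^2 + \ip{2\nabla^2 f(\vq)(\vx-\vq),\vd} + \norm{\vd}_{\nabla^2 f(\vq)}^2 ,
\end{align*}
so its remainder is exactly $\norm{\vd}_{\nabla^2 f(\vq)}^2$. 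For the $p$-th power piece, we reuse \eqref{eq:M_norm_sc} from the proof of \Cref{lemma:fq_strongly_convex}. That bound applies \Cref{lemma:strong_convexity_component} to the vectors $\mM^{1/2}(\vx-\vq)$ and $\mM^{1/2}\vd$, giving
\begin{align*}
C_p\norm{\vx+\vd-\vq}_{\mM}^p \ge C_p\norm{\vx-\vq}_{\mM}^p + \ip{\nabla_{\vx}\inparen{C_p\norm{\vx-\vq}_{\mM}^p},\vd} + \frac{4C_p}{2^p}\norm{\vd}_{\mM}^p .
\end{align*}
Multiplying the last two pieces by $L>0$ and adding the three expansions produces $g(\vx) + \ip{\nabla g(\vx),\vd}$ plus the claimed remainder. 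That proves the first inequality.

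For the "in particular" claim, read $\vz$ as the minimizer of $g$; the statement overloads $\vz$, and only its role as the minimizer is used. At the minimizer $\nabla g = 0$, so the first part gives
\begin{align*}
g(\vz+\vd) - g(\vz) \ge L\inparen{\norm{\vd}_{\nabla^2 f(\vq)}^2 + \frac{4C_p}{2^p}\norm{\vd}_{\mM}^p} \ge \frac{4LC_p}{2^p}\norm{\vd}_{\mM}^p .
\end{align*}
The second step drops the nonnegative term $\norm{\vd}_{\nabla^2 f(\vq)}^2$, which is where $\nabla^2 f(\vq)\succeq 0$ is used. Now substitute $C_p = ep^p$ and solve for $\norm{\vd}_{\mM}$:
\begin{align*}
\norm{\vd}_{\mM}^p \le \frac{2^p}{4ep^p}\cdot\frac{g(\vz+\vd)-g(\vz)}{L}.
\end{align*}
Taking $p$-th roots gives $\norm{\vd}_{\mM} \le \frac{2}{p(4e)^{1/p}}\inparen{\frac{g(\vz+\vd)-g(\vz)}{L}}^{1/p}$, which is exactly the stated bound.

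I do not expect a real obstacle, since the only nontrivial ingredient, \Cref{lemma:strong_convexity_component}, is already available. The one point to check carefully is the change of variables through $\mM^{1/2}$ when $\mM$ is merely positive semidefinite. This is harmless: $\norm{\vy}_{\mM} = \norm{\mM^{1/2}\vy}_2$ holds for the PSD square root, and the gradient identity $\nabla_{\vx}\norm{\vx-\vq}_{\mM}^p = p\norm{\vx-\vq}_{\mM}^{p-2}\mM(\vx-\vq)$ holds whenever $\mM(\vx-\vq)\neq 0$. When $\mM(\vx-\vq)=0$ the term $\norm{\vx-\vq}_{\mM}^p$ is $0$, its gradient is $0$ because $p\ge2$, and the inequality from \Cref{lemma:strong_convexity_component} with $\vv=0$ reduces to equality.
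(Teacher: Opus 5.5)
Your proof is correct and matches the paper's argument: expand the quadratic term exactly, apply \eqref{eq:M_norm_sc} (that is, \Cref{lemma:strong_convexity_component} through $\mM^{1/2}$) to the $p$-th power term, and add; then set $\nabla g=0$ at the minimizer, drop the nonnegative $\norm{\vd}_{\nabla^2 f(\vq)}^2$ term, and solve using $C_p = ep^p$. One small slip in your side remark: at $\vv=0$, \Cref{lemma:strong_convexity_component} reads $\norm{\triangle}_2^p \ge \frac{4}{2^p}\norm{\triangle}_2^p$, which is an equality only when $p=2$, though it still holds for every $p\ge 2$ because $2^p\ge 4$.
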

\begin{proof}[Proof of \Cref{lemma:gp_h_strongly_convex}]
This is pretty much the same proof as \Cref{lemma:fq_strongly_convex}. It is easy to check that
\begin{align}
    \norm{(\vx+\vd)-\vq}_{\nabla^2 f(\vq)}^2 = \norm{\vx-\vq}_{\nabla^2 f(\vq)}^2 + \ip{2\nabla^2 f(\vq)(\vx-\vq),\vd} + \norm{\vd}_{\nabla^2 f(\vq)}^2\enspace,\label{eq:f_norm_expansion}
\end{align}
and using \Cref{lemma:strong_convexity_component} in the same way as in the proof of \Cref{lemma:fq_strongly_convex}, we have
\begin{align*}
    \norm{(\vx+\vd)-\vq}_{\mM}^p \ge^{\text{\eqref{eq:M_norm_sc}}} \norm{\vx-\vq}_{\mM}^p + \ip{p\norm{\vx-\vq}_{\mM}^{p-2}\mM(\vx-\vq),\vd} + \frac{4}{2^p}\norm{\vd}_{\mM}^p\enspace.
\end{align*}
Combining this with the definition of $g$ gives the following,
\begin{align*}
    g(\vx+\vd) &= \ip{\vz,\vx + \vd} + L\inparen{\norm{\vx + \vd-\vq}_{\nabla^2 f(\vq)}^2 + C_p\norm{\vx + \vd -\vq}_{\mM}^p}\enspace,\\
    &\geq^{\text{\eqref{eq:f_norm_expansion}, \eqref{eq:M_norm_sc}}} \textcolor{red}{\ip{\vz,\vx}} +  \ip{\vz,\vd} + \textcolor{red}{L\norm{\vx-\vq}_{\nabla^2 f(\vq)}^2} + L\ip{2\nabla^2 f(\vq)(\vx-\vq),\vd}\\
    &\qquad + L\norm{\vd}_{\nabla^2 f(\vq)}^2 + LC_p\left(\textcolor{red}{\norm{\vx-\vq}_{\mM}^p} + \ip{p\norm{\vx-\vq}_{\mM}^{p-2}\mM(\vx-\vq),\vd} + \frac{4}{2^p}\norm{\vd}_{\mM}^p\right)\enspace,\\
    &=  \textcolor{red}{g(\vx)}  + \ip{\textcolor{blue}{\vz} + \textcolor{blue}{2L\nabla^2 f(\vq)(\vx-\vq)} + \textcolor{blue}{LC_pp\norm{\vx-\vq}_{\mM}^{p-2}\mM(\vx-\vq)},\vd} \\
    &\qquad  + L\left(\norm{\vd}_{\nabla^2 f(\vq)}^2 + \frac{4C_p}{2^p}\norm{\vd}_{\mM}^p\right)\enspace,\\
    &= g(\vx) + \ip{\textcolor{blue}{\nabla g(\vx)},\vd} + L\left(\norm{\vd}_{\nabla^2 f(\vq)}^2 + \frac{4C_p}{2^p}\norm{\vd}_{\mM}^p\right)\enspace,
\end{align*}
which proves the first result of the lemma.

To get the second result, we observe that $\nabla g(\vz) = 0$ by the optimality of $\vz$. Ignoring the $\norm{\vd}_{\nabla^2 f(\vq)}$ terms and rearranging gives the conclusion of \Cref{lemma:gp_h_strongly_convex}.
\end{proof}

\subsubsection{Smoothness of the Proximal Objective}

We first bound the operator norm of a matrix related to the Hessian of the proximal objective.

\begin{lemma}
\label{lemma:gp_prox_hessian_opnorm}
For all $\vq,\vy\in\R^d$, we have
\begin{align*}
    \opnorm{\mM^{-1/2}\inparen{\nabla^2 f_{\vq}(\vy)}\mM^{-1/2}} &\le ep^2(p-1)\inparen{2f(\vq)^{1-\frac{2}{p}}+C_p\norm{\vy-\vq}_{\mM}^{p-2}}\enspace.
\end{align*}
\end{lemma}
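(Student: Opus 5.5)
The plan is to bound the quadratic form $\vz^{\top}\nabla^2 f_{\vq}(\vy)\vz$ by a multiple of $\norm{\vz}_{\mM}^2$ for arbitrary $\vz$. The Hessian splits as $\nabla^2 f_{\vq}(\vy) = \nabla^2 f(\vy) + \nabla^2 g_{\vq}(\vy)$, where $g_{\vq}(\vy) = C_p\norm{\vy-\vq}_{\mM}^p$. I handle the two pieces separately.

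For the regularizer, I use the explicit formula for $\nabla^2 g_{\vq}$ from the proof of \Cref{lemma:gp_hessian_stable}, together with Cauchy--Schwarz in the $\mM$ inner product, $((\vy-\vq)^{\top}\mM\vz)^2 \le \norm{\vy-\vq}_{\mM}^2\norm{\vz}_{\mM}^2$. This gives
\[
\vz^{\top}\nabla^2 g_{\vq}(\vy)\vz \le p(p-1)C_p\norm{\vy-\vq}_{\mM}^{p-2}\norm{\vz}_{\mM}^2,
\]
which is within the claimed $ep^2(p-1)C_p\norm{\vy-\vq}_{\mM}^{p-2}$ term.

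For $f$, I repeat steps (a)--(d) of \eqref{eq:quad_form_f_ub} from \Cref{lemma:gp_hessian_stable}, with $\vy$ in place of $\vx$ and $\alpha_p=p-1$, $\beta_p=\frac{p-1}{p-2}$ (the case $p=2$ is handled directly). Step (b) splits $\norm{\mA_{S_i}\vy-\vb_{S_i}}_2^{p-2}$ at $\vq$, and after Hölder this yields
\[
\vz^{\top}\nabla^2 f(\vy)\vz \le p(p-1)^{p-1}\norm{\vy-\vq}_{\mM}^{p-2}\norm{\vz}_{\mM}^2 + (p-1)e\,\vz^{\top}\nabla^2 f(\vq)\vz .
\]

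For the remaining term I apply the upper bound of \Cref{lemma:gp_regression_hessian_upperbound} at $\vq$, then Hölder with exponents $\frac{p}{p-2}$ and $\frac{p}{2}$:
\[
\vz^{\top}\nabla^2 f(\vq)\vz \le p(p-1)f(\vq)^{1-\frac{2}{p}}\gnorm{\mA\vz}{p}^2 \le p(p-1)f(\vq)^{1-\frac{2}{p}}\norm{\vz}_{\mM}^2,
\]
where the last step uses \Cref{thm:gp_regression_blw_intro}, i.e.\ $\gnorm{\mA\vz}{p}\le\norm{\vz}_{\mM}$. This contributes $ep(p-1)^2 f(\vq)^{1-2/p}\le 2ep^2(p-1)f(\vq)^{1-2/p}$, comfortably within the target constant.

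The main obstacle is the stray term $p(p-1)^{p-1}\norm{\vy-\vq}_{\mM}^{p-2}$ from the split. It must be absorbed into $ep^2(p-1)C_p\norm{\vy-\vq}_{\mM}^{p-2}$, which requires $(p-1)^{p-2}\le e^2p^{p+1}$. Since $C_p=ep^p$ dominates $(p-1)^{p-2}$, this holds trivially.

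Summing the three contributions and taking the supremum over $\vz$ with $\norm{\vz}_{\mM}=1$ gives the operator-norm bound on $\mM^{-1/2}\nabla^2 f_{\vq}(\vy)\mM^{-1/2}$. If $\mM$ is singular, everything is interpreted on its range (pseudoinverse), which is consistent since $\mA$ is assumed to have full column rank.
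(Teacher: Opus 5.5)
Your proof is correct and uses the same ingredients as the paper's: splitting each residual at $\vq$, H\"older, the block Lewis weight bound $\gnorm{\mA\vz}{p}\le\norm{\vz}_{\mM}$, and Cauchy--Schwarz on $\nabla^2 g_{\vq}$. The only difference is that the paper cites the relative-smoothness bound $\nabla^2 f_{\vq}(\vy)\preceq ep\inparen{2\nabla^2 f(\vq)+\nabla^2 g_{\vq}(\vy)}$ from \Cref{lemma:gp_hessian_stable}. You instead stop \eqref{eq:quad_form_f_ub} at step (d), which skips step (e) and \Cref{lem:small_lemma_for_hessian_stability_one} and gives slightly better constants.

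One bookkeeping nit: the stray term and the regularizer term share the same $\norm{\vy-\vq}_{\mM}^{p-2}$ budget. So the condition to check is $(p-1)^{p-2}+C_p\le epC_p$, not $(p-1)^{p-2}\le e^2p^{p+1}$; it holds just as easily.
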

\begin{proof}[Proof of \Cref{lemma:gp_prox_hessian_opnorm}]
Recall from the proof of \Cref{lemma:gp_hessian_stable} the definition of the regularization term $g_{\vq}(\vy) \coloneqq C_p\norm{\vy-\vq}_{\mM}^p$ for $C_p = ep^p$ as well as the following calculations,
 \begin{align*}
        g_{\vq}(\vy) &\coloneqq C_p\norm{\vy-\vq}^p_{\mM}\enspace,\\    
        \nabla g_{\vq}(\vy) &= pC_p\norm{\vy-\vq}_{\mM}^{p-2}{\mM}(\vy-\vq)\enspace,\\
        \nabla^2 g_{\vq}(\vy) &= pC_p\norm{\vy-\vq}_{\mM}^{p-2}\mM + p(p-2)C_p\norm{\vy-\vq}_{\mM}^{p-4}\mM(\vy-\vq)(\vy-\vq)^{\top}\mM\enspace.
\end{align*}
By \Cref{lemma:gp_hessian_stable}, we know that
\begin{align*}
    \nabla^2 f_{\vq}(\vy) \preceq ep\inparen{2\nabla^2 f(\vq) + \nabla^2 g_{\vq}(\vy)}.
\end{align*}
Observe that
\begin{align*}
    \quad \mM^{-1/2} \inparen{\nabla^2 g_{\vq}(\vy)} \mM^{-1/2} &= pC_p\inparen{\norm{\vy-\vq}_{\mM}^{p-2}+(p-2)\norm{\vy-\vq}_{\mM}^{p-4}\mM^{1/2}(\vy-\vq)(\vy-\vq)^{\top}\mM^{1/2}}\enspace,\\
    &\preceq pC_p\norm{\vy-\vq}_{\mM}^{p-2}\mI + (p-2)\norm{\vy-\vq}_{\mM}^{p-4}\opnorm{\mM^{1/2}(\vy-\vq)(\vy-\vq)^{\top}\mM^{1/2}}\mI\enspace,\\
    &\preceq pC_p\norm{\vy-\vq}_{\mM}^{p-2}\mI + (p-2)\norm{\vy-\vq}_{\mM}^{p-4}\norm{\mM^{1/2}(\vy-\vq)}_2^2\mI\enspace,\\ 
    &\preceq p(p-1)C_p \norm{\vy-\vq}_{\mM}^{p-2}\mI\enspace,
\end{align*}
and, applying \Cref{lemma:gp_regression_hessian_upperbound} (with $\mM^{-1/2}\vz$ as the vectors in the quadratic form) and H\"older inequality with norms $\|\cdot\|_{p/(p-2)},\ \|\cdot\|_{p/2}$, for $\vz\in\R^d$ we have\todo{add a reference for the third inequality. I keep forgetting which result this is in the previous sections.}
\begin{align*}
    \vz^{\top}\mM^{-1/2}\inparen{\nabla^2 f(\vq)} \mM^{-1/2}\vz &\le p(p-1)\sum_{i=1}^m \norm{\mA_{S_i}\vq-\vb_{S_i}}_2^{p-2}\norm{\mA_{S_i}\mM^{-1/2}\vz}_2^2 \\
    &\le p(p-1)\inparen{\sum_{i=1}^m \norm{\mA_{S_i}\vq-\vb_{S_i}}_2^p}^{\frac{p-2}{p}}\inparen{\sum_{i=1}^m \norm{\mA_{S_i}\mM^{-1/2}\vz}_2^p}^{\frac{2}{p}} \\
    &\le p(p-1)f(\vq)^{1-\frac{2}{p}}\norm{\mM^{-1/2}\vz}_{\mM}^2 = p(p-1)f(\vq)^{1-\frac{2}{p}}\norm{\vz}_2^2.
\end{align*}
Combining gives
\begin{align*}
    \mM^{-1/2}\inparen{\nabla^2f_{\vq}(\vy)}\mM^{-1/2} &\preceq ep\mM^{-1/2}\inparen{2\nabla^2f(\vq) + \nabla^2g_{\vq(\vy)}}\mM^{-1/2}\enspace,\\    
    &\preceq 2ep^2(p-1)f(\vq)^{1-\frac{2}{p}} + ep^2(p-1)C_p\norm{\vy-\vq}_{\mM}^{p-2}\enspace, \\
    &\preceq ep^2(p-1)\inparen{2f(\vq)^{1-\frac{2}{p}}+C_p\norm{\vy-\vq}_{\mM}^{p-2}},
\end{align*}
completing the proof of \Cref{lemma:gp_prox_hessian_opnorm}.
\end{proof}

Next, we show a bound on the norm of the gradient of any solution $\vx$ that is approximately optimal for $f_{\vq}$.

\begin{lemma}
\label{lemma:self_gradient_norm_small}
For all $\vq,\vx\in\R^d$, we have
\begin{align*}
    \norm{\mM^{-1}\nabla f_{\vq}(\vx)}_{\mM} &\le ep^2(p-1)\inparen{f(\vq)^{1-\frac{2}{p}}+C_p\max\inbraces{\norm{\vx-\vq}_{\mM},\norm{\vx_{\vq}-\vq}_{\mM}}^{p-2}}\norm{\vx-\vx_{\vq}}_{\mM}\enspace.
\end{align*}
\end{lemma}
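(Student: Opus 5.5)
Here $\vx_{\vq}$ denotes the minimizer of $f_{\vq}$, so $\nabla f_{\vq}(\vx_{\vq})=0$. The plan is a fundamental-theorem-of-calculus argument along the segment from $\vx_{\vq}$ to $\vx$, combined with the operator-norm bound of \Cref{lemma:gp_prox_hessian_opnorm}. Write $\vy_s \coloneqq \vx_{\vq}+s(\vx-\vx_{\vq})$ for $s\in[0,1]$. Then
\begin{align*}
    \nabla f_{\vq}(\vx) = \nabla f_{\vq}(\vx)-\nabla f_{\vq}(\vx_{\vq}) = \int_0^1 \nabla^2 f_{\vq}(\vy_s)(\vx-\vx_{\vq})\,ds.
\end{align*}
Applying $\mM^{-1}$ and taking the $\mM$-norm gives $\norm{\mM^{-1}\nabla f_{\vq}(\vx)}_{\mM} = \norm{\mM^{-1/2}\nabla f_{\vq}(\vx)}_2$. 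Inserting $\mM^{-1/2}\mM^{1/2}$ in front of $(\vx-\vx_{\vq})$ then bounds this by
\begin{align*}
    \int_0^1 \opnorm{\mM^{-1/2}\nabla^2 f_{\vq}(\vy_s)\mM^{-1/2}}\,ds\cdot\norm{\vx-\vx_{\vq}}_{\mM}.
\end{align*}
This step uses the paper's pseudoinverse convention and the assumption $\rank{\mA}=d$, so that $\mM$ is invertible on the relevant range.

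Next, bound the integrand pointwise by \Cref{lemma:gp_prox_hessian_opnorm}, giving
\begin{align*}
    ep^2(p-1)\inparen{2f(\vq)^{1-2/p}+C_p\norm{\vy_s-\vq}_{\mM}^{p-2}}.
\end{align*}
Since $s\mapsto\norm{\vy_s-\vq}_{\mM}$ is convex along the segment, it is maximized at an endpoint. Hence $\norm{\vy_s-\vq}_{\mM}\le\max\inbraces{\norm{\vx-\vq}_{\mM},\norm{\vx_{\vq}-\vq}_{\mM}}$, and $p\ge2$ makes $t\mapsto t^{p-2}$ monotone. Integrating over $s\in[0,1]$ yields the claimed bound, except possibly for the constant in front of $f(\vq)^{1-2/p}$.

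The main obstacle is that constant. \Cref{lemma:gp_prox_hessian_opnorm} carries a factor $2$ on $f(\vq)^{1-2/p}$, inherited from the $2\nabla^2 f(\vq)$ term in $h_{\vq}$, while the statement has none. I would redo that bound directly rather than through $\nabla^2 h_{\vq}$. The upper bound in the proof of \Cref{lemma:gp_hessian_stable} gives
\begin{align*}
    \nabla^2 f_{\vq}(\vy)\preceq (p-1)e\,\nabla^2 f(\vq)+\inparen{1+\tfrac{(p-1)^{p-1}}{C_p}}\nabla^2 g_{\vq}(\vy).
\end{align*}
The H\"older estimate from \Cref{lemma:gp_prox_hessian_opnorm} then contributes $e p(p-1)^2 f(\vq)^{1-2/p}\le ep^2(p-1)f(\vq)^{1-2/p}$. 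The regularizer term is at most $2p(p-1)C_p\norm{\vy-\vq}_{\mM}^{p-2}\le ep^2(p-1)C_p\norm{\vy-\vq}_{\mM}^{p-2}$, using $(p-1)^{p-1}\le C_p$.

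If this sharper route fails, I would settle for the constant $2$, which is harmless for the downstream use.
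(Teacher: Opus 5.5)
Your proposal is correct and follows the paper's argument: write $\nabla f_{\vq}(\vx)=\nabla f_{\vq}(\vx)-\nabla f_{\vq}(\vx_{\vq})$ via the Hessian along the segment, bound $\opnorm{\mM^{-1/2}\nabla^2 f_{\vq}(\vy)\mM^{-1/2}}$ pointwise, and control $\norm{\vy-\vq}_{\mM}$ by convexity. Your integral form is the rigorous version of the paper's single-point ``Taylor'' equality, which need not hold exactly for vector-valued maps (the resulting bound is unaffected).

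You are also right about the constant. The paper's proof applies \Cref{lemma:gp_prox_hessian_opnorm} as is, so it actually ends with $2f(\vq)^{1-\frac{2}{p}}$. Your route through the intermediate bound $\nabla^2 f_{\vq}(\vy)\preceq (p-1)e\,\nabla^2 f(\vq)+\inparen{1+\frac{(p-1)^{p-1}}{C_p}}\nabla^2 g_{\vq}(\vy)$ from the proof of \Cref{lemma:gp_hessian_stable} does go through. It gives $ep(p-1)^2\le ep^2(p-1)$, and since $(p-1)^{p-1}\le C_p$ and $2\le ep$, also $2p(p-1)\le ep^2(p-1)$. The case $p=2$ is immediate because $\nabla^2 f$ is then constant. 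So your argument proves the statement exactly as written.
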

\begin{proof}[Proof of \Cref{lemma:self_gradient_norm_small}]
We use a continuity argument. By Taylor's theorem, we know for some $\vy$ along the line connecting $\vx$ and $\vx_{\vq}$ (minimizer of $f_{\vq}$) that
\begin{align*}
    \nabla f_{\vq}(\vx) = \nabla f_{\vq}(\vx_{\vq}) + \nabla^2 f_{\vq}(\vy)(\vx-\vx_{\vq}) = \nabla^2 f_{\vq}(\vy)(\vx-\vx_{\vq})\enspace.
\end{align*}
Taking $\mM^{-1}$-norm of both sides gives,
\begin{align*}
    \norm{\nabla f_{\vq}(\vx)}_{\mM^{-1}} &= \norm{\mM^{-1/2}\nabla f_{\vq}(\vx)}_{2}\enspace,\\
     &= \norm{\mM^{-1/2}\nabla^2 f_{\vq}(\vy)(\vx-\vx_{\vq})}_{2}\enspace,\\
     &= \norm{\mM^{-1/2}\nabla^2 f_{\vq}(\vy)\mM^{-1/2}\mM^{1/2}(\vx-\vx_{\vq})}_{2}\enspace,\\
    &\le \opnorm{\mM^{-1/2}\inparen{\nabla^2 f_{\vq}(\vy)}\mM^{-1/2}} \cdot \norm{\vx-\vx_{\vq}}_{\mM}\enspace.
\end{align*}
The rest of the proof involves bounding the operator norm term. This follows directly from \Cref{lemma:gp_prox_hessian_opnorm}, from which we get (using convexity of $\|\cdot\|_{\mM}$),
\begin{align*}
    \opnorm{\mM^{-1/2}\nabla^2 f_{\vq}(\vy)\mM^{-1/2}} &\le ep^2(p-1)\inparen{2f(\vq)^{1-\frac{2}{p}}+C_p\norm{\vy-\vq}_{\mM}^{p-2}} \\
    &\le ep^2(p-1)\inparen{2f(\vq)^{1-\frac{2}{p}}+C_p\max\inbraces{\norm{\vx-\vq}_{\mM},\norm{\vx_{\vq}-\vq}_{\mM}}^{p-2}}.
\end{align*}
Putting everything together, we get
\begin{align*}
    \norm{\mM^{-1}\nabla f_{\vq}(\vx)}_{\mM}&= \norm{\nabla f_{\vq}(\vx)}_{\mM^{-1}}\enspace, \\
    &\le ep^2(p-1)\inparen{2f(\vq)^{1-\frac{2}{p}}+C_p\max\inbraces{\norm{\vx-\vq}_{\mM},\norm{\vx_{\vq}-\vq}_{\mM}}^{p-2}}\norm{\vx-\vx_{\vq}}_{\mM},
\end{align*}
completing the proof of \Cref{lemma:self_gradient_norm_small}.
\end{proof}

\subsubsection{Solving the Proximal Subproblems}

We begin by showing that the optimal solution to the proximal problem $\vx_{\vq_t}\coloneqq \argmin{\vx\in\R^d} f_{\vq_t}(\vx)$ is not too far from $\xstar$.

\begin{lemma}
\label{lemma:gp_regression_prox_diameter}
For all proximal queries $\vq_t$, we have
\begin{align*}
    \norm{\vx_{\vq_t}-\xstar}_{\mM} \le d^{\frac{1}{2}-\frac{1}{p}}\inparen{2^{\frac{3}{2}}f(\vx_t)+4}.
\end{align*}
\end{lemma}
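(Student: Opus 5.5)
The plan is to bound $\norm{\vx_{\vq_t}-\xstar}_{\mM}$ through the function value at $\vx_{\vq_t}$, using the strong convexity bound of \Cref{lemma:strong_convexity_gp}. That lemma gives
\begin{align*}
\norm{\vx_{\vq_t}-\xstar}_{\mM} \le 2^{3/2-3/p}d^{1/2-1/p}\inparen{f(\vx_{\vq_t})-f(\xstar)}^{1/p}.
\end{align*}
So it suffices to show that $f(\vx_{\vq_t})$ is controlled by $f(\vx_t)$ plus a term of order $d^{p/2-1}$ (up to $p$-dependent constants). We then take $p$th roots, use $(a+b)^{1/p}\le a^{1/p}+b^{1/p}$ and $a^{1/p}\le \max\{a,1\}\le a+1$ for $a=f(\vx_t)\ge f(\xstar)=1$, and absorb constants into $2^{3/2}f(\vx_t)+4$.

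To bound $f(\vx_{\vq_t})$, we use optimality of $\vx_{\vq_t}$ for $f_{\vq_t}$. Comparing against the candidate $\vq_t$ gives
\begin{align*}
f(\vx_{\vq_t}) \le f_{\vq_t}(\vx_{\vq_t}) \le f_{\vq_t}(\vq_t)=f(\vq_t).
\end{align*}
\Cref{lemma:fq_bounded} then gives $f(\vq_t)\le f(\vx_t)+(9p(p-1))^{p/2}d^{p/2-1}$. Taking the $p$th root of the second term produces $(9p(p-1))^{1/2}d^{1/2-1/p}$. Multiplied by the prefactor, this is $d^{1/2-1/p}$ times a quantity polynomial in $p$, not an absolute constant.

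To obtain the clean constant $4$, I would instead compare against $\xstar$:
\begin{align*}
f(\vx_{\vq_t})+C_p\norm{\vx_{\vq_t}-\vq_t}_{\mM}^p \le f(\xstar)+C_p\norm{\xstar-\vq_t}_{\mM}^p.
\end{align*}
I would then bound $\norm{\xstar-\vq_t}_{\mM}$ using \Cref{lemma:ms_acceleration_iterate_diameter} and \Cref{lemma:gp_regression_initialization}. Writing $\vq_t$ as a convex combination of $\vx_t$ and $\vv_t$, we have $\norm{\vv_t-\xstar}_{\mM}\le\sqrt2\cdot 2(2d)^{1/2-1/p}$, and $\norm{\vx_t-\xstar}_{\mM}\le 2^{3/2-3/p}d^{1/2-1/p}(f(\vx_t)-1)^{1/p}$ by \Cref{lemma:strong_convexity_gp}.

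A cleaner route avoids $C_p$ altogether with the triangle inequality:
\begin{align*}
\norm{\vx_{\vq_t}-\xstar}_{\mM}\le\norm{\vx_{\vq_t}-\vq_t}_{\mM}+\norm{\vq_t-\xstar}_{\mM}.
\end{align*}
The first term is at most $\norm{\xstar-\vq_t}_{\mM}$, since the display above gives $C_p\norm{\vx_{\vq_t}-\vq_t}_{\mM}^p\le C_p\norm{\xstar-\vq_t}_{\mM}^p$ after using $f(\vx_{\vq_t})\ge f(\xstar)$. Hence $\norm{\vx_{\vq_t}-\xstar}_{\mM}\le 2\norm{\vq_t-\xstar}_{\mM}$. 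Convexity of the norm then gives
\begin{align*}
\norm{\vq_t-\xstar}_{\mM}\le \max\inbraces{\norm{\vx_t-\xstar}_{\mM},\norm{\vv_t-\xstar}_{\mM}}\le d^{1/2-1/p}\inparen{2^{3/2-3/p}f(\vx_t)^{1/p}+2^{2-1/p}}.
\end{align*}
Doubling and using $f(\vx_t)^{1/p}\le f(\vx_t)$ gives something of the claimed shape.

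The main obstacle is matching the exact constants $2^{3/2}$ and $4$. The doubling in the last step inflates them to roughly $2^{5/2}$ and $8$, so I would tighten it. One option is to replace $\max$ by the convex combination and track the weights. Another is to get the factor on $\norm{\vq_t-\xstar}_{\mM}$ below $2$ by using the strong convexity of $f_{\vq_t}$ from \Cref{lemma:fq_strongly_convex}. Failing that, I would accept slightly worse absolute constants, since downstream uses only need $d^{1/2-1/p}\,\mathrm{poly}(f(\vx_t))$.
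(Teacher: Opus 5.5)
Your argument is the paper's argument. You compare $f_{\vq_t}$ at $\vx_{\vq_t}$ and at $\xstar$ to get $\norm{\vx_{\vq_t}-\vq_t}_{\mM}\le\norm{\xstar-\vq_t}_{\mM}$, apply the triangle inequality, and split $\vq_t$ into $\vx_t$ and $\vv_t$. You then bound the $\vx_t$ term with \Cref{lemma:strong_convexity_gp}, and the $\vv_t$ term with \Cref{lemma:ms_acceleration_iterate_diameter} plus \Cref{lemma:gp_regression_initialization}. The constant discrepancy you flag is real, and the paper's proof has it too. The paper writes $\norm{\vx_{\vq_t}-\xstar}_{\mM}\le 2\inparen{\norm{\vx_t-\xstar}_{\mM}+\norm{\vv_t-\xstar}_{\mM}}$ and then ``adds'' $2^{3/2}d^{1/2-1/p}f(\vx_t)^{1/p}$ and $4d^{1/2-1/p}$, silently dropping the factor $2$. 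So this route, as written, gives the lemma only with roughly $2^{5/2}$ and $8$ in place of $2^{3/2}$ and $4$.

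Neither of your proposed tightenings closes this.
\begin{itemize}
\item Tracking the convex-combination weights does not help, because nothing in the analysis controls how much weight sits on $\vv_t$. When $f(\vx_t)$ is near $1$, the doubled $\vv_t$ term $2\cdot 2^{2-1/p}d^{1/2-1/p}$ already exceeds $(2^{3/2}f(\vx_t)+4)d^{1/2-1/p}$ for $p>4.4$.
\item Applying \Cref{lemma:fq_strongly_convex} at $\vx_{\vq_t}$ in the direction $\xstar-\vx_{\vq_t}$ gives $\norm{\xstar-\vq_t}_{\mM}^p\ge\norm{\vx_{\vq_t}-\vq_t}_{\mM}^p+\frac{4}{2^p}\norm{\xstar-\vx_{\vq_t}}_{\mM}^p$. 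This yields only the factor $2^{1-2/p}$, which is still close to $2$ for large $p$.
\end{itemize}

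What removes the doubling entirely is first-order optimality plus convexity of $f$.
\begin{itemize}
\item Stationarity of $f_{\vq_t}$ gives $\nabla f(\vx_{\vq_t})=\lambda\mM(\vq_t-\vx_{\vq_t})$ with $\lambda=pC_p\norm{\vx_{\vq_t}-\vq_t}_{\mM}^{p-2}\ge 0$.
\item Convexity gives $\ip{\nabla f(\vx_{\vq_t}),\vx_{\vq_t}-\xstar}\ge f(\vx_{\vq_t})-f(\xstar)\ge 0$.
\item Together these give $\ip{\mM(\vq_t-\vx_{\vq_t}),\vx_{\vq_t}-\xstar}\ge 0$ (trivially so if $\vx_{\vq_t}=\vq_t$).
\item Expanding $\norm{(\vq_t-\vx_{\vq_t})+(\vx_{\vq_t}-\xstar)}_{\mM}^2$ then yields $\norm{\vx_{\vq_t}-\xstar}_{\mM}\le\norm{\vq_t-\xstar}_{\mM}$, with no factor $2$.
\end{itemize}
Combining this with your own bound $\norm{\vq_t-\xstar}_{\mM}\le d^{1/2-1/p}\inparen{2^{3/2-3/p}f(\vx_t)^{1/p}+2^{2-1/p}}$ and using $f(\vx_t)\ge 1$ gives exactly the stated bound $d^{1/2-1/p}\inparen{2^{3/2}f(\vx_t)+4}$.
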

\begin{proof}
In the rest of this proof, we omit the subscript $t$ wherever it is clear which iterates we are working with. 

We first show that
\begin{align*}
    \norm{\vx_{\vq}-\vq}_{\mM} \le \norm{\xstar-\vq}_{\mM}\enspace.
\end{align*}
To see this, suppose this is not the case. Then, we have
\begin{align*}
    f(\xstar) + C_p\norm{\xstar-\vq}_{\mM}^p < f(\vx_{\vq}) + C_p\norm{\vx_{\vq}-\vq}_{\mM}^p\enspace,
\end{align*}
which contradicts the optimality of $\vx_{\vq}$ for $f_{\vq}$.

We now write
\begin{align*}
    \norm{\vx_{\vq_t}-\xstar}_{\mM} &\le \norm{\vx_{\vq_t}-\vq_t}_{\mM} + \norm{\xstar-\vq_t}_{\mM}\enspace, \\
    &\le 2\norm{\xstar-\vq_t}_{\mM}\enspace,\\
    &\le 2\inparen{\norm{\vx_t-\xstar}_{\mM}+\norm{\vv_t-\xstar}_{\mM}}\enspace,
\end{align*}
where in the last inequality, we used the definition of $\vq_t$ from Line 6 in \Cref{alg:optimal_ms_acceleration} and the convexity of $\|\cdot\|_{\mM}$. The required control on $\norm{\vv_t-\xstar}_{\mM}$ comes from \Cref{lemma:ms_acceleration_iterate_diameter} and \Cref{lemma:gp_regression_initialization} (along with re-scaling assumption to make the optimal value $1$) -- we have
\begin{align*}
    \norm{\vv_t-\xstar}_{\mM} \le \sqrt{2}\norm{\vx_0-\xstar}_{\mM} \le 4d^{\frac{1}{2}-\frac{1}{p}}\enspace.
\end{align*}
For the other term, we apply \Cref{lemma:strong_convexity_gp} and get
\begin{align*}
    \norm{\vx_t-\xstar}_{\mM} \le 2^{\frac{3}{2}}d^{\frac{1}{2}-\frac{1}{p}}\inparen{f(\vx_t)-f(\xstar)}^{\frac{1}{p}} < 2^{\frac{3}{2}}d^{\frac{1}{2}-\frac{1}{p}}f(\vx_t)^{\frac{1}{p}}.
\end{align*}
Adding gives us the conclusion of \Cref{lemma:gp_regression_prox_diameter}.
\end{proof}

The next few lemmas are targeted at solving the proximal subproblems. We begin with a calculation that we will use in showing that the initial Bregman divergence between our initialization and the optimum is small.

\begin{lemma}
\label{lemma:initial_bregman_div}
In the same setting as \Cref{lemma:gp_hessian_stable}, for all $\vx,\vy \in \R^d$, we have
\begin{align*}
    h_{\vq}(\vx_{\vq}) \le p(p-1)f(\vq)^{1-\frac{2}{p}}\norm{\vx_{\vq}-\vq}_{\mM}^2 + C_p\norm{\vx_{\vq}-\vq}_{\mM}^p < f(\vq)+C_p\norm{\vx_{\vq}-\vq}_{\mM}^p \le 2f(\vq).
\end{align*}
\end{lemma}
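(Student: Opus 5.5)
The plan is to prove the three inequalities in the chain in order, each using an earlier result together with the optimality of $\vx_{\vq}$ for $f_{\vq}$.

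\textbf{First inequality.} Expand $h_{\vq}(\vx_{\vq}) = \norm{\vx_{\vq}-\vq}_{\nabla^2 f(\vq)}^2 + C_p\norm{\vx_{\vq}-\vq}_{\mM}^p$ and bound the quadratic-form term with the upper bound of \Cref{lemma:gp_regression_hessian_upperbound} at $\vz = \vx_{\vq}-\vq$:
\begin{align*}
\norm{\vx_{\vq}-\vq}_{\nabla^2 f(\vq)}^2 \le p(p-1)\sum_{i=1}^m \norm{\mA_{S_i}\vq-\vb_{S_i}}_2^{p-2}\norm{\mA_{S_i}(\vx_{\vq}-\vq)}_2^2 .
\end{align*}
Next apply H\"older with exponents $p/(p-2)$ and $p/2$. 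This gives $p(p-1)f(\vq)^{1-2/p}\gnorm{\mA(\vx_{\vq}-\vq)}{p}^2$. Finally, the upper bound $\gnorm{\mA\vz}{p}\le\norm{\vz}_{\mM}$ from \Cref{thm:gp_regression_blw_intro} yields the first inequality. This is exactly the computation already done in the proof of \Cref{lemma:gp_prox_hessian_opnorm}.

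\textbf{Last inequality.} Since $\vx_{\vq}$ minimizes $f_{\vq}$, we have $f(\vx_{\vq}) + C_p\norm{\vx_{\vq}-\vq}_{\mM}^p = f_{\vq}(\vx_{\vq}) \le f_{\vq}(\vq) = f(\vq)$. Because $f\ge 0$, this gives $C_p\norm{\vx_{\vq}-\vq}_{\mM}^p \le f(\vq)$, and hence $f(\vq) + C_p\norm{\vx_{\vq}-\vq}_{\mM}^p\le 2f(\vq)$.

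\textbf{Middle (strict) inequality.} This step is the main obstacle. It suffices to show
\begin{align*}
p(p-1)f(\vq)^{1-\frac{2}{p}}r^2 < f(\vq), \qquad r \coloneqq \norm{\vx_{\vq}-\vq}_{\mM}.
\end{align*}
From the previous paragraph, $r^p \le f(\vq)/C_p = f(\vq)/(ep^p)$, so $r^2 \le f(\vq)^{2/p}/(e^{2/p}p^2)$. Substituting,
\begin{align*}
p(p-1)f(\vq)^{1-\frac{2}{p}}r^2 \le \frac{p-1}{p\,e^{2/p}}f(\vq) < f(\vq),
\end{align*}
where the last step uses $\frac{p-1}{p}<1$ and $e^{2/p}>1$.

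The strict inequality requires $f(\vq)>0$. This holds under the normalization $f(\xstar)=1$, since then $f(\vq)\ge 1$. In the degenerate case where both sides vanish the claim reads $\le$, so I would mention this caveat. The key point is that the choice $C_p = ep^p$ is precisely what makes the constant $(p-1)/(pe^{2/p})$ strictly below $1$.
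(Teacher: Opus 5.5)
Your proof is correct and follows the paper's argument essentially step for step. Optimality of $\vx_{\vq}$ against the competitor $\vq$ gives $C_p\norm{\vx_{\vq}-\vq}_{\mM}^p \le f(\vq)$, and the Hessian upper bound, H\"older, and the block Lewis weight bound $\gnorm{\mA\vz}{p}\le\norm{\vz}_{\mM}$ handle the quadratic term. Your constant $\frac{p-1}{pe^{2/p}}$ is in fact the accurate one; the paper writes $\frac{p-1}{ep}$, a harmless slip. Your caveat that strictness needs $f(\vq)>0$, which the normalization $f(\xstar)=1$ guarantees, is also valid.
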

\begin{proof}[Proof of \Cref{lemma:initial_bregman_div}]
By optimality of $\vx_{\vq}$ for the subproblem, we have
\begin{align*}
    f(\vx_{\vq})+C_p\norm{\vx_{\vq}-\vq}_{\mM}^p \le f(\vq)+C_p\norm{\vq-\vq}_{\mM}^p=f(\vq).
\end{align*}
Rearranging gives,
\begin{align}
    \norm{\vx_{\vq}-\vq}_{\mM}^p \le \frac{f(\vq)-f(\vx_{\vq})}{C_p} \le \frac{f(\vq)}{C_p}\enspace.\label{eq:norm_M_ub_f}
\end{align}
We now use the definition of $h_{\vq}$ and \Cref{lemma:gp_regression_hessian_upperbound} to write
\begin{align*}
    h_{\vq}(\vx_{\vq}) &= \norm{\vx_{\vq}-\vq}_{\nabla^2 f(\vq)}^2 + C_p\norm{\vx_{\vq}-\vq}_{\mM}^p\enspace,\\
    &\le^{\text{\Cref{lemma:gp_regression_hessian_upperbound}}} p(p-1)\sum_{i=1}^m \norm{\mA_{S_i}\vq-\vb_{S_i}}_2^{p-2}\norm{\mA_{S_i}(\vx_{\vq}-\vq)}_2^2 + C_p\norm{\vx_{\vq}-\vq}_{\mM}^p\enspace,\\
    &\le^{\text{(a)}} p(p-1)\inparen{\sum_{i=1}^m \norm{\mA_{S_i}\vq-\vb_{S_i}}_2^p}^{1-\frac{2}{p}}\inparen{\sum_{i=1}^m \norm{\mA_{S_i}(\vx_{\vq}-\vq)}_2^p}^{\frac{2}{p}}+ C_p\norm{\vx_{\vq}-\vq}_{\mM}^p\enspace,\\
    &\le^{\text{(b)}} p(p-1)f(\vq)^{1-\frac{2}{p}}\norm{\vx_{\vq}-\vq}_{\mM}^{2} + C_p\norm{\vx_{\vq}-\vq}_{\mM}^p\enspace,\\
    &\le^{\text{\eqref{eq:norm_M_ub_f}}} p(p-1)f(\vq)^{1-\frac{2}{p}}\inparen{\frac{f(\vq)}{C_p}}^{\frac{2}{p}} + C_p\norm{\vx_{\vq}-\vq}_{\mM}^p\enspace,\\
    &=^{\text{($C_p = ep^p$)}} \frac{(p-1)}{ep}f(\vq)+ C_p\norm{\vx_{\vq}-\vq}_{\mM}^p\enspace,\\
    &< f(\vq) + C_p\norm{\vx_{\vq}-\vq}_{\mM}^p\enspace,\\
    &<^{\text{\eqref{eq:norm_M_ub_f}}} 2f(\vq)\enspace,
\end{align*}
where in (a) we used H\"older inequality with norms $\|\cdot\|_{p/(p-2)},\ \|\cdot\|_{p/2}$ and in (b) we used \Cref{thm:gp_regression_blw_intro} \todo{again what's the reference for this one?}. 

This completes the proof for the series of inequalities in \Cref{lemma:initial_bregman_div}.
\end{proof}
We now have the tools to show how to approximately solve problems in Line \ref{line:mirror_descent_exact} of \Cref{alg:mirror_descent} when applied in our setting. Although this and future complexity bounds depend on $f(\vx_t)$, we will later be able to use \Cref{thm:optimal_ms_acceleration} to ``bootstrap'' and get an unconditional upper bound below.
\begin{lemma}
\label{lemma:gp_prox_relativesmoothsolve}
Let $\alpha \le 1/2$. In the context of \Cref{alg:gp_regression_alg}, there exists an algorithm that approximately solves subproblems of the form (for $p\geq 2$ and $L=pe$),
\begin{align*}
    \vz \coloneqq \argmin{\vx\in\R^d} \ip{\vg,\vx} + L\inparen{\norm{\vx-\vq}_{\nabla^2 f(\vq)}^2 + C_p\norm{\vx-\vq}_{\mM}^p}\enspace,
\end{align*}
in the sense that we output $\vx$ for which,
\begin{equation*}
    \begin{aligned}
        \max\left\{\norm{\vx-\vz}_{\mM}, \norm{\mM^{-1}\vg + 2L\inparen{\mM^{-1}\nabla^2 f(\vq)(\vx-\vq) + C_p\norm{\vx-\vq}_{\mM}^{p-2}(\vx-\vq)}}_{\mM}\right\} \le \alpha
    \end{aligned}\enspace.
\end{equation*}
The algorithm takes $p^{O(1)}\logv{\frac{pd \cdot f(\vq)}{\alpha}}$ linear-system-solves in matrices of the form $\mA^{\top}\mB\mA$ for block-diagonal $\mB$, where each block in $\mB$ has size $\abs{S_i} \times \abs{S_i}$.
\end{lemma}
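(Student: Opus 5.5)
The problem splits into a scalar outer problem and a quadratic inner problem. The objective $g(\vx) \coloneqq \ip{\vg,\vx} + L\inparen{\norm{\vx-\vq}_{\nabla^2 f(\vq)}^2 + C_p\norm{\vx-\vq}_{\mM}^p}$ is a quadratic plus a function of the single scalar $r = \norm{\vx-\vq}_{\mM}$.

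Write $\vx = \vq + \vy$ and introduce an auxiliary parameter $\mu \ge 0$ standing for $pC_p r^{p-2}/2$. The first-order optimality condition is
\begin{align*}
\vg + 2L\nabla^2 f(\vq)\vy + LpC_p\norm{\vy}_{\mM}^{p-2}\mM\vy = 0 .
\end{align*}
For fixed $\mu$, the vector $\vy(\mu) = -\inparen{2L(\nabla^2 f(\vq) + \mu\mM)}^{-1}\vg$ costs one linear-system-solve in $\nabla^2 f(\vq)+\mu\mM$. By \eqref{eq:f_hess}, $\nabla^2 f(\vq)$ has the form $\mA^{\top}\mB\mA$ with block-diagonal $\mB$. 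Also, $\mM = \mA^{\top}\mW^{1-2/p}\mA$ is of the same form, so the sum is as well. Optimality then reduces to the scalar fixed-point equation $\phi(\mu) \coloneqq \frac{pC_p}{2}\norm{\vy(\mu)}_{\mM}^{p-2} = \mu$. As $\mu$ grows, $\norm{\vy(\mu)}_{\mM}$ is nonincreasing, while the right-hand side $\mu$ is increasing. Hence there is a unique root $\mu^\star$, and I will locate it by bisection, preferably on $\log\mu$.

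For the bisection I need an a priori bracket. For the upper end, \Cref{lemma:gp_h_strongly_convex} combined with a bound on $g(\vq)-g(\vz)$ gives $\norm{\vz-\vq}_{\mM} \lesssim (\text{poly}(p)\cdot\text{stuff})^{1/p}$. Alternatively, the observation used in \Cref{lemma:initial_bregman_div} gives the same kind of bound. In context, $\vg = \nabla f_{\vq}(\vx_{i-1}) - L\nabla h_{\vq}(\vx_{i-1})$ is bounded through \Cref{lemma:self_gradient_norm_small} and \Cref{lemma:fq_bounded}. The net effect is that every relevant scale is bounded by $\mathrm{poly}(p,d,f(\vq))$. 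For the lower end, I stop once $\mu$ is so small that the resulting error is below $\alpha$. Each bisection step costs one solve, so the number of steps is $O(\log(\text{range}/\text{precision}))$, which is $p^{O(1)}\logv{pd f(\vq)/\alpha}$.

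To convert a near-root $\mu$ into the two guarantees, I use the following argument. If $\abs{\mu - \mu^\star}$ is small in relative terms, then $\vy(\mu)$ is close to $\vy(\mu^\star) = \vz - \vq$. This holds because $\frac{d}{d\mu}\vy(\mu) = -(\nabla^2 f(\vq)+\mu\mM)^{-1}\mM\vy(\mu)$, whose $\mM$-norm is at most $\norm{\vy}_{\mM}/\mu$. That bound gives $\norm{\vx-\vz}_{\mM} \le \alpha$. For the stationarity residual, note that at $\vy(\mu)$ the exact equation holds with $\mu$ in place of $\phi(\mu)$. The residual is therefore $2L\abs{\mu - \phi(\mu)}\,\norm{\vy(\mu)}_{\mM}$, measured in $\mM$-norm after applying $\mM^{-1}$. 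I control it by shrinking the bisection interval until $\abs{\mu-\phi(\mu)}$ is below $\alpha/(2L\cdot\text{upper bound on }\norm{\vy}_{\mM})$. By continuity this is achievable within the logarithmic count.

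I expect the main obstacle to be the quantitative Lipschitz control of $\phi$ near $\mu^\star$, especially when $\mu^\star$ is tiny, so that the relative perturbation bound $\norm{\vy}/\mu$ degenerates. My first attempt will be to handle that regime separately. When $\mu \le \mu_{\min}$ for a threshold polynomially small in $\alpha/(pdf(\vq))$, the regularizer's gradient contribution is already below $\alpha$ by the a priori bound on $\norm{\vy}_{\mM}$. In that regime the solution of the pure quadratic problem (with $\mu=\mu_{\min}$) satisfies both requirements directly, using the $p$-strong convexity from \Cref{lemma:gp_h_strongly_convex} to turn small gradient into small distance.
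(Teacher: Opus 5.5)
Your proposal is correct and follows the paper's proof. Both run a one-dimensional search over the weight $\mu$ (the paper's $C_p\tau$) of the quadratic surrogate, use one solve in $\nabla^2 f(\vq)+\mu\mM=\mA^{\top}\mB\mA$ per step, rely on monotonicity of $\norm{\vy(\mu)}_{\mM}$, and use the residual identity $2L\abs{\mu-\phi(\mu)}\norm{\vy(\mu)}_{\mM}$. The one real difference is that you get argument accuracy from $\norm{\frac{d}{d\mu}\vy(\mu)}_{\mM}\le\norm{\vy(\mu)}_{\mM}/\mu$ instead of the paper's function-value comparison plus strong convexity. That same bound gives $\abs{\frac{d\log\phi}{d\log\mu}}\le p-2$, which is exactly the quantitative control of $\phi$ you flagged as the obstacle.

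Your fixed point $\mu^\star=\frac{pC_p}{2}\norm{\vz-\vq}_{\mM}^{p-2}$ also keeps a factor $p/2$ that the paper loses. The paper sets $\tau^\star=\norm{\vz-\vq}_{\mM}^{p-2}$, and the displayed residual has $2LC_p$ where the true gradient has $pLC_p$. So what you bound is the true residual $\norm{\mM^{-1}(\vg+L\nabla h_{\vq}(\vx))}_{\mM}$, which is what the inexact mirror-descent analysis actually requires.
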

\begin{proof}[Proof of \Cref{lemma:gp_prox_relativesmoothsolve}]
This proof is lengthy, and splitting it into lemmas would disrupt the intended reading flow. So we break it up into several key components here.

\smallpar{Motivation for the lemma.} First, let us see why this lemma is even useful. In each iteration of \Cref{alg:gp_prox_solver}, which in turn calls \Cref{alg:mirror_descent}, the main primitive is computing\todo{I think this part should appear before the lemma because it leads to confusion within the proof about what sub-problem we are solving. If it is motivation, we don't need a separate lemma we can just put it before the lemma.}
\begin{align*}
    \xtilde_i &= \argmin{\xtilde \in \R^d} f_{\vq_t}(\xtilde_{i-1}) + \ip{\nabla f_{\vq_t}(\xtilde_{i-1}), \xtilde-\xtilde_{i-1}} + peD_{h_{\vq_t}}(\xtilde,\xtilde_{i-1})\enspace, \\
    &= \argmin{\xtilde \in \R^d} f_{\vq_t}(\xtilde_{i-1}) + \ip{\nabla f_{\vq_t}(\xtilde_{i-1}), \xtilde-\xtilde_{i-1}} + pe \left(h_{\vq_t}(\xtilde) - h_{\vq_t}(\xtilde_{i-1}) - \ip{\nabla h_{\vq_t}(\xtilde_{i-1}), \xtilde - \xtilde_{i-1}}\right)\enspace, \\
    &= \argmin{\xtilde \in \R^d} f_{\vq_t}(\xtilde_{i-1})-peh_{\vq_t}(\xtilde_{i-1}) + \ip{\nabla f_{\vq_t}(
    \xtilde_{i-1})-pe\nabla h_{\vq_t}(\xtilde_{i-1}), \xtilde-\xtilde_{i-1}} + peh_{\vq_t}(\xtilde)\enspace, \\
    &= \argmin{\xtilde\in\R^d} \ip{\nabla f_{\vq_t}(
    \xtilde_{i-1})-pe\nabla h_{\vq_t}(\xtilde_{i-1}), \xtilde} + peh_{\vq_t}(\xtilde)\enspace.
\end{align*}
Observe that the subproblem is of the form
\begin{align}
    \vz &= \argmin{\vx\in\R^d} \ip{\vg, \vx} + peh_{\vq}(\vx)\enspace,\nonumber\\
    &= \argmin{\vx\in\R^d} \ip{\vg,\vx}+pe\inparen{\norm{\vx-\vq}_{\nabla^2 f(\vq)}^2 + C_p\norm{\vx-\vq}_{\mM}^p}\enspace,\label{eq:equiv_prox_problem}
\end{align}
and so our goal is to show how to solve these types of problems.

\smallpar{The general algorithm.} 
Consider solving the related subproblem (instead of \eqref{eq:equiv_prox_problem}),
\begin{align*}
    \argmin{\vx\in\R^d} \ip{\vg,\vx}+L\inparen{\norm{\vx-\vq}_{\nabla^2 f(\vq)}^2 + C_p\tau\norm{\vx-\vq}_{\mM}^2}
\end{align*}
for some fixed $\tau \ge 0$. This is a quadratic problem, and we can therefore solve it in $1$ linear-system-solve. It is easy to check that at optimality, we have
\begin{align*}
    \vg + 2pe\inparen{\nabla^2 f(\vq)(\vx-\vq) + C_p\tau\mM(\vx-\vq)} = 0\enspace,
\end{align*}
which rearranges to\footnote{Recall that $\nabla^2 f(\vq)=\mA^{\top}\mB_1\mA$ for block-diagonal $\mB_1$ and by construction, $\mM = \mA^{\top}\mW^{1-\frac{2}{p}}\mA$ where $\mW$ consists of the block Lewis weights on the diagonal. Thus, $\nabla^2 f(\vq) + C_p\tau\mM=\mA^{\top}\mB_2\mA$ for block-diagonal $\mB_2$.}
\begin{align*}
    \vx-\vq = -\frac{1}{2pe}\inparen{\nabla^2 f(\vq)+C_p\tau\mM}^{-1}\vg\enspace.
\end{align*}

Note that at optimality for our original subproblem \eqref{eq:equiv_prox_problem}, we have $\tau^\star := \norm{\vz-\vq}_{\mM}^{p-2}$ where $\vz$ is the solution of subproblem \eqref{eq:equiv_prox_problem}. Also note that $\norm{\vx-\vq}_{\mM}$ is a decreasing function in $\tau$ because,
\begin{align*}
    \norm{\vx-\vq}_{\mM}^2 = \frac{1}{4p^2e^2}\|\vg\|^2_{\inparen{\nabla^2 f(\vq)+C_p\tau\mM}^{-1}\mM\inparen{\nabla^2 f(\vq)+C_p\tau\mM}^{-1}}\enspace, 
\end{align*}
and for $\tau_1 \leq \tau_2$,
\begin{align*}
    \inparen{\nabla^2 f(\vq)+C_p\tau_1\mM}^{-1}\mM\inparen{\nabla^2 f(\vq)+C_p\tau_1\mM}^{-1} \succeq \inparen{\nabla^2 f(\vq)+C_p\tau_2\mM}^{-1}\mM\inparen{\nabla^2 f(\vq)+C_p\tau_2\mM}^{-1}\enspace.
\end{align*}
We therefore see that if $\tau > \norm{\vx-\vq}_{\mM}^{p-2}$ --- where $\vx$ is the optimal solution for a fixed $\tau$ --- then we are over-regularizing and need to decrease $\tau$ and vice-versa. 
This means we can binary search for the appropriate value of $\tau$. To execute this, we first need to establish the accuracy up to which we have to identify $\tau$. 

\smallpar{Convergence in Argument.} 
\todo{I can't follow this. I tried writing the function values, but it is certainly not obvious from looking at the expression; we need more steps and words here.}
By \Cref{lemma:gp_h_strongly_convex} (setting $\vd = \vx-\vz$), recall that it is enough to solve sub-problem \eqref{eq:equiv_prox_problem} up to additive accuracy $(p/2)^pL\alpha^p$ to get $\norm{\vx-\vz}_{\mM} \le \alpha$. Suppose we find $\tau$ for which $\tau^{\star} \le \tau \le \tau^{\star} + \delta$. By writing the objectives and comparing, we see that the $\vx$ we find from using $\tau$ gives us at most a $\delta \cdot d$-suboptimal solution compared to $\vz$. Plugging this into the bound from \Cref{lemma:gp_h_strongly_convex} tells us that we should choose $\delta = (p/2)^pL\alpha^p/d$, and plugging this into the binary search over $\tau \in [0,d^p(1+f(\vq))]$ gives us $p^{O(1)}\logv{\frac{pd \cdot f(\vq)}{\alpha}}$ steps, as needed.\todo{There is a typo here, but I think we discussed we anyways want to make the statement of the lemma iterate dependent; should discuss.}

\smallpar{First-order stationary point.} We first claim that it is enough to get
\begin{align*}
    \norm{\mM^{-1}\nabla h_{\vq}(\vx)-\mM^{-1}\nabla h_{\vq}(\vz)}_{\mM} \le \frac{\alpha}{L}.
\end{align*}
Indeed, let $\vz$ be the optimal solution for the subproblem. This means that it must satisfy the first order stationary condition, namely,
\begin{align*}
    \vg + L\nabla h_{\vq}(\vz) = 0.
\end{align*}
Multiplying both sides by $\mM^{-1}$, subtracting, and dividing both sides by $L$ gives us the expression we are interested in.

Writing first order stationary conditions gives both
\begin{equation*}
    \begin{aligned}
        \vg + 2L\inparen{\nabla^2 f(\vq)(\vx-\vq)+C_p\tau\mM(\vx-\vq)} &= 0 \\
        \vg + 2L\inparen{\nabla^2 f(\vq)(\vz-\vq)+C_p\tau^{\star}\mM(\vz-\vq)} &= 0
    \end{aligned}.
\end{equation*}
Multiplying both sides of both equalities by $\mM^{-1}$ and subtracting these gives
\begin{align*}
    2L\inparen{\mM^{-1}\nabla^2 f(\vq)(\vx-\vz) + C_p\inparen{\tau(\vx-\vq)-\tau^{\star}(\vz-\vq)}} = 0.
\end{align*}
Expanding out $L(\mM^{-1}\nabla h_{\vq}(\vx)-\mM^{-1}h_{\vq}(\vz))$ and subtracting the above gives the desired condition
\begin{align*}
    2L\abs{\tau-\norm{\vx-\vq}_{\mM}^{p-2}} \cdot \norm{\vx-\vq}_{\mM} \overset{?}{\le} \alpha.
\end{align*}
Next, let us run the binary search from above so that we get argument convergence, i.e. $\norm{\vx-\vz}_{\mM} \le \alpha^C \ll 0.1\alpha$ for some constant $C$. Using the fact that the approximate mirror descent step using $\vz$ decreases the objective value (\Cref{lemma:gp_md_descent}), observe that
\begin{align*}
    \norm{\vx-\vq}_{\mM} \le \norm{\vz-\vq}_{\mM} + \norm{\vx-\vz}_{\mM} \le \norm{\vq-\vz}_{\mM}+0.1\alpha \lesssim \sqrt{d}(1+f(\vq)). 
\end{align*}
It then follows that binary searching $\tau$ to additive accuracy $\alpha(\sqrt{d}(1+f(\vq)))^{-1}/L$ is sufficient. By the same argument as above, this takes $p^{O(1)}\logv{\frac{pd \cdot f(\vq_t)}{\alpha}}$ steps, completing the proof of \Cref{lemma:gp_prox_relativesmoothsolve}.
\end{proof}
We now combine \Cref{lemma:gp_prox_relativesmoothsolve} with \Cref{thm:gp_mirror_descent} and \Cref{alg:mirror_descent} to obtain approximate argument optimality for each proximal subproblem.
\begin{lemma}
\label{lemma:prox_subproblem_base}
Let $\gamma > 0$ and $\vx_{\vq} \coloneqq \argmin{\vx\in\R^d} f_{\vq}(\vx)$. There exists an algorithm that returns $\vx$ for which
\begin{align*}
    \norm{\vx-\vx_{\vq}}_{\mM} &\le \gamma.
\end{align*}
The algorithm takes at most $O\inparen{p^{O(1)}\logv{ph_{\vq}(\vx_{\vq})\inparen{\frac{4}{p\gamma}}^{p}}}$ iterations of solving subproblems of the form $\argmin{\vx\in\R^d} \ip{\vg,\vx}+eph_{\vq}(\vx)$ for fixed vectors $\vg$ and $\vq$.
\end{lemma}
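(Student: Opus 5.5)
We will run \Cref{alg:mirror_descent} on $f_{\vq}$ with reference function $h_{\vq}$, initialized at $\vx_0=\vq$. By \Cref{lemma:gp_hessian_stable}, $f_{\vq}$ is $L$-relatively smooth and $\mu$-relatively strongly convex in $h_{\vq}$ with $L=pe$ and $\mu=1/(2pe)$, so $\mu/L = 1/(2p^2e^2)$. Each step on Line \ref{line:mirror_descent_exact} has exactly the form $\argmin{\vx}\ip{\vg,\vx}+eph_{\vq}(\vx)$. We implement it approximately via \Cref{lemma:gp_prox_relativesmoothsolve} with some accuracy $\alpha$, to be fixed later as a polynomial in $\gamma$ and the problem parameters.

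\textbf{Step 1 (exact part of the error).} At $\vx_0=\vq$ we have $h_{\vq}(\vq)=0$ and $\nabla h_{\vq}(\vq)=0$. Hence $D_{h_{\vq}}(\vx_{\vq},\vq)=h_{\vq}(\vx_{\vq})$, and the first term of \Cref{thm:gp_mirror_descent} is at most $pe(1-1/(2p^2e^2))^T h_{\vq}(\vx_{\vq})$.

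\textbf{Step 2 (inexactness term).} Write $\vz_i$ for the exact Bregman step. Stationarity of $\vz_i$ gives $\nabla f_{\vq}(\vx_{i-1})+L(\nabla h_{\vq}(\vz_i)-\nabla h_{\vq}(\vx_{i-1}))=0$. Therefore $\triangle_i = L(\nabla h_{\vq}(\vx_i)-\nabla h_{\vq}(\vz_i))$, which is exactly the gradient residual controlled by \Cref{lemma:gp_prox_relativesmoothsolve}. Its $\mM^{-1}$-norm is $O(\alpha)$. It then suffices to bound $\norm{\vx_i-\vx_{\vq}}_{\mM}$ by $\mathrm{poly}(d,f(\vq))$, and this gives $\ip{\triangle_i,\vx_i-\vx_{\vq}}\le \alpha\cdot\mathrm{poly}(d,f(\vq))$.

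\textbf{Step 3 (the main obstacle).} The iterate bound is where we expect the real work. We plan to obtain it from \Cref{lemma:gp_md_descent}: if $\ip{\triangle_i,\vx_i-\vx_{i-1}}\le LD_h(\vx_{i-1},\vx_i)$ the step is a descent step, and otherwise the step moves by at most about $\alpha$. In either case $f_{\vq}(\vx_i)\le f_{\vq}(\vq)+O(\alpha)=f(\vq)+O(\alpha)$. Combined with the strong convexity of $f_{\vq}$ (\Cref{lemma:fq_strongly_convex}, using the $C_p\norm{\cdot}_{\mM}^p$ term), this keeps every iterate within $\mM$-distance $(f(\vq)+1)^{1/p}\cdot O(1)$ of $\vx_{\vq}$. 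An induction over $i$ should make this rigorous.

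\textbf{Step 4 (conversion to argument error).} \Cref{thm:gp_mirror_descent} now gives $f_{\vq}(\vx_j)-f_{\vq}(\vx_{\vq})\le pe(1-\mu/L)^T h_{\vq}(\vx_{\vq})+\alpha\,\mathrm{poly}$. Since $\nabla f_{\vq}(\vx_{\vq})=0$, \Cref{lemma:fq_strongly_convex} yields $\frac{4}{2^p}C_p\norm{\vx_j-\vx_{\vq}}_{\mM}^p\le f_{\vq}(\vx_j)-f_{\vq}(\vx_{\vq})$. Because $C_p=ep^p$, we have $\frac{4}{2^p}C_p = 4e(p/2)^p$. So it suffices to make the suboptimality at most $4e(p/2)^p\gamma^p$, that is, at most of order $p(p\gamma/4)^p$ up to constants.

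\textbf{Choice of parameters.} Choose $\alpha$ so that the inexact term is half of this target; this costs only logarithmic factors inside \Cref{lemma:gp_prox_relativesmoothsolve}. Choose $T$ so that $pe(1-1/(2p^2e^2))^T h_{\vq}(\vx_{\vq})$ is the other half. This holds once
\[
T = O\!\left(p^2 e^2 \logv{p\,h_{\vq}(\vx_{\vq})\inparen{\tfrac{4}{p\gamma}}^p}\right),
\]
which is the claimed bound.
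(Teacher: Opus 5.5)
Your proposal is correct and takes essentially the same route as the paper. Both run approximate mirror descent on $f_{\vq}$ with reference $h_{\vq}$ from $\vq$, use the relative condition number $L/\mu=2p^2e^2$ from \Cref{lemma:gp_hessian_stable} and $D_{h_{\vq}}(\vx_{\vq},\vq)=h_{\vq}(\vx_{\vq})$, make the Bregman steps inexact via \Cref{lemma:gp_prox_relativesmoothsolve}, and convert to argument error via \Cref{lemma:fq_strongly_convex}. Your Step 3 makes explicit the bound on $\norm{\vx_i-\vx_{\vq}}_{\mM}$ that the paper leaves implicit when it absorbs the $\ip{\triangle_i,\cdot}$ term, with two small corrections. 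A non-descent step moves by order $\alpha^{1/(p-1)}$ rather than $\alpha$, and the resulting small increases of $f_{\vq}$ accumulate over the $T$ steps, so also take $\alpha\lesssim 1/T$, which costs only a logarithmic factor.
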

\begin{proof}[Proof of \Cref{lemma:prox_subproblem_base}]
This proof resembles \cite[Lemma 4.5]{jls21}, which uses an exact version of mirror descent arising from \citet{lfn18}. The main difference between our argument and that of \cite[Lemma 4.5]{jls21} is that we rigorously identify a concrete upper bound on the complexity needed to satisfy the MS condition and argue that the mirror descent algorithm can handle the inexact Bregman proximal problem solves.

First, we use \Cref{lemma:fq_strongly_convex} on the approximate solution $\vx$ and true solution $\vx_{\vq}$ and get,
\begin{align*}
    f_{\vq}(\vx) &\ge f_{\vq}(\vx_{\vq}) + \frac{4}{2^p}\inparen{\gnorm{\mA(\vx-\vx_{\vq})}{p}^p+C_p\norm{\vx_{\vq}-\vx}_{\mM}^p}\enspace,\\
    &\geq f_{\vq}(\vx) + \frac{4C_p}{2^p}\norm{\vx_{\vq}-\vx}_{\mM}^p\enspace.
\end{align*}
Rearranging, we get
\begin{align*}
    \norm{\vx_{\vq}-\vx}_{\mM} &\le \inparen{\frac{2^p}{4C_p}}^{1/p}\inparen{f_{\vq}(\vx)-f_{\vq}(\vx_{\vq})}^{1/p}\enspace,\\
    &= \inparen{\frac{2^p}{4ep^p}}^{1/p}\inparen{f_{\vq}(\vx)-f_{\vq}(\vx_{\vq})}^{1/p}\enspace,\\
    &< \frac{2}{p}\inparen{f_{\vq}(\vx)-f_{\vq}(\vx_{\vq})}^{1/p}\enspace.
\end{align*}
Using the notation from \cite{lfn18}, for convex $\myfunc{h}{\R^d}{\R}$, let
\begin{align*}
    D_h(\vx,\vy) \coloneqq h(\vx) - h(\vy) - \ip{\nabla h(\vy), \vx-\vy}.
\end{align*}
Recall the conclusion of \Cref{lemma:gp_hessian_stable} -- we have for $\mu = 1/(2pe)$ and $L = pe$ that
\begin{align*}
    \mu\nabla^2 h_{\vq}(\vx) \preceq \nabla^2 f_{\vq}(\vx) \preceq L\nabla^2 h_{\vq}(\vx).
\end{align*}
By \Cref{thm:gp_mirror_descent} and \Cref{lemma:gp_hessian_stable}, using the same notation from \Cref{lemma:gp_hessian_stable}, we have for all iterations $t$ of \Cref{alg:mirror_descent} (with $f = f_{\vq}$ and $h = h_{\vq}$) that,\todo{not sure about the second equality...}
\begin{align*}
    f_{\vq}(\vx_t) - f_{\vq}(\vx_{\vq}) &\le L\inparen{1-\frac{\mu}{L}}^tD_{h_{\vq}}(\vx_{\vq},\vq)+\max_{1\le i \le t} \ip{\triangle_i, \vx_t-\vx_{\vq}}\enspace,\\
    &= 2L\inparen{1-\frac{\mu}{L}}^th_{\vq}(\vx_{\vq})+\max_{1\le i \le t} \ip{\triangle_i, \vx_t-\vx_{\vq}}\enspace.
\end{align*}
Hence, for $t \ge \frac{L}{\mu}\logv{Lh_{\vq}(\vx_{\vq})\inparen{\frac{4}{p\gamma}}^{p}}$, it is easy to check that for $p\geq 2$,
\begin{align*}
    f_{\vq}(\vx_t) - f_{\vq}(\vx_{\vq}) &\leq 2L\inparen{\frac{1}{e}}^{\logv{Lh_{\vq}(\vx_{\vq})\inparen{\frac{4}{p\gamma}}^{p}}}h_{\vq}(\vx_{\vq}) + \max_{1\le i \le t} \ip{\triangle_i, \vx_t-\vx_{\vq}}\enspace,\\
    &= 2\inparen{\frac{p\gamma}{4}}^{p} + \max_{1\le i \le t} \ip{\triangle_i, \vx_t-\vx_{\vq}}\enspace,\\
    &\leq \inparen{\frac{p\gamma}{2}}^{p} + \max_{1\le i \le t} \ip{\triangle_i, \vx_t-\vx_{\vq}}\enspace,
\end{align*}
and combining this with \Cref{lemma:gp_prox_relativesmoothsolve} to make the error term on the order of our accuracy, we get $\norm{\vx_{\vq}-\vx}_{\mM} \lesssim \gamma$. We thus conclude the proof of \Cref{lemma:prox_subproblem_base}. \todo{I am not sure what is happening in the last line....}
\end{proof}

The last step is to use our proximal problem solver to build a valid MS oracle.

\begin{lemma}
\label{lemma:prox_subproblem_ms}
In the context of \Cref{alg:optimal_ms_acceleration}, there exists an algorithm $(\xtilde_{t+1},\lambda_{t+1})=\oprox(\vq_t)$ that approximately solves
\begin{align*}
    \argmin{\xtilde\in\R^d} f(\xtilde)+ep^p\norm{\xtilde-\vq_t}_{\mM}^p
\end{align*}
using $O\inparen{p^{O(1)}\logv{\frac{pd \cdot f(\vx_t)}{\eps}}}$ linear-system-solves in $\mA^{\top}\mB\mA$, in the sense that
\begin{align*}
    \norm{\frac{1}{ep^{p+1}\norm{\xtilde_{t+1}-\vq_t}_{\mM}^{p-2}}\mM^{-1}\nabla f(\xtilde_{t+1})+(\xtilde_{t+1}-\vq_t)}_{\mM} \le \frac{1}{2}\norm{\xtilde_{t+1}-\vq_t}_{\mM}.
\end{align*}
\end{lemma}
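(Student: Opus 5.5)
The plan is to compare the output against the exact proximal minimizer $\vx_{\vq}\coloneqq\argmin{\vx\in\R^d} f_{\vq}(\vx)$, where $\vq=\vq_t$. For this minimizer the MS condition holds with $\sigma=0$. Indeed, $\nabla f_{\vq}(\vx_{\vq})=0$, and by the gradient formula for $g_{\vq}$ from the proof of \Cref{lemma:gp_hessian_stable} this reads
\begin{align*}
    \nabla f(\vx_{\vq}) + pC_p\norm{\vx_{\vq}-\vq}_{\mM}^{p-2}\mM(\vx_{\vq}-\vq)=0 .
\end{align*}
With $\lambda(\vx)\coloneqq pC_p\norm{\vx-\vq}_{\mM}^{p-2}=ep^{p+1}\norm{\vx-\vq}_{\mM}^{p-2}$, this is exactly $\lambda(\vx_{\vq})^{-1}\mM^{-1}\nabla f(\vx_{\vq})+(\vx_{\vq}-\vq)=0$. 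For a general point $\xtilde$ we have the identity
\begin{align*}
    \lambda(\xtilde)^{-1}\mM^{-1}\nabla f(\xtilde)+(\xtilde-\vq)=\lambda(\xtilde)^{-1}\mM^{-1}\nabla f_{\vq}(\xtilde).
\end{align*}
So it suffices to produce $\xtilde$ with $\norm{\mM^{-1}\nabla f_{\vq}(\xtilde)}_{\mM}\le \tfrac12\lambda(\xtilde)\norm{\xtilde-\vq}_{\mM}=\tfrac{p C_p}{2}\norm{\xtilde-\vq}_{\mM}^{p-1}$. We then output $\lambda_{t+1}=\lambda(\xtilde_{t+1})$.

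To get such a point, I would first run the solver of \Cref{lemma:prox_subproblem_base} (i.e.\ \Cref{alg:gp_prox_solver}) with target accuracy $\gamma$, which gives $\norm{\xtilde-\vx_{\vq}}_{\mM}\le\gamma$. Then \Cref{lemma:self_gradient_norm_small} gives
\begin{align*}
    \norm{\mM^{-1}\nabla f_{\vq}(\xtilde)}_{\mM}\le ep^2(p-1)\bigl(2f(\vq)^{1-2/p}+C_pR^{p-2}\bigr)\gamma ,
\end{align*}
where $R\coloneqq\norm{\vx_{\vq}-\vq}_{\mM}+\gamma$. The quantities entering this bound are controlled by earlier results. \Cref{lemma:fq_bounded} bounds $f(\vq)$ by $f(\vx_t)+(9p(p-1))^{p/2}d^{p/2-1}$. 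The inequality \eqref{eq:norm_M_ub_f} gives $\norm{\vx_{\vq}-\vq}_{\mM}^p\le f(\vq)/C_p$. \Cref{lemma:initial_bregman_div} gives $h_{\vq}(\vx_{\vq})\le 2f(\vq)$, which feeds the iteration count of \Cref{lemma:prox_subproblem_base}. On the other side, $\norm{\xtilde-\vq}_{\mM}\ge r-\gamma$, where $r\coloneqq\norm{\vx_{\vq}-\vq}_{\mM}$. So the required inequality holds as soon as
\begin{align*}
    \gamma\lesssim \frac{C_p (r/2)^{p-1}}{p^{O(1)}\bigl(f(\vq)^{1-2/p}+C_pR^{p-2}\bigr)}\quad\text{and}\quad \gamma\le r/2 .
\end{align*}
Once $\gamma$ is of this form, the cost is $p^{O(1)}\log(pd\,f(\vq)/\gamma)$ mirror descent steps, each solved by \Cref{lemma:gp_prox_relativesmoothsolve} in $p^{O(1)}\log(\cdot)$ linear-system-solves in $\mA^{\top}\mB\mA$. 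This yields the stated complexity once $\log(1/r)$ is shown to be $O(\log(pd f(\vx_t)/\eps))$.

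The main obstacle is that $r$ is unknown and may be tiny. I would handle this in two ways. First, the MS inequality is checkable with one gradient evaluation, so I would start with $\gamma$ of order $f(\vq)^{1/p}$ and halve it, stopping the first time the condition is verified. By the displayed sufficient condition, the loop terminates once $\gamma\approx r^{p-1}/\mathrm{poly}$. Second, I need a lower bound on $r$ whenever the outer method still needs this oracle call. If $r$ is tiny, the stationarity equation gives $\norm{\mM^{-1}\nabla f(\vx_{\vq})}_{\mM}= pC_p r^{p-1}$, which is small. Combining this with convexity and the diameter bounds (\Cref{lemma:gp_regression_prox_diameter}, \Cref{lemma:ms_acceleration_iterate_diameter}) shows that $f(\vx_{\vq})-f(\xstar)\le \norm{\nabla f(\vx_{\vq})}_{\mM^{-1}}\norm{\vx_{\vq}-\xstar}_{\mM}$ is below $\eps$. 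In that case $\vx_{\vq}$, and hence $\xtilde$, is already $\eps$-optimal and the algorithm may halt. Thus it suffices to treat $r\ge(\eps/\mathrm{poly}(p,d,f(\vx_t)))^{1/(p-1)}$, which makes the number of halvings, and hence the total, $p^{O(1)}\log(pd f(\vx_t)/\eps)$. I expect the bookkeeping in this case split, namely making the early-termination threshold consistent with \Cref{thm:optimal_ms_acceleration}, to be the delicate part.
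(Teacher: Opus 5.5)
Your proposal is correct and follows the paper's proof. Both compare against the exact prox minimizer $\vx_{\vq}$, reach argument accuracy via \Cref{lemma:prox_subproblem_base}, convert that to the MS inequality through \Cref{lemma:self_gradient_norm_small}, and lower-bound $\norm{\vx_{\vq}-\vq}_{\mM}$ by the same case split on whether $\vx_{\vq}$ is already $\eps$-optimal; your halving search is an implementable stand-in for the paper's choice of accuracy in terms of that unknown radius (warm-start the mirror descent or use the worst-case accuracy directly, so restarts do not cost an extra logarithm). Your $p^{O(1)}$ (and $3^{-p}$-type) shrinkage of $\gamma$ is in fact necessary, since with the paper's cap $\alpha=1/5$ its bound $ep(p-1)\alpha(1-\alpha)^{-(p-1)}$ on the $C_p$-term of \Cref{lemma:self_gradient_norm_small} (after dividing by $\lambda$) already exceeds $\tfrac12$ for every $p\ge 2$.
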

\begin{proof}[Proof of \Cref{lemma:prox_subproblem_ms}]
The point of this proof is to give an analysis of \Cref{alg:gp_prox_solver}.

For notational simplicity, let $\vx=\xtilde_{t+1}$ and $\lambda = \lambda_{t+1}$. We will reintroduce the indices when it is essential to clarify the iterations we are discussing.

First, it is helpful to see why the stated notion of approximation is useful. Let $C_p \coloneqq ep^p$. Observe that at exact optimality, we have
\begin{align}
    \nabla f(\vx_{\vq}) + \underbrace{ep^{p+1}\norm{\vx_{\vq}-\vq}_{\mM}^{p-2}}_{\lambda^{\star}}\mM(\vx-\vq)=0\enspace.\label{eq:prox_problem_stationarity_condition}
\end{align}
This motivates the approximation in our lemma statement, with us asking for a $\frac{1}{2}$-approximate MS oracle (\Cref{defn:ms_oracle}) for $f$. This also tells us that at optimality in \eqref{eq:prox_problem_stationarity_condition}, we have,
\begin{align*}
    &\nabla f(\vx_{\vq}) + ep^{p+1}\norm{\vx_{\vq}-\vq}_{\mM}^{p-2}\mM(\vx-\vq)=0\enspace,\\
    &\Leftrightarrow \mM^{-1/2}f(\vx_{\vq}) = -pC_p\norm{\vx_{\vq}-\vq}_{\mM}^{p-2}\mM^{1/2}(\vx-\vq)\enspace,\\
    &\Rightarrow \norm{\mM^{-1/2}f(\vx_{\vq})}_{2} = pC_p\norm{\vx_{\vq}-\vq}_{\mM}^{p-2}\norm{\mM^{1/2}(\vx-\vq)}_2\enspace,\\
    &\Leftrightarrow\norm{\vx_{\vq}-\vq}_{\mM} = \inparen{\frac{\norm{\mM^{-1}\nabla f(\vx_{\vq})}_{\mM}}{pC_p}}^{\frac{1}{p-1}}\enspace.
\end{align*}
We now break up our analysis into two cases. In the first, suppose that $\norm{\mM^{-1}\nabla f(\vx_{\vq})}_{\mM} \le \eps/\norm{\vx_{\vq}-\xstar}_{\mM}$. Then, by convexity, we have
\begin{align*}
    f(\vx_{\vq})-f(\xstar) \le \ip{\nabla f(\vx_{\vq}), \vx_{\vq}-\xstar} \le \norm{\mM^{-1}\nabla f(\vx_{\vq})}_{\mM}\norm{\vx_{\vq}-\xstar}_{\mM} \le \eps.   
\end{align*}
Hence, for the rest of the proof, assume that $\norm{\mM^{-1}\nabla f(\vx_{\vq})} \ge \eps/\norm{\vx_{\vq}-\xstar}_{\mM}$ (because if this is not the case, in the algorithm we can simply check whether the MS condition is satisfied -- if not, then we know this assumption was violated and we are done anyway)\todo{I am not sure I follow the statement in the parenthesis, maybe I need to discuss with you the hierarchy of the algorithms and see where specifically this termination condition comes in}. We run the algorithm implied by \Cref{lemma:prox_subproblem_base} and obtain an approximate solution $\vx$ for which
\begin{align}
    \norm{\vx-\vx_{\vq}}_{\mM} \le \alpha\norm{\vx_{\vq}-\vq}_{\mM} \text{ for } \alpha = \frac{1}{5}\min\inbraces{\frac{C_p}{ep(p-1)}\inparen{\frac{\norm{\vx_{\vq}-\vq}_{\mM}}{f(\vq)^{\frac{1}{p}}}}^{p-2}, 1}\enspace.\label{eq:lemma_D19_guarantee}
\end{align}
Since $\alpha < 1$ the guarantee in \eqref{eq:lemma_D19_guarantee} gives us,
\begin{align}
    \norm{\vx-\vx_{\vq}}_{\mM} \le \alpha\norm{\vx-\vq}_{\mM} \leq \frac{\alpha}{1-\alpha}
    \norm{\vx-\vq}_{\mM}\enspace,\label{eq:xq_x_UB_1}
\end{align}
and further applying triangle inequality gives us
\begin{align}
    \norm{\vx_{\vq}-\vq}_{\mM} &\le \norm{\vx-\vq}_{\mM} +  \norm{\vx_{\vq}-\vx}_{\mM}\enspace,\nonumber\\ 
    &\le \frac{1-\alpha}{1-\alpha}\norm{\vx-\vq}_{\mM} + \frac{\alpha}{1-\alpha}\norm{\vx-\vq}_{\mM}\enspace,\nonumber\\
    &\le \frac{1}{1-\alpha}\norm{\vx-\vq}_{\mM}\enspace.\label{eq:xq_x_UB_2}
\end{align}
Hence, we get
\begin{align}
    \frac{ep(p-1)f(\vq)^{1-\frac{2}{p}}}{C_p\norm{\vx-\vq}_{\mM}^{p-2}} \cdot \norm{\vx-\vx_{\vq}}_{\mM} &= \frac{ep(p-1)}{C_p} \cdot \inparen{\frac{f(\vq)^{\frac{1}{p}}}{\norm{\vx-\vq}_{\mM}}}^{p-2} \cdot \norm{\vx-\vx_{\vq}}_{\mM}\enspace,\nonumber\\
    &\le^{\eqref{eq:lemma_D19_guarantee}} \frac{1}{5}\norm{\vx_{\vq}-\vq}_{\mM}\enspace,\nonumber\\ &\le^{\eqref{eq:xq_x_UB_2}} \frac{1}{5}\cdot\frac{1}{1-\alpha}\norm{\vx-\vq}_{\mM}\enspace,\nonumber\\
    &\leq \frac{1}{4}\norm{\vx-\vq}_{\mM}\enspace,\label{eq:xq_x_UB_3}
\end{align}
where in the last inequality, we used that $\alpha \leq \frac{1}{5}$ due to our choice in \eqref{eq:lemma_D19_guarantee}. We now call \Cref{lemma:self_gradient_norm_small}, divide both sides by $\lambda$, and get\todo{I tried to make sense of the steps here, unclear how the first and last inequalities are working.}
\begin{align*}
    &\norm{\frac{1}{ep^{p+1}\norm{\vx-\vq}_{\mM}^{p-2}}\mM^{-1}\nabla f(\vx)+(\vx-\vq)}_{\mM} \\
    &\le^{\text{(\Cref{lemma:self_gradient_norm_small})}} ep(p-1)\inparen{\frac{f(\vq)^{1-\frac{2}{p}}}{C_p\norm{\vx-\vq}_{\mM}^{p-2}}+\max\inbraces{1, \inparen{\frac{\norm{\vx_{\vq}-\vq}_{\mM}}{\norm{\vx-\vq}_{\mM}}}^{p-2}}}\norm{\vx-\vx_{\vq}}_{\mM}\enspace,\\
    &\le^{\text{\eqref{eq:xq_x_UB_2}}} ep(p-1)\inparen{\frac{f(\vq)^{1-\frac{2}{p}}}{C_p\norm{\vx-\vq}_{\mM}^{p-2}}+\frac{1}{(1-\alpha)^{p-2}}}\norm{\vx-\vx_{\vq}}_{\mM}\enspace,\\
    &\le^{\eqref{eq:xq_x_UB_1}} \frac{ep(p-1)f(\vq)^{1-\frac{2}{p}}}{C_p\norm{\vx-\vq}_{\mM}^{p-2}} \cdot \norm{\vx-\vx_{\vq}}_{\mM} + \frac{ep(p-1)\alpha}{(1-\alpha)^{p-1}}\norm{\vx-\vq}_{\mM}\enspace,\\ 
    &\leq^{\eqref{eq:xq_x_UB_2},\ \eqref{eq:lemma_D19_guarantee}} \frac{1}{4}\norm{\vx-\vq}_{\mM} + \frac{ep(p-1)5^{p-2}}{4^{p-1}}\norm{\vx-\vq}_{\mM}\enspace,\\
    &\le \frac{1}{2}\norm{\vx-\vq}_{\mM}\enspace,
\end{align*}
giving us the approximation guarantee.

It remains to understand the complexity of solving the proximal subproblem to the accuracy required in \eqref{eq:lemma_D19_guarantee}. Plugging in $\gamma = \alpha\norm{\vx_{\vq}-\vq}_{\mM}$  into \Cref{lemma:prox_subproblem_base} and using our bound on $h_{\vq}(\vx_{\vq})$ from \Cref{lemma:initial_bregman_div} gives an iteration complexity of (ignoring the constant in front of the big-$O$)
\begin{align*}
    &\quad p^{O(1)}\logv{ph_{\vq}(\vx_{\vq})\inparen{\frac{2}{p\alpha\norm{\vx_{\vq}-\vq}_{\mM}}}^{p}} \\
    &\le p^{O(1)}\logv{p\inparen{p(p-1)f(\vq)^{1-\frac{2}{p}}\norm{\vx_{\vq}-\vq}_{\mM}^2 + C_p\norm{\vx_{\vq}-\vq}_{\mM}^p}\inparen{\frac{2}{p\alpha\norm{\vx_{\vq}-\vq}_{\mM}}}^{p}} \\
    &= p^{O(1)}\logv{\inparen{\frac{2}{p}}^{p} p\inparen{\frac{p(p-1)f(\vq)^{1-\frac{2}{p}}\norm{\vx_{\vq}-\vq}_{\mM}^2 + C_p\norm{\vx_{\vq}-\vq}_{\mM}^p}{\alpha^p\norm{\vx_{\vq}-\vq}_{\mM}^p}}} \\
    &= p^{O(1)}\logv{\inparen{\frac{2}{p}}^{p} p\inparen{\frac{p(p-1)f(\vq)^{1-\frac{2}{p}}}{\alpha^p\norm{\vx_{\vq}-\vq}_{\mM}^{p-2}} + \frac{C_p}{\alpha^p}}}
\end{align*}
We have two cases to analyze for the value of $\alpha$. In the first, suppose we get $\alpha = \frac{1}{5}$. By the definition of $\alpha$, this means we have
\begin{align*}
    \frac{C_p}{ep(p-1)}\inparen{\frac{\norm{\vx_{\vq}-\vq}_{\mM}}{f(\vq)^{\frac{1}{p}}}}^{p-2} \ge 1,
\end{align*}
which means the complexity we get is $p^{O(1)}\log p$. We now handle the other case, i.e., $\alpha=\frac{C_p}{5ep(p-1)}\inparen{\frac{\norm{\vx_{\vq}-\vq}_{\mM}}{f(\vq)^{\frac{1}{p}}}}^{p-2}$. Here, it will be useful to keep track of the timestep $t$ that we are working with. Recall that
\begin{align}
    \norm{\vx_{\vq_t}-\vq_t}_{\mM}^p = \inparen{\frac{\norm{\mM^{-1}\nabla f(\vx_{\vq_t})}_{\mM}}{pC_p}}^{\frac{p}{p-1}} \ge \inparen{\frac{\eps}{pC_p\norm{\vx_{\vq_t}-\xstar}_{\mM}}}^{\frac{p}{p-1}}\enspace,\label{eq:xq_lb}
\end{align}
so the complexity we want to control is given by
\begin{align*}
    &p^{O(1)}\logv{\inparen{\frac{2}{p}}^p p\inparen{\frac{2f(\vq_t)}{\alpha^p\norm{\vx_{\vq_t}-\vq_t}_{\mM}^{p}}}}\\ &\qquad\lesssim^{\eqref{eq:lemma_D19_guarantee}} p^{O(1)}\logv{\inparen{\frac{2}{p}}^p p\inparen{\frac{2\inparen{5ep(p-1)}^{p}f(\vq_t)^{p-1}}{C_p^p\norm{\vx_{\vq_t}-\vq_t}_{\mM}^{p(p-2)}\norm{\vx_{\vq_t}-\vq_t}_{\mM}^{p}}}}\enspace,\\
    &\qquad\lesssim p^{O(1)}\logv{ p\inparen{\frac{2\inparen{10(p-1)}^{p}f(\vq_t)^{p-1}}{p^{p^2}\norm{\vx_{\vq_t}-\vq_t}_{\mM}^{p(p-1)}}}}\enspace,\\
    &\qquad\lesssim^{\eqref{eq:xq_lb}} p^{O(1)}\logv{ p\inparen{\frac{2\inparen{10e(p-1)}^{p}p^{p(p+1)}f(\vq_t)^{p-1}}{p^{p^2}\epsilon^p}}\norm{\vx_{\vq_t}-\xstar}_{\mM}^p}\enspace,\\
    &\qquad\lesssim^{\eqref{eq:xq_lb}} p^{O(1)}\logv{\inparen{\frac{2\inparen{10e(p-1)}^{p}p^{p+1}f(\vq_t)^{p-1}}{\epsilon^p}}\norm{\vx_{\vq_t}-\xstar}_{\mM}^p}\enspace,\\
    &\qquad\lesssim p^{O(1)}\logv{\frac{pf(\vq_t)\norm{\vx_{\vq_t}-\xstar}_{\mM}}{\eps}}\enspace,\\
    &\qquad\lesssim^{\text{(\Cref{lemma:gp_regression_prox_diameter})}} p^{O(1)}\logv{\frac{pf(\vq_t)df(\vx_t)}{\eps}}\enspace,\\
    &\qquad\lesssim^{\text{(\Cref{lemma:fq_bounded})}} p^{O(1)}\logv{\frac{pf(\vx_t)}{\eps}}\enspace,
\end{align*}
completing the proof of \Cref{lemma:prox_subproblem_ms}.
\end{proof}

\subsection{The Algorithm}
\label{sec:gp_interpolating_alg}

We are now ready to combine the results from the previous two subsections to build our algorithm for $\cG_p$-regression and prove \Cref{mainthm:gp_regression_iteration_complexity}. The main algorithmic object here is \Cref{alg:gp_regression_alg}.

\begin{algorithm}[H]
\caption{\textsf{GpRegression}: Optimizes \eqref{eq:interpolation} up to $(1+\eps)$-multiplicative error}
\label{alg:gp_regression_alg}
\begin{algorithmic}[1]
\Require Regression problems $(\mA_{S_1},\vb_{S_1}),\dots,(\mA_{S_m},\vb_{S_m})$, accuracy $\eps > 0$
\State Using \cite[Algorithm 2]{mo23} with input $\insquare{\mA \vert \vb}$, find nonnegative diagonal $\mW$ such that for all $\vx\in\R^{d}$ and $c \in \R$,
\begin{align*}
    \gnorm{\mA\vx-c\vb}{\infty} \le \norm{\mW^{\frac{1}{2}-\frac{1}{p}}\mA\vx-c\mW^{1/2}\vb}_2 \le (2(d+1))^{\frac{1}{2}-\frac{1}{p}}\gnorm{\mA\vx-c\vb}{\infty}.
\end{align*}
\State Let $\vx_0 = \inparen{\mA^{\top}\mW^{1-\frac{2}{p}}\mA}^{-1}\mA^{\top}\mW^{1-\frac{2}{p}}\vb$. \Comment{$\vx_0 \coloneqq \argmin{\vx\in\R^d} \norm{\mW^{\frac{1}{2}-\frac{1}{p}}\mA\vx-\mW^{\frac{1}{2}-\frac{1}{p}}\vb}_2$.}
\State Using \Cref{alg:gp_prox_solver} and \Cref{lemma:prox_subproblem_ms}, implement a $\frac{1}{2}$-MS oracle for $f$ (\Cref{defn:ms_oracle})
\State Run \Cref{alg:optimal_ms_acceleration} with the oracle from the previous line and with $\vx_0$ as the initialization for $O\inparen{\mathsf{poly}(p)\min\inbraces{\rank{\mA},m}^{\frac{p-2}{3p-2}}\logv{\frac{d}{\eps}}^3}$ iterations.
\State \Return $\xhat$ the output of the previous step.
\end{algorithmic}
\end{algorithm}

\begin{proof}[Proof of \Cref{mainthm:gp_regression_iteration_complexity}]
By writing the stationary condition of the proximal problem, it makes sense to choose $\lambda_{t+1} = ep^{p+1}\norm{\xtilde_{t+1}-\vq_t}_{\mM}^{p-2}$.

It is easy to check that
\begin{align*}
    \norm{\xtilde_{t+1}-\vq_t}_{\mM} = \inparen{\frac{ep^{p+1}\norm{\xtilde_{t+1}-\vq_t}_{\mM}^{p-2}}{\inparen{(ep^{p+1})^{\frac{1}{p-1}}}^{p-1}}}^{\frac{1}{(p-1)-1}},
\end{align*}
and therefore the triple $(\xtilde_{t+1},\vq_t,ep^{p+1}\norm{\xtilde_{t+1}-\vq_t}_{\mM}^{p-2})$ always satisfies a $(p-1, (ep^{p+1})^{1/(p-1)})$-movement bound (\Cref{defn:movement_bound}).

Next, we calculate the iteration complexity we need to reduce the error to half of what we started with. For an arbitrary initial iterate $\vx$, let $\delta = 0.5(f(\vx)-f(\xstar))$. By \Cref{lemma:strong_convexity_gp}, we have
\begin{align*}
    \norm{\vx-\xstar}_{\mM}^{s+1} = \norm{\vx-\xstar}_{\mM}^p \le 2^{3p/2}d^{p/2-1}(f(\vx)-f(\xstar)),
\end{align*}
so combining this along with the fact that $c^s=ep^{p+1}$ and applying \Cref{thm:optimal_ms_acceleration} with our proximal solver \Cref{lemma:prox_subproblem_ms} yields
\begin{align*}
    T_{\mathsf{min}} &= \frac{p-1}{3}\inparen{pC_p\cdot2^{3p/2+1}d^{p/2-1}}^{\frac{2}{3p-2}} \lesssim p^{5/3}d^{\frac{p-2}{3p-2}}.
\end{align*}
Next, we initialize $\vx_0 \coloneqq \inparen{\mA^{\top}\mW^{1-2/p}\mA}^{-1}\mA^{\top}\mW^{1-2/p}\vb$. Using \Cref{thm:blw_to_l2} and \Cref{thm:gp_regression_blw_alg}, we have
\begin{align*}
    f(\vx_0) \le (2d)^{p/2-1}f(\xstar),
\end{align*}
so reaching an iterate $\vx$ for which $f(\vx)-f(\xstar) \le \eps f(\xstar)$ takes $T_{\mathsf{min}} \cdot \logv{d^{p/2-1}/\eps} = p^{8/3}d^{\frac{p-2}{3p-2}}\logv{\frac{d}{\eps}}$ calls to $\oprox$.

We now resolve the full iteration complexity, including the bootstrapping step to show that $f(\vx_t)$ is reasonably bounded so that we get an unconditional upper bound from \Cref{lemma:prox_subproblem_ms}. At the end of iteration $t$, from (loosely) inverting the bound in \Cref{thm:optimal_ms_acceleration}, we know that
\begin{align*}
    f(\vx_t)-f(\xstar)\le \frac{(Cp^3)^{\frac{3p-2}{2}}(2d)^{\frac{p}{2}-1}}{t^{\frac{3p-2}{2}}}.
\end{align*}
Since $\xtilde_{t+1}$ only depends on $\vq_t$, which in turn only depends on $\vx_t$ and $\vv_t$, it suffices to use the above bound for $f(\vx_t)$, which gives us an iteration complexity of $p^{O(1)}\logv{\frac{pd}{\eps}}$ to compute $\xtilde_{t+1}$ (which we get from plugging into \Cref{lemma:prox_subproblem_ms}).

Combining this with the iteration complexity of $\oprox$ gives us the result of \Cref{mainthm:gp_regression_iteration_complexity}.
\end{proof}

%% file: experiments.tex
\section{Empirical Evaluation}\label{app:exps}
In this section, we thoroughly compare our method against the baselines mentioned in \Cref{table:compare}. We begin with a synthetic regression experiment in \Cref{app:data_generation}, in which the synthetic design allows us to control the heterogeneity in the data. This is followed in \Cref{app:acs} by another experiment on a real-world group-structured regression task drawn from the American Community Survey (ACS). 

\subsection{Synthetic Heterogeneous Regression Construction}
\label{app:data_generation}

We construct synthetic group-structured regression problems designed to test optimization under severe group heterogeneity. The data consist of $m$ groups. Each group $i$ has its own design matrix $\mA_{S_i}$ and target vector $\vb_{S_i}$, and defines a quadratic loss
\[
\ell_i(x) = \frac{1}{n_i} \|\mA_{S_i} \vx - \vb_{S_i}\|_2^2 \enspace.
\]
The objective of interest is the worst-group loss (as in \eqref{eq:main_objective})
\[
F(x) = \max_{i \in [m]} \ell_i(\vx) \enspace.
\]

We generate two qualitatively distinct group types to separate average performance from worst-group performance. The majority of groups are benign and geometrically aligned, whereas a small number have very high curvature, are geometrically misaligned, and are far from the population center. This construction ensures that minimizing the average loss does not coincide with minimizing the worst-group loss, and that curvature heterogeneity strongly affects optimization behavior.

We construct all group covariances relative to a shared orthonormal coordinate system. This allows us to control curvature direction-by-direction while keeping the ambient geometry comparable across groups. Each group, therefore, has a quadratic loss whose Hessian shares eigenvectors with the others but whose eigenvalues vary across groups.

\paragraph{Normal groups.}
Most groups are generated with a moderate condition number. Their Hessians have eigenvalues that vary across coordinates but remain within a controlled range. The corresponding optimal parameters are sampled from a distribution concentrated around a common center in parameter space. Independent noise is added to each group so that these losses are smooth and moderately curved. As a result, normal groups are geometrically aligned: their curvature structure is similar, and their optima lie in a relatively small region of parameter space.

\paragraph{Outlier groups.}
A small subset of groups is constructed to be adversarial in two distinct ways. First, each adversarial group has one direction with extremely large curvature, while the remaining directions retain moderate curvature. These sharp directions differ across adversarial groups. Second, the optimal parameter of each adversarial group lies far from the population center along its corresponding high-curvature direction. In addition, the noise level in these groups is set to be very small, so their losses are sharply concentrated around their optima. Together, these properties ensure that deviations along the sharp directions incur very large increases in the worst-group loss.

\paragraph{Implications of our construction.}
This construction produces three key phenomena:

\begin{enumerate}
    \item The stacked design matrix has a large condition number.
    \item The curvature directions that dominate different groups are misaligned.
    \item The empirical risk minimizer (which minimizes the average loss) can perform well on most groups while incurring substantial loss on a small number of adversarial groups.
\end{enumerate}

Because most groups share similar geometry, gradient averaging implicitly emphasizes their curvature structure. Therefore, first-order methods, reduce the average loss efficiently but make comparatively slow progress along the sharp directions that control the worst-group objective. In contrast, methods that adapt to local curvature or explicitly control worst-case behavior can continue to decrease the max-loss objective.

For the default instance used in \Cref{fig:iteration_runtime}, the problem dimension is $d=10$ with $100$ groups, of which $5$ are adversarial. The stacked Gram matrix has a condition number on the order of $10^5$, and the empirical risk minimizer exhibits a clear gap relative to the robust optimum computed via convex programming. For more details on data generation, see the included Jupyter notebook.

\subsubsection{Computing the Robust Optimum via Convex Programming}
To obtain a reliable reference point for evaluation, we compute the exact optimum value of the worst-group objective using a convex solver. Concretely, we solve the robust regression problem that minimizes the maximum group mean-squared error. Although the objective is a pointwise maximum over groups, it remains a convex function of the parameter vector because each group loss is a convex quadratic.

We programmatically formulate this problem using the epigraph trick. We introduce an auxiliary scalar variable $t$ that upper-bounds every group loss, and then minimize this upper bound. If we write the group loss as the mean squared residual for that group, then the epigraph reformulation becomes
\[
\min_{x,\, t}\ \ t
\quad\text{subject to}\quad
\ell_i(\vx) \le t
\ \ \text{for every group } i\in[m]\enspace.
\]
Each constraint is a convex quadratic inequality, so the resulting problem is a convex quadratically constrained program. We implement this formulation in \texttt{CVXPY}~\citep{diamond2016cvxpy,agrawal2018rewriting} by declaring decision variables for the model parameters and the epigraph variable, adding one quadratic constraint per group, and calling a standard convex solver. We treat the returned epigraph value as the robust optimum value.

In all plots, we report the gap between an algorithm's current worst-group loss and this robust optimum value. This subtraction makes the figures directly comparable across instances and highlights whether a method continues to reduce the true robust suboptimality, rather than merely decreasing a surrogate objective.
\subsubsection{Baselines}

We compare the following methods.

\paragraph{Subgradient method.}
We run subgradient descent directly on the nonsmooth max-loss objective \ref{eq:main_objective}, using both fixed and diminishing step-size schedules, and report the best variant.

\paragraph{Smoothed gradient methods.}
We apply log-sum-exp smoothing (see \ref{eq:intro_smooth_fair_objective}) to approximate the max operator and optimize the resulting smooth objective using:
\begin{itemize}
    \item Gradient descent,
    \item Heavy-Ball (Polyak) momentum,
    \item Nesterov acceleration.
\end{itemize}

\paragraph{Interior-point method.}
We implement a log-barrier-based interior-point method that solves a sequence of smooth approximations using Newton steps.

\paragraph{Ball-oracle methods (ours).}
We implement two trust-region style methods that repeatedly solve the smoothed objective using a damped Newton solver (\Cref{sec:iter_prox_calls}):
\begin{itemize}
    \item Euclidean geometry (naive ball),
    \item Lewis-weight geometry, where the trust region is defined using a data-dependent positive definite matrix constructed from block Lewis weights.
\end{itemize}

After each outer step, the center is updated to the new solution, and the trust-region radius is optionally shrunk. For simplicity, we do not consider the acceleration of the ball-oracle method.

\subsubsection{Hyperparameter Tuning}

We tune every method via grid search over its relevant hyperparameters:

\begin{itemize}
    \item Step sizes for gradient and subgradient methods;
    \item Smoothing parameters and momentum coefficients for smoothed methods;
    \item Barrier parameters and inner iteration counts for the interior-point method;
    \item Initial trust-region radius, smoothing strength, and radius decay factors for the ball-oracle methods.
\end{itemize}

All methods use the same warm start. For each configuration, we run a fixed number of outer iterations and select the configuration that achieves the lowest worst-group loss within this budget.

\subsubsection{Empirical Behavior}

Notably, the meaning of an iteration differs across algorithms:

\begin{itemize}
    \item For subgradient and smoothed gradient methods, one iteration corresponds to one full gradient or subgradient update using all groups.
    \item For the interior-point method, one iteration corresponds to one outer Newton step of the barrier procedure.
    \item For the ball-oracle methods, one iteration corresponds to one call to the trust-region Newton solver (i.e., one outer iteration).
\end{itemize}

In all iteration-complexity plots, we compare methods using their own natural outer-iteration count.

\paragraph{Iteration complexity.}
On the adversarial instances, first-order methods make limited progress. Subgradient descent improves briefly but quickly plateaus. Even the best-tuned smoothed-gradient variant stalls far above the optimum. In contrast, both ball-oracle methods steadily decrease the worst-group loss across outer iterations, whereas the IPM converges rapidly. We also note that the IPM achieves the best final loss among all methods, which is unsurprising, as \texttt{CVXPY} natively uses the same algorithm to compute the maximum-loss optimum.

\paragraph{Runtime complexity.} While we do not spend significant effort simulating a time-complexity model or hyper-optimizing our code, we also plot the runtime complexity of all the algorithms to control for the different meanings of an iteration. Because Newton steps are computationally more expensive, first-order methods initially appear competitive in wall-clock time. However, they plateau early and fail to approach high accuracy. Interior-point and ball-oracle methods continue to reduce the worst-group loss gap and eventually achieve near-optimal solutions, whereas first-order methods remain stuck far from the optimum. We observe a very slight benefit from using the Lewis geometry in our ball oracle method. 

\begin{figure}[t]
    \centering
    \begin{subfigure}{0.49\textwidth}
        \centering
        \includegraphics[width=\linewidth]{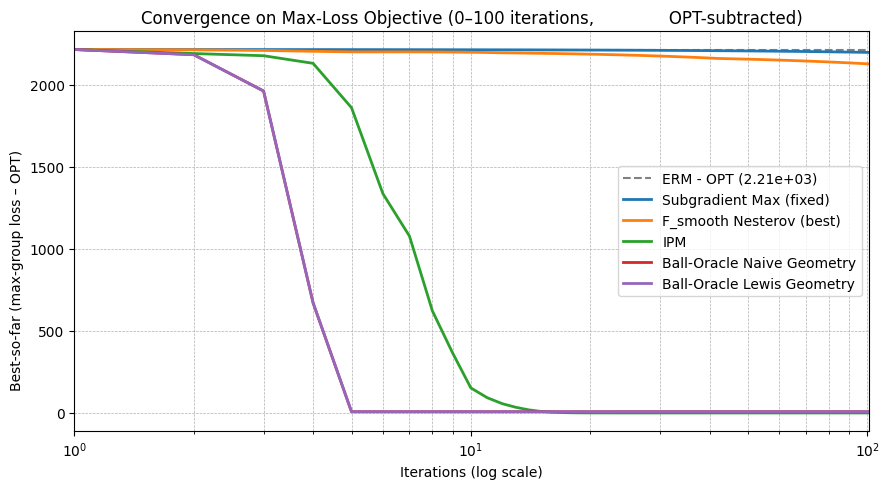}
        \caption{Suboptimality versus iteration count (log--log).}
    \end{subfigure}
    \hfill
    \begin{subfigure}{0.49\textwidth}
        \centering
        \includegraphics[width=\linewidth]{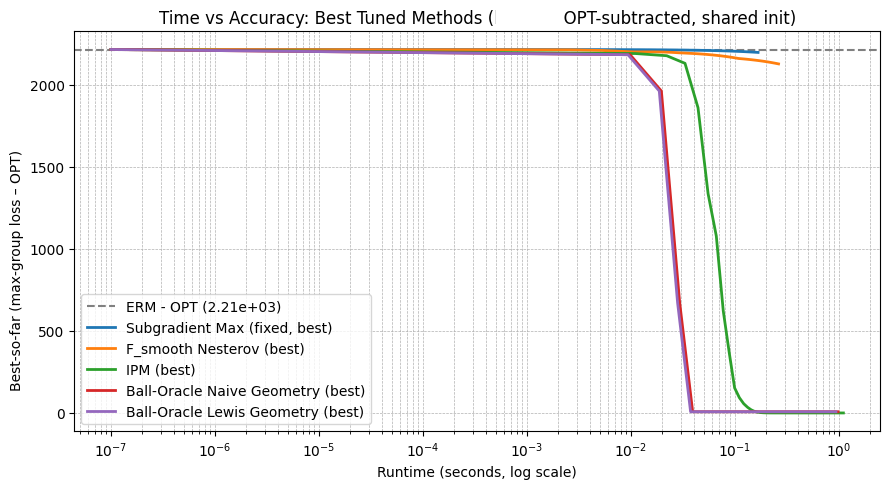}
        \caption{Suboptimality versus wall-clock time (seconds).}
    \end{subfigure}
    \caption{Comparison of first-order, interior-point, and ball-oracle methods on adversarial heterogeneous regression instances. All curves report the worst-group loss minus the optimal value computed via \texttt{CVXPY}.}
    \label{fig:iteration_runtime}
\end{figure}


\subsection{Real-world Experiment: ACS Income}
\label{app:acs}

To validate the computational behavior of our methods beyond synthetic instances, we evaluate them on a real-world group-structured regression task drawn from the American Community Survey (ACS), accessed through the \texttt{folktables} package~\citep{ding2021retiring}. The task is to predict log personal income from $d=10$ standardized demographic and employment features (age, education, occupation, hours worked, etc.) for employed US adults. We group the data by region of residence and use all $m=51$ regions (the $50$ states together with Puerto Rico), subsampling $200$ individuals per region for a total of $10{,}200$ samples. Each group loss $\ell_i$ is the mean squared prediction error on region $i$, and the objective is again the worst-group loss $F(\vx) = \max_{i\in[m]} \ell_i(\vx)$. This is a natural distributionally-robust setting: income distributions differ substantially across regions, so the model that minimizes the average error can perform poorly on individual states. All methods are warm-started at the empirical risk minimizer (ERM), and we compute the robust optimum $\opt$ with \texttt{CVXPY} as the reference value.

\paragraph{Convergence and runtime.} Our primary interest is computational: how quickly each method drives the worst-group suboptimality $F(\vx_t) - \opt$ to zero, both in outer iterations and in wall-clock time. \Cref{fig:acs_convergence} plots this gap against iteration count and against runtime, and \Cref{tab:acs_runtime} reports the iterations and wall-clock time each method needs to reach a relative suboptimality of $1\%$.

The two ball-oracle methods are the fastest on both axes. Starting from the warm start, each reaches the $1\%$ target in a \emph{single} outer iteration and in under $20$~ms of wall-clock time --- roughly $3\times$ faster than the interior-point method ($66$~ms, $8$ iterations) and the best-tuned smoothed Heavy-Ball method ($62$~ms, $47$ iterations). The interior-point method converges in a few iterations but incurs a higher per-iteration cost, whereas the smoothed first-order method requires an order-of-magnitude more iterations to reach the same accuracy. The subgradient method never reaches the $1\%$ target within the budget: it remains essentially pinned at the ERM gap (\Cref{fig:acs_convergence}). The Lewis-weight geometry is marginally faster than the Euclidean ball, consistent with the synthetic results. Overall, on a genuine $51$-group regression problem, the ball-oracle methods reach high-accuracy robust solutions fastest in both iteration count and wall-clock time.

\begin{figure}[t]
    \centering
    \begin{subfigure}{0.49\textwidth}
        \centering
        \includegraphics[width=\linewidth]{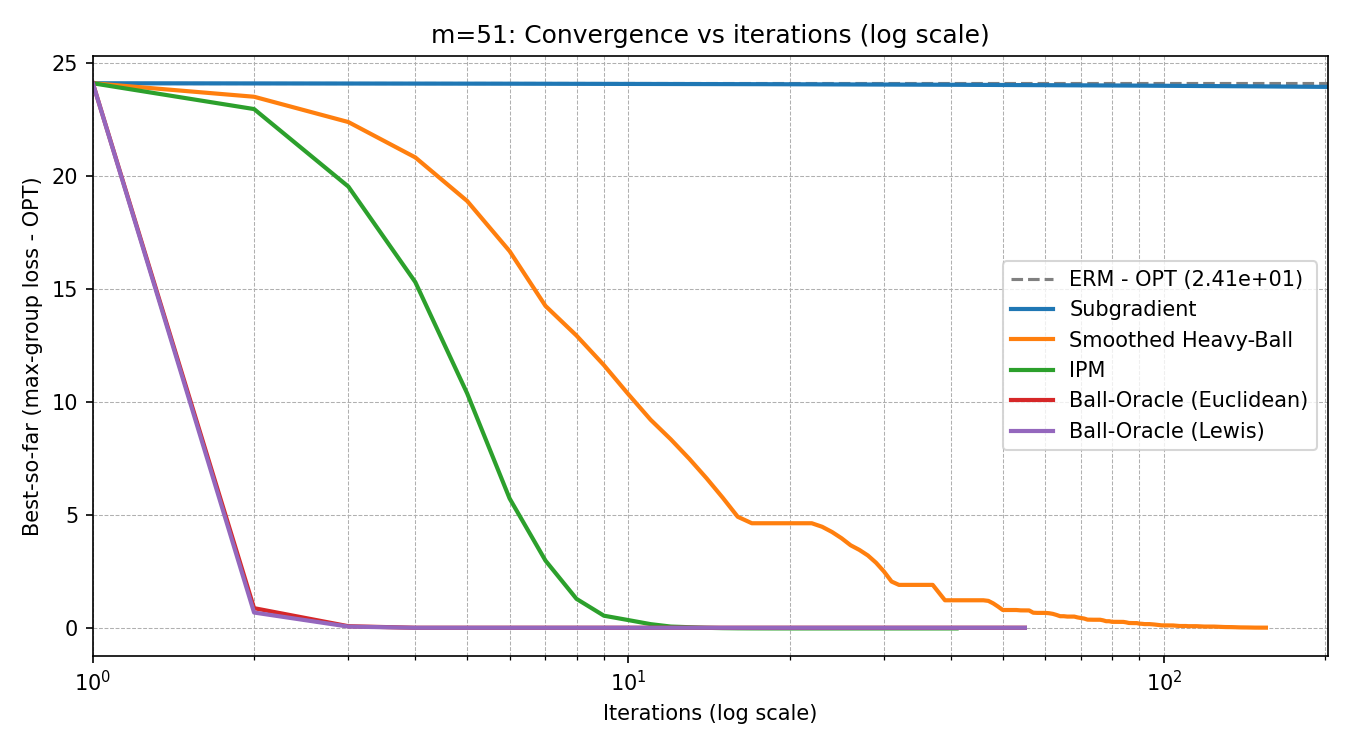}
        \caption{Suboptimality versus iteration count (log--log).}
    \end{subfigure}
    \hfill
    \begin{subfigure}{0.49\textwidth}
        \centering
        \includegraphics[width=\linewidth]{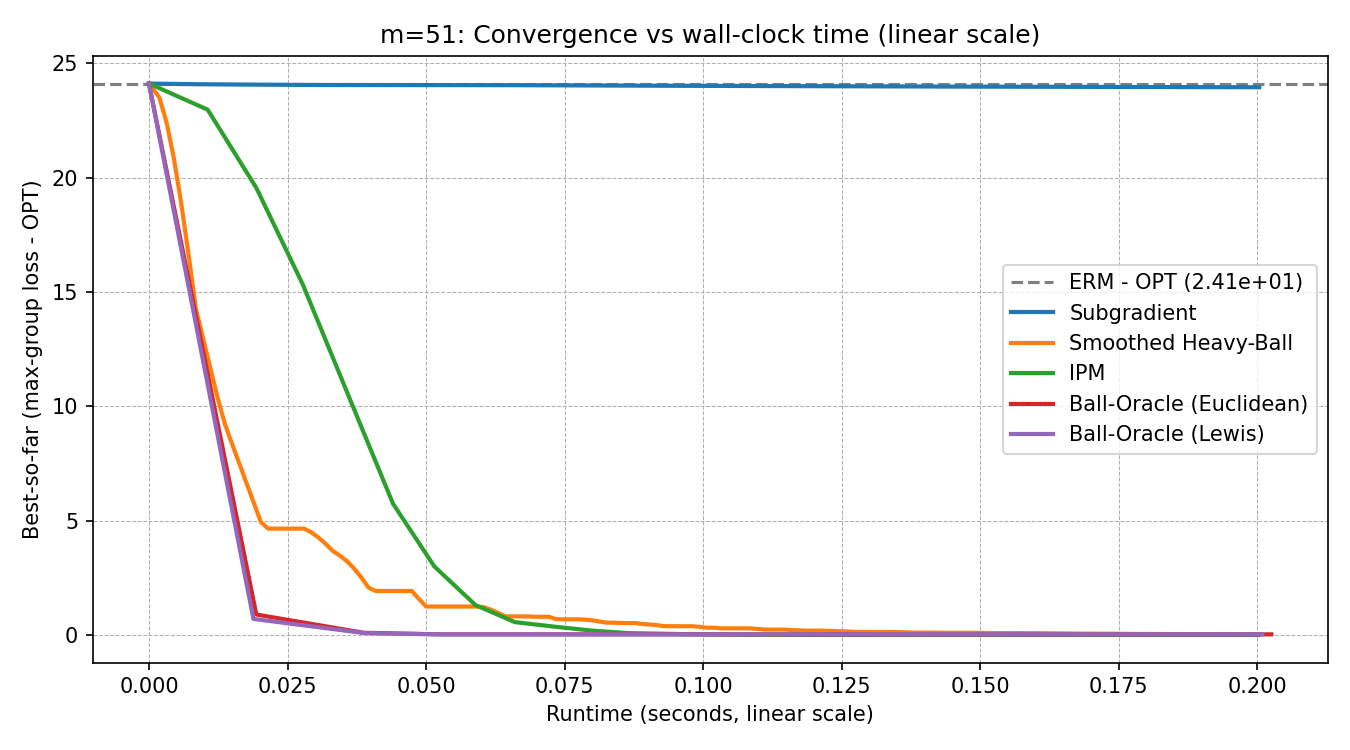}
        \caption{Suboptimality versus wall-clock time (seconds).}
    \end{subfigure}
    \caption{Convergence on the ACS Income task ($m=51$ regions, $d=10$, $10{,}200$ samples). Each curve reports the best-so-far worst-group suboptimality $F(\vx_t) - \opt$; the dashed line marks the ERM gap. The ball-oracle methods reach the robust optimum in a single outer iteration and the least wall-clock time, while the subgradient method stalls at the ERM gap.}
    \label{fig:acs_convergence}
\end{figure}

\begin{table}[t]
    \centering
    \small
    \begin{tabular}{lrr}
        \toprule
        Method & Iterations to $1\%$ gap & Wall-clock time (s) \\
        \midrule
        Subgradient              & --- (not reached) & --- \\
        Smoothed Heavy-Ball      & 47 & 0.062 \\
        IPM                      &  8 & 0.066 \\
        Ball-Oracle (Euclidean)  &  1 & 0.019 \\
        Ball-Oracle (Lewis)      &  1 & 0.019 \\
        \bottomrule
    \end{tabular}
    \caption{Cost to reach a $1\%$ relative worst-group suboptimality on the ACS Income task ($m=51$). Iterations are each method's own natural outer iterations. The subgradient method does not reach the target within the iteration budget.}
    \label{tab:acs_runtime}
\end{table}

\paragraph{Statistical context.} For completeness, we note that the optimization gains correspond to meaningful redistribution of prediction accuracy across regions. The ERM solution attains the lowest \emph{average} error ($108.2$) but a large spread ($\sigma = 11.8$) and a worst-group loss of $138.1$, attained by California; other heavily penalized states are New York, Hawaii, Nevada, Florida, and New Jersey, all large or high-income regions on which a $10$-feature linear model underfits. The robust optimum instead nearly equalizes the group losses into a narrow band around $107$--$114$ (Max/Mean drops from $1.28$ to $1.02$), decreasing the loss on the previously neglected states (e.g.\ California by $24.3$~MSE) at a moderate cost on the homogeneous low-income states that ERM happened to fit well.

%% file: ref.bib
@book{nw06,
  title={Numerical optimization},
  author={Nocedal, Jorge and Wright, Stephen J},
  year={2006},
  publisher={Springer}
}

@InProceedings{cpo20,
  title = 	 {Distributionally Robust Formulation and Model Selection for the Graphical Lasso},
  author =       {Cisneros-Velarde, Pedro and Petersen, Alexander and Oh, Sang-Yun},
  booktitle = 	 {Proceedings of the Twenty Third International Conference on Artificial Intelligence and Statistics},
  pages = 	 {756--765},
  year = 	 {2020},
  editor = 	 {Chiappa, Silvia and Calandra, Roberto},
  volume = 	 {108},
  series = 	 {Proceedings of Machine Learning Research},
  month = 	 {08},
  publisher =    {PMLR},
  eprint={1905.08975},
  archivePrefix={arXiv},
  primaryClass={stat.ML},
}

@article{bkm19,
  title={Robust Wasserstein profile inference and applications to machine learning},
  author={Blanchet, Jose and Kang, Yang and Murthy, Karthyek},
  journal={Journal of Applied Probability},
  volume={56},
  number={3},
  pages={830--857},
  year={2019},
  eprint={1610.05627},
  archivePrefix={arXiv},
  primaryClass={math.ST},
  publisher={Cambridge University Press}
}

@misc{ksj18,
      title={Global linear convergence of Newton's method without strong-convexity or Lipschitz gradients}, 
      author={Sai Praneeth Karimireddy and Sebastian U. Stich and Martin Jaggi},
      year={2018},
      eprint={1806.00413},
      archivePrefix={arXiv},
      primaryClass={cs.LG},
      url={https://arxiv.org/abs/1806.00413}, 
}

@InProceedings{syls16,
  title = 	 {AdaDelay: Delay Adaptive Distributed Stochastic Optimization},
  author = 	 {Sra, Suvrit and Yu, Adams Wei and Li, Mu and Smola, Alex},
  booktitle = 	 {Proceedings of the 19th International Conference on Artificial Intelligence and Statistics},
  pages = 	 {957--965},
  year = 	 {2016},
  editor = 	 {Gretton, Arthur and Robert, Christian C.},
  volume = 	 {51},
  series = 	 {Proceedings of Machine Learning Research},
  address = 	 {Cadiz, Spain},
  month = 	 {05},
  publisher =    {PMLR},
eprint={1508.05003},
      archivePrefix={arXiv},
      primaryClass={stat.ML},
}

@misc{ajk24,
      title={Acceleration Meets Inverse Maintenance: Faster $\ell_{\infty}$-Regression}, 
      author={Deeksha Adil and Shunhua Jiang and Rasmus Kyng},
      year={2024},
      eprint={2409.20030},
      archivePrefix={arXiv},
      primaryClass={cs.DS},
      url={https://arxiv.org/abs/2409.20030}, 
}

@inproceedings{bcll18,
author = {Bubeck, S\'{e}bastien and Cohen, Michael B. and Lee, Yin Tat and Li, Yuanzhi},
title = {An homotopy method for lp regression provably beyond self-concordance and in input-sparsity time},
year = {2018},
isbn = {9781450355599},
publisher = {Association for Computing Machinery},
address = {New York, NY, USA},
booktitle = {Proceedings of the 50th Annual ACM SIGACT Symposium on Theory of Computing},
pages = {1130–1137},
numpages = {8},
location = {Los Angeles, CA, USA},
series = {STOC 2018},
eprint={1711.01328},
      archivePrefix={arXiv},
      primaryClass={math.OC},
}

@incollection{john1948,
	title        = {Extremum problems with inequalities as subsidiary conditions},
	author       = {John, Fritz},
	year         = 1948,
	booktitle    = {Studies and Essays Presented to R. Courant on his 60th Birthday},
	publisher    = {Interscience Publishers, Inc},
	pages        = {187--204}
}

@book{nn94,
  title={Interior Point Polynomial Algorithms in Convex Programming},
  author={Nesterov, Y. and Nemirovskii, A.},
  isbn={9780898715156},
  lccn={93038912},
  series={SIAM studies in applied and numerical mathematics: Society for Industrial and Applied Mathematics},
  year={1994},
  publisher={Society for Industrial and Applied Mathematics}
}

@inproceedings{chjjs22,
author = {Carmon, Yair and Hausler, Danielle and Jambulapati, Arun and Jin, Yujia and Sidford, Aaron},
title = {Optimal and adaptive monteiro-svaiter acceleration},
year = {2022},
isbn = {9781713871088},
publisher = {Curran Associates Inc.},
address = {Red Hook, NY, USA},
booktitle = {Proceedings of the 36th International Conference on Neural Information Processing Systems},
articleno = {1479},
numpages = {13},
location = {New Orleans, LA, USA},
series = {NIPS '22},
eprint={2205.15371},
archivePrefix={arXiv},
primaryClass={math.OC},
}

@inbook{bjlls19,
author = {Bubeck, S\'{e}bastien and Jiang, Qijia and Lee, Yin Tat and Li, Yuanzhi and Sidford, Aaron},
title = {Complexity of highly parallel non-smooth convex optimization},
year = {2019},
publisher = {Curran Associates Inc.},
address = {Red Hook, NY, USA},
booktitle = {Proceedings of the 33rd International Conference on Neural Information Processing Systems},
articleno = {1247},
numpages = {10},
eprint={1906.10655},
archivePrefix={arXiv},
primaryClass={math.OC},
}

@inbook{akps19,
author = {Deeksha Adil and Rasmus Kyng and Richard Peng and Sushant Sachdeva},
title = {Iterative Refinement for {$\ell_p$}-norm Regression},
booktitle = {Proceedings of the 2019 Annual ACM-SIAM Symposium on Discrete Algorithms (SODA)},
chapter = {},
pages = {1405-1424},
year = {2019},
doi = {10.1137/1.9781611975482.86},
URL = {https://epubs.siam.org/doi/abs/10.1137/1.9781611975482.86},
eprint = {https://epubs.siam.org/doi/pdf/10.1137/1.9781611975482.86},
    eprint={1901.06764},
    archivePrefix={arXiv},
    primaryClass={cs.DS},
}

@article{lfn18,
  title={Relatively smooth convex optimization by first-order methods, and applications},
  author={Lu, Haihao and Freund, Robert M and Nesterov, Yurii},
  journal={SIAM Journal on Optimization},
  volume={28},
  number={1},
  pages={333--354},
  year={2018},
  publisher={SIAM},
eprint={1610.05708},
      archivePrefix={arXiv},
      primaryClass={math.OC},
}

@article{berk2021fairness,
  title={Fairness in criminal justice risk assessments: The state of the art},
  author={Berk, Richard and Heidari, Hoda and Jabbari, Shahin and Kearns, Michael and Roth, Aaron},
  journal={Sociological Methods \& Research},
  volume={50},
  number={1},
  pages={3--44},
  year={2021},
  publisher={Sage Publications Sage CA: Los Angeles, CA}
}

@inproceedings{selbst2019fairness,
  title={Fairness and abstraction in sociotechnical systems},
  author={Selbst, Andrew D and Boyd, Danah and Friedler, Sorelle A and Venkatasubramanian, Suresh and Vertesi, Janet},
  booktitle={Proceedings of the conference on fairness, accountability, and transparency},
  pages={59--68},
  year={2019}
}

@book{conn2000trust,
  title={Trust region methods},
  author={Conn, Andrew R and Gould, Nicholas IM and Toint, Philippe L},
  year={2000},
  publisher={SIAM}
}

@inproceedings{veale2018fairness,
  title={Fairness and accountability design needs for algorithmic support in high-stakes public sector decision-making},
  author={Veale, Michael and Van Kleek, Max and Binns, Reuben},
  booktitle={Proceedings of the 2018 chi conference on human factors in computing systems},
  pages={1--14},
  year={2018}
}

@article{data2014seizing,
  title={Seizing Opportunities, Preserving Values},
  author={Data, Big},
  journal={The White House Report Washington},
  year={2014}
}

@misc{chouldechova2016fair,
  title={Fair prediction with disparate impact: A study of bias in recidivism prediction instruments. Big Data, 5 (2), 153-163},
  author={Chouldechova, Alexandra},
  year={2016}
}

@inproceedings{agarwal2019fair,
  title={Fair regression: Quantitative definitions and reduction-based algorithms},
  author={Agarwal, Alekh and Dud{\'\i}k, Miroslav and Wu, Zhiwei Steven},
  booktitle={International Conference on Machine Learning},
  pages={120--129},
  year={2019},
  organization={PMLR}
}

@article{barocas2016big,
  title={Big data's disparate impact},
  author={Barocas, Solon and Selbst, Andrew D},
  journal={Calif. L. Rev.},
  volume={104},
  pages={671},
  year={2016},
  publisher={HeinOnline}
}

@article{corbett2023measure,
  title={The measure and mismeasure of fairness},
  author={Corbett-Davies, Sam and Gaebler, Johann D and Nilforoshan, Hamed and Shroff, Ravi and Goel, Sharad},
  journal={The Journal of Machine Learning Research},
  volume={24},
  number={1},
  pages={14730--14846},
  year={2023},
  publisher={JMLRORG}
}

@article{cohen2024fairness,
  title={Fairness of linear regression in decision making},
  author={Cohen-Addad, Vincent and Gavva, Surya Teja and Karthik, CS and Mathieu, Claire and Namrata},
  journal={International journal of data science and analytics},
  volume={18},
  number={3},
  pages={337--347},
  year={2024},
  publisher={Springer}
}

@article{duchi2016statistics,
  title={Statistics of robust optimization: A generalized empirical likelihood approach. arXiv},
  author={Duchi, J and Glynn, P and Namkoong, Hongseok},
  journal={Machine Learning},
  year={2016}
}

@article{chen2022fair,
  title={Fair assortment planning},
  author={Chen, Qinyi and Golrezaei, Negin and Susan, Fransisca and Baskoro, Edy},
  journal={arXiv preprint arXiv:2208.07341},
  year={2022}
}

@article{zhang2024stochastic,
  title={Stochastic approximation approaches to group distributionally robust optimization},
  author={Zhang, Lijun and Zhao, Peng and Zhuang, Zhen-Hua and Yang, Tianbao and Zhou, Zhi-Hua},
  journal={Advances in Neural Information Processing Systems},
  volume={36},
  year={2024}
}

@inproceedings{kleinberg2018algorithmic,
  title={Algorithmic fairness},
  author={Kleinberg, Jon and Ludwig, Jens and Mullainathan, Sendhil and Rambachan, Ashesh},
  booktitle={Aea papers and proceedings},
  volume={108},
  pages={22--27},
  year={2018}
}

@article{soma2022optimal,
  title={Optimal algorithms for group distributionally robust optimization and beyond},
  author={Soma, Tasuku and Gatmiry, Khashayar and Jegelka, Stefanie},
  journal={arXiv preprint arXiv:2212.13669},
  year={2022}
}

@article{golrezaei2024online,
  title={Online Combinatorial Optimization with Group Fairness Constraints},
  author={Golrezaei, Negin and Niazadeh, Rad and Patel, Kumar Kshitij and Susan, Fransisca},
  journal={Available at SSRN 4824251},
  year={2024}
}

@InProceedings{aakmrz22,
  title = 	 {Active Sampling for Min-Max Fairness},
  author =       {Abernethy, Jacob D and Awasthi, Pranjal and Kleindessner, Matth{\"a}us and Morgenstern, Jamie and Russell, Chris and Zhang, Jie},
  booktitle = 	 {Proceedings of the 39th International Conference on Machine Learning},
  pages = 	 {53--65},
  year = 	 {2022},
  editor = 	 {Chaudhuri, Kamalika and Jegelka, Stefanie and Song, Le and Szepesvari, Csaba and Niu, Gang and Sabato, Sivan},
  volume = 	 {162},
  series = 	 {Proceedings of Machine Learning Research},
  month = 	 {07},
  publisher =    {PMLR},
  url = 	 {https://proceedings.mlr.press/v162/abernethy22a.html}
}

@MISC {var_stackexchange,
    TITLE = {Bound the variance of the product of two random varables.},
    AUTHOR = {Davide Giraudo},
    HOWPUBLISHED = {Mathematics Stack Exchange},
    URL = {https://math.stackexchange.com/q/1044864},
    YEAR={2014},
    MONTH={11},
}

@article{ms13,
author = {Monteiro, Renato D. C. and Svaiter, B. F.},
title = {An Accelerated Hybrid Proximal Extragradient Method for Convex Optimization and Its Implications to Second-Order Methods},
journal = {SIAM Journal on Optimization},
volume = {23},
number = {2},
pages = {1092-1125},
year = {2013},
doi = {10.1137/110833786},
URL = { 
        https://doi.org/10.1137/110833786
},
eprint = { 
        https://doi.org/10.1137/110833786
}
}

@misc{
svwz24,
title={On Socially Fair Regression and Low-Rank Approximation},
author={Zhao Song and Ali Vakilian and David Woodruff and Samson Zhou},
year={2024},
url={https://openreview.net/forum?id=KJHUYWviZ6}
}

@misc{ob18,
      title={Finite-sample analysis of M-estimators using self-concordance}, 
      author={Dmitrii Ostrovskii and Francis Bach},
      year={2020},
      eprint={1810.06838},
      archivePrefix={arXiv},
      primaryClass={math.ST},
      url={https://arxiv.org/abs/1810.06838}, 
}

@inproceedings{msbacon,
author = {Carmon, Yair and Jambulapati, Arun and Jiang, Qijia and Jin, Yujia and Lee, Yin Tat and Sidford, Aaron and Tian, Kevin},
title = {Acceleration with a ball optimization oracle},
year = {2020},
isbn = {9781713829546},
publisher = {Curran Associates Inc.},
address = {Red Hook, NY, USA},
booktitle = {Proceedings of the 34th International Conference on Neural Information Processing Systems},
articleno = {1599},
numpages = {12},
series = {NIPS '20},
eprint={2003.08078},
      archivePrefix={arXiv},
        primaryClass={math.OC}
}

@inbook{mo23,
      title={The Change-of-Measure Method, Block Lewis Weights, and Approximating Matrix Block Norms}, 
      author={Naren Sarayu Manoj and Max Ovsiankin},
      year={2025},
      eprint={2311.10013},
      archivePrefix={arXiv},
      primaryClass={math.FA},
      booktitle = {Proceedings of the 2025 Annual ACM-SIAM Symposium on Discrete Algorithms (SODA)}
}

@inproceedings{jlls23,
  title={Sparsifying sums of norms},
  author={Jambulapati, Arun and Lee, James R and Liu, Yang P and Sidford, Aaron},
  booktitle={2023 IEEE 64th Annual Symposium on Foundations of Computer Science (FOCS)},
  pages={1953--1962},
  year={2023},
  organization={IEEE},
  eprint={2305.09049},
  archivePrefix={arXiv},
  primaryClass={cs.DS}
}

@article{blm89,
  title={Approximation of zonoids by zonotopes},
  author={Bourgain, Jean and Lindenstrauss, Joram and Milman, Vitali},
  year={1989}
}

@inproceedings{mmwy21,
  title={Active Linear Regression for {$\ell_p$} Norms and Beyond}, 
  author={Musco, Cameron and Musco, Christopher and Woodruff, David P and Yasuda, Taisuke},
  booktitle={2022 IEEE 63rd Annual Symposium on Foundations of Computer Science (FOCS)},
  pages={744--753},
  year={2022},
  organization={IEEE},
  eprint={2111.04888},
  archivePrefix={arXiv},
  primaryClass={cs.LG}
}

@misc{ls19,
      title={Solving Linear Programs with Sqrt(rank) Linear System Solves}, 
      author={Yin Tat Lee and Aaron Sidford},
      year={2019},
      eprint={1910.08033},
      archivePrefix={arXiv},
      primaryClass={cs.DS}
}

@inproceedings{jls21,
  title={Improved iteration complexities for overconstrained p-norm regression},
  author={Jambulapati, Arun and Liu, Yang P and Sidford, Aaron},
  booktitle={Proceedings of the 54th Annual ACM SIGACT Symposium on Theory of Computing},
  pages={529--542},
  year={2022},
  eprint={2111.01848},
  archivePrefix={arXiv},
  primaryClass={cs.DS}
}

@inproceedings{jls22,
  title={Chaining, group leverage score overestimates, and fast spectral hypergraph sparsification},
  author={Jambulapati, Arun and Liu, Yang P and Sidford, Aaron},
  booktitle={Proceedings of the 55th Annual ACM Symposium on Theory of Computing},
  pages={196--206},
  year={2023},
  eprint={2209.10539},
  archivePrefix={arXiv},
  primaryClass={cs.DS}
}

@article{chen2018tight,
  title={Tight bounds for collaborative pac learning via multiplicative weights},
  author={Chen, Jiecao and Zhang, Qin and Zhou, Yuan},
  journal={Advances in neural information processing systems},
  volume={31},
  year={2018}
}

@article{nguyen2018improved,
  title={Improved algorithms for collaborative PAC learning},
  author={Nguyen, Huy and Zakynthinou, Lydia},
  journal={Advances in Neural Information Processing Systems},
  volume={31},
  year={2018}
}

@inproceedings{zhang2024optimal,
  title={Optimal multi-distribution learning},
  author={Zhang, Zihan and Zhan, Wenhao and Chen, Yuxin and Du, Simon S and Lee, Jason D},
  booktitle={The Thirty Seventh Annual Conference on Learning Theory},
  pages={5220--5223},
  year={2024},
  organization={PMLR}
}

@article{haghtalab2022demand,
  title={On-demand sampling: Learning optimally from multiple distributions},
  author={Haghtalab, Nika and Jordan, Michael and Zhao, Eric},
  journal={Advances in Neural Information Processing Systems},
  volume={35},
  pages={406--419},
  year={2022}
}

@inproceedings{mohri2019agnostic,
  title={Agnostic federated learning},
  author={Mohri, Mehryar and Sivek, Gary and Suresh, Ananda Theertha},
  booktitle={International conference on machine learning},
  pages={4615--4625},
  year={2019},
  organization={PMLR}
}

@article{hanneke2019value,
  title={On the value of target data in transfer learning},
  author={Hanneke, Steve and Kpotufe, Samory},
  journal={Advances in Neural Information Processing Systems},
  volume={32},
  year={2019}
}

@article{bullins2021stochastic,
  title={A stochastic newton algorithm for distributed convex optimization},
  author={Bullins, Brian and Patel, Kshitij and Shamir, Ohad and Srebro, Nathan and Woodworth, Blake E},
  journal={Advances in Neural Information Processing Systems},
  volume={34},
  pages={26818--26830},
  year={2021}
}

@article{mcmahan2016s,
  title={S. Hampsonet al.,“Communication-efficient learning of deep networks from decentralizeddata,”},
  author={McMahan, HB and Moore, E and Ramage, D},
  journal={arXiv preprint arXiv:1602.05629},
  year={2016}
}

@inproceedings{woodworth2020local,
  title={Is local SGD better than minibatch SGD?},
  author={Woodworth, Blake and Patel, Kumar Kshitij and Stich, Sebastian and Dai, Zhen and Bullins, Brian and Mcmahan, Brendan and Shamir, Ohad and Srebro, Nathan},
  booktitle={International Conference on Machine Learning},
  pages={10334--10343},
  year={2020},
  organization={PMLR}
}

@article{patel2024limits,
  title={The limits and potentials of local sgd for distributed heterogeneous learning with intermittent communication},
  author={Patel, Kumar Kshitij and Glasgow, Margalit and Zindari, Ali and Wang, Lingxiao and Stich, Sebastian U and Cheng, Ziheng and Joshi, Nirmit and Srebro, Nathan},
  journal={arXiv preprint arXiv:2405.11667},
  year={2024}
}

@article{valiant1984theory,
  title={A theory of the learnable},
  author={Valiant, Leslie G},
  journal={Communications of the ACM},
  volume={27},
  number={11},
  pages={1134--1142},
  year={1984},
  publisher={ACM New York, NY, USA}
}

@book{vapnik2013nature,
  title={The nature of statistical learning theory},
  author={Vapnik, Vladimir},
  year={2013},
  publisher={Springer science \& business media}
}

@article{bertsimas2011price,
  title={The price of fairness},
  author={Bertsimas, Dimitris and Farias, Vivek F and Trichakis, Nikolaos},
  journal={Operations research},
  volume={59},
  number={1},
  pages={17--31},
  year={2011},
  publisher={INFORMS}
}

@article{asadpour2022sequential,
  title={Sequential Submodular Maximization and Applications to Ranking an Assortment of Products},
  author={Asadpour, Arash and Niazadeh, Rad and Saberi, Amin and Shameli, Ali},
  journal={Operations Research},
  year={2022},
  publisher={INFORMS}
}

@article{chouldechova2018frontiers,
  title={The frontiers of fairness in machine learning},
  author={Chouldechova, Alexandra and Roth, Aaron},
  journal={arXiv preprint arXiv:1810.08810},
  year={2018}
}

@article{ben2013robust,
  title={Robust solutions of optimization problems affected by uncertain probabilities},
  author={Ben-Tal, Aharon and Den Hertog, Dick and De Waegenaere, Anja and Melenberg, Bertrand and Rennen, Gijs},
  journal={Management Science},
  volume={59},
  number={2},
  pages={341--357},
  year={2013},
  publisher={INFORMS}
}

@article{blum2017collaborative,
  title={Collaborative PAC learning},
  author={Blum, Avrim and Haghtalab, Nika and Procaccia, Ariel D and Qiao, Mingda},
  journal={Advances in Neural Information Processing Systems},
  volume={30},
  year={2017}
}

@article{levy2020large,
  title={Large-scale methods for distributionally robust optimization},
  author={Levy, Daniel and Carmon, Yair and Duchi, John C and Sidford, Aaron},
  journal={Advances in Neural Information Processing Systems},
  volume={33},
  pages={8847--8860},
  year={2020}
}

@article{sagawa2019distributionally,
  title={Distributionally robust neural networks for group shifts: On the importance of regularization for worst-case generalization},
  author={Sagawa, Shiori and Koh, Pang Wei and Hashimoto, Tatsunori B and Liang, Percy},
  journal={arXiv preprint arXiv:1911.08731},
  year={2019}
}

@book{boyd2004convex,
  title={Convex optimization},
  author={Boyd, Stephen P and Vandenberghe, Lieven},
  year={2004},
  publisher={Cambridge university press}
}

@article{rahmattalabi2019exploring,
  title={Exploring algorithmic fairness in robust graph covering problems},
  author={Rahmattalabi, Aida and Vayanos, Phebe and Fulginiti, Anthony and Rice, Eric and Wilder, Bryan and Yadav, Amulya and Tambe, Milind},
  journal={Advances in neural information processing systems},
  volume={32},
  year={2019}
}

@inproceedings{chierichetti2019matroids,
  title={Matroids, matchings, and fairness},
  author={Chierichetti, Flavio and Kumar, Ravi and Lattanzi, Silvio and Vassilvtiskii, Sergei},
  booktitle={The 22nd International Conference on Artificial Intelligence and Statistics},
  pages={2212--2220},
  year={2019},
  organization={PMLR}
}

@inproceedings{dwork2012fairness,
  title={Fairness through awareness},
  author={Dwork, Cynthia and Hardt, Moritz and Pitassi, Toniann and Reingold, Omer and Zemel, Richard},
  booktitle={Proceedings of the 3rd innovations in theoretical computer science conference},
  pages={214--226},
  year={2012}
}

@article{loi2019philosophical,
  title={A Philosophical Theory of Fairness for Prediction-Based Decisions.},
  author={Loi, Michele and Herlitz, Anders and Heidari, Hoda},
  journal={Available at SSRN 3450300},
  year={2019}
}

@article{hardt2016equality,
  title={Equality of opportunity in supervised learning},
  author={Hardt, Moritz and Price, Eric and Srebro, Nati},
  journal={Advances in neural information processing systems},
  volume={29},
  year={2016}
}

@inproceedings{kearns2018preventing,
  title={Preventing fairness gerrymandering: Auditing and learning for subgroup fairness},
  author={Kearns, Michael and Neel, Seth and Roth, Aaron and Wu, Zhiwei Steven},
  booktitle={International conference on machine learning},
  pages={2564--2572},
  year={2018},
  organization={PMLR}
}

@inproceedings{kearns2019empirical,
  title={An empirical study of rich subgroup fairness for machine learning},
  author={Kearns, Michael and Neel, Seth and Roth, Aaron and Wu, Zhiwei Steven},
  booktitle={Proceedings of the conference on fairness, accountability, and transparency},
  pages={100--109},
  year={2019}
}

@inproceedings{singh2018fairness,
  title={Fairness of exposure in rankings},
  author={Singh, Ashudeep and Joachims, Thorsten},
  booktitle={Proceedings of the 24th ACM SIGKDD International Conference on Knowledge Discovery \& Data Mining},
  pages={2219--2228},
  year={2018}
}

@article{singh2019policy,
  title={Policy learning for fairness in ranking},
  author={Singh, Ashudeep and Joachims, Thorsten},
  journal={Advances in neural information processing systems},
  volume={32},
  year={2019}
}

@inproceedings{biega2018equity,
  title={Equity of attention: Amortizing individual fairness in rankings},
  author={Biega, Asia J and Gummadi, Krishna P and Weikum, Gerhard},
  booktitle={The 41st international acm sigir conference on research \& development in information retrieval},
  pages={405--414},
  year={2018}
}

@article{bertsimas2012efficiency,
  title={On the efficiency-fairness trade-off},
  author={Bertsimas, Dimitris and Farias, Vivek F and Trichakis, Nikolaos},
  journal={Management Science},
  volume={58},
  number={12},
  pages={2234--2250},
  year={2012},
  publisher={INFORMS}
}

@inproceedings{manshadi2021fair,
  title={Fair dynamic rationing},
  author={Manshadi, Vahideh and Niazadeh, Rad and Rodilitz, Scott},
  booktitle={Proceedings of the 22nd ACM Conference on Economics and Computation},
  pages={694--695},
  year={2021}
}

@inproceedings{balseiro2021regularized,
  title={Regularized online allocation problems: Fairness and beyond},
  author={Balseiro, Santiago and Lu, Haihao and Mirrokni, Vahab},
  booktitle={International Conference on Machine Learning},
  pages={630--639},
  year={2021},
  organization={PMLR}
}

@book{miettinen1999nonlinear,
  title={Nonlinear multiobjective optimization},
  author={Miettinen, Kaisa},
  volume={12},
  year={1999},
  publisher={Springer Science \& Business Media}
}

@article{hooker2012combining,
  title={Combining equity and utilitarianism in a mathematical programming model},
  author={Hooker, John N and Williams, H Paul},
  journal={Management Science},
  volume={58},
  number={9},
  pages={1682--1693},
  year={2012},
  publisher={INFORMS}
}

@article{mulvany2021fair,
  title={Fair scheduling of heterogeneous customer populations},
  author={Mulvany, Justin and Randhawa, Ramandeep S},
  journal={Available at SSRN 3803016},
  year={2021}
}

@article{kusner2017counterfactual,
  title={Counterfactual fairness},
  author={Kusner, Matt J and Loftus, Joshua and Russell, Chris and Silva, Ricardo},
  journal={Advances in neural information processing systems},
  volume={30},
  year={2017}
}

@inproceedings{ustun2019fairness,
  title={Fairness without harm: Decoupled classifiers with preference guarantees},
  author={Ustun, Berk and Liu, Yang and Parkes, David},
  booktitle={International Conference on Machine Learning},
  pages={6373--6382},
  year={2019},
  organization={PMLR}
}

@inproceedings{donahue2020fairness,
  title={Fairness and utilization in allocating resources with uncertain demand},
  author={Donahue, Kate and Kleinberg, Jon},
  booktitle={Proceedings of the 2020 conference on fairness, accountability, and transparency},
  pages={658--668},
  year={2020}
}

@book{ehrgott2005multicriteria,
  title={Multicriteria optimization},
  author={Ehrgott, Matthias},
  volume={491},
  year={2005},
  publisher={Springer Science \& Business Media}
}

@inproceedings{goel2018non,
  title={Non-discriminatory machine learning through convex fairness criteria},
  author={Goel, Naman and Yaghini, Mohammad and Faltings, Boi},
  booktitle={Proceedings of the 2018 AAAI/ACM Conference on AI, Ethics, and Society},
  pages={116--116},
  year={2018}
}

@article{gupta2022socially,
  title={Socially fair and hierarchical facility location problems},
  author={Gupta, Swati and Moondra, Jai and Singh, Mohit},
  journal={arXiv preprint arXiv:2211.14873},
  year={2022}
}

@article{ma2023fairness,
  title={Fairness maximization among offline agents in online-matching markets},
  author={Ma, Will and Xu, Pan and Xu, Yifan},
  journal={ACM Transactions on Economics and Computation},
  volume={10},
  number={4},
  pages={1--27},
  year={2023},
  publisher={ACM New York, NY}
}

@inproceedings{calders2009building,
  title={Building classifiers with independency constraints},
  author={Calders, Toon and Kamiran, Faisal and Pechenizkiy, Mykola},
  booktitle={2009 IEEE international conference on data mining workshops},
  pages={13--18},
  year={2009},
  organization={IEEE}
}

@inproceedings{kasy2021fairness,
  title={Fairness, equality, and power in algorithmic decision-making},
  author={Kasy, Maximilian and Abebe, Rediet},
  booktitle={Proceedings of the 2021 ACM Conference on Fairness, Accountability, and Transparency},
  pages={576--586},
  year={2021}
}

@inproceedings{abebe2020roles,
  title={Roles for computing in social change},
  author={Abebe, Rediet and Barocas, Solon and Kleinberg, Jon and Levy, Karen and Raghavan, Manish and Robinson, David G},
  booktitle={Proceedings of the 2020 conference on fairness, accountability, and transparency},
  pages={252--260},
  year={2020}
}

@article{ding2021retiring,
  title={Retiring Adult: New Datasets for Fair Machine Learning},
  author={Ding, Frances and Hardt, Moritz and Miller, John and Schmidt, Ludwig},
  journal={Advances in Neural Information Processing Systems},
  volume={34},
  pages={6478--6490},
  year={2021}
}

@article{diamond2016cvxpy,
  title={{CVXPY}: A {Python}-embedded modeling language for convex optimization},
  author={Diamond, Steven and Boyd, Stephen},
  journal={Journal of Machine Learning Research},
  volume={17},
  number={83},
  pages={1--5},
  year={2016}
}

@article{agrawal2018rewriting,
  title={A rewriting system for convex optimization problems},
  author={Agrawal, Akshay and Verschueren, Robin and Diamond, Steven and Boyd, Stephen},
  journal={Journal of Control and Decision},
  volume={5},
  number={1},
  pages={42--60},
  year={2018},
  publisher={Taylor \& Francis}
}
